\documentclass[11pt]{article}

\usepackage[margin=1in]{geometry}
\usepackage{amsmath,amssymb,amsfonts,amsthm,mathtools}
\usepackage{enumitem}
\usepackage{xcolor}
\usepackage{graphicx}
\usepackage{float}
\usepackage{hyperref}
\usepackage[round]{natbib}
\usepackage{hyphenat}
\allowdisplaybreaks

\hypersetup{
  colorlinks=true,
  linkcolor=red!70!white,
  citecolor=red!70!white,
  urlcolor=red!70!white
}

\newtheorem{theorem}{Theorem}[section]
\newtheorem{proposition}[theorem]{Proposition}
\newtheorem{lemma}[theorem]{Lemma}
\newtheorem{corollary}[theorem]{Corollary}
\newtheorem{definition}[theorem]{Definition}
\newtheorem{assumption}[theorem]{Assumption}
\newtheorem{remark}[theorem]{Remark}
\newtheorem{example}[theorem]{Example}

\newcommand{\E}{\mathbb E}
\newcommand{\Pp}{\mathbb P}

\newcommand{\R}{\mathbb R}

\newcommand{\1}{\mathbf 1}
\newcommand{\calA}{\mathcal A}

\newcommand{\calF}{\mathcal F}
\newcommand{\calH}{\mathcal H}

\newcommand{\calX}{\mathcal X}
\newcommand{\calY}{\mathcal Y}

\newcommand{\calI}{\mathcal I}
\newcommand{\calE}{\mathcal E}

\newcommand{\KL}{D_{\mathrm{KL}}}
\newcommand{\JS}{D_{\mathrm{JS}}}
\newcommand{\kl}{\operatorname{kl}}
\newcommand{\Reg}{\operatorname{Reg}}
\newcommand{\Alg}{\mathsf{Alg}}
\newcommand{\AIR}{\mathrm{AIR}}
\newcommand{\MAIR}{\mathrm{MAIR}}

\newcommand{\iid}{\mathrm{i.i.d.}}

\newcommand{\unif}{\operatorname{Unif}}

\newcommand{\polylog}{\operatorname{polylog}}
\DeclareMathOperator*{\argmax}{arg\,max}
\DeclareMathOperator*{\argmin}{arg\,min}

\newcommand{\proofinappendix}[1]{\par\noindent\emph{Proof.} See Appendix~\ref{#1}.\par}

\title{Bellman-sufficient Information Complexity}
\author{Yunbei Xu\\National University of Singapore\\\texttt{yunbei@nus.edu.sg}}
\date{}

\begin{document}
\maketitle

\begin{abstract}

We introduce Bellman-sufficient information complexity for minimax analysis of
sequential decision problems.  A Bellman-sufficient state retains enough of
the history to close the controlled recursion, while an index
$Y=\chi(\Omega)$ specifies the decision-relevant information being charged.
The upper bound is a log-penalized Bellman program; the lower bound is a
Bellman--Fano comparison along an algorithm-dependent reference trajectory.
If the two values match at a common localization scale and the stated
admissibility, calibration, and growth conditions hold, they form an
information-risk sandwich.  UCB, E2D, and AMS/EBO control or relax the upper
Bellman bracket in different ways.

For the main application, we give a negative answer to a widely studied form of the
GP--UCB minimax-optimality question.  For every $0<\alpha<1/4$, we construct
one bounded continuous kernel whose minimax regret is $\Theta(T^{1-\alpha})$
along an infinite sequence of horizons, while two globally calibrated
GP--UCB rules incur linear regret under one fixed truth.  An epochwise
finite-marginal action-index AIR Bellman policy, implemented through robust
AIR/AMS/EBO control, attains the minimax order.  The construction
separates realized information from the cost of uniform optimism: many
low-value directions inflate the exploration multiplier and change the
trajectory.  Through the canonical RKHS feature map, it also yields a
finite-horizon polynomial minimax separation for the specified
maximal-information-calibrated LinUCB rule.  A reproducible experiment
illustrates the mechanism.

\end{abstract}

\setcounter{tocdepth}{2}
\tableofcontents

\section{Introduction}
\label{sec:introduction}

Information-theoretic minimax theory asks how much risk remains when an experiment can reveal only limited information.  In classical noninteractive estimation the experiment is fixed before the data are observed, and sharp rates are obtained by matching an upper information-risk construction with a lower entropy calculation: KL information measures what the experiment can distinguish, while local prior mass or packing entropy measures how many alternatives remain statistically indistinguishable.  This is the perspective behind the information-risk upper and lower bounds of \citet{zhang2006information}, the entropy characterization of minimax rates in \citet{yang1999information}, and the classical Fano--Assouad method \citep{yu1997assouad}.  In interactive decision making and dynamic control, the learner's actions change the experiment.  Because the policy is nonanticipating, the experiment adapts to the past.  Decisions, observations, and information accumulation are therefore produced jointly by one controlled stochastic process.

Algorithmic Fano's method is one way to study this adaptivity.  It compares the real history with an algorithm-dependent reference, or ``ghost,'' history rather than with a fixed nonadaptive sampling scheme.  This viewpoint appears in recent interactive Fano frameworks for interactive decision making \citep{chen2024unified} and in algorithmic upper and lower bounds for representation learning \citep{li-xu-pointwise-generalization-dnn,xu2026separation}. It is also closely related to the decision-estimation coefficient (DEC) approach of \citet{foster2021statistical}, its constrained or localized refinements \citep{foster2023tight}, and the earlier information-ratio approach \citep{russo2014learning,lattimore2021mirror}. For the dynamic comparisons pursued here, it is useful to retain the $T$-round reference geometry until the necessary localization has been identified. This preserves the evolution of the posterior, confidence set, or reference state, complementing analyses based on one-step complexity measures.

Kernel bandits provide the main application.  GP--UCB chooses an action by
adding a confidence bonus to a Gaussian-process posterior mean.  Whether the
gap between its known guarantees and kernel-bandit minimax rates reflects the
analysis or the acquisition rule has remained a prominent open question
\citep{vakili2021open,whitehouse2023sublinear,wang2026suboptimality}.  We show
that maximal information gain can alter the learner's trajectory: many
individually negligible directions enlarge the global confidence multiplier
and draw GP--UCB into a low-value cloud.

The construction starts from the weighted-basis hub--cloud geometry of
\citet{xu2026separation} and reverses its amplitude ordering, so that the
cloud is negligible for minimax regret but still controls the exploration
multiplier.  Under the canonical RKHS feature map, GP posterior means and
standard deviations coincide with ridge predictions and ellipsoidal widths.
The finite blocks therefore give a polynomial separation for the specified
maximal-information-calibrated LinUCB rule; gluing the blocks gives one fixed
bounded continuous kernel on which the two GP--UCB schedules studied here incur
linear regret along an infinite subsequence.  A predetermined finite-marginal
action-index AIR Bellman policy, implemented through robust AIR/AMS control,
attains the minimax order.  The main theorem states how this result relates
to the original GP--UCB schedule, the Mat\'ern results, and the remaining open
cases.

We organize the analysis around an indexed Bellman-sufficient
representation.  The environment remains a fixed frequentist truth.  A state
$S_t$ compresses the history while retaining the feasible actions, predictive
laws, losses, and future values needed for a Bellman recursion.  An index
$Y=\chi(\Omega)$ identifies the information worth learning---for example, an
optimal action or policy, an active finite marginal, or the full environment.
The state closes the dynamics; the index determines the information charge.

On the upper side, the coordinate potential
$\gamma\log(1/q_t(y^\star))$ turns the fixed-truth AIR bracket into a Bellman
telescope.  The resulting dynamic program is the information-theoretic upper
object.  UCB, E2D, and AMS/EBO control or relax this object in different ways:
through calibrated optimism, a one-step offset, and robust belief optimization,
respectively.  These connections are useful, but the algorithms and their
optimization problems remain distinct.

On the lower side, Bellman--Fano compares the true trajectory with an
algorithm-dependent reference trajectory and counts reference histories that
are already good for a randomly sampled index.  Subject to the stated
admissibility, calibration, and growth conditions, the upper and lower values
match when the coordinate code length and ghost-good entropy have the same
order at a common localization scale.  This is
the sequential analogue of
matching information and local entropy in a fixed experiment, except that the
learner now controls the experiment and both quantities evolve through the
same Bellman state.

A related benefit is that some of the framework’s statements are general enough to include reinforcement learning across episodes and in continuing environments (see, e.g., Section~1.2 of \citet{littman1996algorithms}), although this aspect is treated only conceptually here.

\subsection{Main contributions.}
\begin{enumerate}[leftmargin=2em]
\item We define indexed Bellman-sufficient representations for general
nonanticipating sequential decision problems.  The state closes the controlled
recursion, while the index specifies the decision-relevant information being
charged.

\item We prove the indexed posterior-reference chain rule and exact fixed-truth
AIR identity, then derive a logarithmically penalized Bellman upper program.
AIR and MAIR follow by choosing the optimal decision and the environment,
respectively, as the index.

\item We develop an algorithm-uniform Bellman--Fano lower bound and an
information-risk sandwich.  The comparison places the upper coordinate code
length and the entropy of low-loss ghost histories on the same state/index
representation.

\item For kernel bandits, we construct one fixed bounded continuous kernel on
which two globally calibrated GP--UCB rules have linear regret along
an infinite sequence of horizons, although the minimax regret is
$\Theta(T^{1-\alpha})$ there and an epochwise finite-marginal AIR/AMS policy
attains this order.  The same geometry gives, for each sufficiently large
$T$, $T+1$ linear-bandit actions in $\mathbb R^T$ on which the specified
globally calibrated LinUCB rule is worse than the unrestricted minimax value by
$\Omega(T^\alpha)$.  Gaussian multi-armed bandits and hypercube linear bandits give further matching
examples.

\end{enumerate}

\subsection{Organization.}
Section~\ref{sec:indexed} defines the sequential model, information indices, and Bellman-sufficient representations. Section~\ref{sec:identity} gives the indexed information telescope and the exact AIR/MAIR identity. Section~\ref{sec:upper} develops the log-penalized Bellman upper program, and Section~\ref{sec:lower} gives the quantile and Bellman--Fano lower bounds. Section~\ref{sec:info-sandwich} then compares the two sides at a common entropy scale. Section~\ref{sec:applications} turns to kernel bandits: it proves the finite-marginal AIR bound and the fixed-kernel GP--UCB separation, presents the numerical illustration, and states the finite-dimensional LinUCB consequence and the remaining GP--UCB questions. The appendices contain the formal entropy diagnostics, algorithm-family comparisons, full linear and Mat\'ern statements, additional examples, and proofs.

\section{Sequential decision making with Bellman-sufficient representations}
\label{sec:indexed}

\subsection{Sequential decision making and Bellman state compression}\label{sec:bellman-motivation}

The primitive object is a controlled process over histories: future actions,
observations, and losses may depend on everything revealed so far.  This
framework encompasses bandits, episodic and continuing reinforcement learning,
and planning and reasoning from experience
\citep{barto1989learning,littman1996algorithms,sutton2018reinforcement,lattimore2020bandit,abel2023definition,silver2025era}.
The formalism below clarifies the history, controls, losses, and Bellman state underlying these problems.

\paragraph{A formal model of the sequential decision-making problem.}

There is an environment class \(\Omega\), and performance is required for every fixed truth \(\omega^\star\in\Omega\).  A prior \(\mu\) is introduced only when we form a Bayes value, a Yao lower bound, a posterior-reference trajectory, or an algorithmic belief.  The primitive history used in the paper is the full pre-action history, denoted \(H_{t-1}\).  To avoid separate notation for contexts, physical states, public randomization, and protocol variables, we regard observations as packets.  There may be an initial packet \(O_0\) before the first decision, and recursively
\[
    H_0=O_0,\qquad H_t=(H_{t-1},A_t,O_t).
\]
   The packet \(O_t\) contains whatever is publicly revealed after the
round-\(t\) action and before the next decision: feedback, the next context
or physical state, the next stage marker, public randomization, or other
protocol information. The learner's decision rule is the policy being
optimized and is not itself part of the history; its randomization is
represented by the decision kernels below. 
\par
At time \(t\), after \(h\in\calH_{t-1}\), the learner may choose an action in a measurable set \(\calA_t(h)\).  An unrestricted nonanticipating algorithm is a sequence of kernels
\[
    \pi_t^\Alg(\cdot\mid h)\in\Delta(\calA_t(h)),
    \qquad h\in\calH_{t-1},
\]
with deterministic algorithms included as degenerate kernels.  Let \(\mathfrak A_T^{\rm na}\) denote the class of all such algorithms, and write
\[
    p_t=\pi_t^\Alg(\cdot\mid H_{t-1}),\qquad A_t\sim p_t .
\]
The most general one-step primitives allow the environment or an adaptive adversary to depend on the announced mixed action as a current control:
\begin{align}\label{eq:general}
    O_t\sim P_{\omega^\star,t}(\cdot\mid H_{t-1},p_t,A_t),
    \qquad
    \ell_t(\omega^\star,H_{t-1},p_t,A_t).
\end{align}
When the law and loss depend only on the realized action, as in ordinary stochastic bandits and Markov decision processes, we suppress the argument \(p_t\) and write \(P_{\omega,t}(\cdot\mid H_{t-1},A_t)\) and \(\ell_t(\omega,H_{t-1},A_t)\).  The increment may be signed, provided the cumulative target used in a lower-bound theorem is integrable and has the stated lower bound, usually \(L_\omega(H_T)\ge0\).  The Bellman clock \(t\) is abstract.  In episodic reinforcement learning, it may represent either a macro-episode, with the action given by a policy governing all within-episode stages, or a micro-time pair consisting of an outer episode index and an inner-stage index.

The frequentist objective studied in this paper is
\[
    \mathfrak R_T^\star(\Omega)
    :=
    \inf_{\Alg\in\mathfrak A_T^{\rm na}}
    \sup_{\omega\in\Omega}
    \E_\omega^\Alg
    \sum_{t=1}^T\ell_t(\omega,H_{t-1},p_t,A_t).
\]
This explicit stepwise model is compatible with the interactive
statistical decision making (ISDM) framework of \citet{chen2024unified}; its
purpose here is to make the histories, controls, losses, and Bellman state used
by the subsequent identities unambiguous.

\paragraph{Bellman state and Bellman sufficiency.}

A Bellman state is a measurable compression
$S_t=\phi_t(H_{t-1})$ that determines the feasible actions, fixed-truth
predictive law and loss, and the next-state update.  It need not be the
physical state or a posterior.  The full history always qualifies; a useful
representation is a smaller state on which the same controlled recursions
close.  When the environment reacts to the announced mixed action, that
distribution remains a control argument rather than part of the past.
Definition~\ref{def:index-bellman} states the precise requirements and the
additional conditions needed for posterior-indexed information accounting.

Throughout, the history, action, observation, state, environment, and index
spaces are standard Borel. All displayed kernels, losses, and state maps are
measurable, and feasible-action correspondences have measurable graphs.
Every displayed Bellman optimization is assumed to admit universally
measurable \(\varepsilon\)-selectors; exact attainment is assumed only where
it is invoked.

\paragraph{Reference dynamic program and minimax lower bound.}
Suppose the state is Bellman sufficient for the fixed-truth primitives and also makes the reference posterior $b_s$, posterior predictive law, and Bayes loss state-measurable. With update \(S_{t+1}=\tau_t(S_t,p_t,A_t,O_t)\), or with \(p_t\) suppressed when irrelevant, the Bayes/reference dynamic program induced by a posterior \(b_s\) is
\begin{equation}
\label{eq:exact-bayes-dp}
    V_{T+1}(s)=0,
    \qquad
    V_t(s)
    =
    \inf_{p\in\Delta(\calA_t(s))}
    \left\{
        \ell(s,p)+
        \E_{a\sim p,\,o\sim P_{s,p,a}}
        V_{t+1}(\tau_t(s,p,a,o))
    \right\},
\end{equation}
where
\[
    \ell(s,p)=\E_{a\sim p}\int \ell_\omega(s,p,a)b_s(d\omega),
    \qquad
    P_{s,p,a}=\int P_{\omega,s,p,a}b_s(d\omega).
\]
Equation~\eqref{eq:exact-bayes-dp} is not the definition of the frequentist problem; it is the Bellman recursion induced by a chosen reference representation.  Minimax lower bounds are connected to such reference recursions through Yao's principle,
\[
    \mathfrak R_T^\star(\Omega)
    \ge
    \sup_{\mu\in\Delta(\Omega)}
    \inf_{\Alg\in\mathfrak A_T^{\rm na}}
    \E_{\Omega\sim\mu}\E_{\Pp_\Omega^\Alg}
    \sum_{t=1}^T\ell_t(\Omega,H_{t-1},p_t,A_t).
\]
\begin{remark}[Adversarial and estimated losses]\label{remark:adv}
The same control convention accommodates a nonanticipating adversary that
observes the announced mixed action, as well as state-measurable estimated-loss
surrogates.  The precise interpretation and the lower-bound qualification are
given in Appendix~\ref{app:states}.
\end{remark}

\subsection{Information index, conditional loss, and entropy accounting}\label{subsec:information-index}

The latent environment space is \(\Omega\), but the information charged by the upper and lower bounds may be a coarser coordinate.  An information index is a measurable map
\[
   Y=\chi(\Omega)\in\calY.
\]
The index is chosen by the analyst and should reflect the decision-relevant object: an optimal arm, an optimal policy, a value function, a finite active marginal, or, when no compression is justified, the full environment.  The fixed-truth Bellman recursion still uses \(\omega^\star\).  The index enters through the logarithmic potential, the posterior/reference information accounting, and the lower ghost event.

The loss need not be a function of \(Y\) alone.  For Bayes or Fano calculations we therefore use the conditional index loss, assuming the relevant regular conditional laws exist.  In the general convention with a current mixed-action control,
\begin{equation}
\label{eq:index-loss}
    \ell_t^\chi(y,h,p,a)
    :=
    \E_\mu\!\left[
        \ell_t(\Omega,h,p,a)
        \mid
        \chi(\Omega)=y,
        H_{t-1}=h
    \right].
\end{equation}
When the one-step primitives do not depend on \(p\), the argument is suppressed.  When the retained state is an exact posterior-indexed lift in Definition~\ref{def:index-bellman}, this conditional loss and the corresponding conditional predictive law are state-measurable and may be written as \(\ell_{t,y}^\chi(s,p,a)\) and \(P_{t,s,p,a}^y\).  These conditional-index objects are not required for ordinary fixed-truth Bellman control; they are required for indexed posterior averaging and for the Bellman--Fano lower-bound method.

We use the cumulative and average losses, with the same suppression convention,
\[
L_\omega(H_T)=\sum_{t=1}^T\ell_t(\omega,H_{t-1},p_t,A_t),
    \qquad
    \bar L_\omega(H_T)=T^{-1}L_\omega(H_T),
\]
and
\[
L_\chi(y,H_T)=\sum_{t=1}^T\ell_t^\chi(y,H_{t-1},p_t,A_t),
    \qquad
    \bar L_\chi(y,H_T)=T^{-1}L_\chi(y,H_T).
\]
Under the Bayesian mixture generated by \(\Omega\sim\mu\),
\begin{equation}
\label{eq:index-loss-equality}
\E_{\Omega\sim\mu,H_T\sim\Pp_\Omega^\Alg}L_\Omega(H_T)
    =
    \E_{Y,H_T}L_\chi(Y,H_T).
\end{equation}
For the upper identities, the increments may be any integrable costs for which the displayed expectations exist.  For the quantile lower bounds, the object that must be nonnegative is the cumulative indexed loss, not necessarily each one-step increment.  Throughout the clean lower-bound statements we assume \(L_\chi(Y,H_T)\ge0\) almost surely under the true mixture.  If a problem has a known lower bound \(L_\chi\ge -B\), the same statements apply to the shifted loss \(L_\chi+B\), with the corresponding shift subtracted at the end.

The two logarithmic quantities compared later are the upper coordinate code length and the lower ghost entropy.  If \(q_1\) is the initial reference marginal on \(\calY\), the fixed-truth upper telescope pays
\[
    L_0(\Omega_0;q_1,\chi)
    :=
    \sup_{\omega\in\Omega_0}\log\frac1{q_1(\chi(\omega))}.
\]
If the upper comparison is over a finite index set of size \(M\) and \(q_1\) is uniform on that index set, then \(L_0=\log M\). At a regret radius \(r\), a hard prior \(\mu\) with index marginal \(q_{1,\mu}:=\chi_\#\mu\) gives the algorithm-uniform Bellman--Fano ghost mass and entropy
\[
   p_{\mu,r}^{\star}
   =\sup_{\Alg\in\mathfrak A_T^{\rm na}}
    \Pp_{Y\sim {\color{red}q_{1,\mu}},\,H_T'\sim\bar\Pp_\mu^{\Alg}}
    \{\bar L_\chi(Y,H_T')\le r\},
    \qquad
    E_{\mu,r}^{\star}:=\log\frac1{p_{\mu,r}^{\star}}.
\]
Thus the primitive entropy comparison is \(L_0/E_{\mu,r}^{\star}\), not the size of the full model class.
Section~\ref{sec:info-sandwich} gives the main synthesis; the formal
comparison and its conversion into a regret gap are stated in
Appendix~\ref{app:sandwich-details}.

\subsection{Bellman-sufficient representation}

We now give the formal representation axioms used throughout the rest of the paper.  The frequentist truth \(\omega^\star\) remains fixed.  Priors, posteriors, confidence objects, exponential-weights distributions, and algorithmic beliefs are reference objects: they may define algorithms or analytical bounds, but they are not substitutes for the fixed environment unless a theorem explicitly takes a Bayesian average. 

\begin{definition}[Indexed Bellman-sufficient representation]
\label{def:index-bellman}

Fix an index map \(\chi:\Omega\to\calY\).  A retained state process
\[
    S_t=\phi_t(H_{t-1})
\]
is a \emph{fixed-truth Bellman-sufficient representation} if the following objects are determined by the current time-state pair \((t,s)\).
\begin{enumerate}[label=(\roman*),leftmargin=2em]
\item \emph{Admissible actions.}  There is a state-measurable action set \(\calA_t(s)\) such that \(\calA_t(h)=\calA_t(s)\) whenever \(S_t(h)=s\).
\item \emph{Predictive sufficiency.} For every environment \(\omega\), mixed action \(p\in\Delta(\calA_t(s))\), and realized action \(a\in\calA_t(s)\), there is a kernel \(P_{\omega,t,s,p,a}\) such that, for every history \(h\) with \(S_t(h)=s\),
\[
    P_{\omega,t}(\cdot\mid h,p,a)=P_{\omega,t,s,p,a}.
\]
When the environment does not react to the announced mixed action, the argument \(p\) is suppressed.
\item \emph{Loss sufficiency.}  There is an integrable state-measurable cost function \(\ell_{\omega,t}(s,p,a)\in\mathbb R\) such that
\[
\ell_t(\omega,h,p,a)=\ell_{\omega,t}(s,p,a)
    \qquad\text{whenever }S_t(h)=s.
\]
The lower-bound quantile theorem does not require every increment to be nonnegative; it requires the cumulative indexed loss to be lower bounded, and in the displayed lower bounds we assume \(L_\chi(Y,H_T)\ge0\) almost surely.
\item \emph{Update sufficiency.}  There is an update map or controlled kernel \(\tau_t\) such that the next retained state is generated from the retained state and the current interaction.  In deterministic-update notation,
\[
S_{t+1}=\tau_t(S_t,p_t,A_t,O_t),
\]
with \(p_t\) suppressed when irrelevant.  Random next contexts, physical next states, stage markers, and public randomization are part of the observation packet \(O_t\).
\end{enumerate}
When these clauses hold, Bellman recursion is closed under each fixed truth.  A compressed state is therefore a dynamic sufficient statistic for the Bellman primitives: feasible actions, prediction, loss evaluation, and future-value evaluation.

If, in addition, a prior or reference law \(\mu\) is fixed and one wants exact state-based posterior-reference information accounting, the full-history posterior objects must descend to the state.  On the full history define
\[
    b_{t,h}:=\mathcal L_\mu(\Omega\mid H_{t-1}=h),
    \qquad q_{t,h}:=\chi_\# b_{t,h},
\]
and, for \(q_{t,h}\)-almost every \(y\),
\[
    b_{t,h}^y:=\mathcal L_\mu(\Omega\mid H_{t-1}=h,\chi(\Omega)=y).
\]
The full-history index-conditional predictive law and loss are
\[
    P_{t,h,p,a}^{y}:=
    \int P_{\omega,t}(\cdot\mid h,p,a)\,b_{t,h}^y(d\omega),
\]
\[
    \ell_{t,h,p,a}^{\chi}(y):=
    \int \ell_t(\omega,h,p,a)\,b_{t,h}^y(d\omega).
\]
An \emph{exact posterior-indexed lift on the state} is the additional fiber-invariance requirement that, whenever \(S_t(h)=s\), there exist state-measurable objects satisfying
\begin{align}\label{eq:lift1}
    q_{t,h}=q_{t,s},
\end{align}
and, for \(q_{t,s}\)-almost every \(y\),
\begin{align}\label{eq:lift2}
    P_{t,h,p,a}^{y}=P_{t,s,p,a}^{y},
    \qquad
    \ell_{t,h,p,a}^{\chi}(y)=\ell_{t,y}^{\chi}(s,p,a),
\end{align}
together with a state-measurable reference update for \(q_{t,s}\).  These posterior and index-conditional clauses are not needed for the fixed-truth Bellman recursion itself.  They are needed when the posterior-averaged chain rule, the Bayes information-potential program, or the Bellman--Fano lower bound is run on the compressed state.

\end{definition}

The notation will usually suppress the explicit time subscript when the stage is part of the state, and it will suppress the mixed-action argument when the environment does not react to the announced distribution.  Thus \(P_{\omega,s,a}\), \(\ell_\omega(s,a)\), and \(\calA(s)\) mean the appropriate special cases of \(P_{\omega,t,s,p,a}\), \(\ell_{\omega,t}(s,p,a)\), and \(\calA_t(s)\) at the current Bellman time.  This convention is harmless only after the state and current control include the variables that make actions, losses, observation laws, and updates state-measurable.

\paragraph{Bellman sufficiency versus classical sufficiency and Bellman rank.}

Three different notions should be distinguished.  Classical model sufficiency retains the
full posterior of $\Omega$ and therefore every fixed index marginal.
Index-marginal sufficiency retains only the posterior of
$Y=\chi(\Omega)$.  Bellman sufficiency instead requires the controlled
actions, predictive laws, losses, and updates to close on the state.  Bellman
closure alone need not preserve a posterior, while an index-sufficient state
need not determine the dynamics.  The exact posterior-indexed lift in
\eqref{eq:lift1}--\eqref{eq:lift2} combines the pieces needed by the indexed
recursion without requiring sufficiency for the full environment.

The formalism shares the representation-level perspective underlying low Bellman rank \citep{jiang2017bellman}, but allows a broader class of representations while imposing stronger requirements on the chosen representation. The Bellman rank is a factorization of expected Bellman errors relative to a function class, roll-in distribution, and Bellman error evaluation family.  It is not, by itself, a state: it does not determine the realized next observation, the fixed-truth loss, the posterior or reference update, or the conditional laws for an index.  It becomes an indexed Bellman-sufficient representation only after those missing objects are supplied and shown to close the recursions above.

Appendix~\ref{app:states} collects basic examples involving sufficient statistics, posterior states, and finite marginals, and shows how the general framework encompasses the standard Decision Making with Structured Observations (DMSO) setting \citep{foster2021statistical,foster2023tight}.
\subsection{Why sufficient states and compression matter}\label{subsec:why-sufficient-state}

The full history is always a valid Bellman state, so compression is
representational rather than necessary for existence.  Its value is to expose
a smaller state on which both the controlled recursion and the chosen
information accounting remain valid.  Such a state need not be unique or
minimal.  Proposition~\ref{prop:state-compression-info} records the two basic
facts: a valid compression preserves the Bellman operator and cannot increase
information about a fixed index.

\begin{proposition}[State compression and information monotonicity]
\label{prop:state-compression-info}
Let \(S_t=\phi_t(H_{t-1})\) be a fixed-truth Bellman-sufficient representation.
Let \(\mathfrak C_t\) be a comparison correspondence that is constant on
the fibers of \(S_t\).  Suppose a one-step Bellman cost \(c_\omega\) and a continuation value \(F_{t+1}\) are state-measurable, so that
\[
    c_\omega(h,p)=c_\omega(s,p),
    \qquad
    F_{t+1}(H_t)=f_{t+1}(S_{t+1}),
    \qquad s=S_t(h).
\]
Then the full-history Bellman operator factors through \(S_t\): for every \(h\) with \(S_t(h)=s\),
\[
    \inf_{p\in\Delta(\calA_t(h))}
    \sup_{\omega\in{\color{red}\mathfrak C_t(h)}}
    \left\{
        c_\omega(h,p)+
        \E_{\omega,h,p}F_{t+1}(H_t)
    \right\}
    =
    \inf_{p\in\Delta(\calA_t(s))}
    \sup_{\omega\in{\color{red}\mathfrak C_t(s)}}
    \left\{
        c_\omega(s,p)+
        \E_{\omega,s,p}f_{t+1}(S_{t+1})
    \right\},
\]
whenever the comparison set is also state-measurable.  Moreover, for any reference law and any fixed index \(Y=\chi(\Omega)\),
\[
    I(Y;S_t)\le I(Y;H_{t-1}).
\]
When $I(Y;H_{t-1})<\infty$, equality in the information display holds if and only if index-marginal sufficiency holds, $Y\perp H_{t-1}\mid S_t$.
\end{proposition}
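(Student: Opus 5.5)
The argument has two independent parts: the operator factorization and the information inequality.

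\emph{Factorization of the Bellman operator.} Fix \(h\) with \(S_t(h)=s\). By clause (i) of Definition~\ref{def:index-bellman}, \(\calA_t(h)=\calA_t(s)\), so both infima range over the same set \(\Delta(\calA_t(s))\). By hypothesis the comparison sets also agree, \(\mathfrak C_t(h)=\mathfrak C_t(s)\). It therefore suffices to show that, for each fixed \(p\) and \(\omega\), the bracketed objectives coincide.

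The cost terms agree by the assumption \(c_\omega(h,p)=c_\omega(s,p)\). For the continuation term, write \(H_t=(h,A_t,O_t)\) with \(A_t\sim p\) and \(O_t\sim P_{\omega,t}(\cdot\mid h,p,A_t)\). Clause (ii) replaces this kernel by \(P_{\omega,t,s,p,a}\). Clause (iv) gives \(S_{t+1}=\tau_t(s,p,A_t,O_t)\), hence \(F_{t+1}(H_t)=f_{t+1}(\tau_t(s,p,A_t,O_t))\). Therefore
\[
\E_{\omega,h,p}F_{t+1}(H_t)=\int p(da)\int P_{\omega,t,s,p,a}(do)\,f_{t+1}(\tau_t(s,p,a,o)),
\]
which is by definition \(\E_{\omega,s,p}f_{t+1}(S_{t+1})\). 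If \(\tau_t\) is a controlled kernel rather than a map, the same computation goes through with an extra integration against \(\tau_t\).

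Since the objectives agree pointwise in \((p,\omega)\), the sup over identical sets and then the inf over identical sets coincide. The measurability conventions stated in Section~\ref{sec:bellman-motivation} ensure the integrals are well defined.

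\emph{Information monotonicity.} Under the joint reference law of \((\Omega,H_{t-1})\), \(S_t=\phi_t(H_{t-1})\) is a measurable function of \(H_{t-1}\). Hence \(Y\to H_{t-1}\to S_t\) is a Markov chain. The chain rule for mutual information, valid on standard Borel spaces via the Dobrushin/Gelfand--Yaglom--Perez definition, gives
\[
I(Y;H_{t-1})=I(Y;H_{t-1},S_t)=I(Y;S_t)+I(Y;H_{t-1}\mid S_t).
\]
The first equality holds because \(S_t\) is determined by \(H_{t-1}\). Nonnegativity of conditional mutual information then yields \(I(Y;S_t)\le I(Y;H_{t-1})\). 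When \(I(Y;H_{t-1})<\infty\), both terms on the right are finite. Equality therefore holds iff \(I(Y;H_{t-1}\mid S_t)=0\). Since conditional mutual information is an averaged KL divergence between the conditional joint law and the product of conditional marginals, it vanishes iff \(Y\perp H_{t-1}\mid S_t\), which is index-marginal sufficiency.

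\emph{Main obstacle.} The only delicate step is measure-theoretic: justifying the chain rule and the characterization of zero conditional mutual information in general standard Borel spaces. I would handle this by invoking regular conditional distributions, which exist by the standard Borel assumption, and the general chain rule of Pinsker/Dobrushin. The finiteness hypothesis is what prevents \(\infty=\infty\) from breaking the "if and only if".
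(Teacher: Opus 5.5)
Your proposal is correct and follows essentially the same route as the paper: the Bellman identity comes from checking that the action set, comparison set, cost, predictive kernel, and continuation value all agree on the fiber of $S_t$. The information claim is data processing for the measurable map $\phi_t$; your decomposition $I(Y;H_{t-1})=I(Y;S_t)+I(Y;H_{t-1}\mid S_t)$ is the standard proof of that inequality, and the paper likewise characterizes equality by vanishing conditional mutual information (your remark on why finiteness is needed is a helpful addition).
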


\proofinappendix{app:proof-prop:state-compression-info}

Proposition~\ref{prop:state-compression-info} shows why compression is information-theoretically fundamental: it preserves the Bellman operator while never increasing the information cost associated with a fixed index. A strict reduction is useful only if both the upper and lower recursions remain valid on the compressed state.

\section{Indexed information and exact AIR/MAIR identities}
\label{sec:identity}

\subsection{Posterior-reference histories and index information}

The mutual-information identities in this subsection use the exact posterior-reference version of Definition~\ref{def:index-bellman}.  The fixed-truth log-gain regret identity in the next subsection should not be substituted into the posterior chain rule unless the two objects are induced by a coherent reference law.

For a prior $\mu$ and algorithm $\Alg$, define the Bayesian posterior-reference trajectory law
\[
    \bar\Pp_\mu^\Alg:=\int \Pp_\omega^\Alg\,\mu(d\omega).
\]
A reference history is denoted
\[
    H_T'\sim \bar\Pp_\mu^\Alg,
    \qquad
   S_t'=\phi_t(H_{t-1}'),
    \qquad
    b_t'=\mathcal L_\mu(\Omega\mid H_{t-1}'),
\]
and the algorithm's reference decision kernel is
\[
    p_t'=p_t(\cdot\mid H_{t-1}').
\]

\begin{definition}[Indexed information gain]
\label{def:index-info}
At a current time-state $(t,s)$, let $q_s$ be the posterior law of $Y=\chi(\Omega)$ and let $P_{s,p,a}^y$ and $P_{s,p,a}$ be the conditional and marginal predictive laws from Definition~\ref{def:index-bellman}; we suppress $t$ in these laws when the stage is part of $s$.  For $p\in\Delta(\calA_t(s))$, define
\begin{equation}
\label{eq:index-info}
    \calI_\chi(s,p)
    :=
    \E_{a\sim p}\int \KL(P_{s,p,a}^y\|P_{s,p,a})q_s(dy).
\end{equation}
When the environment does not respond to the announced mixed action, the argument $p$ is suppressed. When $Y=\Omega$, this is model-index information.  When $Y=A^\star(\Omega)$, this is action-index information.  In the action-index case, the belief is still a belief over environments; only the information target is the optimal action.
\end{definition}

Define the cumulative and average indexed information of an algorithm by
\begin{equation}
\label{eq:average-info}
    C_\chi^\Alg(\mu)
    :=
    \E_{H_T'\sim\bar\Pp_\mu^\Alg}
    \sum_{t=1}^T \calI_\chi(S_t',p_t'),
    \qquad
    \bar C_\chi^\Alg(\mu):=\frac1T C_\chi^\Alg(\mu).
\end{equation}

\begin{proposition}[Reference-history indexed chain rule]
\label{prop:index-chain}
Assume that the retained state is an exact posterior-indexed lift in the sense of Definition~\ref{def:index-bellman}, that the displayed KL terms are well defined and integrable, and that the initial packet $O_0$ is deterministic or independent of $Y$. For every prior $\mu$, index map $\chi$, and algorithm $\Alg$,
\begin{equation}
\label{eq:index-chain}
    I_\mu(Y;H_T)
    =
    \E_{H_T'\sim\bar\Pp_\mu^\Alg}
    \sum_{t=1}^T\calI_\chi(S_t',p_t')
    =
    T\bar C_\chi^\Alg(\mu).
\end{equation}
If $O_0$ may carry information about $Y$, the right-hand side equals $I_\mu(Y;H_T\mid O_0)$ and the unconditional identity has the additional term $I_\mu(Y;O_0)$.
\end{proposition}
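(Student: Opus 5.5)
The plan is to apply the chain rule for mutual information along the history and then to identify each conditional term with the state-based quantity $\calI_\chi$.

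\textbf{Step 1 (chain rule).} Write $H_T=(O_0,A_1,O_1,\dots,A_T,O_T)$. Under the joint law of $(\Omega,H_T)$ induced by $\mu$ and $\Alg$, the chain rule gives
\[
I_\mu(Y;H_T)=I_\mu(Y;O_0)+\sum_{t=1}^T\Bigl[I_\mu(Y;A_t\mid H_{t-1})+I_\mu(Y;O_t\mid H_{t-1},A_t)\Bigr].
\]
The action terms vanish. The reason is that $A_t\sim p_t=\pi_t^\Alg(\cdot\mid H_{t-1})$ is drawn from a kernel that depends only on $H_{t-1}$, so $A_t\perp\Omega\mid H_{t-1}$, and hence $A_t\perp Y\mid H_{t-1}$. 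If $O_0$ is deterministic or independent of $Y$, then $I_\mu(Y;O_0)=0$. Otherwise the same computation yields $I_\mu(Y;H_T\mid O_0)$ plus the extra term $I_\mu(Y;O_0)$, which is the stated variant.

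\textbf{Step 2 (observation terms as expected KL).} Fix $h=H_{t-1}$ and $a=A_t$. Given $(h,a)$, the conditional law of $Y$ is still $q_{t,h}$, because the action carries no information about $Y$ (Step 1). The conditional law of $O_t$ given $(Y=y,h,a)$ is obtained by integrating $P_{\omega,t}(\cdot\mid h,p_t,a)$ against the posterior of $\Omega$ given $(h,a,Y=y)$. That posterior equals $b^y_{t,h}$, again by the same independence. So this conditional law is exactly $P^y_{t,h,p,a}$. Mixing over $y\sim q_{t,h}$ gives the marginal predictive law $P_{t,h,p,a}$. The standard representation of conditional mutual information as an expected KL to the mixture then gives
\[
I_\mu(Y;O_t\mid H_{t-1}=h,A_t=a)=\int \KL(P^y_{t,h,p,a}\,\|\,P_{t,h,p,a})\,q_{t,h}(dy).
\]
Averaging over $a\sim p_t$ and over $H_{t-1}$ under $\bar\Pp^\Alg_\mu$ turns the $t$-th summand into $\E\,\E_{a\sim p_t}\int\KL(\cdots)\,q_{t,H_{t-1}}(dy)$.

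\textbf{Step 3 (descent to the state).} By the exact posterior-indexed lift conditions \eqref{eq:lift1}--\eqref{eq:lift2}, the three objects $q_{t,h}$, $P^y_{t,h,p,a}$ and $P_{t,h,p,a}$ coincide with $q_{t,s}$, $P^y_{t,s,p,a}$ and $P_{t,s,p,a}$ whenever $s=S_t(h)$; the last identity follows from the second by integrating against $q_{t,s}$. The integrand is therefore $\calI_\chi(S_t',p_t')$ as defined in \eqref{eq:index-info}. Summing over $t$ and using integrability to exchange the sum and the expectation gives \eqref{eq:index-chain}. The second equality in \eqref{eq:index-chain} is just the definition \eqref{eq:average-info}.

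\textbf{Main obstacle.} The delicate point is Step 2 in general standard Borel spaces. It requires the disintegration to be valid: regular conditional laws of $\Omega$ given $(H_{t-1},Y)$ must exist, and $A_t$ must be shown not to perturb them. It also requires the conditional-MI/KL identity to hold with possibly infinite values, which means the chain rule must be stated for extended nonnegative quantities. I would handle this as follows. First, use the standard Borel assumption to obtain regular versions of the conditional laws. Second, verify the conditional independence $(\Omega,Y)\perp A_t\mid H_{t-1}$ directly from the product structure of $\bar\Pp^\Alg_\mu$. Third, invoke the Gelfand--Yaglom--Perez form of the chain rule, which is valid for nonnegative extended values. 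The integrability hypothesis then rules out $\infty-\infty$ issues when the terms are summed.
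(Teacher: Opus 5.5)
Your proposal is correct and follows the paper's proof essentially step for step: you apply the chain rule over rounds, note that the action terms vanish because $A_t$ is conditionally independent of $Y$ given $H_{t-1}$, identify each observation term as the $q$-averaged KL between the index-conditional and marginal predictive laws, and average under the reference law $\bar\Pp_\mu^\Alg$. Your explicit check that the posterior given $(H_{t-1},A_t,Y=y)$ is $b_{t,h}^y$, and the explicit descent to $S_t$ through the lift conditions, spell out steps that the paper's shorter proof uses implicitly.
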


\proofinappendix{app:proof-prop:index-chain}

\subsection{Fixed-truth indexed AIR bracket}
\label{subsec:air-mair-gradient}

The posterior-averaged information gain in \eqref{eq:index-info} is the correct object for Bayesian chain rules and the Bellman--Fano lower-bound method.  The upper Bellman program uses the corresponding fixed-truth coordinate identity.  The coordinate identity is most cleanly stated in AIR form: a current reference marginal on the index is scored by the logarithmic posterior of the index after the current action and observation. The point-coordinate formulas below are literal for finite or countable indices with positive masses. For a dominated non-atomic index, \(q(y)\) and \(q_\nu^+(y\mid\cdot)\) denote Radon--Nikodym densities with respect to a fixed dominating measure, and the identities require positivity and integrability of the displayed log-density ratios. The later step that discards the terminal coordinate log loss is restricted to a discrete index.

\paragraph{Coefficient convention.} Throughout the upper-bound identities and Bellman programs, the information multiplier is denoted by $\gamma>0$: a one-step log gain $G$ is charged as $\gamma G$, and an initial coordinate code length is charged as $\gamma\log(1/q)$.  Some AIR/MAIR/DEC papers use a temperature parameter $\eta$ with multiplier $1/\eta$ \citep{xu2025bayesian,liu2025decision,liu2026improved}; in the present notation this is simply $\gamma=1/\eta$.

Fix a state $s$, an action distribution $p\in\Delta(\calA_t(s))$, and a pair belief
\[
    \nu\in\Delta(\Omega\times\calY)
\]
over environments and index values.  Let $q_\nu$ be the $\calY$-marginal of $\nu$, and let $q$ be a current reference marginal that is positive on every coordinate evaluated by the logarithmic score.  The pair belief induces, for each realized action $a$, the predictive mixture
\[
    P_{\nu,s,p,a}(\cdot)
    :=
    \int P_{\omega,t}(\cdot\mid s,p,a)\,\nu(d\omega,dy),
\]
and the posterior index marginal
\[
    q_\nu^+(B\mid s,p,a,o)
    :=
    \nu(Y\in B\mid s,p,a,O=o),
    \qquad B\subseteq\calY.
\]
When the observation laws are dominated by a common measure and have densities $p_{\omega,t,s,p,a}(o)$, this update is explicitly
\begin{equation}
\label{eq:qplus-pair-belief}
    q_\nu^+(B\mid s,p,a,o)
    =
    \frac{
        \int_{\Omega\times B}p_{\omega,t,s,p,a}(o)\,\nu(d\omega,dy)
    }{
        \int_{\Omega\times\calY}p_{\omega,t,s,p,a}(o)\,\nu(d\omega,dy)
    } .
\end{equation}
In ordinary stochastic bandits and MDPs the dependence on $p$ is suppressed.

Let $\ell_{\omega,y}(s,p)$ denote the comparison loss associated with pair $(\omega,y)$; for a valid fixed truth one takes $y=\chi(\omega)$ and $\ell_{\omega,\chi(\omega)}=\ell_\omega$.  Define the general indexed AIR functional
\begin{equation}
\label{eq:air-functional-general}
\begin{aligned}
    \mathfrak A_{q,\gamma}(s,p,\nu)
    &:={}
    \E_{(\Omega,Y)\sim\nu}\ell_{\Omega,Y}(s,p)
    -\gamma\E_{(\Omega,Y)\sim\nu,\,a\sim p,\,O\sim P_{\Omega,t}(\cdot\mid s,p,a)}
    \log\frac{q_\nu^+(Y\mid s,p,a,O)}{q(Y)} .
\end{aligned}
\end{equation}
Equivalently,
\begin{equation}
\label{eq:air-functional-info-form}
    \mathfrak A_{q,\gamma}(s,p,\nu)
    =
    \E_\nu\ell_{\Omega,Y}(s,p)
    -\gamma \calI_\nu(Y;O\mid s,p)
    -\gamma\KL(q_\nu\|q),
\end{equation}
where
\[
    \calI_\nu(Y;O\mid s,p)
    :=
    \E_{a\sim p}\int \KL(P_{\nu,s,p,a}^{y}\|P_{\nu,s,p,a})q_\nu(dy)
\]
is the one-step posterior-averaged information computed under the candidate pair belief.  Thus $q$ is the current reference index marginal, while $q_\nu^+$ is constructed from the algorithmic pair belief $\nu$ and the current likelihood model.  In an exact Bayesian specialization, $\nu$ is the current posterior over $(\Omega,\chi(\Omega))$ and $q=q_\nu$.  In AIR/AMS/EBO, $\nu$ may be an optimized or robust candidate pair belief, and the KL term in \eqref{eq:air-functional-info-form} accounts for moving its index marginal away from the current reference $q$.

\begin{lemma}[AIR gradient bracket]
\label{lem:air-gradient-bracket}
Assume a finite or countable index with positive reference masses and finite or dominated observation laws, or use the dominated-index density convention above. Assume also that the displayed Gateaux derivatives and log ratios exist and are integrable. Up to an additive constant on the probability simplex,
\begin{equation}
\label{eq:air-gradient}
    \frac{\partial}{\partial\nu(\omega,y)}\mathfrak A_{q,\gamma}(s,p,\nu)
    =
    \ell_{\omega,y}(s,p)
    -\gamma
    \E_{a\sim p,\,O\sim P_{\omega,t}(\cdot\mid s,p,a)}
    \log\frac{q_\nu^+(y\mid s,p,a,O)}{q(y)} .
\end{equation}
Consequently, for every fixed pair $(\omega,y)$,
\begin{align}
\label{eq:air-bracket}
    B_{q,\gamma}^{\AIR}(s,p,\nu;\omega,y)
    &:={}
    \mathfrak A_{q,\gamma}(s,p,\nu)
    +
    \left\langle
        \nabla_\nu\mathfrak A_{q,\gamma}(s,p,\nu),
        \delta_{(\omega,y)}-\nu
    \right\rangle  \nonumber\\
    &=
    \ell_{\omega,y}(s,p)
    -\gamma
    \E_{a\sim p,\,O\sim P_{\omega,t}(\cdot\mid s,p,a)}
    \log\frac{q_\nu^+(y\mid s,p,a,O)}{q(y)} .
\end{align}
\end{lemma}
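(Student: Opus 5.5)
The plan is to compute the Gateaux derivative of $\mathfrak A_{q,\gamma}(s,p,\nu)$ in the direction of a point mass, using the standard fact that a Bayesian log-score is "self-calibrated": the variation of the posterior $q_\nu^+$ contributes nothing to first order. Write the functional as the sum of a linear part and a logarithmic part:
\[
\mathfrak A_{q,\gamma}(s,p,\nu)=\int \ell_{\omega,y}(s,p)\,\nu(d\omega,dy)
-\gamma\,\Phi(\nu)+\gamma\int \log q(y)\,\nu(d\omega,dy),
\]
where
\[
\Phi(\nu):=\E_{a\sim p}\int\!\!\int p_{\omega,s,p,a}(o)\log q_\nu^+(y\mid s,p,a,o)\,\nu(d\omega,dy)\,\lambda(do)
\]
and $\lambda$ is the dominating measure. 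The loss term and the $\log q$ term are linear in $\nu$. Their derivatives in the coordinate $(\omega,y)$ are therefore $\ell_{\omega,y}(s,p)$ and $\gamma\log q(y)$. After integrating $p_{\omega,s,p,a}(o)\,\lambda(do)=1$, the latter equals $\gamma\,\E_{a,O\sim P_\omega}\log q(y)$.

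The only nontrivial step is differentiating $\Phi$. Fix $a$ and $o$ and set
\[
m_\nu(y,o):=\int_\Omega p_{\omega,s,p,a}(o)\,\nu(d\omega,dy),
\qquad
Z_\nu(o)=\int m_\nu(y,o)\,dy,
\]
so that $q_\nu^+(y\mid o)=m_\nu(y,o)/Z_\nu(o)$. Then $\Phi(\nu)=\E_a\int\!\int m_\nu(y,o)\log\bigl(m_\nu(y,o)/Z_\nu(o)\bigr)$, which is a negative conditional entropy. Differentiating along $\nu+\varepsilon(\delta_{(\omega,y)}-\nu)$ gives three pieces:
\begin{itemize}
\item the term $\E_{a,O\sim P_\omega}\log q_\nu^+(y\mid a,O)$, in which the log is held fixed;
\item the term $\E_a\int\!\int \dot m\,\lambda(do)$, which comes from differentiating $\log m$;
\item minus the term $\E_a\int \dot Z\,\lambda(do)$, which comes from differentiating $\log Z$, since $\int m/Z=1$.
\end{itemize}
The last two pieces cancel exactly, because $\int \dot m\,dy=\dot Z$. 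This is the cancellation underlying the statement "$\nabla\E_\nu[\log\text{posterior}]=$ pointwise log posterior". Alternatively, one can note that $\int\!\int m\log(m/Z)$ is positively $1$-homogeneous in $\nu$. By Euler's identity its derivative along $\delta-\nu$ is then just the pointwise integrand minus the value, so the contributions from the perturbation of $q_\nu^+$ drop out.

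Collecting terms, the coordinate derivative is
\[
\ell_{\omega,y}(s,p)-\gamma\,\E_{a\sim p,\,O\sim P_{\omega}}\log\frac{q_\nu^+(y\mid s,p,a,O)}{q(y)}
\]
plus a constant. This proves \eqref{eq:air-gradient}. The constant is harmless: directions on the simplex have total mass zero, so only differences of coordinates matter. The bracket \eqref{eq:air-bracket} then follows from the same homogeneity/Euler identity. The map $\nu\mapsto\mathfrak A_{q,\gamma}$ (with the $\log q$ part and the loss part linear) is $1$-homogeneous on the cone of finite measures. Hence
\[
\langle\nabla\mathfrak A,\nu\rangle=\mathfrak A(\nu),
\qquad
\mathfrak A+\langle\nabla\mathfrak A,\delta_{(\omega,y)}-\nu\rangle=\langle\nabla\mathfrak A,\delta_{(\omega,y)}\rangle,
\]
and the right-hand side is exactly the pointwise expression. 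Any additive constant in the gradient integrates to zero against $\delta-\nu$.

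The main obstacle is justifying the interchange of differentiation and integration, i.e., exchanging $d/d\varepsilon$ with $\E_a\int\lambda(do)$. For a countable index with positive masses and densities this is handled by dominated convergence. It uses the convexity bound on $x\log x$ together with the assumed integrability of the log ratios. The step is also where positivity of $q$ and of $q_\nu^+$ on the evaluated coordinates is needed, so the plan is to invoke the lemma's integrability hypothesis directly at this point. For non-atomic indices the same computation goes through with densities relative to the fixed dominating measure.
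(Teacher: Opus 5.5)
Your proposal is correct and follows essentially the same route as the paper: the paper writes the information-plus-reference term as $\sum_{y,o}w_{y,o}\log\frac{w_{y,o}}{w_o q(y)}$ and differentiates in the mass at $(\omega,y)$, where the two ``$+1$'' terms from numerator and denominator cancel exactly as your $\dot m$ and $\dot Z$ pieces do. The bracket then follows, as in your Euler-identity argument, because pairing the gradient with $\nu$ reproduces $\mathfrak A_{q,\gamma}(s,p,\nu)$, so only the pointwise coordinate term survives against $\delta_{(\omega,y)}-\nu$.
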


\proofinappendix{app:proof-lem:air-gradient-bracket}

\subsection{Exact AIR identity and model-index specialization}
\label{subsec:air-mair-specialization}

We state simplified versions and alternative proofs of several AIR/MAIR
identities.  These are regret identities rather than only upper bounds.
\citet{xu2025bayesian} developed them for arbitrary history-dependent model and
decision kernels inside the conditional expectations; see especially the
latter part of Section~5.2 and the concluding discussion.  This history
dependence allows complexity to accumulate adaptively over time and motivates
the Bellman formulation used here.

Each identity requires the same compatibility condition: the next index
marginal must be generated by the pair belief \(\nu_t\) in the AIR functional.
Under this update, Lemma~\ref{lem:air-gradient-bracket} gives the one-step AIR
bracket and the coordinate logarithm telescopes.  Other reference updates may
give useful log-potential bounds, but they require separate calibration or
relaxation arguments unless they admit the same pair-belief representation.

\begin{theorem}[Exact indexed AIR regret identity]
\label{thm:air-regret-identity}
Let an algorithm generate, at each reached state $S_t$, a current reference index marginal $q_t$ that is positive on the coordinate being evaluated, a pair belief $\nu_t\in\Delta(\Omega\times\calY)$, and a decision law $p_t\in\Delta(\calA_t(S_t))$.  After sampling $A_t\sim p_t$ and observing $O_t$, update the reference marginal by
\begin{equation}
\label{eq:air-reference-update}
    q_{t+1}(\cdot)=q_{\nu_t}^+(\cdot\mid S_t,p_t,A_t,O_t).
\end{equation}
Fix a pair $(\omega^\star,y^\star)$ such that \(q_t(y^\star)>0\) and \(q_{t+1}(y^\star)>0\) almost surely along the analyzed trajectory, and assume that the displayed losses and log ratios are integrable. Then, for every $\gamma>0$,
\begin{equation}
\label{eq:air-regret-identity}
\begin{aligned}
    \E_{\omega^\star}^{\Alg}\sum_{t=1}^T\ell_{\omega^\star,y^\star}(S_t,p_t)
    &={}
    \gamma\E_{\omega^\star}^{\Alg}
    \log\frac{q_{T+1}(y^\star)}{q_1(y^\star)}
    +
    \E_{\omega^\star}^{\Alg}
    \sum_{t=1}^T
    B_{q_t,\gamma}^{\AIR}(S_t,p_t,\nu_t;\omega^\star,y^\star).
\end{aligned}
\end{equation}
For the original fixed-truth regret, take $y^\star=\chi(\omega^\star)$ and $\ell_{\omega^\star,y^\star}=\ell_{\omega^\star}$.  The expectation is over action randomization, observations, and any exogenous algorithmic randomness.
\end{theorem}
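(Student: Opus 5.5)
The identity is a telescoping argument built on the bracket formula \eqref{eq:air-bracket} of Lemma~\ref{lem:air-gradient-bracket}; no optimality of \(\nu_t\) or \(p_t\) is used.

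\textbf{Step 1: rearrange the bracket.} Fix a round \(t\) and condition on the pre-action history \(H_{t-1}\). This fixes \(S_t\), \(q_t\), \(\nu_t\), \(p_t\); any exogenous algorithmic randomness can be folded into the conditioning. By \eqref{eq:air-bracket} applied at the fixed pair \((\omega^\star,y^\star)\),
\[
\ell_{\omega^\star,y^\star}(S_t,p_t)
=B_{q_t,\gamma}^{\AIR}(S_t,p_t,\nu_t;\omega^\star,y^\star)
+\gamma\,\E_{a\sim p_t,\,O\sim P_{\omega^\star,t}(\cdot\mid S_t,p_t,a)}\log\frac{q_{\nu_t}^+(y^\star\mid S_t,p_t,a,O)}{q_t(y^\star)} .
\]
Only the second (closed-form) equality in \eqref{eq:air-bracket} is needed here. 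That equality can be taken as the definition of the right-hand side, so the Gateaux-derivative hypotheses matter only for naming it a gradient bracket.

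\textbf{Step 2: identify the inner expectation with the realized increment.} Under the fixed truth \(\omega^\star\), the conditional law of \((A_t,O_t)\) given \(H_{t-1}\) is exactly \(a\sim p_t\), \(O\sim P_{\omega^\star,t}(\cdot\mid H_{t-1},p_t,a)\). By predictive sufficiency (Definition~\ref{def:index-bellman}(ii)) this equals \(P_{\omega^\star,t,S_t,p_t,a}\). The update rule \eqref{eq:air-reference-update} gives \(q_{\nu_t}^+(y^\star\mid S_t,p_t,A_t,O_t)=q_{t+1}(y^\star)\). Hence the inner expectation equals
\[
\E_{\omega^\star}^{\Alg}\Bigl[\log\tfrac{q_{t+1}(y^\star)}{q_t(y^\star)}\Bigm| H_{t-1}\Bigr].
\]
This step is the main point of care, since it is where the compatibility condition is consumed. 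The posterior in the bracket must be computed from the same \(\nu_t\) used to produce \(q_{t+1}\), evaluated at the realized action and observation, while the outer law is the true \(P_{\omega^\star}\) rather than the mixture under \(\nu_t\). The positivity assumptions \(q_t(y^\star),q_{t+1}(y^\star)>0\) a.s. ensure the logarithm is finite along the trajectory. In the dominated non-atomic case, the same argument runs with densities.

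\textbf{Step 3: sum and telescope.} Take the outer expectation using the tower property; integrability of the losses and log ratios justifies exchanging the expectation with the finite sum. Summing over \(t=1,\dots,T\), the log ratios telescope to \(\log\bigl(q_{T+1}(y^\star)/q_1(y^\star)\bigr)\), which gives \eqref{eq:air-regret-identity}. The one remaining subtlety is that integrability of each \(\log q_t(y^\star)\) is needed to split \(\E\log(q_{t+1}/q_t)\) into differences. One can avoid this by telescoping inside the expectation: the sum \(\sum_t\log(q_{t+1}/q_t)\) equals \(\log(q_{T+1}/q_1)\) pathwise, so only integrability of each ratio is required. Finally, specializing to \(y^\star=\chi(\omega^\star)\) and \(\ell_{\omega^\star,y^\star}=\ell_{\omega^\star}\) gives the fixed-truth regret form.
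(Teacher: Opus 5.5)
Your proposal is correct and follows the paper's proof: condition on $H_{t-1}$, use the update rule \eqref{eq:air-reference-update} to identify the log-gain term in the closed-form bracket \eqref{eq:air-bracket} with $\E[\log(q_{t+1}(y^\star)/q_t(y^\star))\mid H_{t-1}]$, and then sum and telescope. Telescoping pathwise inside the expectation, so that only integrability of each ratio is needed, is a harmless refinement of that argument.
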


\proofinappendix{app:proof-thm:air-regret-identity}

The variational interpretation through posterior scoring and Danskin's theorem is given in Appendix~\ref{app:air-mair}.

\paragraph{Identity versus lower bound.}
The identity is tightest when the final coordinate potential is retained.  Rearranging \eqref{eq:air-regret-identity} gives
\[
    \E_{\omega^\star}^{\Alg}\sum_{t=1}^T\ell_{\omega^\star,y^\star}(S_t,p_t)
    =
    \gamma\log\frac1{q_1(y^\star)}
    -\gamma\E_{\omega^\star}^{\Alg}\log\frac1{q_{T+1}(y^\star)}
    +
    \E_{\omega^\star}^{\Alg}\sum_{t=1}^T
    B_{q_t,\gamma}^{\AIR}(S_t,p_t,\nu_t;\omega^\star,y^\star).
\]
For a finite or countable index, dropping the nonpositive terminal term $-\gamma\E\log(1/q_{T+1}(y^\star))$ gives the usual upper value and discards the remaining coordinate code length. For a non-atomic index represented by densities, that term need not be nonpositive and must be retained or controlled separately. This observation does not itself give a minimax lower bound; the lower side still requires a Bellman--Fano or ghost-quantile value.  It explains why the same logarithmic coordinate is the right quantity on both sides: the upper proof pays the initial code length only after discarding the final unresolved code length, while the lower proof bounds how much indexed entropy must remain under ghost histories.

\paragraph{Stationary-posterior specialization.}
In the exact posterior-indexed lift of Definition~\ref{def:index-bellman}, the pair belief is the current posterior law of $(\Omega,\chi(\Omega))$ and $q_t$ is its index marginal.  Then $q_{\nu_t}^+$ is the exact posterior index update and the one-step fixed-truth log gain can also be written in predictive-law form.  For $y=\chi(\omega)$,
\begin{equation}
\label{eq:index-fixed-loggain}
    J_\chi(s,p;\omega)
    :=
    \E_{a\sim p,\,O\sim P_{\omega,t}(\cdot\mid s,p,a)}
    \log\frac{q_{\nu_s}^+(y\mid s,p,a,O)}{q_s(y)} .
\end{equation}
Equivalently, when the relevant laws are dominated,
\begin{equation}
\label{eq:index-kl-difference}
    J_\chi(s,p;\omega)
    =
    \E_{a\sim p}\left[
        \KL(P_{\omega,s,p,a}\|P_{s,p,a})
        -
        \KL(P_{\omega,s,p,a}\|P_{s,p,a}^{\chi(\omega)})
    \right].
\end{equation}
Posterior averaging of $J_\chi$ over $\omega$ conditional on $Y=y$, and then over $y\sim q_s$, recovers $\calI_\chi(s,p)$.  Thus $J_\chi$ is a fixed-truth coordinate log gain, whereas $\calI_\chi$ is its posterior-averaged information counterpart.

For $\gamma>0$, the exact-posterior fixed-truth indexed bracket is
\begin{equation}
\label{eq:index-fixed-bracket}
    B_{\chi,\gamma}(s,p;\omega)
    :=
    \ell_\omega(s,p)-\gamma J_\chi(s,p;\omega).
\end{equation}
It is the specialization of \eqref{eq:air-bracket} obtained by taking the pair belief to be the current posterior over $(\Omega,\chi(\Omega))$ and the reference marginal to be $q_s$.

\paragraph{Environment-index specialization.}
If $Y=\Omega$, the pair belief is concentrated on the graph $y=\omega$.  Write the current environment belief as $\mu$ and the reference model marginal as $\rho$.  Point masses below are interpreted literally on a finite or countable model class and as densities relative to a fixed dominating measure otherwise; all displayed density ratios must be positive at the evaluated truth and integrable. Then \eqref{eq:air-functional-general} becomes
\begin{equation}
\label{eq:mair-objective}
    \MAIR_{\rho,\gamma}(s,p,\mu)
    :=
    \E_{\omega\sim\mu}\ell_\omega(s,p)
    -\gamma\E_{a\sim p}I_\mu(\Omega;O\mid s,p,a)
    -\gamma\KL(\mu\|\rho).
\end{equation}
The fixed-truth bracket is
\begin{align}
\label{eq:mair-bracket}
    B_{\rho,\gamma}^{\MAIR}(s,p,\mu;\omega^\star)
    &=
    \ell_{\omega^\star}(s,p)
    -\gamma\E_{a\sim p}\KL(P_{\omega^\star,s,p,a}\|P_{\mu,s,p,a})
    -\gamma\log\frac{\mu(\omega^\star)}{\rho(\omega^\star)} .
\end{align}
If $\rho_{t+1}$ is the Bayes posterior update of $\mu_t$ after $(S_t,p_t,A_t,O_t)$, the required point masses or densities remain positive at $\omega^\star$, and the losses and log ratios are integrable, then
\begin{equation}
\label{eq:mair-regret-identity}
    \E_{\omega^\star}^{\Alg}L_{\omega^\star}(H_T)
    =
    \gamma\E_{\omega^\star}^{\Alg}
    \log\frac{\rho_{T+1}(\omega^\star)}{\rho_1(\omega^\star)}
    +
    \E_{\omega^\star}^{\Alg}\sum_{t=1}^T
    B_{\rho_t,\gamma}^{\MAIR}(S_t,p_t,\mu_t;\omega^\star).
\end{equation}
The stationary or current-posterior choice $\mu_t=\rho_t$ removes the explicit density-ratio correction in \eqref{eq:mair-bracket} and gives the KL-DEC offset.  In this specialization the potential is the model-coordinate log posterior.  In action-index AIR it is instead the log posterior of the optimal action or policy.  When the decision problem requires only this index, the action-index formulation can charge the smaller decision-relevant information target.

Differentiating the MAIR objective yields an elegant derivation of three familiar offsets: (i) the KL–DEC offset when the comparison and reference beliefs coincide, (ii) robust AMS/EBO control at an interior maximizing belief, and (iii) a Hellinger offset under the square-root posterior construction. The precise statement appears in Lemma~\ref{lem:mair-gradient} of Appendix~\ref{app:air-mair}.

\section{Upper bounds from logarithmic Bellman potentials}
\label{sec:upper}

We first give the fixed-truth log-potential telescope and then state
when it is an AIR/MAIR identity.  The potential is the logarithmic score of the
maintained index marginal.  For an AIR/MAIR identity, the next marginal must be
the posterior index marginal produced by the same pair belief \(\nu\) over
\((\Omega,Y)\).

Fix a truth $\omega^\star$ and write $y^\star=\chi(\omega^\star)$.  At state $s$, the log potential uses the current reference marginal $q_s$ on the index.  A local algorithmic control $u$ specifies an action law $p_u\in\Delta(\calA_t(s))$ and a state-measurable reference-index update
\[
   q_u^+(\cdot\mid s,a,o)\in\Delta(\calY).
\]
In the AIR/MAIR specialization, this update is generated by a pair belief \(\nu_u\in\Delta(\Omega\times\calY)\), namely
\[
    q_u^+(\cdot\mid s,a,o)=q_{\nu_u}^+(\cdot\mid s,p_u,a,o),
\]
with \(q_{\nu}^+\) defined in \eqref{eq:qplus-pair-belief}.  Exact Bayes and stationary-posterior MAIR are \(\nu\)-generated updates.  Confidence-to-belief and exponential-weights procedures may also define logarithmic reference updates, but then the theorem uses the corresponding state-measurable update directly; such a bound should not be called the AIR gradient identity unless an admissible pair-belief representation has been verified.

The fixed-truth reference log gain used by the Bellman program is
\begin{equation}
\label{eq:reference-loggain}
    \mathsf G_\chi(s,u;\omega^\star)
    :=
    \E_{a\sim p_u,\,O\sim P_{\omega^\star,t}(\cdot\mid s,p_u,a)}
    \log\frac{q_u^+(y^\star\mid s,a,O)}{q_s(y^\star)} .
\end{equation}
When \(u=(p,\nu)\) is represented by an AIR pair belief, we also write \(\mathsf G_\chi(s,p,\nu;\omega^\star)\).  In that case \(q_u^+=q_\nu^+\), and \(\mathsf G_\chi\) is the fixed-truth AIR posterior-coordinate log gain.  When \(\nu\) is the exact posterior pair law, this is \(J_\chi(s,p;\omega^\star)\) from \eqref{eq:index-fixed-loggain}.  Thus the log potential is always the coordinate log score of a specified reference index marginal, but the AIR/MAIR regret identity additionally requires that the next marginal be the \(\nu\)-posterior coordinate associated with the same pair belief used in the AIR functional.

For any predictable continuation potential $V_t$ on states and any coefficient $\gamma>0$, define the specialized fixed-truth logarithmic potential
\begin{equation}
\label{eq:specialized-log-potential}
    \Phi_t^{V,\gamma}(s;y^\star)
    :=
    V_t(s)+\gamma\log\frac1{q_s(y^\star)}.
\end{equation}
The corresponding log-potential Bellman bracket is
\begin{equation}
\label{eq:upper-bracket}
\begin{aligned}
    \mathfrak B_{\chi}^{V,\gamma}(s,u;\omega^\star)
    &:={}
    \ell_{\omega^\star}(s,p_u)
    +\E_{\omega^\star}[V_{t+1}(S^+)\mid s,u]
    -V_t(s)
    -\gamma\mathsf G_\chi(s,u;\omega^\star).
\end{aligned}
\end{equation}
Here \(A\sim p_u\), \(O\sim P_{\omega^\star,t}(\cdot\mid s,p_u,A)\), the reference part of the next state is updated by \(q_u^+(\cdot\mid s,A,O)\), and any other state variables are updated by the rule specified by \(u\).  In the AIR specialization \(u=(p,\nu)\), this bracket is written \(\mathfrak B_\chi^{V,\gamma}(s,p,\nu;\omega^\star)\).  The identity behind the upper section is the telescope
\begin{equation}
\label{eq:potential-identity}
\begin{aligned}
    \E_{\omega^\star} L_{\omega^\star}(H_T)
    &={}
    V_1(s_1)-\E_{\omega^\star}V_{T+1}(S_{T+1})
    +\gamma\log\frac1{q_1(y^\star)}
    -\gamma\E_{\omega^\star}\log\frac1{q_{T+1}(y^\star)}        \\
    &\quad+
    \E_{\omega^\star}\sum_{t=1}^T
    \mathfrak B_{\chi}^{V,\gamma}(S_t,u_t;\omega^\star).
\end{aligned}
\end{equation}
Retaining the terminal term gives the sharp identity for the analyzed
algorithm.  The discrete-versus-density sign convention, and the exact loss
from dropping the terminal coordinate code length, are discussed after
Theorem~\ref{thm:air-regret-identity}.  This remains an upper identity; a
Bellman--Fano minimax lower bound is a separate argument.

Thus an upper bound is a Bellman supersolution for this log-potential bracket.  The exact indexed AIR Bellman program optimizes dynamically over the state, action law, and any algorithmically chosen reference-update rule.  If that update rule is represented by a pair belief, the pair belief is an algorithmic prediction variable.  By contrast, AMS/EBO places candidate beliefs under a robust maximization and uses a saddle or first-order condition to control the same fixed-truth AIR bracket.  UCB controls the bracket by calibration and optimism, while E2D drops or bounds the continuation to obtain a one-step offset.

\begin{theorem}[Generic fixed-truth indexed AIR/MAIR upper bound]
\label{thm:mair-upper}

Fix $\omega^\star$ and $y^\star=\chi(\omega^\star)$.  Assume $q_1(y^\star)>0$ and that the reference update remains positive on $y^\star$ along the analyzed trajectory. Assume also that the point-coordinate or dominated-density convention applies and that the losses, log ratios, continuation potentials, and displayed error sum are integrable. If an algorithm chooses a decision law $p_t$ and a state-measurable prediction belief $\nu_t$ so that the reference update is $q_{t+1}=q_{\nu_t}^+(\cdot\mid S_t,p_t,A_t,O_t)$ and, for every reached state,
\[
    \mathfrak B_{\chi}^{V,\gamma}(S_t,p_t,\nu_t;\omega^\star)
    \le \varepsilon_t,
\]
then
\begin{equation}
\label{eq:fixed-truth-log-potential-telescope}
    \E_{\omega^\star}^{\Alg} L_{\omega^\star}(H_T)
    \le
    V_1(s_1)+\gamma\log\frac1{q_1(y^\star)}
    -\E_{\omega^\star}^{\Alg}V_{T+1}(S_{T+1})
    -\gamma\E_{\omega^\star}^{\Alg}\log\frac1{q_{T+1}(y^\star)}
    +\E_{\omega^\star}^{\Alg}\sum_{t=1}^T\varepsilon_t .
\end{equation}

\end{theorem}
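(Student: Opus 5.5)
The plan is to prove the telescope identity \eqref{eq:potential-identity} for the analyzed algorithm and then substitute the assumed bracket bound $\varepsilon_t$ term by term. The first step is a purely algebraic rearrangement of the definition \eqref{eq:upper-bracket} at each reached state $S_t$ with control $u_t=(p_t,\nu_t)$:
\[
\ell_{\omega^\star}(S_t,p_t)
=\mathfrak B_\chi^{V,\gamma}(S_t,p_t,\nu_t;\omega^\star)
+V_t(S_t)-\E_{\omega^\star}[V_{t+1}(S_{t+1})\mid S_t,u_t]
+\gamma\,\mathsf G_\chi(S_t,p_t,\nu_t;\omega^\star).
\]
Here the loss of a mixed action is the $p_t$-average, $\ell_{\omega^\star}(s,p)=\E_{a\sim p}\ell_{\omega^\star}(s,p,a)$. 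By loss sufficiency (Definition~\ref{def:index-bellman}(iii)), its conditional expectation given $H_{t-1}$ equals the expected realized increment $\ell_t(\omega^\star,H_{t-1},p_t,A_t)$.

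Next, identify the log-gain term along the trajectory. Because the reference update is $q_{t+1}=q_{\nu_t}^+(\cdot\mid S_t,p_t,A_t,O_t)$ and $q_{S_t}=q_t$, definition \eqref{eq:reference-loggain} gives
\[
\mathsf G_\chi(S_t,p_t,\nu_t;\omega^\star)=\E_{\omega^\star}\Bigl[\log\tfrac{q_{t+1}(y^\star)}{q_t(y^\star)}\Bigm| H_{t-1}\Bigr],
\]
since $A_t\sim p_t$ and $O_t\sim P_{\omega^\star,t}(\cdot\mid S_t,p_t,A_t)$. The same holds for the continuation term: by predictive and update sufficiency (clauses (ii) and (iv)), $\E_{\omega^\star}[V_{t+1}(S^+)\mid S_t,u_t]=\E_{\omega^\star}[V_{t+1}(S_{t+1})\mid H_{t-1}]$. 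The state-conditional law of the next state therefore coincides with the history-conditional law, and this is the one point where Bellman sufficiency is used.

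Then take full expectations and sum over $t=1,\dots,T$, using the tower property. The $V$ differences telescope to $V_1(s_1)-\E V_{T+1}(S_{T+1})$. The log ratios telescope to $\gamma\E\log\bigl(q_{T+1}(y^\star)/q_1(y^\star)\bigr)=\gamma\log\frac1{q_1(y^\star)}-\gamma\E\log\frac1{q_{T+1}(y^\star)}$, where $q_1$ is deterministic at $s_1$. This yields \eqref{eq:potential-identity}. Finally, bound each $\E\,\mathfrak B_\chi^{V,\gamma}(S_t,\ldots)$ by $\E\varepsilon_t$ to obtain \eqref{eq:fixed-truth-log-potential-telescope}. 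The terminal terms are kept exactly as stated, so no sign assumption on $q_{T+1}$ is needed.

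The main obstacle is justifying the telescoping, since individual log ratios may be unbounded. I would handle it with the integrability assumptions of the statement and positivity of $q_t(y^\star)$ and $q_{t+1}(y^\star)$ along the trajectory: each summand is integrable, so linearity of expectation and the tower property apply term by term for the finite horizon $T$. In the dominated-density case the same computation goes through with $q$ read as a density, since only log ratios of the same coordinate appear. Alternatively, the step can be read off from Theorem~\ref{thm:air-regret-identity} by adding and subtracting $V$.
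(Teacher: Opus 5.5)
Your proposal is correct and follows the paper's route. Like the paper, you identify $\mathsf G_\chi$ with the conditional expectation of the one-step coordinate log ratio under the $\nu_t$-generated update, and the continuation term with the history-conditional expectation of $V_{t+1}(S_{t+1})$. You then telescope to \eqref{eq:potential-identity} and substitute the bracket bound $\varepsilon_t$ term by term. The only difference is that you spell out the sufficiency and integrability details that the paper's two-line proof leaves implicit.
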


\proofinappendix{app:proof-thm:mair-upper}

It is useful to keep one concrete state specialization in mind.  A fixed estimation procedure maps histories to a reference update.  In AIR form the update is represented by a pair belief $\nu_t$ over $(\Omega,Y)$; in an exact model-posterior specialization it is a belief over environments with $Y=\chi(\Omega)$.  The retained state includes the current reference index marginal
\begin{equation}
\label{eq:state-posterior}
   q_t\in\Delta(\calY),
\end{equation}
and, when the update is posterior based, the next marginal is $q_{t+1}=q_{\nu_t}^+(\cdot\mid S_t,p_t,A_t,O_t)$.  In a well-specified Bayes analysis, $\nu_t$ is the exact posterior over $(\Omega,\chi(\Omega))$.  In a frequentist analysis, the state may instead contain an algorithmic belief, confidence object, or exponential-weights density.  Such an object is valid in the AIR identity only when it induces an admissible pair belief and posterior-coordinate update; otherwise it gives a separate log-score bound whose validity must be proved through calibration or domination.  Posterior averaging of the coordinate potential gives
\begin{equation}
\label{eq:information-potential}
    \E_{Y\sim\nu_\chi}\gamma\log\frac1{q_s(Y)}
    =
    \gamma H(\nu_\chi)+\gamma\KL(\nu_\chi\|q_s),
\end{equation}
and for a finite index set the same logarithmic scoring rule has the KL-dual log-partition form
\begin{equation}
\label{eq:log-partition-dual}
    \Psi_\gamma(q,z)
    :=
    \gamma\log\sum_{y\in\calY}q(y)e^{z_y/\gamma}
    =
    \sup_{r\in\Delta(\calY)}
    \left\{
        \langle r,z\rangle-
        \gamma\KL(r\|q)
    \right\}.
\end{equation}
The coordinate log loss, its posterior average, and the dual log partition are three representations of the same KL/logarithmic scoring rule.  They are not interchangeable in a proof: fixed-truth guarantees use \eqref{eq:specialized-log-potential}, posterior-averaged Bayes identities use \eqref{eq:information-potential}, and AMS/EBO relaxations use \eqref{eq:log-partition-dual}.

In the model-index specialization, write the posterior/reference belief as $\mu_t$.  If the maintained reference update is exact Bayes, then $\mathsf G_\chi=J_\chi$ and the fixed-truth MAIR information increment is
\begin{equation}
\label{eq:fixed-truth-mair-info-upper}
    J_t^{\MAIR}(p;\omega^\star)
    :=
    \E_{a\sim p}
    \KL\!\left(
        P_{\omega^\star,t}(\cdot\mid S_t,a)
        \,\middle\|\,
        P_{\mu_t,t}(\cdot\mid S_t,a)
    \right).
\end{equation}
The action-index AIR version is identical with the index $Y=A^\star$, the reference marginal $q_t$, and the AIR posterior coordinate $q^+(a^\star\mid a,o)$ in place of the model posterior.

\subsection{Information-potential Bellman programming}\label{subsec:information-Bellman}

The frequentist upper object is an indexed AIR log-penalized Bellman
program for a specified state, information index, comparison set, and control
family.  At state $s$, $\mathfrak C_t(s)\subseteq\Omega$ is the set of
candidate truths tested by the outer supremum, whereas $\mathfrak U_t(s)$ is
the learner's admissible control set.  The program is exact for these specified
ingredients, but it need not equal the unrestricted minimax value.  A control
$u\in\mathfrak U_t(s)$ specifies an action law
$p_u\in\Delta(\calA_t(s))$ and a state-measurable reference-update rule.  When
this update is represented by a pair belief, write that belief as $\nu_u$ and
set $q^+=q_{\nu_u}^+$.  If the pair belief is fixed by the state, then $u$ is
simply the action law together with this fixed update.  The robust AMS/EBO
belief variable is not this algorithmic control; it is introduced later under
a supremum to control the bracket.  A control for which an evaluated
coordinate log gain is not well defined on $\mathfrak C_t(s)$ is assigned
bracket value $+\infty$.  For $\gamma>0$, define
\begin{equation}
\label{eq:fixed-truth-log-penalized-dp}
    W_{T+1}^\gamma(s)=0,
    \qquad
    W_t^\gamma(s)
    :=
    \inf_{u\in\mathfrak U_t(s)}
    \sup_{\omega\in\mathfrak C_t(s)}
    \left\{
        \ell_\omega(s,p_u)
        +\E_\omega[W_{t+1}^\gamma(S^+)\mid s,u]
        -\gamma\mathsf G_\chi(s,u;\omega)
    \right\}.
\end{equation}
A measurable selector attaining the infimum in \eqref{eq:fixed-truth-log-penalized-dp} is the information-penalized Bellman dynamic-programming algorithm.  In the common AIR specialization, $u=(p,\nu)$ and
\[
    \mathsf G_\chi(s,u;\omega)=\mathsf G_\chi(s,p,\nu;\omega).
\]
In exact Bayes, $\nu$ is fixed by the posterior state; in a non-AIR log-score bound, $u$ specifies the state-measurable reference update directly. Conceptually, the program is related to the admissible-relaxation approach of \citet{rakhlin2012relax} and, in particular, to the partial-information formulation of \citet{rakhlin2016bistro}, in which estimator-like state variables and potentials encode uncertainty.  It also has a static counterpart in minimum-complexity,
information-risk, and Gibbs estimation
\citep{barron1991minimum,yang1999information,zhang2006information}.  Here the
logarithmic mass penalty is updated along a controlled Bellman recursion on the
retained state and information index.

\begin{theorem}[Frequentist log-penalized Bellman upper bound]
\label{thm:frequentist-info-risk-upper}
Fix $\Omega_0\subseteq\Omega$ and $\gamma>0$.  Suppose the comparison
sets $\mathfrak C_t$ are surely calibrated for $\Omega_0$: under every fixed
truth $\omega^\star\in\Omega_0$, the inclusion
$\omega^\star\in\mathfrak C_t(S_t)$ holds almost surely at every reached
round.  Suppose the algorithm chooses a local control $u_t\in\mathfrak U_t(S_t)$, writes $p_t=p_{u_t}$ for its action law, updates the index marginal by the reference-update rule contained in $u_t$, and satisfies
\[
    \sup_{\omega\in\mathfrak C_t(S_t)}
    \left\{
        \ell_\omega(S_t,p_{u_t})
        +\E_\omega[W_{t+1}^\gamma(S^+)\mid S_t,u_t]
        -\gamma\mathsf G_\chi(S_t,u_t;\omega)
    \right\}
    \le
    W_t^\gamma(S_t)+\varepsilon_t .
\]
Then, for every fixed truth $\omega^\star\in\Omega_0$ with $y^\star=\chi(\omega^\star)$ and $q_1(y^\star)>0$, provided the index is finite or countable, the reference update remains positive on $y^\star$ along the trajectory, and the losses, coordinate log ratios, continuation values, and displayed error sum are integrable,
\begin{equation}
\label{eq:frequentist-info-risk-upper}
\E_{\omega^\star}^{\Alg}L_{\omega^\star}(H_T)
    \le
    W_1^\gamma(s_1)
    +\gamma\log\frac1{q_1(y^\star)}
    +\E_{\omega^\star}^{\Alg}\sum_{t=1}^T\varepsilon_t.
\end{equation}
\end{theorem}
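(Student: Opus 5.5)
The plan is to apply the log-potential telescope \eqref{eq:potential-identity} with continuation potential $V_t=W_t^\gamma$, which is the route of Theorem~\ref{thm:mair-upper}, except that the reference update is now the one contained in the control $u_t$ and need not be a pair-belief update. The telescope itself is elementary. Write $\Phi_t:=W_t^\gamma(S_t)+\gamma\log(1/q_t(y^\star))$. By definition of $\mathsf G_\chi$ in \eqref{eq:reference-loggain} and of the bracket \eqref{eq:upper-bracket}, and conditioning on $H_{t-1}$ under $\omega^\star$,
\[
\E_{\omega^\star}\bigl[\ell_{\omega^\star}(S_t,p_t)+\Phi_{t+1}-\Phi_t\mid H_{t-1}\bigr]
=\mathfrak B_\chi^{W,\gamma}(S_t,u_t;\omega^\star).
\]
The coordinate-log term telescopes because $q_{t+1}=q_{u_t}^+(\cdot\mid S_t,A_t,O_t)$ by the update rule in $u_t$. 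So the quantity $\gamma\E\log(q_{t+1}(y^\star)/q_t(y^\star)\mid H_{t-1})$ is exactly $\gamma\mathsf G_\chi(S_t,u_t;\omega^\star)$. Summing over $t$ and taking expectations gives \eqref{eq:potential-identity} with $V=W^\gamma$. Integrability and positivity of $q_t(y^\star)$ along the trajectory make every term finite and justify the tower property.

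The second step bounds each bracket by $\varepsilon_t$. The bracket evaluated at the truth is $\ell_{\omega^\star}(S_t,p_{u_t})+\E_{\omega^\star}[W_{t+1}^\gamma(S^+)\mid S_t,u_t]-\gamma\mathsf G_\chi(S_t,u_t;\omega^\star)-W_t^\gamma(S_t)$. Sure calibration gives $\omega^\star\in\mathfrak C_t(S_t)$ almost surely, so this is at most the supremum over $\mathfrak C_t(S_t)$ minus $W_t^\gamma(S_t)$. By the assumed near-optimality of $u_t$, that is at most $\varepsilon_t$.

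The last step handles the terminal terms. Since $W_{T+1}^\gamma\equiv0$, the continuation terminal term vanishes. Because the index is finite or countable, $q_{T+1}(y^\star)\le1$, so $-\gamma\E\log(1/q_{T+1}(y^\star))\le0$ and can be dropped. What remains is $W_1^\gamma(s_1)+\gamma\log(1/q_1(y^\star))+\E\sum_t\varepsilon_t$, which is \eqref{eq:frequentist-info-risk-upper}.

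The main obstacle is technical rather than conceptual. The near-optimality inequality is stated with the supremum at every reached state, and the values $W_t^\gamma$ may equal $+\infty$ or be ill-defined at some states. I would handle this by noting that the hypothesis already forces $W_t^\gamma(S_t)+\varepsilon_t$ to be finite on reached states, and that the stated integrability of the continuation values and of $\sum_t\varepsilon_t$ makes the conditional-expectation manipulations legitimate. Measurability of $u_t$ as a function of $S_t$ comes from the standing $\varepsilon$-selector assumption.
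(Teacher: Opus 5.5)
Your proposal is correct and is essentially the paper's proof: sure calibration puts $\omega^\star$ inside the supremum, so each bracket $\mathfrak B_\chi^{W,\gamma}(S_t,u_t;\omega^\star)$ is at most $\varepsilon_t$ almost surely. The telescope \eqref{eq:potential-identity} with $V_t=W_t^\gamma$, together with $W_{T+1}^\gamma\equiv0$ and the nonnegativity of $\log(1/q_{T+1}(y^\star))$ for a discrete index, then gives the bound. One small correction to your closing remark: when $W_t^\gamma(S_t)=+\infty$ the approximate-minimization hypothesis is vacuous rather than forcing finiteness, so finiteness of the continuation values must come from the assumed integrability, which you also invoke.
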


\proofinappendix{app:proof-thm:frequentist-info-risk-upper}

\begin{remark}[High-probability calibration]
The preceding theorem deliberately assumes sure statewise calibration.  A future-dependent event $\mathcal E$ cannot simply be inserted into the conditional Bellman brackets and charged afterward by $\E[L^+\1\{\mathcal E^c\}]$: conditioning a continuation value and then restricting to $\mathcal E$ does not preserve the displayed Bellman inequality.  A high-probability version requires a separate stopped-process or supermartingale argument, or explicit predictable failure brackets.  In the kernel-UCB result below we therefore prove the deterministic regret inequality on its confidence event directly and only then take expectations.
\end{remark}

The posterior-averaged Bayesian recursion is a specialization rather than the primitive fixed-truth statement. Write $H_\chi(s):=H(q_s)$ for the entropy of the posterior index marginal. If the robust comparison set is replaced by the current posterior law and one averages the coordinate identity over $Y\sim q_s$, then
\[
    \E[H_\chi(S_{t+1})\mid s,p]-H_\chi(s)=-\calI_\chi(s,p).
\]
For a fixed multiplier $\gamma>0$, define the Bayes information-regularized value
\begin{equation}
\label{eq:info-regularized-bellman}
    U_{T+1}^\gamma\equiv0,
    \qquad
    U_t^\gamma(s)
    :=
    \inf_{p\in\Delta(\calA_t(s))}
    \left\{
        \ell(s,p)+
        \E[U_{t+1}^\gamma(S_{t+1})\mid s,p]
        -\gamma\calI_\chi(s,p)
    \right\}.
\end{equation}
Equivalently,
\begin{equation}
\label{eq:entropy-bellman-potential}
    \Phi_t^\gamma(s):=U_t^\gamma(s)+\gamma H_\chi(s)
\end{equation}
solves the admissible-relaxation recursion
\begin{equation}
\label{eq:phi-bellman-relaxation}
    \Phi_t^\gamma(s)
    =
    \inf_{p\in\Delta(\calA_t(s))}
    \left\{
        \ell(s,p)+
        \E[\Phi_{t+1}^\gamma(S_{t+1})\mid s,p]
    \right\},
    \qquad
    \Phi_{T+1}^\gamma(s)=\gamma H_\chi(s).
\end{equation}

\begin{theorem}[Posterior-averaged information-potential upper bound]
\label{thm:info-potential-dp-upper}
Assume the indexed Bellman representation is exact under the Bayesian mixture law, $Y$ is finite or countable, and $H_\mu(Y)<\infty$.  If a policy chooses $p_t$ satisfying
\[
    \ell(S_t,p_t)
    +\E[U_{t+1}^\gamma(S_{t+1})\mid S_t,p_t]
    -\gamma\calI_\chi(S_t,p_t)
    \le
    U_t^\gamma(S_t)+\varepsilon_t,
\]
then
\begin{equation}
\label{eq:exact-dp-upper-info}
    \E L_\Omega(H_T)
    \le
    U_1^\gamma(s_1)
    +\gamma I_\mu(Y;H_T)
	    +\E\sum_{t=1}^T\varepsilon_t
    \le
    U_1^\gamma(s_1)
    +\gamma H_\mu(Y)
	    +\E\sum_{t=1}^T\varepsilon_t .
\end{equation}
This is the posterior average of the fixed-truth coordinate telescope, not a pointwise replacement for Theorem~\ref{thm:frequentist-info-risk-upper}.
\end{theorem}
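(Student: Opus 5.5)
The proof is a telescoping argument on the entropy-augmented potential $\Phi_t^\gamma=U_t^\gamma+\gamma H_\chi$, run under the Bayesian mixture law, followed by the chain rule of Proposition~\ref{prop:index-chain}.

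\emph{Step 1 (one-step inequality).} Because the indexed representation is exact under the mixture, the expected loss given the state is $\E[\ell_t(\Omega,H_{t-1},p_t,A_t)\mid S_t]=\ell(S_t,p_t)$. This uses the tower property together with the posterior-indexed lift \eqref{eq:lift1}--\eqref{eq:lift2}. Adding and subtracting $U_{t+1}^\gamma(S_{t+1})$ in the policy's $\varepsilon_t$-optimality condition gives
\[
\E[\ell_t\mid S_t]\le U_t^\gamma(S_t)-\E[U_{t+1}^\gamma(S_{t+1})\mid S_t]+\gamma\calI_\chi(S_t,p_t)+\varepsilon_t .
\]

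\emph{Step 2 (telescoping).} Take total expectations and sum over $t=1,\dots,T$. Using $U_{T+1}^\gamma\equiv0$, the $U$-terms collapse to $U_1^\gamma(s_1)$. The remaining terms are $\gamma\,\E\sum_t\calI_\chi(S_t,p_t)+\E\sum_t\varepsilon_t$.

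Under the mixture, the actual trajectory has law $\bar\Pp_\mu^\Alg$, so it coincides in distribution with the reference history $H_T'$. Hence $\E\sum_t\calI_\chi(S_t,p_t)=T\bar C_\chi^\Alg(\mu)$. By Proposition~\ref{prop:index-chain}, this equals $I_\mu(Y;H_T)$, assuming $O_0$ is deterministic or independent of $Y$; otherwise we work conditionally on $O_0$. This gives the first inequality.

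Equivalently, one can telescope $\Phi_t^\gamma$ directly and use the entropy-drift identity $\E[H_\chi(S_{t+1})\mid s,p]-H_\chi(s)=-\calI_\chi(s,p)$. That route yields $\gamma\bigl(H(q_1)-\E H(q_{T+1})\bigr)$, which is again $\gamma I_\mu(Y;H_T)$.

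\emph{Step 3 (entropy bound).} Since $Y$ is discrete, $I_\mu(Y;H_T)=H_\mu(Y)-H_\mu(Y\mid H_T)\le H_\mu(Y)$, because $H_\mu(Y\mid H_T)\ge0$. The assumption $H_\mu(Y)<\infty$ ensures this difference is well defined and not of the form $\infty-\infty$.

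\emph{Main obstacle: integrability.} The difficulty is justifying that the sums can be split and telescoped. The relevant quantities are $\calI_\chi\ge0$ and $\E U_t^\gamma$. Nonnegativity of the information terms, together with $\sum_t\E\calI_\chi=I_\mu(Y;H_T)\le H_\mu(Y)<\infty$, bounds the information contributions. I would then assume, or derive from the integrability of losses and errors, that each $U_t^\gamma(S_t)$ is integrable, so the telescoping sum of expectations is legitimate.

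A secondary point is identifying the conditional-on-state loss and information with the full-history posterior quantities. This is exactly what the exact posterior-indexed lift provides.
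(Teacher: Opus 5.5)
Your proposal is correct and follows the paper's proof. You rearrange the $\varepsilon_t$-optimality condition, telescope $U_t^\gamma$ using $U_{T+1}^\gamma\equiv0$, identify $\E\sum_t\calI_\chi(S_t,p_t)$ with $I_\mu(Y;H_T)$ via Proposition~\ref{prop:index-chain}, and then bound this by $H_\mu(Y)$. For the case you leave as ``work conditionally on $O_0$'', the paper closes it directly: the sum equals $I_\mu(Y;H_T\mid O_0)=I_\mu(Y;H_T)-I_\mu(Y;O_0)\le I_\mu(Y;H_T)$.
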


\proofinappendix{app:proof-thm:info-potential-dp-upper}

Section~\ref{sec:info-sandwich} summarizes the minimax
information-risk sandwich, whose formal statement appears in
Appendix~\ref{app:sandwich-details}.  The fixed-truth coordinate telescope and
the exact log-penalized Bellman program above form its upper side.  The next
subsection gives a short map of the main tractable controls;
Appendix~\ref{app:algorithm-comparisons} gives the details.

\subsection{UCB, E2D, and AMS/EBO as tractable controls}
\label{subsec:algorithm-family-summary}

The log-penalized Bellman program is an information-theoretic object; its
statewise optimization need not be computationally easy.  Three familiar
algorithmic families control it in different ways.  UCB combines a calibrated
confidence width with optimism and then sums the realized, unmultiplied
uncertainty by an elliptical-potential argument.  E2D replaces the continuation
term by a localized one-step separation penalty.  AMS optimizes a robust
candidate belief and action distribution, while EBO gives the corresponding
KL-dual optimization principle.  These are complementary controls of the same
logarithmic Bellman bracket, not identical algorithms or a single universal
coefficient.

The precise bracket inequalities, the distinction between algorithmic update
beliefs and robust comparison beliefs, and the single-round DEC specializations
are collected in Appendices~\ref{app:algorithm-comparisons}
and~\ref{app:dec}.  The kernel application below uses the localized robust
AIR/AMS control only after a finite marginal has been fixed; it does not assume
that any of these methods discovers the localization scale automatically.

\section{Lower bounds: reference histories and quantile indices}
\label{sec:lower}

\subsection{Ghost probability and true good probability}

Fix an average-regret threshold $r\ge0$.  The posterior-reference ghost-good probability is
\begin{equation}
\label{eq:ghost-prob}
    p_r^\Alg(\mu,\chi)
    :=
    \Pp_{Y\sim\mu_\chi,\,H_T'\sim\bar\Pp_\mu^\Alg}
    \left(
        \bar L_\chi(Y,H_T')\le r
    \right),
\end{equation}
where $Y$ and $H_T'$ are independent under the ghost law and $\mu_\chi$ is the marginal law of $Y$.  The true good probability is
\begin{equation}
\label{eq:true-good}
    q_r^\Alg(\mu,\chi)
    :=
    \Pp_{Y,H_T}
    \left(
        \bar L_\chi(Y,H_T)\le r
    \right),
\end{equation}
where $(Y,H_T)$ are generated by $\Omega\sim\mu$ and $H_T\sim\Pp_\Omega^\Alg$.

The ghost probability is algorithm-dependent.  That dependence is intentional: it preserves the reference trajectory geometry instead of replacing it with a worst-case static small ball.

\subsection{Reference-history quantile theorem}

Let
\[
    \kl(q,p)=q\log\frac qp+(1-q)\log\frac{1-q}{1-p}
\]
be binary KL divergence.  For $p\in[0,1]$ and $c\ge0$, define
\[
    \psi(p,c):=\sup\{q\in[0,1]:\kl(q,p)\le c\}.
\]

\begin{theorem}[Reference-history quantile indexed-information lower bound]
\label{thm:quantile-index}
For every prior $\mu$, index map $\chi$, threshold $r\ge0$, and algorithm $\Alg$, assume that the posterior-reference process satisfies the exact posterior-indexed lift in the sense of Definition~\ref{def:index-bellman}, that $O_0$ is deterministic or independent of $Y$, and that the cumulative indexed loss satisfies $L_\chi(Y,H_T)\ge0$ almost surely under the true mixture law.  Let \(C_\chi^\Alg(\mu)\) and \(\bar C_\chi^\Alg(\mu)\) denote the
cumulative and average indexed information defined in
\eqref{eq:average-info}. Then
\begin{equation}
\label{eq:binary-dp}
    \kl\bigl(q_r^\Alg(\mu,\chi),p_r^\Alg(\mu,\chi)\bigr)
    \le
    C_\chi^\Alg(\mu)
    =
    T\bar C_\chi^\Alg(\mu).
\end{equation}
Consequently,
\begin{equation}
\label{eq:quantile-exact}
    \E_{\Omega\sim\mu}\E_{\Pp_\Omega^\Alg}L_\Omega(H_T)
    \ge
    Tr\left[1-
    \psi\bigl(p_r^\Alg(\mu,\chi),T\bar C_\chi^\Alg(\mu)\bigr)
    \right].
\end{equation}
In particular, if $p_r^\Alg(\mu,\chi)\le1/2$ and
\begin{equation}
\label{eq:quantile-kl-condition}
    T\bar C_\chi^\Alg(\mu)
    \le
    \kl\!\left(\frac12,p_r^\Alg(\mu,\chi)\right),
\end{equation}
then
\begin{equation}
\label{eq:average-lower}
    \E_{\Omega\sim\mu}\E_{\Pp_\Omega^\Alg}\bar L_\Omega(H_T)
    \ge
    \frac r2,
    \qquad
    \E_{\Omega\sim\mu}\E_{\Pp_\Omega^\Alg}L_\Omega(H_T)
    \ge
    \frac{Tr}{2}.
\end{equation}
A convenient sufficient condition is
\begin{equation}
\label{eq:average-readable}
    T\bar C_\chi^\Alg(\mu)+\log2
    \le
    \frac12\log\frac1{p_r^\Alg(\mu,\chi)}.
\end{equation}
If $O_0$ may be informative about $Y$, the same conclusions hold after replacing every occurrence of $T\bar C_\chi^\Alg(\mu)$ in the information budget by
\[
    I_\mu(Y;O_0)+T\bar C_\chi^\Alg(\mu).
\]
\end{theorem}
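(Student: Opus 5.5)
The plan follows the standard interactive-Fano route. We compare the true joint law of $(Y,H_T)$, generated by $\Omega\sim\mu$ and $H_T\sim\Pp_\Omega^\Alg$, with the ghost product law $\mu_\chi\otimes\bar\Pp_\mu^\Alg$. Under the product law $Y$ and $H_T'$ are independent, and the marginal of $H_T'$ equals the true history marginal. The key observation is that
\[
\KL\bigl(\mathcal L(Y,H_T)\,\big\|\,\mu_\chi\otimes\bar\Pp_\mu^\Alg\bigr)=I_\mu(Y;H_T).
\]
This holds exactly because the first marginals agree ($Y\sim\mu_\chi$) and the second marginals agree (both equal $\bar\Pp_\mu^\Alg$). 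By Proposition~\ref{prop:index-chain}, which needs the exact posterior-indexed lift and a deterministic or independent $O_0$, this mutual information equals $T\bar C_\chi^\Alg(\mu)=C_\chi^\Alg(\mu)$.

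\textbf{Step 1: the binary KL bound.} Consider the event $E=\{\bar L_\chi(y,h)\le r\}$, a measurable set of pairs $(y,h)$. The conditional index loss $\ell^\chi_t(y,h,p,a)$ in \eqref{eq:index-loss} is a fixed measurable function of $(y,h)$ (with $p_t$ a function of $h$), so the same event is evaluated under both laws. Its true probability is $q_r^\Alg(\mu,\chi)$ and its ghost probability is $p_r^\Alg(\mu,\chi)$. Applying the data-processing inequality to the indicator $\1_E$ gives
\[
\kl(q_r^\Alg,p_r^\Alg)\le C_\chi^\Alg(\mu),
\]
which is \eqref{eq:binary-dp}. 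If $O_0$ is informative, the chain rule adds $I_\mu(Y;O_0)$ to the mutual information, which yields the stated replacement.

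\textbf{Step 2: the risk bound \eqref{eq:quantile-exact}.} By the definition of $\psi$, Step 1 gives $q_r^\Alg\le\psi(p_r^\Alg,T\bar C_\chi^\Alg)$. Since $L_\chi\ge0$ almost surely, Markov's inequality applied on the complement gives
\[
\E L_\chi(Y,H_T)\ge Tr\,\Pp(\bar L_\chi>r)=Tr(1-q_r^\Alg).
\]
Identity \eqref{eq:index-loss-equality} converts the left side into $\E_{\Omega\sim\mu}\E L_\Omega(H_T)$, which proves \eqref{eq:quantile-exact}.

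\textbf{Step 3: the specializations.} Suppose $p\le1/2$ and $C\le\kl(1/2,p)$. The map $q\mapsto\kl(q,p)$ is increasing for $q\ge p$, so $\psi(p,C)\le1/2$, and \eqref{eq:average-lower} follows. For the readable condition \eqref{eq:average-readable}, use
\[
\kl(1/2,p)=\tfrac12\log\tfrac1{p}+\tfrac12\log\tfrac1{1-p}-\log2\ge\tfrac12\log\tfrac1p-\log2.
\]
Hence \eqref{eq:average-readable} implies \eqref{eq:quantile-kl-condition}. It also forces $\tfrac12\log(1/p)\ge\log2$, that is $p\le1/4\le1/2$, so the hypothesis $p\le1/2$ is automatic.

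\textbf{Expected obstacle.} The main point to verify is measurability and well-definedness. The conditional index loss must be a genuine function of $(y,h)$, so that the event $E$ is identical under the true and ghost laws. This is exactly what the exact posterior-indexed lift guarantees through state-measurable $\ell^\chi_{t,y}$. It is also needed to justify the identification of the KL divergence with the mutual information when $I_\mu(Y;H_T)$ may be infinite, in which case the bound is trivial.
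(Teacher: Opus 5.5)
Your proposal is correct and follows the paper's proof step for step. Both arguments identify the KL divergence between the true joint law and the ghost product law with $I_\mu(Y;H_T)$, apply data processing to the indicator of the low-loss event, invoke Proposition~\ref{prop:index-chain}, lower bound the nonnegative cumulative loss by $Tr\,\1\{\bar L_\chi>r\}$, and use $\kl(1/2,p)\ge\frac12\log(1/p)-\log 2$. Your two additional remarks are correct refinements of the same argument, not a different route: that both marginals must agree for the KL--mutual-information identity, and that \eqref{eq:average-readable} by itself forces $p_r^\Alg\le 1/4$.
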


\proofinappendix{app:proof-thm:quantile-index}

Appendix~\ref{app:lower-details} details the adaption of interactive Fano and the posterior-reference chain rule.

\subsection{Bellman--Fano lower-bound method}
\label{subsec:bellman-lower}

Theorem~\ref{thm:quantile-index} is algorithm-specific. We now give an algorithm-uniform lower-bound method with two components: the first bounds the information of any algorithm, and the second bounds the ghost probability of success of any algorithm.

\paragraph{Exact information Bellman value and supersolutions.}
The intrinsic algorithm-uniform information quantity is an exact Bellman value.  For a function \(F\) on states, define
\[
    (\mathsf C_tF)(s)
    :=
    \sup_{p\in\Delta(\calA_t(s))}
    \left\{
        \calI_\chi(s,p)
        +
        \E_{a\sim p,o\sim P_{s,p,a}}F(\tau(s,p,a,o))
    \right\}.
\]
Set
\[
    C_{T+1}^\star(s)=0,
    \qquad
    C_t^\star(s)=(\mathsf C_t C_{t+1}^\star)(s).
\]
Then \(C_1^\star(s_1)\) is the exact posterior-reference indexed-information capacity on the retained state.  A sequence \(\overline C_t\) is a valid information supersolution if
\[
    \overline C_{T+1}\ge0,
    \qquad
    \overline C_t(s)\ge(\mathsf C_t\overline C_{t+1})(s)
    \quad\forall t,s.
\]
For every algorithm starting from \(s_1\),
\begin{equation}
\label{eq:capacity-bound}
    C_\chi^\Alg(\mu)
    \le C_1^\star(s_1)
    \le \overline C_1(s_1).
\end{equation}
Thus the exact Bellman recursion is the tight intrinsic object; supersolutions are used only as computable or analytically convenient upper bounds.

\paragraph{Exact ghost-good Bellman value and supersolutions.}
Let \(m\in\Delta(\calY)\) be the target-index distribution used to evaluate success, let \(s\) be the posterior-reference state, and let \(g:\calY\to\mathbb R\) be an accumulated average-loss profile.  For threshold \(r\), define the exact ghost-good Bellman value by
\[
    \Gamma_{T+1}^{r,\star}(m,s,g)=m\{y:g(y)\le r\},
\]
\[
   \Gamma_t^{r,\star}(m,s,g)
    =
    \sup_{p\in\Delta(\calA_t(s))}
    \E_{a\sim p,o\sim P_{s,p,a}}
    \Gamma_{t+1}^{r,\star}\left(
        m,
        \tau(s,p,a,o),
        g+\frac1T\ell_\cdot^\chi(s,p,a)
    \right),
\]
where \(\ell_\cdot^\chi(s,p,a)\) is the function \(y\mapsto\ell_y^\chi(s,p,a)\).  A function sequence \(\overline\Gamma_t^r\) is a valid ghost-mass supersolution if it dominates the terminal condition and the same Bellman right-hand side with \(\overline\Gamma_{t+1}^r\) in place of \(\Gamma_{t+1}^{r,\star}\).  We henceforth take ghost-mass supersolutions to be $[0,1]$-valued. Any nonnegative supersolution may be clipped at one, since the ghost Bellman operator is monotone and maps $[0,1]$-valued functions into $[0,1]$; this clipping preserves the supersolution inequality. Then, for every algorithm,
\begin{equation}
\label{eq:ghost-mass-bound}
    p_r^\Alg(\mu,\chi)
    \le
    \Gamma_1^{r,\star}(\mu_\chi,s_1,0)
    \le
    \overline\Gamma_1^r(\mu_\chi,s_1,0).
\end{equation}
The exact ghost entropy and its computable lower bound are therefore
\begin{equation}
\label{eq:ghost-entropy}
    \calE_{\star,1}^r(\mu,\chi)
    :=
    -\log \Gamma_1^{r,\star}(\mu_\chi,s_1,0),
    \qquad
    \underline{\calE}_1^r(\mu,\chi)
    :=
    -\log \overline\Gamma_1^r(\mu_\chi,s_1,0).
\end{equation}
The entropy lower bound is conservative: \(\underline{\calE}_1^r\le\calE_{\star,1}^r\).

\begin{theorem}[Bellman--Fano quantile-index lower bound]
\label{thm:bellman-lower}
Assume that the retained state is an exact posterior-indexed lift under $\mu$, that the indexed information terms are well defined and integrable, and that a fixed jointly measurable version of $\ell_y^\chi(s,p,a)$ is used in the ghost product experiment. The theorem is stated conditionally on a fixed initial packet and resulting state $s_1$; an unconditional result for a random initial state follows by applying the full conditional bound and then averaging. Assume the cumulative indexed loss is nonnegative as in Theorem~\ref{thm:quantile-index}.  Suppose $\overline C_t$ is an information supersolution and $\overline\Gamma_t^r$ is a $[0,1]$-valued ghost-good-mass supersolution; the exact choices $\overline C=C^\star$ and $\overline\Gamma=\Gamma^{r,\star}$ give the tight intrinsic Bellman--Fano values.  The statement is for the unrestricted nonanticipating class; for a Bellman-compatible restricted class, replace each local supremum in the exact recursions and in the supersolution recursions by the corresponding local feasible set.  Then
\begin{equation}
\label{eq:bellman-lower-exact}
    \inf_{\Alg\in\mathfrak A_T^{\rm na}}
    \E_{\Omega\sim\mu}\E_{\Pp_\Omega^\Alg}L_\Omega(H_T)
    \ge
    Tr\left[1-
    \psi\bigl(e^{-\underline{\calE}_1^r(\mu,\chi)},\overline C_1(s_1)\bigr)
    \right].
\end{equation}
In particular, if
\begin{equation}
\label{eq:bellman-critical}
    \overline C_1(s_1)+\log2
    \le
    \frac12\underline{\calE}_1^r(\mu,\chi),
\end{equation}
then
\begin{equation}
\label{eq:bellman-lower-half}
    \inf_{\Alg\in\mathfrak A_T^{\rm na}}
    \E_{\Omega\sim\mu}\E_{\Pp_\Omega^\Alg}L_\Omega(H_T)
    \ge
    \frac{Tr}{2}.
\end{equation}
Consequently,
\[
    \mathfrak R_T^\star
    \ge
    \sup_{\mu,\chi,r}
    \left\{
        \frac{Tr}{2}:
        \overline C_1(s_1)+\log2
        \le
        \frac12\underline{\calE}_1^r(\mu,\chi)
    \right\}.
\]
\end{theorem}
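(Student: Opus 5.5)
The proof applies Theorem~\ref{thm:quantile-index} algorithm by algorithm and then replaces the algorithm-specific quantities $C_\chi^\Alg(\mu)$ and $p_r^\Alg(\mu,\chi)$ by the algorithm-uniform Bellman values. Fix any $\Alg\in\mathfrak A_T^{\rm na}$. Theorem~\ref{thm:quantile-index} gives
\[
\E_{\Omega\sim\mu}\E_{\Pp_\Omega^\Alg}L_\Omega(H_T)\ge Tr\bigl[1-\psi(p_r^\Alg,C_\chi^\Alg(\mu))\bigr].
\]
This uses $\kl(q_r^\Alg,p_r^\Alg)\le C_\chi^\Alg$, the definition of $\psi$, and Markov's inequality applied to the nonnegative cumulative indexed loss together with \eqref{eq:index-loss-equality}. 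The remaining task is to show that $\psi(p,c)$ is nondecreasing in both arguments. Then \eqref{eq:capacity-bound} and \eqref{eq:ghost-mass-bound} let us replace $(p_r^\Alg,C_\chi^\Alg)$ by $(e^{-\underline{\calE}_1^r},\overline C_1(s_1))$. Since the resulting bound no longer depends on $\Alg$, taking the infimum over algorithms gives \eqref{eq:bellman-lower-exact}.

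\textbf{Step 1: monotonicity of $\psi$.} In $c$ this is immediate, because the feasible set $\{q:\kl(q,p)\le c\}$ grows with $c$. In $p$, one can check directly that $\psi(p,c)\ge p$, and that for $q\ge p$ the map $p\mapsto\kl(q,p)$ is nonincreasing. Hence raising $p$ can only enlarge the feasible set on the side $q\ge p$, and $\psi$ is nondecreasing in $p$. The case $\overline\Gamma=1$, where $\psi=1$, is trivial.

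\textbf{Step 2: the two comparison inequalities.} The capacity bound \eqref{eq:capacity-bound} follows from Proposition~\ref{prop:index-chain}. Write $C_\chi^\Alg$ as a sum of per-state terms along reference histories, then run backward induction. At round $t$, conditional on $S_t'=s$, the algorithm's kernel is some $p\in\Delta(\calA_t(s))$, possibly history dependent. The conditional continuation is bounded by $\overline C_{t+1}$ at the next state, and one application of the operator $\mathsf C_t$ together with the supersolution inequality closes the step. The update and predictive laws are state-measurable by the exact lift, and the reference observation law is $P_{s,p,a}$.

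The ghost bound \eqref{eq:ghost-mass-bound} works the same way, now on the augmented state $(s,g)$ with $Y\sim m$ independent of the ghost history. The running profile $g_t(y)=\frac1T\sum_{k<t}\ell^\chi_y(S_k',p_k',A_k')$ is a deterministic function of the history. By measurability of the fixed version of $\ell^\chi$, the terminal probability $\Pp(\bar L_\chi(Y,H_T')\le r\mid H_T')=m\{g_{T+1}\le r\}$ matches the terminal condition. Backward induction with a supremum over $p$ at each step then bounds the expectation by $\overline\Gamma_1^r(\mu_\chi,s_1,0)$.

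\textbf{Main obstacle.} The hardest point is that the algorithm's kernel depends on the full history $H_{t-1}'$, not only on the augmented state. The fix is to condition on the full history in the induction. The induction hypothesis is that the expected future payoff given $H_{t-1}'$ is at most $\overline\Gamma_t$ (respectively $\overline C_t$) evaluated at the state. This holds because the state captures all dynamics: by the exact lift, the predictive law depends only on $(s,p,a)$. A mixture over history-dependent $p$ is then bounded by the supremum over $p$.

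\textbf{Step 3: the sufficient condition and the minimax bound.} For \eqref{eq:bellman-lower-half}, mimic the readable condition \eqref{eq:average-readable}. Use the bound $\kl(q,p)\ge q\log(1/p)-\log 2$. Then $\psi(p,c)\le (c+\log2)/\log(1/p)$, which is at most $1/2$ under \eqref{eq:bellman-critical}, so the lower bound becomes $Tr/2$. The final minimax statement follows by Yao's principle, taking the supremum over $(\mu,\chi,r)$. For a restricted Bellman-compatible class, the same inductions go through with local feasible sets in place of the full simplex.
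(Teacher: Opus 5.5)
Your proposal is correct and follows the paper's proof. You apply Theorem~\ref{thm:quantile-index} to each algorithm, replace $C_\chi^\Alg(\mu)$ and $p_r^\Alg(\mu,\chi)$ by $\overline C_1(s_1)$ and $e^{-\underline{\calE}_1^r(\mu,\chi)}$ via \eqref{eq:capacity-bound}, \eqref{eq:ghost-mass-bound} and the monotonicity of $\psi$ in both arguments, and finish with the infimum and Yao's principle; the only differences are that you also sketch the backward-induction proofs of those two comparison inequalities (which the paper asserts just before the theorem) and obtain the $Tr/2$ case from $\psi(p,c)\le(c+\log2)/\log(1/p)$ rather than from \eqref{eq:average-readable}, both of which are valid.
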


\proofinappendix{app:proof-thm:bellman-lower}

The critical average-regret radius is
\begin{equation}
\label{eq:critical-radius}
    r_\star(\mu,\chi)
    :=
    \sup\left\{
        r:
        \overline C_1(s_1)+\log2
        \le
        \frac12\underline{\calE}_1^r(\mu,\chi)
    \right\}.
\end{equation}
The lower bound is order $Tr_\star$.  A two-point Le Cam proof corresponds to the special case in which $\underline{\calE}=O(1)$.  A Fano or local-prior-mass lower bound has $\underline{\calE}$ of order dimension or effective dimension.

\paragraph{Relation to DEC.}
The Bellman formulation keeps hard-prior admissibility, evaluation of the upper recursion at the lower entropy scale, and growth of the upper value separate, while DEC derives upper and lower guarantees from a single-round coefficient.  Appendix~\ref{app:dec} gives the formal comparison and discusses when a single-round reduction preserves the dynamic rate.

\subsection{The information-complexity sandwich}
\label{sec:info-sandwich}

The two constructions can now be compared in common units.  For a comparison
class \(\Omega_0\), let
\[
    L_0
    =
    \sup_{\omega\in\Omega_0}
    \log\frac1{q_1(\chi(\omega))}
\]
be the coordinate code length paid by the upper Bellman telescope.  For a hard
prior \(\mu\) and average-loss radius \(r\), let
\[
    E_{\mu,r}^{\star}
    =
    \log\frac1{p_{\mu,r}^{\star}}
\]
be the algorithm-uniform entropy of reference histories that are already good
for the sampled index.  The upper construction gives
\(\mathfrak R_T^\star(\Omega_0)\le\mathsf U_T(L_0)\), where the budget
value \(\mathsf U_T\) is defined in \eqref{eq:upper-budget-value}, whereas the
Bellman--Fano construction gives
\(\mathfrak R_T^\star(\Omega_0)\ge Tr/2\) whenever the hard prior is
admissible at radius \(r\).

The rates therefore meet when the upper value closes at the same entropy
scale, \(\mathsf U_T(E_{\mu,r}^{\star})\lesssim Tr\), and has regular growth
between \(E_{\mu,r}^{\star}\) and \(L_0\).  In that case the remaining gap is
controlled by \(L_0/E_{\mu,r}^{\star}\).  Thus coordinate code length replaces
the complexity paid by an upper algorithm, while ghost entropy replaces the
packing or local prior mass in a fixed experiment.  The exact admissibility
condition, growth hypothesis, finite-index diagnostics, and regret-ratio
statement are collected in Appendix~\ref{app:sandwich-details}; see
Theorem~\ref{thm:interactive-info-risk-sandwich}.

\section{Applications: kernel bandits and information-complexity sandwiches}
\label{sec:applications}

Kernel bandits are the main algorithmic application.  We compare a
finite-marginal AIR Bellman policy with globally calibrated,
maximal-information GP--UCB over the full RKHS, including the unit-ball schedule
of the original RKHS GP--UCB algorithm.  The same construction yields a LinUCB
separation on a heterogeneous linear action set and complements results for
spatially homogeneous Mat\'ern kernels.

The Gaussian multi-armed bandit and hypercube linear bandit examples instantiate the lower information-risk construction of Section~\ref{sec:info-sandwich} and yield minimax rates matched by standard algorithms. The detailed analyses are deferred to Appendices~\ref{app:mab} and~\ref{app:linear}.

\subsection{Kernel bandits: Bellman decompositions and algorithms}
\label{sec:kernel}

We first restrict the kernel bandit to a finite active action set.  The
full RKHS may be infinite-dimensional, but rewards on this finite set form a
finite-dimensional linear experiment in the span of its kernel features.  This
concerns the observable reward model; it does not assume that the unknown
function has finitely many parameters on the full domain.  The finite marginal
suffices to define the GP update, model-index MAIR information, and the
action-index AIR law of the optimal active action.  We give the Bellman program
and AIR bound before turning to GP--UCB.  Further details, including AMS/EBO and
Mat\'ern comparisons, appear in Appendix~\ref{app:kernel-details}.

\paragraph{Problem and finite active marginal.}
Let $\calH_k$ be a reproducing kernel Hilbert space (RKHS) on a possibly infinite domain $\calX_{\rm all}$, with kernel $k$ satisfying $k(x,x)\le \kappa^2$.  The learner is evaluated on a finite active set
\[
    \calX=\{x_1,\ldots,x_n\}\subset\calX_{\rm all}.
\]
The unknown reward function $f^\star\in\calH_k$ satisfies $\|f^\star\|_{\calH_k}\le B$, and at round $t$ the learner chooses $X_t\in\calX$ and observes
\begin{equation}
\label{eq:kernel-observation-new}
    Y_t=f^\star(X_t)+\varepsilon_t,
    \qquad
    \E[\varepsilon_t\mid H_{t-1},X_t]=0,
\end{equation}
where the noise is conditionally $R$-sub-Gaussian.  For the Gaussian calculations below we use the algorithmic reference model with variance parameter $\lambda>0$; the frequentist confidence event is obtained by the usual self-normalized argument.

For an action set $\calX$ and reward function $f$, define cumulative regret by
\begin{equation}
\label{eq:kernel-regret-definition}
    \Reg_T(f;\Alg)
    :=
    \sum_{t=1}^T\left\{\max_{x\in\calX}f(x)-f(X_t)\right\}.
\end{equation}
We write $\Reg_T(f)$ when the algorithm is clear and $\Reg_T(\Alg)$ when the environment is carried by the expectation.

The posterior and AIR/MAIR chain-rule identities in this subsection use
the well-specified Gaussian observation model
$\varepsilon_t\stackrel{\iid}{\sim}N(0,\lambda)$.  The pathwise GP--UCB
confidence argument also holds for conditionally sub-Gaussian noise, but under
non-Gaussian noise a Gaussian pseudo-posterior is only an algorithmic state and
does not satisfy the Bayesian chain rule without a separate transfer argument.

Let $F=(f(x_1),\ldots,f(x_n))\in\R^n$ be the active reward vector and let $K_{\calX}$ be the $n\times n$ kernel matrix.  The GP reference prior is
\[
    F\sim N(0,K_{\calX}),
    \qquad
    Y_t\mid F,X_t=x_i\sim N(F_i,\lambda).
\]
After history $H_{t-1}$, the algorithmic posterior on $F$ is Gaussian,
\[
    F\mid H_{t-1}\sim N(m_t,\Sigma_t).
\]
For $x_i\in\calX$, write $m_t(x_i)=e_i^\top m_t$ and $s_t^2(x_i)=e_i^\top\Sigma_t e_i$.  If $X_t=x_i$, the exact posterior update is
\begin{align}
\label{eq:gp-posterior-update-mean}
    m_{t+1}
    &=m_t+
      \frac{\Sigma_t e_i}{\lambda+s_t^2(x_i)}
      \{Y_t-m_t(x_i)\},\\
\label{eq:gp-posterior-update-cov}
    \Sigma_{t+1}
    &=\Sigma_t-
      \frac{\Sigma_t e_i e_i^\top\Sigma_t}{\lambda+s_t^2(x_i)}.
\end{align}
This is a finite marginal posterior update.  It should not be read as an assumption that $f^\star$ has $n$ parameters.  The RKHS function may be infinite-dimensional; only the observations and decisions in this finite experiment depend on the vector $F$.

\paragraph{Model-index information and action-index information.}
For the model-index specialization, take the index to be the active reward vector $Y_{\MAIR}=F$.  The one-step GP/MAIR information at state $H_{t-1}$ is
\begin{equation}
\label{eq:gp-mair-info}
    \calI_t^{\rm GP}(p)
    :=
    \E_{x\sim p}\frac12\log\left(1+\frac{s_t^2(x)}{\lambda}\right).
\end{equation}
Along a realized action sequence,
\begin{equation}
\label{eq:realized-gp-info}
    C_T^{\rm GP}
    :=
    \sum_{t=1}^T\calI_t^{\rm GP}(\delta_{X_t})
    =
    \frac12\log\det\!\left(I+\lambda^{-1}K_{X_{1:T},X_{1:T}}\right),
\end{equation}
where repeated actions are included in the Gram matrix.  This is the realized information gain of the algorithmic trajectory.  It can be much smaller than the maximal information gain that maximizes over all length-$T$ designs.

For the action-index specialization, assume ties are broken by a fixed rule and set
\[
    A^\star(F)\in\argmax_{x\in\calX}F_x,
    \qquad
    q_t(a):=\Pp(A^\star=a\mid H_{t-1}).
\]
Let $\nu_t^a$ be the exact conditional law of $F$ given $H_{t-1}$ and $A^\star=a$.  It is generally a truncated Gaussian on the cone where $a$ is optimal.  Define the conditional predictive law
\[
    P_{t,a,x}(\cdot)
    :=
    \int N(F_x,\lambda)(\cdot)\,\nu_t^a(dF),
\]
for the observation at action $x$ given the event $A^\star=a$.  The marginal predictive is
\[
    P_{t,x}=\sum_{a\in\calX}q_t(a)P_{t,a,x}
           =N(m_t(x),\lambda+s_t^2(x)).
\]
The AIR information increment is
\begin{equation}
\label{eq:kernel-air-info-new}
    \calI_t^{\AIR}(p)
    :=
    \E_{x\sim p}
    \sum_{a\in\calX}q_t(a)
    \KL(P_{t,a,x}\|P_{t,x}).
\end{equation}
The conditional AIR regret is
\begin{equation}
\label{eq:kernel-air-regret-new}
    \Delta_t^{\AIR}(p)
    :=
    \sum_{a\in\calX}q_t(a)
    \E_{F\sim\nu_t^a}\E_{x\sim p}[F_a-F_x].
\end{equation}
The exact indexed chain rule gives
\[
    \sum_{t=1}^T\E\calI_t^{\AIR}(p_t)
    =I(A^\star;H_T)
    \le H(A^\star)\le\log n.
\]
Thus AIR places the entropy telescope on the decision index rather than on the ambient RKHS.  The price is that the coefficient converting action-index information into regret must be controlled by an algorithmic bracket.

\paragraph{Algorithm 1: finite-marginal indexed Bellman DP.}

On a finite active set, $(m_t,\Sigma_t,q_t)$ is a Bellman state, where $q_t$
is the maintained law of the optimal active action.  Applying the action-index
Bellman program to this state gives the finite-marginal policy analyzed below.
In the Bayesian specialization, $q_t$ is the posterior law defined above;
in the frequentist Bellman program, it is the maintained reference law
generated by the chosen update.
The full recursion, its robust pair-belief interpretation, and the distinction
between finite action and continuous model indices are given in
Appendix~\ref{app:kernel-details}.
In addition, Appendix~\ref{subsec:two_more_algorithms} gives a kernel-specific specialization of AMS/EBO as a supersolution to the exact Bellman program.

The next theorem gives a uniform frequentist rate for the action-index branch.  For a finite retained action set $\mathcal R$, write
\[
    \Reg_T^{\rm all}(f)
    :=
    \sum_{t=1}^T
    \left\{\sup_{x\in\calX_{\rm all}}f(x)-f(X_t)\right\}
\]
for regret against the full decision domain, and define
\[
    \varepsilon_{\mathcal R}
    :=
    \sup_{f\in\calF}
    \left\{
        \sup_{x\in\calX_{\rm all}}f(x)
        -\max_{x\in\mathcal R}f(x)
    \right\}.
\]

\begin{theorem}[Finite-marginal action-index AIR Bellman bound]
\label{thm:finite-marginal-air-upper}
Let $|\mathcal R|=K\ge2$, suppose the retained reward-vector class
\[
    \Theta_{\mathcal R}
    :=
    \{(f(x))_{x\in\mathcal R}:f\in\calF\}
\]
is compact and contained in $[-L,L]^K$, and let the observations have independent $N(0,\lambda)$ noise with $\lambda>0$.  For every $\gamma>0$, the finite-marginal action-index AIR Bellman program admits a policy, confined to $\mathcal R$ and initialized with the uniform law on the attainable optimal-action indices, such that
\begin{equation}
\label{eq:finite-marginal-air-upper}
    \sup_{f\in\calF}\E_f\Reg_T^{\rm all}
    \le
    T\varepsilon_{\mathcal R}
    +\frac{(\lambda+L^2)KT}{2\gamma}
    +\gamma\log K.
\end{equation}
Consequently, the choice
\[
    \gamma
    =
    \sqrt{\frac{(\lambda+L^2)KT}{2\log K}}
\]
gives
\begin{equation}
\label{eq:finite-marginal-air-optimized}
    \sup_{f\in\calF}\E_f\Reg_T^{\rm all}
    \le
    T\varepsilon_{\mathcal R}
    +\sqrt{2(\lambda+L^2)KT\log K}.
\end{equation}
The robust AIR/AMS saddle is a supersolution of the Bellman program and
attains the same bound.  It is the finite-marginal specialization of the
AIR/AMS saddle of \citet{xu2025bayesian}.  The EBO interpretation follows from
the minimax and KL-duality principle of \citet{lattimore2021mirror}; it does not
identify the two algorithms line by line.
\end{theorem}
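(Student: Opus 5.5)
The plan is to split the full-domain regret into a discretization term and a regret term on the retained set, and then to bound the retained-set term through the fixed-truth AIR telescope with a zero continuation potential. For every $f\in\calF$ we have $\sup_{x\in\calX_{\rm all}}f(x)-f(X_t)\le\varepsilon_{\mathcal R}+\{\max_{x\in\mathcal R}f(x)-f(X_t)\}$. Summing over $t$ produces the term $T\varepsilon_{\mathcal R}$, which leaves the problem on the compact class $\Theta_{\mathcal R}\subseteq[-L,L]^K$ with index $Y=A^\star(\theta)\in\mathcal R$. The initial law $q_1$ is uniform on the attainable optimal-action indices, of which there are at most $K$. Hence $\log(1/q_1(y^\star))\le\log K$, and this is the code length $\gamma\log K$ in \eqref{eq:finite-marginal-air-upper}.

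Next I will show that the constant sequence $W_t\equiv(T-t+1)\,c/\gamma$, with $c=(\lambda+L^2)K/2$, is a supersolution of the log-penalized program \eqref{eq:fixed-truth-log-penalized-dp}. Equivalently, I will show that some AIR control makes the one-step bracket with $V\equiv0$ at most $c/\gamma$ at every reached state. Theorem~\ref{thm:mair-upper} with $V\equiv0$ and $\varepsilon_t=c/\gamma$ then gives $Tc/\gamma+\gamma\log K$. The terminal term can be dropped because the index is finite, so $-\gamma\log(1/q_{T+1}(y^\star))\le0$. This is also Theorem~\ref{thm:frequentist-info-risk-upper} with the trivial calibration $\mathfrak C_t=\Theta_{\mathcal R}$.

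The one-step control is the robust AIR/AMS saddle. I will take $(p_t,\nu_t)$ to be a saddle point of $\inf_p\sup_\nu\mathfrak A_{q_t,\gamma}(S_t,p,\nu)$ over $\nu\in\Delta(\Theta_{\mathcal R}\times\mathcal R)$, with $\nu$ supported on pairs $y=A^\star(\theta)$. The functional is linear in $p$, concave in $\nu$ (log-loss and KL structure), and $\Theta_{\mathcal R}$ is compact, so Sion's theorem applies. By Lemma~\ref{lem:air-gradient-bracket}, the first-order optimality of the maximizing $\nu_t$ gives, for every fixed pair $(\omega^\star,y^\star)$,
\[
B^{\AIR}_{q_t,\gamma}(S_t,p_t,\nu_t;\omega^\star,y^\star)
=\mathfrak A+\langle\nabla\mathfrak A,\delta_{(\omega^\star,y^\star)}-\nu_t\rangle
\le\mathfrak A_{q_t,\gamma}(S_t,p_t,\nu_t).
\]
By the minimax swap, this is at most $\sup_\nu\inf_p\mathfrak A_{q_t,\gamma}(S_t,p,\nu)$. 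For a fixed $\nu$ I will bound the inner infimum by choosing Thompson sampling for $\nu$, namely $p=q_\nu$, and by discarding $-\gamma\KL(q_\nu\|q_t)\le0$ in \eqref{eq:air-functional-info-form}. It then remains to prove the information-ratio bound $\Delta_\nu(q_\nu)\le\sqrt{2(\lambda+L^2)K\,\calI_\nu(Y;O\mid q_\nu)}$. Once that holds, AM--GM gives $\Delta-\gamma\calI\le2(\lambda+L^2)K/(4\gamma)=c/\gamma$.

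The main obstacle is this information-ratio step with the stated constant. I would follow the Russo--Van Roy argument. The regret under Thompson sampling can be written as $\sum_a q_\nu(a)\bigl(\E[Y_a\mid A^\star=a]-\E[Y_a]\bigr)$, using the observation at action $a$. Cauchy--Schwarz over the $K$ actions then gives
\[
\Delta\le\sqrt{K\sum_{a,x}q_\nu(a)q_\nu(x)\bigl(\E[Y_x\mid A^\star=a]-\E[Y_x]\bigr)^2}.
\]
Each squared mean shift is controlled by $2\sigma^2\KL(P_{a,x}\|P_x)$ through the Donsker--Varadhan transportation inequality. This requires the marginal predictive $P_x$, a mixture of $N(\theta_x,\lambda)$ with $|\theta_x|\le L$, to be sub-Gaussian about its mean with variance proxy $\sigma^2=\lambda+L^2$. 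I expect this to follow from combining the Gaussian noise proxy $\lambda$ with Hoeffding's lemma for the bounded mean, which contributes at most $L^2$. The remaining pieces are routine checks. Positivity of $q_{t+1}(y^\star)$ along the trajectory holds because Gaussian likelihoods are strictly positive and the saddle $\nu_t$ can be taken to charge every attainable index. The integrability conditions follow from boundedness of $\Theta_{\mathcal R}$. Finally, optimizing $\gamma$ in $Tc/\gamma+\gamma\log K$ yields \eqref{eq:finite-marginal-air-optimized}.
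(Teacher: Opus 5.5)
Your outline follows the paper's proof: the $\varepsilon_{\mathcal R}$ split, the robust AIR/AMS saddle via Sion, Thompson sampling $p=\nu_Y$ inside $\sup_\nu\inf_p$ after dropping $-\gamma\KL(\nu_Y\|q)$, the proxy $\lambda+L^2$ with transportation and Cauchy--Schwarz, the first-order condition at the maximizing belief, and the telescope from uniform $q_1$. Your full-matrix Russo--Van Roy bound and the paper's diagonal-only bound give the same constant, and your choice $V\equiv0$, $\varepsilon_t=c/\gamma$ is equivalent to the paper's $V_t=(T-t+1)c_\gamma$. The gaps are in how the maximizing belief is obtained and why it keeps $q_{t+1}(y^\star)>0$.

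First, you support candidate beliefs on the graph $\{(\theta,A^\star(\theta))\}$ of the fixed tie-breaking rule. That graph is not closed once $\Theta_{\mathcal R}$ contains ties, and the projected RKHS ball used later contains $\theta=0$, where all actions tie. So the set of beliefs on it is not weakly compact, and a maximizer $\nu_t$ of $\nu\mapsto\mathfrak A_{q_t,\gamma}(S_t,p_t,\nu)$ need not exist. Sion's equality still holds, since only the $p$-simplex must be compact, though you should also state upper semicontinuity in $\nu$. Your bracket bound, however, uses the exact first-order condition at an exact maximizer. An approximate maximizer does not in general control the directional derivative toward $\delta_{(\theta^\star,y^\star)}$, because the conditional-entropy part of $\mathfrak A$ has infinite slope at the boundary. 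The paper instead lets beliefs range over the closed optimal relation $\widetilde\Theta=\{(\theta,y):\theta\in\Theta_{\mathcal R},\ y\in\argmax_x\theta_x\}$. This set is compact, contains the true pair $(\theta^\star,A^\star(\theta^\star))$, and leaves the loss unchanged since $\theta_y=\max_x\theta_x$ on it. The first-order bound then holds for every pair in $\widetilde\Theta$.

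Second, ``the saddle $\nu_t$ can be taken to charge every attainable index'' is what must be proved, not a routine check. If $\bar\nu_{t,Y}(y^\star)=0$, then $q_{t+1}(y^\star)=0$ almost surely, the log ratio in the bracket is $-\infty$, and Lemma~\ref{lem:air-gradient-bracket} does not apply. The paper shows that \emph{every} maximizer charges every attainable label, using $-\calI_\nu-\KL(\nu_Y\|q)=H_\nu(Y\mid A,Z_A)+\E_\nu\log q(Y)$. Suppose a belief misses label $y$, and mix in $\epsilon\,\delta_{(\theta^y,y)}$. The conditional entropy rises by at least $h(\epsilon)-2L^2\epsilon/\lambda$, which is $\epsilon\log(1/\epsilon)-O(\epsilon)$. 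The loss and $\E_\nu\log q(Y)$ move only by $O(\epsilon)$, contradicting maximality. This argument needs $q_t$ interior, so it must run as an induction from uniform $q_1$ through the strictly positive Gaussian posterior update. The same bounds supply the integrability needed for the Gateaux derivative.
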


\proofinappendix{app:proof-thm:finite-marginal-air-upper}

The theorem is deliberately stated for the finite-valued optimal-action index.  Since $A^\star$ is a function of $F$, data processing gives the analogous posterior-averaged one-step information estimate with the model index $F$.  A uniform frequentist MAIR statement, however, must retain the terminal log-density term for this continuous index or replace point coordinates by neighborhoods, a finite cover, or a PAC--Bayes comparison.  Finite marginality alone does not turn the model coordinate into a discrete code length.

\paragraph{Algorithm 2: GP--UCB.}
GP--UCB uses the Gaussian marginal posterior $(m_t,\Sigma_t)$ on the finite active marginal.  In this paragraph, $e_x$ denotes the coordinate vector associated with $x\in\calX$, and $s_t(x)=(e_x^\top\Sigma_t e_x)^{1/2}$ denotes the unmultiplied posterior standard deviation, i.e., the generic uncertainty scale $\sigma_t(x)$ from Lemma~\ref{lem:ucb-bracket}.  The confidence half-width is
\[
    w_t(x):=\beta_t s_t(x),
\]
where $\beta_t$ is the confidence multiplier.  The algorithm chooses
\begin{equation}
\label{eq:gpucb-rule-new}
    X_t\in\argmax_{x\in\calX}\{m_t(x)+\beta_t s_t(x)\}.
\end{equation}
The following event is the frequentist calibration:
\begin{equation}
\label{eq:gp-confidence-event-new}
    \mathcal E_{\rm GP}
    :=
    \left\{
        |f^\star(x)-m_t(x)|\le w_t(x)=\beta_t s_t(x)
        \quad\text{for all }t\le T,
        x\in\calX
    \right\}.
\end{equation}
For finite $\calX$, standard RKHS self-normalized concentration gives $\Pp(\mathcal E_{\rm GP})\ge1-\delta$ for the usual choice of $\beta_t$.  For example, up to universal constants one may take
\[
    \beta_t
    \asymp
    B+\frac{R}{\sqrt{\lambda}}\sqrt{\Gamma_{t-1}^{\rm GP}+\log(1/\delta)},
    \qquad
    \Gamma_m^{\rm GP}:=
    \sup_{x_{1:m}\in\calX^m}
    \frac12\log\det\!\left(I+\lambda^{-1}K_{x_{1:m},x_{1:m}}\right),
\]
with the precise form depending on the normalization of $k$ and the ridge parameter.  Here $K_{x_{1:m},x_{1:m}}$ is the Gram matrix of the possibly repeated design sequence $(x_1,\ldots,x_m)$.  The quantity $\Gamma_m^{\rm GP}$ is a worst-case calibration device for the confidence event; it is different from the realized information gain $C_T^{\rm GP}$ paid in the regret bound.  The next theorem isolates the deterministic part of the GP--UCB proof.

\begin{theorem}[GP--UCB realized-information bound]
\label{thm:gpucb-realized}
Assume that $(\beta_t)_{t\le T}$ is a deterministic nondecreasing sequence.  On the event $\mathcal E_{\rm GP}$, the GP--UCB rule \eqref{eq:gpucb-rule-new} satisfies
\begin{equation}
\label{eq:gpucb-pathwise-bound-new}
    \Reg_T(f^\star)
    \le
    2\beta_T\sum_{t=1}^T s_t(X_t)
    \le
    2\beta_T
    \sqrt{T\,c_{\kappa,\lambda}\,C_T^{\rm GP}},
\end{equation}
where
\[
    c_{\kappa,\lambda}
    :=
    \frac{2\kappa^2}{\log(1+\kappa^2/\lambda)}
\]
and $C_T^{\rm GP}$ is the realized information gain in \eqref{eq:realized-gp-info}.  Consequently, if rewards are bounded in $[0,1]$ and $\Pp(\mathcal E_{\rm GP})\ge1-\delta$, then
\[
    \E\Reg_T(f^\star)
    \le
    2\beta_T\sqrt{T\,c_{\kappa,\lambda}\,\E C_T^{\rm GP}}+T\delta.
\]
\end{theorem}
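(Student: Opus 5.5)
The plan is the standard optimism argument run pathwise on $\mathcal E_{\rm GP}$, followed by the elliptical-potential summation applied to the realized design rather than the worst case. Fix a sample path in $\mathcal E_{\rm GP}$ and let $x^\star\in\argmax_{x\in\calX}f^\star(x)$.

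\textbf{Per-round regret.} Calibration at $x^\star$ gives $f^\star(x^\star)\le m_t(x^\star)+\beta_t s_t(x^\star)$. The GP--UCB rule \eqref{eq:gpucb-rule-new} then gives
\[
m_t(x^\star)+\beta_t s_t(x^\star)\le m_t(X_t)+\beta_t s_t(X_t).
\]
Calibration at $X_t$ gives $m_t(X_t)\le f^\star(X_t)+\beta_t s_t(X_t)$. Chaining the three inequalities yields
\[
f^\star(x^\star)-f^\star(X_t)\le 2\beta_t s_t(X_t)\le 2\beta_T s_t(X_t),
\]
where the last step uses that $(\beta_t)$ is deterministic and nondecreasing. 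Summing over $t$ gives the first inequality in \eqref{eq:gpucb-pathwise-bound-new}. This is exactly the UCB control of the bracket from Lemma~\ref{lem:ucb-bracket} with $\sigma_t=s_t$.

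\textbf{Elliptical-potential step.} By Cauchy--Schwarz, $\sum_t s_t(X_t)\le\sqrt{T\sum_t s_t^2(X_t)}$. The prior variance satisfies $s_t^2(x)\le s_1^2(x)=k(x,x)\le\kappa^2$, since posterior variances are nonincreasing by \eqref{eq:gp-posterior-update-cov}. The map $u\mapsto \log(1+u/\lambda)/u$ is decreasing, so for $0\le u\le\kappa^2$,
\[
u\le \frac{\kappa^2}{\log(1+\kappa^2/\lambda)}\log(1+u/\lambda).
\]
Applying this with $u=s_t^2(X_t)$ gives
\[
\sum_t s_t^2(X_t)\le \frac{2\kappa^2}{\log(1+\kappa^2/\lambda)}\sum_t\tfrac12\log\!\bigl(1+s_t^2(X_t)/\lambda\bigr)=c_{\kappa,\lambda}\,C_T^{\rm GP}.
\]
The last equality is \eqref{eq:realized-gp-info}. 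It follows from the Gaussian log-determinant chain rule: each rank-one update \eqref{eq:gp-posterior-update-cov} multiplies $\det(I+\lambda^{-1}K)$ by $1+s_t^2(X_t)/\lambda$, via the matrix determinant lemma. This step is deterministic, so it holds for sub-Gaussian noise as well; the Gaussian model is only the algorithmic reference state.

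\textbf{Expectation.} Split $\E\Reg_T$ over $\mathcal E_{\rm GP}$ and its complement. On the complement, bounded rewards in $[0,1]$ give $\Reg_T\le T$, which contributes at most $T\delta$. On $\mathcal E_{\rm GP}$, use the pathwise bound together with $\E[\sqrt{Z}\,\1_{\mathcal E}]\le\sqrt{\E Z}$ for $Z=C_T^{\rm GP}\ge0$, which follows from Jensen's inequality. Taking expectations of a pathwise bound is legitimate here, in keeping with the remark on high-probability calibration: the inequality is deterministic on the event.

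\textbf{Main obstacle.} The only delicate point is the constant in the variance-to-log comparison, which needs the bound $s_t^2\le\kappa^2$ uniformly along the path. Everything else is routine.
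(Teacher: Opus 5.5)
Your proposal is correct and matches the paper's proof essentially step for step. Optimism plus two-sided calibration gives the per-round bound $2\beta_t s_t(X_t)\le 2\beta_T s_t(X_t)$; Cauchy--Schwarz and the monotonicity of $u\mapsto\log(1+u/\lambda)/u$ on $[0,\kappa^2]$ turn $\sum_t s_t^2(X_t)$ into $c_{\kappa,\lambda}C_T^{\rm GP}$; and the expectation bound splits on $\mathcal E_{\rm GP}$ and uses Jensen. One minor point: monotonicity of $(\beta_t)$ is what gives $\beta_t\le\beta_T$, while determinism is what lets you pull $\beta_T$ outside the expectation in the final step.
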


\proofinappendix{app:proof-thm:gpucb-realized}

\paragraph{Calibration and realized information.}
The theorem is Lemma~\ref{lem:ucb-bracket} with the generic uncertainty scale $\sigma_t(X_t)=s_t(X_t)$ and the realized posterior-reference log-determinant telescope
\[
    \sum_t \frac12\log\left(1+s_t^2(X_t)/\lambda\right)=C_T^{\rm GP}.
\]
The confidence half-width is $w_t(X_t)=\beta_t s_t(X_t)$, but the multiplier $\beta_t$ is not part of the information telescope; it is the price of calibration and optimism.  The proof therefore uses $s_t$ and $w_t$ for different purposes: the log-determinant argument sums $s_t^2$, while $w_t$ bounds instantaneous regret.  Under posterior averaging, the log determinant is mutual information.  At a fixed truth, however, the MAIR log gain is a KL divergence from the true predictive law to the posterior predictive law and is not equal term by term to $\frac12\log(1+s_t^2/\lambda)$.  The theorem combines calibration with an elliptical-potential bound; it does not identify these two quantities.  GP--UCB pays the realized $C_T^{\rm GP}$ in its regret analysis, even when the multiplier used to select actions is calibrated by the worst-case $\Gamma_T^{\rm GP}$.

\paragraph{Kernel and linear formulations.}
Under the canonical feature map \(\varphi(x)=k(x,\cdot)\),
\[
    f(x)=\langle f,\varphi(x)\rangle_{\calH_k}.
\]
The zero-mean GP posterior mean and variance coincide with the Hilbert-space
ridge estimate and ellipsoidal width.  Hence GP--UCB and the corresponding
ridge LinUCB rule generate identical actions, observations, and regret on
every sample path.  For finite \(\calX\), the Euclidean dimension is
\(\operatorname{rank}(K_{\calX})\).  This is the rank of the full action
geometry, not the rank of the design observed during \(T\) rounds: unsampled
directions remain in the UCB maximization.  Proposition~\ref{prop:kernel-linear-equivalence}
gives the precise operator identity and its infinite-dimensional qualification.

\subsection{A full-RKHS separation for maximal-information GP--UCB}
\label{sec:kernel-separation}
The finite-scale construction underlying the fixed-kernel result is given in Appendix~\ref{sec:kernel-separation-details}. Building on the hub--cloud construction of \citet{xu2026separation}, our construction uses a small-amplitude cloud that an unrestricted minimax policy can afford to ignore, even though the cloud's multiplicity inflates the maximal-information exploration multiplier. The weighted-basis Gaussian geometry of that work is already a linear feature geometry: if an isonormal Gaussian process is written as $G_x=W(\psi(x))$, then
\[
    k(x,x')=\mathbb E[G_xG_{x'}]
            =\langle\psi(x),\psi(x')\rangle.
\]
The same vectors $\psi(x)$ are therefore both kernel features and linear-bandit actions. No infinite-dimensional intermediate embedding is needed for the finite construction. The new ingredients here are the reversal of the hub--cloud amplitude ordering and the sequential comparison with the minimax value over the full unit parameter ball. Infinite dimension is used only in the gluing argument that combines the horizon-dependent finite blocks into one fixed kernel. The result is complementary to the minimax adaptivity gap of \citet{maiti2026power}: their theorem compares adaptive and nonadaptive procedures for fixed-confidence best-arm identification, whereas ours compares a particular adaptive optimistic rule with the unrestricted cumulative-regret minimax benchmark.

For a bounded kernel $k$ and unit Gaussian observation noise, define the maximal information gain
\begin{equation}
\label{eq:fixed-kernel-max-info}
    \overline\Gamma_s(k)
    :=
    \sup_{x_{1:s}\in\calX^s}
    \frac12\log\det\!\left(I+K_{x_{1:s},x_{1:s}}\right),
    \qquad \overline\Gamma_0(k):=0.
\end{equation}
Starting from the zero-mean GP with covariance $k$ and unit ridge parameter, let $m_t,s_t$ be the posterior mean and standard deviation.  The single anytime rule studied below is
\begin{equation}
\label{eq:fixed-kernel-anytime-rule}
    \overline\beta_t
    :=1+\sqrt{2\left\{\overline\Gamma_{t-1}(k)+2\log(t\vee2)\right\}},
    \qquad
    X_t\in\argmax_{x\in\calX}
    \left\{m_t(x)+\overline\beta_t s_t(x)\right\},
\end{equation}
with a fixed tie-breaking rule.

The multiplier in \eqref{eq:fixed-kernel-anytime-rule} has the same maximal-information form as the modern anytime calibration of \citet{chowdhury2017kernel}. The same construction also covers the exploration schedule in the original RKHS GP--UCB theorem of \citet{srinivas2010gaussian}. To state this precisely, fix any \(\delta\in(0,1)\) and define its unit-ball specialization, in the present multiplier notation, by
\[
    \Gamma_t^{\rm SKKS}(k)
    :=
    \sup_{\substack{A\subset\calX\\ |A|=t}}
    \frac12\log\det(I+K_A),
\]
and
\begin{equation}
\label{eq:original-rkhs-gpucb-rule}
    b_t^{\rm SKKS}
    :=
    \left[
       2+300\,\Gamma_t^{\rm SKKS}(k)
       \log^3\!\left(\frac{t}{\delta}\right)
    \right]^{1/2},
    \qquad
    X_t\in\argmax_{x\in\calX}
    \{m_t(x)+b_t^{\rm SKKS}s_t(x)\},
\end{equation}
with the same fixed tie-breaking convention. Srinivas et al.\ write the acquisition bonus as \(\sqrt{\beta_t}\,s_t\), assume \(\|f\|_k^2\le B\), and set \(\beta_t=2B+300\gamma_t\log^3(t/\delta)\); equation~\eqref{eq:original-rkhs-gpucb-rule} is obtained by setting \(B=1\), \(\gamma_t=\Gamma_t^{\rm SKKS}(k)\), and unit noise variance. Their frequentist confidence theorem assumes bounded martingale noise. Here we analyze the same acquisition rule under independent Gaussian noise, so the lower bound uses the original exploration schedule but does not invoke that confidence theorem.

The two fixed-kernel rules use the same zero-mean GP posterior and the same UCB acquisition map $m_t(x)+b_ts_t(x)$; they differ in their global confidence calibration, information-gain convention, and logarithmic factors.  The finite-horizon rule~\eqref{eq:max-info-gpucb} in Appendix~\ref{sec:kernel-separation-details} is a transparent prototype of the anytime maximal-information form, not either fixed-kernel rule verbatim.  Appendix~\ref{app:proof-thm:fixed-kernel-gpucb} verifies the anytime and original schedules separately.

\begin{theorem}[Fixed-kernel separation: GP--UCB versus AIR and minimax]
\label{thm:fixed-kernel-gpucb}
For every $0<\alpha<1/4$, there exist a compact countable action space $\calX$, a bounded continuous kernel $k$ on $\calX$, its fixed unit RKHS ball $\calF$, a sequence $T_m\to\infty$, and a single epochwise finite-marginal action-index AIR Bellman policy such that, under observations $Y_t=f(X_t)+\xi_t$ with $\xi_t\stackrel{\iid}{\sim}N(0,1)$,
\begin{equation}
\label{eq:fixed-kernel-minimax}
    \inf_{\Alg}\sup_{f\in\calF}\E_f\Reg_{T_m}(\Alg)
    =\Theta(T_m^{1-\alpha}),
\end{equation}
and the AIR Bellman policy, fixed in advance and independent of the evaluation horizon, satisfies
\begin{equation}
\label{eq:fixed-kernel-air}
    \sup_{f\in\calF}\E_f\Reg_{T_m}(\mathrm{AIR\mbox{--}Bellman})
    =O(T_m^{1-\alpha}).
\end{equation}
In contrast, both the single anytime GP--UCB rule \eqref{eq:fixed-kernel-anytime-rule} and, for every fixed \(\delta\in(0,1)\), the original-schedule rule \eqref{eq:original-rkhs-gpucb-rule} have
\begin{equation}
\label{eq:fixed-kernel-gpucb}
    \sup_{f\in\calF}\E_f\Reg_{T_m}(\mathsf R)
    =\Omega(T_m),
    \qquad
   \mathsf R\in
    \{\mathrm{GP\mbox{--}UCB}_{\rm any},
      \mathrm{GP\mbox{--}UCB}_{\rm SKKS}\}.
\end{equation}
For each of these GP--UCB rules, both the minimax ratio and the worst-case regret ratio relative to the AIR Bellman policy are \(\Omega(T_m^\alpha)\) along an infinite subsequence.
\end{theorem}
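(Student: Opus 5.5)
The plan is to construct, for each scale $m$, a finite hub--cloud block $\calX^{(m)}$ at horizon $T_m$, and then glue the blocks into one compact countable space carrying a continuous kernel. Each block has a small hub of $K_m\asymp T_m^{2\alpha}$ hub actions carrying amplitude of order one. It also has a cloud of $N_m$ actions, with $N_m$ polynomially large, whose features are near-orthogonal with small amplitude $a_m$. This reverses the ordering in \citet{xu2026separation}. The amplitude is chosen so that ignoring the cloud costs at most $T_m a_m\lesssim T_m^{1-\alpha}$ regret. At the same time, the cloud's multiplicity forces $\overline\Gamma_{t}(k)\gtrsim \min(t,N_m)\,a_m^2$ to be large. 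The block features are $\psi(x)$ in disjoint coordinate ranges of $\ell^2$, scaled by weights $w_m\to0$ so that $k(x,x')=\langle\psi(x),\psi(x')\rangle$ is bounded and continuous. Points of block $m$ accumulate only at a single limit point with $\psi=0$, which gives compactness and countability. The horizons $T_m$ are chosen rapidly increasing, so that blocks $m'\neq m$ contribute either negligible reward amplitude or negligible information at horizon $T_m$.

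\emph{Upper bound and minimax.} The AIR policy runs in epochs. In epoch $m$ it retains $\mathcal R_m$, equal to the hub of block $m$ plus one representative point, and applies Theorem~\ref{thm:finite-marginal-air-upper} with $K=|\mathcal R_m|$. The approximation term $T\varepsilon_{\mathcal R}$ is bounded by the cloud amplitude together with the amplitude of the other blocks. This gives $O(T_m^{1-\alpha})$, and a doubling/epoch argument makes the policy horizon-free. For the matching lower bound I would apply Theorem~\ref{thm:bellman-lower} with a uniform prior over which hub arm is elevated by a gap $r\asymp T_m^{-\alpha}$. Taking $\chi$ to be the elevated arm, the ghost entropy is $\underline{\calE}\approx\log K_m$, or alternatively one can use a direct Fano/Assouad computation over a hypercube of hub directions. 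The information capacity is $\overline C_1\lesssim T_m r^2$ per direction divided by the number of directions. Choosing $K_m$ and $r$ so that the critical condition \eqref{eq:bellman-critical} holds yields the $\Omega(T_m^{1-\alpha})$ lower bound.

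\emph{GP--UCB linear regret.} Fix the truth $f^\star$ with one hub arm optimal by a constant gap and the cloud arms at value roughly zero, within the unit RKHS ball. Along horizon $T_m$, both multipliers satisfy $\overline\beta_t\gtrsim\sqrt{\overline\Gamma_{t-1}}$, which becomes large once $t$ exceeds a small fraction of $T_m$. The required multiplier size is $\beta_t a_m\gg$ the hub gap. After a few pulls the hub posterior widths shrink to $O(1/\sqrt{n_{\rm hub}})$, while each unpulled cloud arm keeps width $s_t\approx a_m$. Because the cloud is near-orthogonal, pulling one cloud arm barely reduces the others. The UCB index of the cloud arms is therefore $\approx \beta_t a_m$, which exceeds the hub index $\approx\text{gap}+\beta_t/\sqrt{n}$ until most of the $N_m\gtrsim T_m$ cloud arms have been sampled. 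Each such round costs a constant regret, giving $\Omega(T_m)$. I would make this rigorous pathwise on a high-probability Gaussian event that controls the posterior means, and verify the anytime schedule and the SKKS schedule separately. The SKKS schedule is only larger by log factors, which only helps. The ratio claims then follow directly.

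\emph{Main obstacle.} I expect the hardest part to be balancing parameters. The same $a_m,N_m$ must make $\overline\Gamma$ large enough that $\beta_t a_m$ beats the hub gap, while keeping $T_ma_m\lesssim T_m^{1-\alpha}$, the kernel bounded, and the cloud nearly orthogonal. The constraint $\alpha<1/4$ should emerge here. Concretely, $\overline\Gamma\approx N_m a_m^2$, so the requirement $a_m\sqrt{N_m}a_m\gtrsim1$ with $a_m\le T_m^{-\alpha}$ forces $N_m\ge T_m^{4\alpha}$, and this must stay below $T_m$. The second delicate point is controlling cross-block leakage in the glued kernel, both in the posterior means and in $\overline\Gamma_t(k)$, which mixes all blocks.
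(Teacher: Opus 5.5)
There is a genuine gap in two places: the source of the minimax lower bound, and the gluing together with the AIR retained set.

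\textbf{The lower bound cannot come from the hub.} Consider any prior supported on functions that vanish off the hub. MOSS run on the hub arms has regret $O(\sqrt{K_mT_m})=O(T_m^{1/2+\alpha})=o(T_m^{1-\alpha})$ when $K_m\asymp T_m^{2\alpha}$ and $\alpha<1/4$, so no such prior can give the claimed rate. Concretely, with gap $r\asymp T_m^{-\alpha}$ the capacity is of order $T_mr^2/K_m=T_m^{1-4\alpha}$ (up to $\log K_m$), far above the ghost entropy $\log K_m$, so \eqref{eq:bellman-critical} fails. Enlarging the hub to $K_m\asymp T_m^{1-2\alpha}$ would rescue the lower bound. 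But then Theorem~\ref{thm:finite-marginal-air-upper} gives only $\sqrt{K_mT_m\log K_m}=T_m^{1-\alpha}\sqrt{\log T_m}$, which breaks \eqref{eq:fixed-kernel-air} and the exact $\Omega(T_m^\alpha)$ ratio.

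The missing idea is that the cloud sets the rate in both directions. Take $N_m\ge4T_m$ and the one-hot alternatives $f^j(c_{m,j})=a_m$, zero elsewhere, which lie in the unit ball. Couple the experiment under $f^J$, with $J$ uniform, to the null function through a common seed and common action-specific noise stacks. The two histories agree until the needle $c_{m,J}$ is first queried. Under the null, at most $T_m$ distinct cloud points are queried, independently of $J$, so the needle is found with probability at most $T_m/N_m\le1/4$. The Bayes regret is therefore at least $\frac34a_mT_m$, matching the truncation cost $a_mT_m$, and a single hub arm suffices. The same condition $N_m\ge4T_m$ keeps an unqueried cloud point available to GP--UCB for the whole horizon. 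Your remark that $N_m$ must stay below $T_m$ therefore points the wrong way; the condition producing $\alpha<1/4$ is simply $a_m^2\sqrt{T_m}\to\infty$.

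\textbf{The gluing, as described, also fails.} Joint continuity of $k$ makes the feature map continuous into $\ell^2$, because $\|\psi(x)-\psi(y)\|^2=k(x,x)-2k(x,y)+k(y,y)$. Hence $\psi(\calX)$ is compact. Infinitely many mutually orthogonal hub features with norms bounded below have no convergent subsequence, so only finitely many blocks can carry an order-one hub. Scaling block $m$ by $w_m\to0$ removes this contradiction, but it makes every earlier block more valuable than block $m$. With $\mathcal R_m$ equal to the hub of block $m$ plus one representative, a function supported on an unretained hub point of block $1$ gives $\varepsilon_{\mathcal R_m}\gtrsim w_1$, a constant, so your AIR policy has linear regret. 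It also means the constant hub gap you need for GP--UCB is available only from a fixed early block.

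The paper avoids all of this with the following construction:
\begin{itemize}
\item one shared isolated hub $x_h$ with $k(x_h,x_h)=1$;
\item an exactly diagonal kernel, so posteriors decouple and there is no cross-block leakage to control;
\item cloud blocks $B_m$ with $k(c,c)=a_m^2$, accumulating only at the zero-feature anchor $x_0$;
\item retained sets $\mathcal R_m=\{x_0,x_h\}\cup\bigcup_{\ell<m}B_\ell$, which contain every earlier, larger-amplitude block.
\end{itemize}
The recursive choice $(M_{m-1}+2)\log(1+T_m)\le a_m^2T_m$ then does the balancing you flagged as the main obstacle:
\begin{itemize}
\item it gives $\sqrt{K_mT_m\log K_m}\le a_mT_m$ for the epoch-$m$ AIR run;
\item it bounds the earlier blocks' contribution to $\overline\Gamma_s$ by $a_m^2T_m$, so $\overline\beta_t\lesssim a_m\sqrt{T_m}$ and the hub bonus is $o(1)$ after $\Omega(T_m)$ hub pulls;
\item it bounds the regret before epoch $m$ by $2T_{m-1}\le K_m/2\le a_mT_m$, since $K_m\ge N_{m-1}\ge4T_{m-1}$.
\end{itemize}
This last point is why the epochs are the predetermined intervals $(T_{m-1},T_m]$ rather than a generic doubling schedule.
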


\proofinappendix{app:proof-thm:fixed-kernel-gpucb}
\subsection{Finite-scale illustration of the hub--cloud mechanism}
\label{subsec:numerical-illustration}
The preceding results are nonasymptotic; this experiment illustrates only their finite-scale mechanism.  For each displayed horizon, take the small-cloud instance of Appendix~\ref{sec:kernel-separation-details} with $\alpha=1/5$, $a=T^{-1/5}$, $N=4T$, and hub gap $\Delta=1/4$.

\begin{figure}[H]
    \centering
    \includegraphics[width=0.98\linewidth]{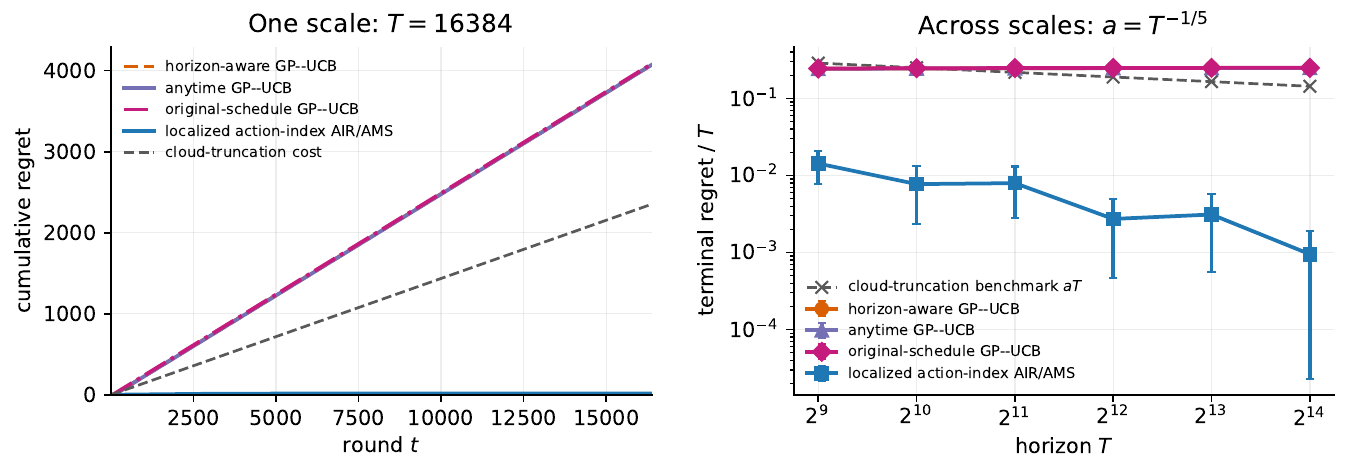}
    \caption{Finite-scale hub--cloud experiment. Algorithmic curves show
    means and two standard errors over $200$ runs under the common positive hub
    truth; the truncation curve is a deterministic counterfactual. Left:
    cumulative pseudoregret at $T=16384$ for three GP--UCB calibrations and the
    localized AIR/AMS control; the dashed curve is the deterministic
    one-hot-cloud truncation cost. Right: terminal regret divided by $T$, with
    normalized truncation benchmark $a=(aT)/T$. The finite kernel and
    $a=T^{-1/5}$ change with $T$, so the experiment illustrates the block
    mechanism rather than the glued fixed-kernel theorem.}
    \label{fig:kernel-air-simulation}
\end{figure}

On each finite block we evaluate three GP--UCB calibrations: the horizon-aware rule~\eqref{eq:max-info-gpucb}, the finite-block specialization of the anytime rule~\eqref{eq:fixed-kernel-anytime-rule}, and the unit-ball specialization of the original schedule~\eqref{eq:original-rkhs-gpucb-rule}, with $\delta=0.1$.  We compare them under the same positive hub truth with a numerical implementation of the robust action-index AIR/AMS control on $\{x_0,x_h\}$.  This control is a direct two-model Gaussian specialization of the AIR/AMS saddle of \citet{xu2025bayesian}; its relation to \citet{lattimore2021mirror} is through the EBO optimization principle and the corresponding minimax/KL duality, not through a literal implementation of the functional-estimator mirror-descent algorithm.

Proposition~\ref{prop:kernel-linear-equivalence} shows that no new simulation
is required for the linear-bandit interpretation: after replacing each action
label by its feature vector, the ridge estimate, confidence width, selected
actions, observations, and regret are identical on every sample path.

At \(T=16384\), the \(200\)-run mean regrets under the common positive hub
truth are \(4074.8\), \(4072.5\), and \(4083.8\) for the horizon-aware,
anytime, and original-schedule GP--UCB rules, respectively, compared with
\(15.6\) for AIR/AMS.  Thus the three GP--UCB calibrations incur approximately
\(261\)--\(262\) times as much mean regret.  The right panel shows that this
finite-block mechanism persists across the displayed scales.  The experiment
does not establish the fixed-kernel theorem: the finite kernel changes with
\(T\), whereas Theorem~\ref{thm:fixed-kernel-gpucb} concerns one fixed kernel
and an infinite subsequence.  The truncation calculation, negative retained
truth, and Bellman-bracket checks are given in Appendix~\ref{app:numerics}.

\subsection{Finite-dimensional consequence}

The finite small-cloud block is also a genuine stochastic linear bandit.
For every sufficiently large \(T\), Corollary~\ref{cor:finite-linucb-separation}
constructs \(T+1\) actions in \(\mathbb R^T\) for which the specified
maximal-information-calibrated LinUCB rule has worst-case regret
\(\Omega(T)\), while the unrestricted minimax regret is
\(\Theta(T^{1-\alpha})\).  The ratio is therefore
\(\Omega(T^\alpha)\).  The action set depends on \(T\) and has heterogeneous
norms; obtaining a comparable separation for \(d=o(T)\), fixed \(d\), or a
more homogeneous geometry remains open.  Appendix~\ref{app:kernel-details}
gives the full statement, proof, and comparison with layered contextual
methods, asymptotic optimism lower bounds, OFUL, and the
adaptive-versus-nonadaptive best-arm-identification result of
\citet{maiti2026power}.

\begin{remark}[Bellman and hub--cloud interpretation]
The weighted-basis hub--cloud construction of
\citet{xu2026separation} has cloud scale larger than hub scale.  When that
geometry is reinterpreted as a kernel action set, it can draw GP--UCB toward
the cloud, but the same large cloud scale also makes the full RKHS ball
minimax-hard.  Theorem~\ref{thm:small-cloud-gpucb} reverses the amplitude
comparison while retaining the multiplicity comparison.  The pointwise RKHS
envelope $|f(c_j)|\le a$ makes the entire cloud worth at most $aT$ to an
unrestricted algorithm.  Nevertheless, its maximal information gain is of
order $Ta^2$, so the global confidence multiplier is of order $a\sqrt T$ and
the acquisition width of each unqueried cloud point is of order
$a^2\sqrt T$.  When $a=T^{-\alpha}$ and $\alpha<1/4$, that width diverges even
though the decision value of the cloud vanishes.

In Bellman terms, the upper analysis compares retaining the cloud in the remaining-horizon problem with truncating it at a cost of at most $a$ per remaining round.  The policy used in the theorem implements the resulting finite-marginal truncation on a predetermined epoch schedule and pays only for the optimal-action index on the retained set.  The GP--UCB relaxation replaces that decision-aligned comparison by a global uniform-optimism coefficient.  Maximal information gain affects the trajectory, not only the final bound: through $\overline\beta_t$, it draws the algorithm into the low-value cloud.  The relevant localization is the elementary pointwise envelope in this example, and in more structured problems it may be a posterior region, a norm shell, or an optimal-action index.
\end{remark}

\subsection{Resolved and open questions}

There is no single conjecture covering every kernel and every rule called
GP--UCB.  The recurring question is whether the gap between minimax rates and
known guarantees for the acquisition rule reflects loose analysis or the rule
itself \citep{vakili2021open,whitehouse2023sublinear,wang2026suboptimality}.
Theorem~\ref{thm:fixed-kernel-gpucb} answers this for two specified global
calibrations: both have linear regret along an infinite subsequence under one
fixed hub truth, although the minimax regret on the same RKHS ball is
sublinear.  It does not cover arbitrary localized, predictable, or
data-dependent tuning.

For standard Mat\'ern kernels, \citet{wang2026suboptimality} prove a
complementary horizonwise lower bound for prespecified polynomial--logarithmic
effective-optimism schedules, subject to their monotonicity, exponent, and
simultaneous uniform-confidence assumptions.  Their hard function may depend
on the horizon.  Our theorem instead treats two specified schedules on one fixed,
spatially inhomogeneous kernel and one fixed truth.  Neither result contains
the other.  In particular, their result rules out polynomially growing
effective optimism under its assumptions.  Neither result settles whether
merely polylogarithmic effective optimism is minimax optimal up to
polylogarithmic factors, or whether localized or data-dependent tuning can
attain the minimax rate.  A fixed-truth, linear-regret
Mat\'ern separation for either schedule in
Theorem~\ref{thm:fixed-kernel-gpucb} also remains open.  Appendix~\ref{app:matern-scope}
states the Wang--Zhang hypotheses and rate in detail.

\subsection{Further matching examples}
Gaussian multi-armed bandits give a finite-index Bellman--Fano entropy calculation and a matching minimax regret rate \(\Theta(\sqrt{KT})\). Hypercube linear bandits give a high-dimensional indexed lower bound of order \(d\sqrt T\), matched by standard upper bounds up to logarithmic factors. These examples do not claim that MOSS or OFUL is itself the optimizer of the exact log-potential Bellman program. The complete calculations are in Appendices~\ref{app:mab} and~\ref{app:linear}.

\section{Conclusion}
\label{sec:conclusion}

The paper develops a general information-theoretic minimax framework for sequential decision-making problems.  The
Bellman-sufficient state retains what is needed for the controlled
experiment; the index identifies what is worth learning.  The fixed-truth
logarithmic potential and the Bellman--Fano ghost entropy then provide upper
and lower information accounts on the same representation.  Under the
conditions stated in Theorem~\ref{thm:interactive-info-risk-sandwich}, matching
them is the sequential counterpart of matching information and local entropy
in a fixed experiment.

The kernel construction illustrates both the algorithmic and statistical power of this framework.  Along the constructed subsequence $(T_m)$, the fixed
RKHS ball has minimax regret $\Theta(T_m^{1-\alpha})$, attained by a
predetermined finite-marginal AIR/AMS policy, while two global GP--UCB
calibrations incur $\Omega(T_m)$ regret because low-value cloud directions
inflate their exploration bonuses.  The same feature geometry gives a
horizon-dependent linear-bandit instance with $T+1$ actions in $\mathbb R^T$
and a polynomial minimax gap for the specified
maximal-information-calibrated LinUCB rule.  The glued theorem is likewise a
separation on one fixed infinite-dimensional Hilbert-space linear-bandit
action set, whereas the finite-dimensional action set depends on $T$.
Whether such a finite-dimensional separation persists for $d=o(T)$,
especially fixed $d$, and whether localized or data-dependent GP--UCB can
recover the minimax rate remain open.  The resulting complexity belongs not
to the raw model class alone, but to the Bellman-sufficient representation on
which the upper and lower bounds meet.

\clearpage
\appendix
\section{Technical form of the information-complexity sandwich}
\label{app:sandwich-details}

This appendix states the admissibility, entropy-comparison, growth, and
calibration conditions underlying the information-risk sandwich summarized
in Section~\ref{sec:info-sandwich}.

\subsection{Code length, hard priors, and ghost entropy}

Let \(\Omega_0\subseteq\Omega\) be the problem class being lower and upper bounded, let \(\chi:\Omega\to\calY\) be the index, and let \(q_1\) be an initial reference marginal on \(\calY\).  The fixed-truth upper telescope pays the worst-case initial coordinate code length
\begin{equation}
\label{eq:upper-code-length}
    L_0(\Omega_0;q_1,\chi)
    :=
    \sup_{\omega\in\Omega_0}
    \log\frac1{q_1(\chi(\omega))},
\end{equation}
with the convention that the value is infinite if \(q_1(\chi(\omega))=0\) for some \(\omega\in\Omega_0\).

All Bellman values in this subsection are conditional on a fixed initial
packet and resulting state \(s_1\). If \(S_1\) is random, one conditions on
\(O_0\) and averages the entire conditional bound. In particular, the
conditional ghost masses must be averaged before applying \(-\log\); the
conditional ghost entropies cannot generally be averaged instead.

For a hard prior \(\mu\) supported on \(\Omega_0\), write
\(q_{1,\mu}:=\chi_\#\mu\).  At an average-loss radius \(r\), the ghost-good probability of an algorithm \(\Alg\) is
\begin{equation}
\label{eq:main-ghost-entropy}
    p_{\mu,r}^{\Alg}
    :=p_r^{\Alg}(\mu,\chi)
    =
    \Pp_{Y\sim {\color{red}q_{1,\mu}},\,H_T'\sim\bar\Pp_\mu^{\Alg}}
    \{\bar L_\chi(Y,H_T')\le r\},
\end{equation}
where \(Y\) and the algorithm-induced reference history \(H_T'\) are independent under the ghost law.  The algorithm-uniform ghost mass and its entropy are
\begin{equation}
\label{eq:main-uniform-ghost-entropy}
    p_{\mu,r}^{\star}
    :=\sup_{\Alg\in\mathfrak A_T^{\rm na}}p_{\mu,r}^{\Alg}
    =\Gamma_1^{r,\star}({\color{red}q_{1,\mu}},s_1,0),
    \qquad
    E_{\mu,r}^{\star}:=\log\frac1{p_{\mu,r}^{\star}}.
\end{equation}
A ghost-mass supersolution gives
\(p_{\mu,r}^{\star}\le\overline\Gamma_1^r({\color{red}q_{1,\mu}},s_1,0)\) and hence the
computable entropy lower bound
\(-\log\overline\Gamma_1^r({\color{red}q_{1,\mu}},s_1,0)\le E_{\mu,r}^{\star}\).

\begin{definition}[Bellman--Fano admissible hard prior]
\label{def:bf-admissible-prior}
Fix \(r>0\).  A prior \(\mu\) supported on \(\Omega_0\) is \emph{Bellman--Fano admissible at radius \(r\)} if the lower Bellman recursions, or valid supersolutions of them, provide a nonnegative indexed-information upper bound \(C_{\mu}\) and a valid algorithm-uniform ghost-entropy lower bound \(\underline E_{\mu,r}\le E_{\mu,r}^{\star}=\log(1/p_{\mu,r}^{\star})\) such that
\begin{equation}
\label{eq:bf-admissible-condition}
	    C_{\mu}+\log 2
	    \le
	    \frac12 \underline E_{\mu,r}.
\end{equation}
The intrinsic choice is \(C_{\mu}=C_1^\star(s_1)\) and
\(\underline E_{\mu,r}=E_{\mu,r}^{\star}=-\log\Gamma_1^{r,\star}({\color{red}q_{1,\mu}},s_1,0)\),
where \(C^\star\) and \(\Gamma^{r,\star}\) are the exact information-capacity
and ghost-good Bellman values from Section~\ref{subsec:bellman-lower}.
Computable proofs may use the conservative choices
\(C_{\mu}=\overline C_1(s_1)\) and
\(\underline E_{\mu,r}=-\log\overline\Gamma_1^r({\color{red}q_{1,\mu}},s_1,0)\); since
\(\overline\Gamma_1^r\) upper-bounds the algorithm-uniform ghost mass, this is
indeed no larger than \(E_{\mu,r}^{\star}\).
\end{definition}

The condition \(p_{\mu,r}^{\star}\le \varrho<1\) sometimes appears as a convenient nondegeneracy diagnostic: it prevents the lower entropy denominator from vanishing.  It plays the same mathematical role as the nondegeneracy requirements in offset or constrained DEC comparisons \citep{foster2021statistical, foster2023tight}: without positive decision separation at the chosen information scale, the denominator of the rate comparison is zero and no lower bound can be extracted.  In the Bellman--Fano formulation this condition need not be imposed separately once the hard prior is defined through admissibility.

\begin{lemma}[Admissibility implies nondegenerate ghost entropy]
\label{lem:admissible-implies-nondegenerate}
If \(\mu\) is Bellman--Fano admissible at radius \(r\), then
\[
    E_{\mu,r}^{\star}\ge\underline E_{\mu,r}\ge 2\log2,
    \qquad
    p_{\mu,r}^{\star}\le \frac14.
\]
Consequently a separate assumption \(p_{\mu,r}^{\star}\le \varrho<1\) is unnecessary for any admissible hard prior.  Conversely, if one starts from a proposed prior and only knows \(p_{\mu,r}^{\star}\le\varrho<1\), then the entropy denominator is at least \(\log(1/\varrho)\), but this by itself does not prove Bellman--Fano admissibility because the information upper bound \(C_\mu\) must also satisfy \eqref{eq:bf-admissible-condition}.
\end{lemma}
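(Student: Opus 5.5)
The argument is a direct unwinding of Definition~\ref{def:bf-admissible-prior}. Admissibility at radius \(r\) supplies a nonnegative information bound \(C_\mu\ge 0\) and a valid entropy lower bound \(\underline E_{\mu,r}\le E_{\mu,r}^{\star}\) satisfying \eqref{eq:bf-admissible-condition}.

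First, since \(C_\mu\ge0\), the admissibility inequality gives
\[
    \log 2\le C_\mu+\log2\le \tfrac12\underline E_{\mu,r}.
\]
Hence \(\underline E_{\mu,r}\ge 2\log 2\). The validity requirement \(\underline E_{\mu,r}\le E_{\mu,r}^{\star}\) then yields the displayed chain \(E_{\mu,r}^{\star}\ge\underline E_{\mu,r}\ge2\log2\).

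Second, exponentiate. By \eqref{eq:main-uniform-ghost-entropy}, \(E_{\mu,r}^{\star}=\log(1/p_{\mu,r}^{\star})\), so \(p_{\mu,r}^{\star}=e^{-E_{\mu,r}^{\star}}\le e^{-2\log2}=1/4\). This holds with the convention that \(E^\star=+\infty\) when \(p^\star=0\). The conclusion \(p^\star\le 1/4\) gives the nondegeneracy diagnostic with \(\varrho=1/4\), so no separate assumption \(p^\star\le\varrho<1\) is needed.

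For the converse remark, \(p^\star\le\varrho\) gives \(E^\star\ge\log(1/\varrho)\) by monotonicity of \(-\log\). That it does not imply admissibility needs only an illustration. Admissibility also requires \(C_\mu\le \tfrac12\underline E_{\mu,r}-\log2\), and \(C_\mu\) is constrained only from below. Any prior with large indexed capacity, for instance a fully revealing experiment with \(C_1^\star(s_1)\) exceeding \(\tfrac12\log(1/\varrho)\), violates \eqref{eq:bf-admissible-condition} while keeping \(p^\star\le\varrho\). I will state this as a remark rather than construct a formal counterexample.

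There is no real obstacle. The only care needed is that nonnegativity of \(C_\mu\), which is part of the definition, is what forces the factor \(2\log2\). The \(+\infty\) entropy convention must also be handled for the case \(p^\star=0\).
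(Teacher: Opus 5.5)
Your argument is correct and is essentially the paper's proof: nonnegativity of \(C_\mu\) in \eqref{eq:bf-admissible-condition} forces \(\underline E_{\mu,r}\ge2\log2\), and \(\underline E_{\mu,r}\le E_{\mu,r}^{\star}=-\log p_{\mu,r}^{\star}\) then gives \(p_{\mu,r}^{\star}\le1/4\). The converse is just \(E_{\mu,r}^{\star}\ge\log(1/\varrho)\), with \(C_\mu\) left uncontrolled.

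One correction to your optional illustration. Any admissible \(C_\mu\) dominates \(C_1^\star(s_1)\), so admissibility fails exactly when \(C_1^\star(s_1)+\log2>\tfrac12E_{\mu,r}^{\star}\). Your criterion \(C_1^\star(s_1)>\tfrac12\log(1/\varrho)\) therefore guarantees failure only when \(p_{\mu,r}^{\star}=\varrho\). If \(p_{\mu,r}^{\star}<\varrho\), then \(E_{\mu,r}^{\star}\) can be much larger, even \(+\infty\), and the prior may still be admissible.
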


\proofinappendix{app:proof-lem:admissible-implies-nondegenerate}

\begin{proposition}[Finite-index entropy comparison with an explicit hard prior]
\label{prop:entropy-gap}
Let \(\calY_h\subseteq\chi(\Omega_0)\) be finite with \(|\calY_h|=M\ge2\).  Choose one representative environment \(\omega^y\in\Omega_0\) for each \(y\in\calY_h\) such that \(\chi(\omega^y)=y\), and let \(\mu_h\) be the uniform prior on \(\{\omega^y:y\in\calY_h\}\).  Then \(q_{1,\mu_h}:=\chi_\#\mu_h\) is uniform on \(\calY_h\), and
\[
    L_0(\{\omega^y:y\in\calY_h\};{\color{red}q_{1,\mu_h}},\chi)=\log M.
\]
If \(\mu_h\) is Bellman--Fano admissible at radius \(r\), then the entropy mismatch between the upper coordinate code length and the lower ghost entropy satisfies
\begin{equation}
\label{eq:log-gap-from-admissibility}
    \frac{L_0}{E_{\mu_h,r}^{\star}}
    \le
    \frac{\log M}{2\log2}.
\end{equation}
Thus, without pursuing constant matching, admissibility alone gives an at-most-logarithmic finite-index entropy gap.  More sharply, if for every reference history \(h'\),
\[
    G_r(h'):=\{y\in\calY_h:\bar L_\chi(y,h')\le r\}
\]
has cardinality at most \(m_r\), for some integer \(1\le m_r<M\), then
\begin{equation}
\label{eq:localized-entropy-gap}
    E_{\mu_h,r}^{\star}
    \ge
    \log\frac{M}{m_r},
    \qquad
    \frac{L_0}{E_{\mu_h,r}^{\star}}
    \le
    \frac{\log M}{\log(M/m_r)}.
\end{equation}
In particular, the one-sided constant-factor entropy comparison used here follows from \(E_{\mu_h,r}^{\star}\ge c\log M\), for example through \(m_r\le M^{1-\alpha}\) for some constant \(\alpha>0\). A two-sided entropy equivalence requires the stronger statement \(E_{\mu_h,r}^{\star}\asymp\log M\). A mere nondegeneracy bound \(p_{\mu_h,r}^{\star}\le\varrho<1\) gives only \(L_0/E_{\mu_h,r}^{\star}\le \log M/\log(1/\varrho)\).
\end{proposition}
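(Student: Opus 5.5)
The plan has three short steps: compute the pushforward and the code length, read the logarithmic gap off Lemma~\ref{lem:admissible-implies-nondegenerate}, and bound the ghost mass by counting good indices pathwise.

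\emph{Pushforward and code length.} Since the representatives $\omega^y$ have distinct indices, $\chi$ restricted to $\{\omega^y:y\in\calY_h\}$ is a bijection onto $\calY_h$. Hence $q_{1,\mu_h}(y)=\mu_h(\{\omega^y\})=1/M$ for every $y\in\calY_h$. Substituting into \eqref{eq:upper-code-length} gives $\log\frac1{q_1(\chi(\omega^y))}=\log M$ for each representative, so the supremum equals $\log M$.

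\emph{Logarithmic gap.} Admissibility at radius $r$ lets us invoke Lemma~\ref{lem:admissible-implies-nondegenerate}, which gives $E_{\mu_h,r}^\star\ge2\log2>0$. Dividing $L_0=\log M$ by this lower bound yields \eqref{eq:log-gap-from-admissibility}.

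\emph{Localized bound.} This is the only step needing an argument. I would use the product structure of the ghost law in \eqref{eq:main-ghost-entropy}: $Y\sim q_{1,\mu_h}$ uniform on $\calY_h$, independent of $H_T'\sim\bar\Pp^{\Alg}_{\mu_h}$. For every algorithm, Fubini (conditioning on $H_T'$) gives
\[
p_{\mu_h,r}^{\Alg}=\E_{H_T'}\Bigl[\tfrac1M\bigl|G_r(H_T')\bigr|\Bigr]\le\frac{m_r}{M}.
\]
Measurability of $y\mapsto\bar L_\chi(y,h')$ is supplied by the jointly measurable version of $\ell^\chi$ assumed in Theorem~\ref{thm:bellman-lower}. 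The bound is uniform in $\Alg$, so taking the supremum over $\mathfrak A_T^{\rm na}$ gives $p_{\mu_h,r}^\star\le m_r/M$. Therefore $E_{\mu_h,r}^\star\ge\log(M/m_r)>0$ because $m_r<M$, and dividing $\log M$ by this quantity gives \eqref{eq:localized-entropy-gap}.

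\emph{Remaining remarks.} If $m_r\le M^{1-\alpha}$, then $\log(M/m_r)\ge\alpha\log M$, so the ratio is at most $1/\alpha$. Similarly, $p^\star\le\varrho$ gives $E^\star\ge\log(1/\varrho)$ and hence the stated ratio. The main subtlety is keeping the ghost law's independence of $Y$ and $H_T'$ explicit: the counting argument works only because $Y$ is uniform given $H_T'$ under the ghost law, which fails under the true joint law.
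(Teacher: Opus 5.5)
Your proposal is correct and follows essentially the same route as the paper's proof. Like the paper, you read off $L_0=\log M$ from the uniform pushforward, obtain the logarithmic gap from Lemma~\ref{lem:admissible-implies-nondegenerate}, and bound the ghost mass by $\E_{H_T'}|G_r(H_T')|/M\le m_r/M$ uniformly over algorithms before taking the supremum.
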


\proofinappendix{app:proof-prop:entropy-gap}

\subsection{From entropy comparison to regret comparison}

An entropy comparison alone is not a regret comparison.  The missing ingredient is a regularity property of the upper Bellman value as its information budget changes.  This is the same structural reason that constrained DEC bounds are stated through a coefficient, localization, or fixed-point condition rather than through a bare logarithmic-model-cardinality entropy term \citep{foster2023tight}: cardinality controls the amount of information charged, but a risk modulus is still needed to turn that information budget into a regret radius.

For a fixed state/index representation, comparison and control correspondences, and reference-update family in the log-penalized Bellman program of Section~\ref{subsec:information-Bellman}, write
\begin{equation}
\label{eq:upper-budget-value}
    \mathsf U_T(L)
    :=
    \inf_{\gamma>0}
    \left\{
        W_1^\gamma(s_1)+\gamma L+\Delta_T^\gamma
    \right\},
\end{equation}
where \(W^\gamma\) is the log-penalized Bellman value and \(\Delta_T^\gamma\ge0\) collects explicit statewise bracket, approximation, and optimization errors for that value as in Theorem~\ref{thm:frequentist-info-risk-upper}.  For exact statewise optimization with surely calibrated comparison sets, \(\Delta_T^\gamma=0\).  By construction, $\mathsf U_T$ is nondecreasing in $L$.

\begin{assumption}[Bellman upper-growth condition]
\label{ass:upper-growth}
There exist constants \(C_{\rm gr}\ge1\) and \(\beta\in[0,1]\) such that, for all \(L>0\) and \(a\ge1\),
\[
    \mathsf U_T(aL)
    \le
    C_{\rm gr}a^\beta\mathsf U_T(L).
\]
This is a regularity condition on the chosen upper value.  It is not a consequence of finite index cardinality alone; it is the Bellman analogue of the sub-root or growth condition used to convert entropy or localized information radii into risk radii in fixed experiments and in constrained DEC analyses.
\end{assumption}

\begin{lemma}[Why a growth condition is needed]
\label{lem:growth-needed}
No regret-ratio comparison follows from an entropy-ratio comparison for an arbitrary nondecreasing upper value \(\mathsf U_T\).  More precisely, fix \(0<E<L\) and any \(B>0\).  There is a nondecreasing function \(\mathsf U\) such that \(\mathsf U(L)/\mathsf U(E)\ge B\).
\end{lemma}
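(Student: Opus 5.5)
The plan is an explicit construction. Since the only requirement on \(\mathsf U\) is that it be nondecreasing, I will use a step function whose jump sits strictly between \(E\) and \(L\).

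Fix \(0<E<L\) and \(B>0\), and choose a threshold \(\theta\in(E,L]\), say \(\theta=(E+L)/2\). Define
\[
    \mathsf U(x):=\begin{cases}1,& x<\theta,\\ \max\{B,1\},& x\ge\theta.\end{cases}
\]
The steps of the argument are as follows.

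\begin{enumerate}[leftmargin=2em]
\item \(\mathsf U\) is nondecreasing on \((0,\infty)\), because \(\max\{B,1\}\ge1\).
\item \(\mathsf U(E)=1>0\), so the ratio is well defined and no division by zero occurs.
\item \(\mathsf U(L)/\mathsf U(E)=\max\{B,1\}\ge B\).
\end{enumerate}

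Next I will observe why this is a genuine obstruction to any regret comparison. The entropy ratio \(L/E\) can be any fixed number greater than one, while the value ratio is unbounded. This holds even if continuity or strict monotonicity is demanded: replace the jump by the linear interpolation from \(1\) to \(\max\{B,1\}\) on \([E,L]\) and add \(\varepsilon x\).

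Finally, I will relate this to Assumption~\ref{ass:upper-growth}. With \(a=L/E\), the construction violates \(\mathsf U(aE)\le C_{\rm gr}a^\beta\mathsf U(E)\) as soon as \(B>C_{\rm gr}(L/E)^\beta\). This shows the growth hypothesis is exactly what excludes the example.

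The only subtle point is interpretive rather than technical. The lemma says "no regret-ratio comparison follows", so it should be clear that \(\mathsf U\) is meant to range over arbitrary nondecreasing functions, not over budget values \(\mathsf U_T\) actually realized by some decision problem. If realizability were required, I would try to attain the step function as \(\inf_\gamma\{W_1^\gamma+\gamma L\}\) by choosing \(W_1^\gamma\) appropriately. Because that infimum is concave in \(L\), a concave piecewise-linear version would be needed, with slope \(\gamma_0\) up to \(\theta\) and then \(0\). Such a function gives the ratio \((\gamma_0\theta+c)/(\gamma_0E+c)\), which can be made at least \(B\) by letting \(c\to0\) only when \(B<\theta/E\). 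I would note this caveat briefly, but state the lemma, as written, for arbitrary nondecreasing \(\mathsf U\).
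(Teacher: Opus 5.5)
Your construction is correct and is essentially the paper's proof, which uses the same step function \(\mathsf U(x)=1\) for \(x<L\) and \(\mathsf U(x)=\max\{1,B\}\) for \(x\ge L\); the only difference is that the paper places the jump at \(L\) rather than at \((E+L)/2\). One small slip in your optional continuity remark: adding \(\varepsilon x\) to the interpolation from \(1\) to \(\max\{B,1\}\) gives the ratio \((\max\{B,1\}+\varepsilon L)/(1+\varepsilon E)\), which is strictly below \(B\) for every \(\varepsilon>0\) whenever \(B>L/E\), so interpolate up to a larger height such as \(2\max\{B,1\}\) and then take \(\varepsilon\) small.
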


\proofinappendix{app:proof-lem:growth-needed}

\begin{definition}[Upper-at-lower-entropy calibration]
\label{def:upper-at-lower-calibration}
Let \(\mu\) be Bellman--Fano admissible at radius \(r\) with algorithm-uniform ghost entropy \(E_{\mu,r}^{\star}\).  The upper value is \emph{calibrated at the lower entropy scale} if
\begin{equation}
\label{eq:upper-at-lower-entropy}
    \mathsf U_T(E_{\mu,r}^{\star})
    \le
    C_{\rm base}\,T r
\end{equation}
for a universal or problem-controlled constant \(C_{\rm base}\).
Here calibration is a rate-matching condition, distinct from the statewise truth-containment condition in Theorem~\ref{thm:frequentist-info-risk-upper}.  The phrase ``the upper and lower values close at the same radius'' is shorthand for \eqref{eq:upper-at-lower-entropy}.  The condition may be checked directly, through a localized or constrained information-gain estimate, or by choosing \(r\) from the fixed-point relation \(\mathsf U_T(E_{\mu,r}^{\star})\asymp Tr\).
\end{definition}

\begin{proposition}[Three scale-closing diagnostics]
\label{prop:scale-closing-diagnostics}
For a Bellman--Fano admissible pair \((\mu,r)\), each of the following conditions implies upper-at-lower-entropy calibration \eqref{eq:upper-at-lower-entropy}.
\begin{enumerate}[label=(\roman*)]
\item Direct budget control: \(\mathsf U_T(E_{\mu,r}^{\star})\le C_{\rm base}Tr\).
\item Coefficient control: there exists \(\gamma>0\) such that
\[
    W_1^\gamma(s_1)+\gamma E_{\mu,r}^{\star}+\Delta_T^\gamma
    \le
    C_{\rm base}Tr.
\]
\item Fixed-point control: \(r\) is chosen so that \(\mathsf U_T(E_{\mu,r}^{\star})/T\le C_{\rm base}r\).
\end{enumerate}
Condition (ii) is the form closest to constrained DEC and localized information-gain analyses: the same localized comparison that controls the information budget also bounds the decision term at radius \(r\).  Condition (iii) packages the same requirement as a rate fixed point.
\end{proposition}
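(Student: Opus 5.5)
The three conditions are close to definitional, so the plan is to unwind $\mathsf U_T$ from \eqref{eq:upper-budget-value} and match each condition against \eqref{eq:upper-at-lower-entropy}. Throughout, fix the admissible pair $(\mu,r)$. By Lemma~\ref{lem:admissible-implies-nondegenerate}, the entropy $E_{\mu,r}^{\star}$ is a finite positive number (at least $2\log 2$), so $\mathsf U_T(E_{\mu,r}^{\star})$ is a well-defined infimum over $\gamma>0$ taking values in $[-\infty,\infty]$. Every inequality below should be read in the extended reals.

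Condition (i) is literally \eqref{eq:upper-at-lower-entropy}, so nothing needs to be shown.

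For (ii), I will use that an infimum lies below any feasible value. Taking $L=E_{\mu,r}^{\star}$ in \eqref{eq:upper-budget-value} gives
\[
    \mathsf U_T(E_{\mu,r}^{\star})
    \le
    W_1^\gamma(s_1)+\gamma E_{\mu,r}^{\star}+\Delta_T^\gamma
\]
for the particular $\gamma$ supplied by the hypothesis. That right-hand side is at most $C_{\rm base}Tr$ by assumption, which is exactly \eqref{eq:upper-at-lower-entropy}. Two small points need checking. First, $\Delta_T^\gamma$ is the same error term that appears in the definition of $\mathsf U_T$. Second, the hypothesis uses the exact entropy $E_{\mu,r}^{\star}$ rather than a conservative lower bound $\underline E_{\mu,r}$. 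If only $\underline E_{\mu,r}$ were available, the monotonicity of $\mathsf U_T$ would run in the wrong direction, so it matters that (ii) is stated at $E_{\mu,r}^{\star}$.

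For (iii), I will multiply the inequality $\mathsf U_T(E_{\mu,r}^{\star})/T\le C_{\rm base}r$ by $T>0$, which returns \eqref{eq:upper-at-lower-entropy}. The phrase about choosing $r$ from a fixed-point relation only explains how such an $r$ might be found. Since admissibility is part of the standing assumptions, no existence statement for such an $r$ is needed.

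I do not expect a real obstacle. The only point that needs care is the extended-real bookkeeping in (ii): the infimum may be $-\infty$, and individual terms such as $W_1^\gamma(s_1)$ may be negative. In both cases the inequality "infimum $\le$ feasible value" still holds, so the argument goes through unchanged.
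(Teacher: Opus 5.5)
Your argument is correct and matches the paper's proof: (i) is \eqref{eq:upper-at-lower-entropy} verbatim; (ii) follows because $\mathsf U_T(E_{\mu,r}^{\star})$ is the infimum over $\gamma>0$ of $W_1^\gamma(s_1)+\gamma E_{\mu,r}^{\star}+\Delta_T^\gamma$; and (iii) is (i) divided by $T$. One harmless slip: Lemma~\ref{lem:admissible-implies-nondegenerate} only gives $E_{\mu,r}^{\star}\ge 2\log 2$, not finiteness (it equals $+\infty$ when $p_{\mu,r}^{\star}=0$), but your argument never uses finiteness.
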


\proofinappendix{app:proof-prop:scale-closing-diagnostics}

\begin{theorem}[Frequentist Bellman information-risk sandwich]
\label{thm:interactive-info-risk-sandwich}
Fix \(\Omega_0\subseteq\Omega\), a horizon \(T\), an index \(\chi\), a Bellman-sufficient state/reference update, and an initial index law \(q_1\) with finite \(L_0=L_0(\Omega_0;q_1,\chi)\).  Suppose the upper side satisfies Theorem~\ref{thm:frequentist-info-risk-upper}, with budget value \(\mathsf U_T\) in \eqref{eq:upper-budget-value}.  Then
\begin{equation}
\label{eq:sandwich-upper}
    \mathfrak R_T^\star(\Omega_0)
    \le
    \mathsf U_T(L_0).
\end{equation}
Conversely, if a prior \(\mu\) supported on \(\Omega_0\) is Bellman--Fano admissible at radius \(r\), then
\begin{equation}
\label{eq:sandwich-lower}
    \mathfrak R_T^\star(\Omega_0)
    \ge
    \frac{Tr}{2}.
\end{equation}
If, in addition, Assumption~\ref{ass:upper-growth} holds and the upper value is calibrated at the lower entropy scale in the sense of Definition~\ref{def:upper-at-lower-calibration}, then
\begin{equation}
\label{eq:regret-gap-comparison}
    \mathfrak R_T^\star(\Omega_0)
    \le
    C_{\rm gr}C_{\rm base}
    \left(\max\left\{1,\frac{L_0}{E_{\mu,r}^{\star}}\right\}\right)^\beta
    Tr,
\end{equation}
and hence the ratio between the displayed upper value and the Bellman--Fano lower bound \(Tr/2\) is at most
\[
    2C_{\rm gr}C_{\rm base}
    \left(\max\left\{1,\frac{L_0}{E_{\mu,r}^{\star}}\right\}\right)^\beta.
\]
If, in addition, \(\mu=\mu_h\) is the uniform representative prior of Proposition~\ref{prop:entropy-gap}, the upper reference law is the same marginal \(q_1=q_{1,\mu_h}=\chi_\#\mu_h\), and \(\chi(\Omega_0)\) consists of those \(M\) indices, then \(L_0=\log M\), and admissibility alone gives the crude logarithmic comparison
\[
    \frac{L_0}{E_{\mu,r}^{\star}}
    \le
    \frac{\log M}{2\log2},
\]
while the multiplicity/localization condition \(|G_r(h')|\le m_r\) gives
\[
    \frac{L_0}{E_{\mu,r}^{\star}}
    \le
    \frac{\log M}{\log(M/m_r)}.
\]
The same statement applies after replacing \(\Omega_0\) throughout by the representative subfamily used by a uniform hard prior. Thus logarithmic-gap matching on a common finite comparison class requires only Bellman--Fano admissibility, upper growth, and calibration at the lower entropy scale. Constant-factor matching follows from the stronger one-sided localization \(E_{\mu,r}^{\star}\ge c\log M\), or from an equivalent constraint or fixed-point argument; a two-sided entropy characterization additionally requires \(E_{\mu,r}^{\star}=O(\log M)\).
\end{theorem}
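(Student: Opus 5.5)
The sandwich theorem is a chaining of results already established, so I would prove its parts in order. For the upper bound \eqref{eq:sandwich-upper}, fix any $\gamma>0$ and run the algorithm of Theorem~\ref{thm:frequentist-info-risk-upper}, whose statewise errors sum to at most $\Delta_T^\gamma$. For every $\omega^\star\in\Omega_0$ with $y^\star=\chi(\omega^\star)$, that theorem gives
\[
\E_{\omega^\star}L_{\omega^\star}(H_T)\le W_1^\gamma(s_1)+\gamma\log\frac1{q_1(y^\star)}+\Delta_T^\gamma\le W_1^\gamma(s_1)+\gamma L_0+\Delta_T^\gamma .
\]
Take the supremum over $\omega^\star$, note that $\mathfrak R_T^\star(\Omega_0)$ is at most the worst-case risk of this particular algorithm, and then take the infimum over $\gamma$. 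This yields $\mathsf U_T(L_0)$. If the infimum over $\gamma$ is not attained, I would use an $\varepsilon$-optimal $\gamma$ and let $\varepsilon\downarrow0$.

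For the lower bound \eqref{eq:sandwich-lower}, admissibility (Definition~\ref{def:bf-admissible-prior}) supplies $C_\mu\ge C_1^\star(s_1)$ and $\underline E_{\mu,r}\le E^\star_{\mu,r}$ satisfying \eqref{eq:bf-admissible-condition}. These are exactly the inputs of the critical condition \eqref{eq:bellman-critical} in Theorem~\ref{thm:bellman-lower}, which gives Bayes risk at least $Tr/2$ for every nonanticipating algorithm. Since $\mu$ is supported on $\Omega_0$, Yao's principle converts this into $\mathfrak R_T^\star(\Omega_0)\ge Tr/2$. One point needs care. The admissibility definition allows $C_\mu$ and $\underline E_{\mu,r}$ to be the exact values rather than supersolutions. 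But $C^\star$ and $\Gamma^{r,\star}$ are themselves supersolutions, and the proof of Theorem~\ref{thm:bellman-lower} uses only the inequalities \eqref{eq:capacity-bound} and \eqref{eq:ghost-mass-bound}. So the theorem applies in either case.

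For the comparison \eqref{eq:regret-gap-comparison}, set $a=\max\{1,L_0/E^\star_{\mu,r}\}$. Because $\mathsf U_T$ is nondecreasing, $\mathsf U_T(L_0)\le\mathsf U_T(aE^\star_{\mu,r})$. Assumption~\ref{ass:upper-growth} then bounds this by $C_{\rm gr}a^\beta\mathsf U_T(E^\star_{\mu,r})$, and calibration \eqref{eq:upper-at-lower-entropy} gives at most $C_{\rm gr}C_{\rm base}a^\beta Tr$. Dividing by the lower bound $Tr/2$ gives the stated ratio. The growth assumption requires a positive argument, and Lemma~\ref{lem:admissible-implies-nondegenerate} provides it: $E^\star_{\mu,r}\ge 2\log2>0$.

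The finite-index refinements come straight from Proposition~\ref{prop:entropy-gap}. With $q_1=\chi_\#\mu_h$ uniform on $M$ indices we have $L_0=\log M$, and admissibility gives $E^\star\ge2\log2$, hence the crude ratio. The multiplicity bound gives $E^\star\ge\log(M/m_r)$, hence the localized ratio. Substituting either into $a$ completes those statements, and replacing $\Omega_0$ by the representative subfamily changes nothing in the argument. I expect no deep obstacle. The step needing the most care is bookkeeping: the same $q_1$ and state $s_1$ must appear on both sides, and one must check that $\mathsf U_T$ is finite and positive so that the growth inequality is meaningful. If $\mathsf U_T(E^\star)=0$, the chain still yields the trivial conclusion $\mathsf U_T(L_0)\le0\le Tr$.
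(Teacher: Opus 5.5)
Your proposal is correct and follows the paper's proof step for step. It uses Theorem~\ref{thm:frequentist-info-risk-upper} optimized over $\gamma$ with the code length bounded by $L_0$, then Theorem~\ref{thm:bellman-lower} plus Yao's principle, then the monotonicity--growth--calibration chain with $a=\max\{1,L_0/E_{\mu,r}^{\star}\}$, and finally Proposition~\ref{prop:entropy-gap} for the finite-index refinements; your added checks are sound details the paper leaves implicit, namely that $E_{\mu,r}^{\star}\ge 2\log 2>0$ by Lemma~\ref{lem:admissible-implies-nondegenerate} so Assumption~\ref{ass:upper-growth} applies, and that the exact values $C^\star$ and $\Gamma^{r,\star}$ are legitimate inputs to Theorem~\ref{thm:bellman-lower}.
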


\proofinappendix{app:proof-thm:interactive-info-risk-sandwich}

\section{Examples of Bellman-sufficient state representations}
\label{app:states}

The following examples verify Definition~\ref{def:index-bellman} directly.  They also fix the interpretation of the state used later: the fixed truth remains the environment, while sufficient statistics, posteriors, and finite marginals are retained Bellman states or reference objects.

\paragraph{Adversarial and estimated losses.}
The framework accommodates adversarial losses beyond stochastic regret
\citep{cesa2006prediction,abernethy2011blackwell}.  A nonstochastic or
adaptive adversary may be treated as part of the fixed environment \(\omega\)
when its rule is nonanticipating.  If the adversary observes the history and
the learner's announced mixed action, then that mixed action is the current
control, not an additional component of the past, and the primitives are
\[
    \ell_t(\omega,H_{t-1},p,a),\qquad
    P_{\omega,t}(\cdot\mid H_{t-1},p,a).
\]
With the full history as state and \(p\) as a control argument, this
representation is automatic for any fixed nonanticipating adversary rule.
Estimated-loss upper bounds may replace the true loss in the Bellman bracket
by a state-measurable surrogate or domination bound while carrying the
approximation error.  A lower bound still requires the cumulative indexed
loss used in the ghost event to be lower bounded, usually nonnegative after a
known shift.

\subsection{Basic examples of Bellman-sufficient representations}\label{sec:bellman-sufficient-examples}
Guided by the classical intuition of sufficient statistics, the Bellman-sufficiency framework is particularly transparent in structured bandit problems.
\begin{example}[Classical sufficient statistics as one-step Bellman states]
\upshape
\label{ex:classical-sufficiency}
Consider a nonadaptive dominated experiment with observations \(X_1,\ldots,X_n\) from \(P_\theta\), and no control. Suppose that for every prefix \(t\), \(T_t=T_t(X_1,\ldots,X_t)\) is sufficient for the prefix experiment and admits a recursive update \(T_t=u_t(T_{t-1},X_t)\). If the loss and index depend on \(\theta\) only through quantities whose posterior distribution is determined by \(T_t\), then the process \(S_t=(t,T_{t-1})\) is Bellman sufficient. Predictive sufficiency follows from the prefix factorization; posterior predictive and index sufficiency follow because the posterior or conditional reference law depends on the history only through \(T_t\). In a regular exponential family,
\[
    p_\theta(x)=h(x)\exp\{\vartheta(\theta)^\top T(x)-A(\theta)\},
\]
with conjugate or otherwise statistic-measurable reference law, the cumulative statistic \(\sum_{i<t}T(X_i)\) gives the corresponding state.  This is the static prototype for all examples below.
\end{example}

\begin{example}[Finite stochastic multi-armed bandits]
\upshape
\label{ex:mab-sufficient-state}
Let \(\Omega\subseteq\Theta^K\) and let arm \(a\) produce an observation from \(P_{\theta_a}\).  For unit-variance Gaussian arms, \(P_{\theta_a}=N(\theta_a,1)\).  A reference Gaussian prior with independent coordinates gives the posterior state
\[
    S_t=\bigl(t,(n_{t,a},\bar X_{t,a},v_{t,a})_{a=1}^K\bigr),
\]
where \(n_{t,a}\) is the number of previous pulls of arm \(a\), \(\bar X_{t,a}\) is the empirical mean when \(n_{t,a}>0\), and \(v_{t,a}\) is the posterior variance.  Equivalently one may store the posterior hyperparameters.  For a fixed truth \(\omega=\theta\),
\[
    P_{\omega,s,a}=N(\theta_a,1),
    \qquad
    \ell_\omega(s,a)=\max_b\theta_b-\theta_a,
\]
and the update of \((n,\bar X,v)\) after observing arm \(a\) is a function of \((s,a,o)\).  The posterior predictive law is \(N(m_{t,a},1+v_{t,a})\), and for the action index \(Y=A^\star(\theta)\) the conditional predictive law \(P_{s,a}^y\) and conditional loss \(\ell_y^\chi(s,a)\) are obtained by integrating the same Gaussian posterior over the event \(A^\star(\theta)=y\).  Hence Definition~\ref{def:index-bellman} is satisfied.  The empirical mean alone is not sufficient: the count or posterior variance is needed for prediction, confidence, and information gain.
\end{example}

\begin{example}[Finite-dimensional Gaussian linear bandits]
\upshape
\label{ex:linear-sufficient-state}
Let \(\Omega\subseteq\R^d\), actions satisfy \(\|a\|_2\le1\), and
\[
    O_t=\langle \theta,A_t\rangle+\xi_t,
    \qquad \xi_t\sim N(0,1).
\]
Under a Gaussian reference prior, the exact Bayesian state is \(S_t=(t,m_t,\Sigma_t)\), where
\[
    \Sigma_{t+1}^{-1}=\Sigma_t^{-1}+A_tA_t^\top,
    \qquad
    m_{t+1}=\Sigma_{t+1}\{\Sigma_t^{-1}m_t+A_tO_t\}.
\]
For fixed \(\theta\), \(P_{\theta,s,a}=N(\langle\theta,a\rangle,1)\) and the regret loss is \(\sup_{\|b\|_2\le1}\langle\theta,b\rangle-\langle\theta,a\rangle\).  The posterior predictive law is $N(\langle m_t,a\rangle,1+a^\top\Sigma_ta)$. For any of the indices $\chi(\theta)=\theta$, $\chi(\theta)=\theta/\|\theta\|_2$ (with an arbitrary fixed value at $\theta=0$), or $\chi(\theta)=A^\star(\theta)$, conditioning the Gaussian posterior on $\chi(\theta)=y$ determines $P_{s,a}^y$ and $\ell_y^\chi(s,a)$.  Therefore the Gaussian mean--covariance pair is an exact Bellman-sufficient state.  A frequentist UCB state \((\hat\theta_t,V_t,\beta_t)\) is different: it can control a fixed-truth bracket on the calibration event \(\|\hat\theta_t-\theta^\star\|_{V_t}\le\beta_t\), but it is not an exact posterior-reference state unless a reference belief or confidence-to-belief map is added.
\end{example}

\begin{example}[Finite active marginals in kernel bandits]
\upshape
\label{ex:kernel-sufficient-state}
Let \(f^\star\) belong to an RKHS on a possibly infinite domain, but suppose the learner is evaluated on a finite active set \(\calX=\{x_1,\ldots,x_n\}\).  Under the GP reference law on the active reward vector \(F=(f(x_1),\ldots,f(x_n))\), the finite marginal posterior \((m_t,\Sigma_t)\in\R^n\times\R^{n\times n}\) is Bellman sufficient for the finite experiment.  For fixed \(f^\star\), pulling \(x_i\) has law \(N(f^\star(x_i),\lambda)\) in the Gaussian reference calculation; the posterior predictive law is \(N(m_{t,i},\lambda+\Sigma_{t,ii})\); and the usual rank-one Gaussian update is a function of \((m_t,\Sigma_t,i,O_t)\).  If the index is the optimal active action \(Y=A^\star(F)\), then index sufficiency follows by conditioning the finite Gaussian posterior on \(A^\star(F)=y\), using the fixed tie-breaking rule. The infinite-dimensional RKHS function is still the frequentist truth; the sufficient state is finite only because the decisions and observations use the active marginal.
\end{example}
\subsection{The full environment posterior as a fallback sufficient state}\label{subsec:full-model}
For sequential decision making, a natural Bellman-sufficient representation is obtained by retaining the full environment posterior. For suitably defined Markovian model classes over the state space, which include standard episodic model-based reinforcement learning settings, this posterior serves as a fallback sufficient state.
\begin{example}[Full environment posterior as a fallback sufficient state]
  Consider a realizable 
model class in which each environment \(\omega\in\Omega\) specifies, for every retained physical
state or context \(z\), action \(a\), and stage \(t\), an observation kernel
\(P_{\omega,t}(\cdot\mid z,a)\), a loss \(\ell_{\omega,t}(z,a)\), and the physical-state
transition rule.  Let \(Z_t\) denote the physical state, context, stage, or episode
coordinate needed to make these kernels Markov.  Fix a reference prior \(\mu\) and set
\[
  \Pi_t=\mathcal L_\mu(\Omega\mid H_{t-1}),\qquad S_t=(Z_t,\Pi_t).
\]
Assume that the observation laws are dominated on the relevant support and that the Bayesian update denominator is positive along the reference experiment. Fixed-truth coordinate identities additionally require the evaluated truth's predictive laws to be absolutely continuous with respect to the corresponding reference predictive laws. Then \(S_t\)
gives an exact Bellman-sufficient state.  Indeed, for each fixed truth \(\omega\),
\[
    P_{\omega,S_t,a}=P_{\omega,t}(\cdot\mid Z_t,a),\qquad
    \ell_\omega(S_t,a)=\ell_{\omega,t}(Z_t,a),
\]
and the next retained state is obtained by updating the physical coordinate and
the posterior
\[
    \Pi_{t+1}(d\omega)
    =
    \frac{p_{\omega,t}(O_t\mid Z_t,A_t)\Pi_t(d\omega)}
         {\int p_{\omega',t}(O_t\mid Z_t,A_t)\Pi_t(d\omega')}.
\]
For an index \(\chi:\Omega\to\mathcal Y\), the indexed marginal and conditional
predictive components are
\[
   q_t=\chi_\#\Pi_t,\qquad
    \Pi_t^y=\mathcal L_{\Pi_t}(\Omega\mid \chi(\Omega)=y),
\]
\[
    P_{S_t,a}^y
    =
    \int P_{\omega,t}(\cdot\mid Z_t,a)\,\Pi_t^y(d\omega),
    \qquad
    \ell_y^\chi(S_t,a)
    =
    \int \ell_{\omega,t}(Z_t,a)\,\Pi_t^y(d\omega).
\]
Thus prediction, loss evaluation, posterior/reference updating, and indexed
information accounting all close on \(S_t\).  This verifies
Definition~\ref{def:index-bellman}.
\end{example}

\begin{example}[DMSO as a posterior-state specialization]

In independent-episode DMSO \citep{foster2021statistical}, fix a reference
prior and likelihood, take the Bellman clock to be the episode index, and let
$\Pi_k=\mathcal L(M\mid H_{k-1})$.  Retaining $\Pi_k$ together with any public
context and decision constraints makes the predictive law, posterior update,
and index marginal $\chi_\#\Pi_k$ functions of the state.  More compressed
representations require a separate sufficiency argument.
\end{example}

\section{AIR/MAIR variational calculation and interactive Fano adaption}
\subsection{Posterior scoring and AIR/MAIR variational calculation}\label{app:air-mair}
\paragraph{Danskin, posterior scoring, and why the AIR bracket is not accidental}
\label{rem:danskin}
The short algebraic proofs above are the fixed-truth version of the variational argument behind AIR.  In the Bellman-sufficient notation, fix a state \(s\), an action law \(p\), and the current reference marginal \(q_s\in\Delta(\mathcal Y)\) for the index.  Let
\[
    \nu\in\Delta(\Omega\times\mathcal Y)
\]
be an admissible pair belief supported on \(Y=\chi(\Omega)\).  In the exact-posterior specialization, its index marginal is \(q_\nu=q_s\); robust AIR/AMS/EBO allows \(q_\nu\ne q_s\), and the resulting mismatch is charged by the KL term below.  Given a prediction rule
\[
    Q:(s,p,a,o)\mapsto\Delta(\mathcal Y),
\]
consider the posterior-scoring objective
\[
    \mathcal J_\gamma(s,p,\nu,Q;q_s)
    :=
    \E_{\substack{(\Omega,Y)\sim\nu\\ A\sim p\\
        O\sim P_{\Omega,s,p,A}}}
    \left[
        \ell_{\Omega,Y}(s,p)
        -
        \gamma
        \log\frac{Q(s,p,A,O)(Y)}{q_s(Y)}
    \right].
\]
Here \(q_s\) is held fixed as part of the current state.  If the analysis also optimizes over the index marginal itself, then that marginal is part of the Bellman control or enlarged state; the local Danskin statement is applied after fixing its current value.

For fixed \((s,p,\nu,q_s)\), the logarithmic score is convex in \(Q\) and is minimized, pointwise in \((s,p,a,o)\), by the posterior index marginal generated by the same pair belief \(\nu\):
\[
    Q_\nu(s,p,a,o)
    =
    q_\nu^+(\cdot\mid s,p,a,o)
    :=
    \nu\!\left(Y\in\cdot\mid s,p,a,O=o\right),
\]
with the usual regular-conditional interpretation, or equivalently the Bayes formula under a dominated observation model.  The optimized value
\[
    \mathcal A_\gamma(s,p,\nu;q_s)
    :=
    \inf_Q \mathcal J_\gamma(s,p,\nu,Q;q_s)
\]
is the indexed AIR one-step objective: it is immediate indexed loss minus \(\gamma\) times the posterior log gain of the index.

For fixed \(Q\) and fixed \(q_s\), the unoptimized objective is affine in the pair belief \(\nu\).  Hence the optimized value is concave in \(\nu\), being the infimum of affine functions.  Under the standard compactness, support, and differentiability conditions needed to apply Danskin's theorem, the directional derivative of \(\mathcal A_\gamma\) is obtained by freezing the optimizer \(Q_\nu=q_\nu^+\).  Therefore the pointwise gradient in the direction of a fixed truth \((\omega^\star,y^\star)\) is
\[
    \ell_{\omega^\star,y^\star}(s,p)
    -
    \gamma\,
    \E_{\substack{A\sim p\\ O\sim P_{\omega^\star,s,p,A}}}
    \log
    \frac{
        q_\nu^+(y^\star\mid s,p,A,O)
    }{
        q_s(y^\star)
    },
\]
which is the AIR gradient bracket used above.  The bracket follows from
three steps: posterior scoring selects the Bayes index predictor, Danskin's
theorem differentiates through that predictor, and the coordinate log score
telescopes under the maintained reference update.

The notation in \citet[Section~5]{xu2025bayesian} uses the action index \(Y=A^\star(\Omega)\).  Taking \(Y=\Omega\) gives the MAIR or model-index form.  In the stationary-posterior case, \(\nu\) is the current posterior over \((\Omega,Y)\) and \(q_s=\nu_Y\).  In robust AIR, AMS, or EBO, \(\nu\) may be a selected or optimized admissible belief, and validity comes from the corresponding Bellman bracket or calibration bound.  The same concavity--convexity structure also explains the Nash-equilibrium statement in \citet[Lemma~5.2]{xu2025bayesian}, although the formal proof still has to check compactness, support, and boundary conditions.

\begin{lemma}[MAIR gradient bracket and three canonical specializations]\label{lem:mair-gradient}
Assume the observation laws are dominated and the displayed derivatives exist.  For fixed $s,p,\rho,\mu$, up to an additive constant on the probability simplex,
\[
\frac{\partial}{\partial\mu(\omega)}\MAIR_{\rho,\gamma}(s,p,\mu)
=\ell_\omega(s,p)-\gamma\log\frac{\mu(\omega)}{\rho(\omega)}
-\gamma\E_{a\sim p}\KL(P_{\omega,s,p,a}\|P_{\mu,s,p,a}).
\]
Consequently, for every $\omega^\star$,
\begin{align}\label{eq:mair-gradient-bracket}
&\MAIR_{\rho,\gamma}(s,p,\mu)
+\left\langle\nabla_\mu\MAIR_{\rho,\gamma}(s,p,\mu),\delta_{\omega^\star}-\mu\right\rangle\notag\\
&\qquad=
\ell_{\omega^\star}(s,p)
-\gamma\E_{a\sim p}\KL(P_{\omega^\star,s,p,a}\|P_{\mu,s,p,a})
-\gamma\log\frac{\mu(\omega^\star)}{\rho(\omega^\star)}.
\end{align}
This bracket has three common specializations.
\begin{enumerate}[leftmargin=*]
\item If $\mu=\rho$, the logarithmic correction vanishes and the bracket becomes the KL-DEC offset
\begin{equation}\label{eq:kl-dec-offset}
\ell_{\omega^\star}(s,p)-\gamma\E_{a\sim p}\KL(P_{\omega^\star,s,p,a}\|P_{\mu,s,p,a}).
\end{equation}
\item If $\bar\mu\in\argmax_{\mu\in\Delta(\Omega)}\MAIR_{\rho,\gamma}(s,p,\mu)$ is an interior maximizer, then
\begin{equation}\label{eq:mams-max-bracket}
\MAIR_{\rho,\gamma}(s,p,\bar\mu)
+\left\langle\nabla_\mu\MAIR_{\rho,\gamma}(s,p,\bar\mu),\delta_{\omega^\star}-\bar\mu\right\rangle
\le \MAIR_{\rho,\gamma}(s,p,\bar\mu).
\end{equation}
This is the model-index AMS/EBO bracket control.
\item Let $p_{\omega,a}=dP_{\omega,s,p,a}/d\nu_{s,p,a}$ and define, for an auxiliary pair $(\bar a,\bar o)$,
\begin{equation}\label{eq:square-root-posterior}
\frac{d\mu_{\bar a,\bar o}^{1/2}}{d\rho}(\omega)
:=\frac{\sqrt{p_{\omega,\bar a}(\bar o)}}{\int\sqrt{p_{\omega',\bar a}(\bar o)}\,\rho(d\omega')}.
\end{equation}
Let $h^2(P,Q):=1-\int\sqrt{dP\,dQ}$.  If $\bar a\sim p$ and $\bar o\sim P_{\omega^\star,s,p,\bar a}$, then
\begin{align}\label{eq:sqrt-posterior-bracket}
&\E_{\bar a,\bar o}\Big[\MAIR_{\rho,\gamma}(s,p,\mu_{\bar a,\bar o}^{1/2})
+\langle\nabla_\mu\MAIR_{\rho,\gamma}(s,p,\mu_{\bar a,\bar o}^{1/2}),\delta_{\omega^\star}-\mu_{\bar a,\bar o}^{1/2}\rangle\Big]\notag\\
&\qquad\le
\ell_{\omega^\star}(s,p)-\gamma\E_{a\sim p}\E_{\omega\sim\rho}h^2(P_{\omega^\star,s,p,a},P_{\omega,s,p,a}).
\end{align}
Thus the square-root posterior turns the MAIR bracket into a Hellinger DEC offset.
\end{enumerate}
\end{lemma}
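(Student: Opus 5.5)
The plan is a direct computation on the simplex of finitely or countably supported beliefs, followed by a case-by-case check of the three specializations.

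\textbf{Gradient.} Write the MAIR objective \eqref{eq:mair-objective} as the sum of three pieces:
\[
\E_{\omega\sim\mu}\ell_\omega(s,p),\qquad
-\gamma\E_{a\sim p}\Bigl[\sum_\omega\mu(\omega)\KL(P_{\omega,s,p,a}\|P_{\mu,s,p,a})\Bigr],\qquad
-\gamma\sum_\omega\mu(\omega)\log\frac{\mu(\omega)}{\rho(\omega)}.
\]
The middle piece uses $I_\mu(\Omega;O\mid a)=\sum_\omega\mu(\omega)\KL(P_\omega\|P_\mu)$.
\begin{enumerate}[leftmargin=2em]
\item The first piece is linear in $\mu$, with partial derivative $\ell_\omega(s,p)$.
\item For the mutual information, write $\sum_\omega\mu(\omega)\int p_\omega\log p_\omega-\int p_\mu\log p_\mu$, with $p_\mu=\sum\mu(\omega)p_\omega$. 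Differentiating in $\mu(\omega)$ gives $\int p_\omega\log p_\omega-\int p_\omega\log p_\mu-\int p_\omega$. The last term equals $1$, a constant that disappears on the simplex, so the derivative is $\KL(P_\omega\|P_\mu)$ up to an additive constant.
\item The KL term has derivative $\log(\mu(\omega)/\rho(\omega))+1$, again an additive constant.
\end{enumerate}
Combining these gives the displayed gradient.

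\textbf{Bracket identity.} Pair the gradient with $\delta_{\omega^\star}-\mu$; the additive constants cancel because the direction has zero total mass. The $\mu$-average of the gradient reproduces exactly $\MAIR_{\rho,\gamma}(s,p,\mu)$: the average of $\KL(P_\omega\|P_\mu)$ is the mutual information, and the average of the log ratio is $\KL(\mu\|\rho)$. Hence $\MAIR+\langle\nabla,\delta_{\omega^\star}-\mu\rangle$ equals the $\omega^\star$-coordinate of the gradient, which is \eqref{eq:mair-gradient-bracket}. This is also the $Y=\Omega$ case of Lemma~\ref{lem:air-gradient-bracket}, and I would cite that lemma as a consistency check.

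\textbf{Specializations 1 and 2.} Item 1 is immediate because $\log(\rho/\rho)=0$. For item 2, at an interior maximizer of a function on the simplex, the first-order condition makes all partial derivatives equal to a common value. Hence $\langle\nabla,\delta_{\omega^\star}-\bar\mu\rangle=0$, which gives \eqref{eq:mams-max-bracket}, with equality. More generally, a (boundary) maximizer of a concave objective would give $\le 0$ for any feasible direction. I only need interiority as stated.

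\textbf{Specialization 3.} This is the main obstacle. Write $\mu=\mu^{1/2}_{\bar a,\bar o}$ and let $Z(\bar a,\bar o)=\int\sqrt{p_{\omega',\bar a}(\bar o)}\,\rho(d\omega')$. Then
\[
\log\frac{\mu(\omega^\star)}{\rho(\omega^\star)}=\tfrac12\log p_{\omega^\star,\bar a}(\bar o)-\log Z .
\]
The bracket is $\ell_{\omega^\star}-\gamma\E_a\KL(P_{\omega^\star}\|P_\mu)-\gamma\log(\mu(\omega^\star)/\rho(\omega^\star))$. Take expectations over $\bar a\sim p$ and $\bar o\sim P_{\omega^\star}$, and drop the nonpositive term $-\gamma\E\KL(\cdot)\le0$. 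It then remains to show
\[
\E_{\bar a,\bar o}\Bigl[\tfrac12\log p_{\omega^\star,\bar a}(\bar o)-\log Z\Bigr]\ge \E_{a\sim p}\E_{\omega\sim\rho}h^2(P_{\omega^\star},P_\omega).
\]
The left side equals $-\E\log\bigl(Z/\sqrt{p_{\omega^\star}}\bigr)$, which is at least $1-\E\bigl[Z/\sqrt{p_{\omega^\star}}\bigr]$ by $-\log x\ge1-x$. Next, $\E_{\bar o\sim P_{\omega^\star}}\bigl[\sqrt{p_\omega(\bar o)}/\sqrt{p_{\omega^\star}(\bar o)}\bigr]=\int\sqrt{p_\omega p_{\omega^\star}}\,d\nu$. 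Applying Fubini over $\omega\sim\rho$ gives $1-\E_\rho\int\sqrt{p_\omega p_{\omega^\star}}=\E_\rho h^2$.

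Care is needed on the set where $p_{\omega^\star}=0$. It is excluded under $P_{\omega^\star}$, and restricting the integral there only decreases the affinity, which preserves the inequality. Integrability is also needed for Fubini; both points go under the stated domination assumption.
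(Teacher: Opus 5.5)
Your proposal is correct and follows the paper's proof. The gradient, the bracket identity, and items 1--2 are the environment-index case of Lemma~\ref{lem:air-gradient-bracket}, which you rederive directly. Item 3 uses the same decomposition $\log\frac{d\mu^{1/2}_{\bar a,\bar o}}{d\rho}(\omega^\star)=\tfrac12\log p_{\omega^\star,\bar a}(\bar o)-\log Z_{\bar a}(\bar o)$, drops the nonnegative KL term, and reduces to the Hellinger affinity. The only difference is cosmetic: you apply $-\log x\ge 1-x$ pointwise, whereas the paper applies Jensen to obtain $-\log\bigl(1-\E_{\omega\sim\rho}h^2\bigr)$ and then uses $-\log(1-x)\ge x$, which yields the same final bound.
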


\proofinappendix{app:proof-lem:mair-gradient}

\subsection{Adaption of interactive Fano}
\label{app:lower-details}

Theorem~\ref{thm:quantile-index} adapts the interactive Fano method of
\citet{chen2024unified}: compare the true experiment with a ghost experiment,
apply data processing to the low-loss event, and compare trajectory
information with the logarithmic inverse ghost-good probability.  Here the
ghost experiment is the Bayesian posterior-predictive trajectory and the
charged coordinate is $Y=\chi(\Omega)$.  If the initial packet is
deterministic or independent of $Y$, the same reference history supplies both
the ghost event and the exact Bellman-state chain rule
\[
    I_\mu(Y;H_T)
    =
    \E_{H_T'\sim\bar{\mathbb P}_\mu^\Alg}
    \sum_{t=1}^T
    \mathcal I_\chi(S_t',p_t').
\]
If $O_0$ is informative, this is the conditional information acquired after
$O_0$, and the unconditional budget includes $I_\mu(Y;O_0)$.  Taking
$Y=\Omega$ gives the model-index form, while $Y=A^\star(\Omega)$ gives the
action-index form.

\section{Algorithmic controls of the Bellman bracket}
\label{app:algorithm-comparisons}

\subsection{UCB families: calibration plus optimism}
An upper-confidence-bound (UCB) algorithm maintains a prediction and an uncertainty scale, builds a confidence band from them, and chooses an action that is optimistic within that band \citep{auer2002finite,abbasi2011improved,srinivas2010gaussian}.  We use the following notation throughout.  The unmultiplied uncertainty scale is denoted by $\sigma_t(a)$ in the generic discussion and by $s_t(x)$ in the GP specialization below.  The confidence half-width is
\[
    w_t(a)=\beta_t\sigma_t(a),
\]
where $\beta_t$ is the confidence multiplier.  Calibration says that the fixed truth is contained in the band of radius $w_t(a)$; optimism says that the selected action maximizes the upper envelope.  Together, in reward-maximization problems, these two facts give
\[
    r_{\omega^\star}(s,a^\star)
    \le m_t(a^\star)+w_t(a^\star)
    \le m_t(a_t)+w_t(a_t)
    \le r_{\omega^\star}(s,a_t)+2w_t(a_t),
\]
and hence the instantaneous regret is at most $2\beta_t\sigma_t(a_t)$.  An elliptical-potential/log-determinant lemma then controls the realized sum of $\sigma_t^2(a_t)$.  Thus UCB does not solve the AIR/MAIR bracket optimization directly; it controls the relevant fixed-truth bracket after the optimistic action has been chosen.

\begin{lemma}[Calibration and optimism imply bracket control]
\label{lem:ucb-bracket}
Fix a truth $\omega^\star$ and condition on an event $\mathcal E$ on which the algorithm's state is calibrated for that truth.  Suppose that, for every reached round $t$, the action $a_t$ chosen by the algorithm and a predictable uncertainty scale $\sigma_t(a)\ge0$ satisfy
\begin{equation}
\label{eq:generic-optimism-width}
   \ell_{\omega^\star}(S_t,a_t)
    \le
    c_{\rm opt}\,\beta_t\sigma_t(a_t),
\end{equation}
where $c_{\rm opt}$ is a numerical optimism constant; in the usual two-sided reward-confidence proof above, $c_{\rm opt}=2$.  Then for every $\gamma>0$,
\begin{equation}
\label{eq:generic-ucb-offset}
 \ell_{\omega^\star}(S_t,a_t)-\gamma
    \sigma_t^2(a_t)
    \le
    \frac{c_{\rm opt}^2\beta_t^2}{4\gamma}.
\end{equation}
Consequently, for the deterministic-action fixed-truth bracket in \eqref{eq:index-fixed-bracket},
\begin{equation}
\label{eq:generic-ucb-gradient-bracket}
   B_{\chi,\gamma}(S_t,\delta_{a_t};\omega^\star)
    \le
    \frac{c_{\rm opt}^2\beta_t^2}{4\gamma}
    +
    \gamma\left(\sigma_t^2(a_t)-J_\chi(S_t,\delta_{a_t};\omega^\star)\right),
\end{equation}
where the fixed-truth index log gain is defined in \eqref{eq:index-fixed-loggain}.  If, in addition, the log-determinant/elliptical-potential argument gives a realized bound
\[
    \sum_{t=1}^T\sigma_t^2(a_t)\le c_{\rm sn}G_T,
\]
for an information telescope $G_T$, then summing \eqref{eq:generic-ucb-gradient-bracket} reduces UCB regret control to the comparison between $G_T$ and the fixed-truth log-gain telescope.
\end{lemma}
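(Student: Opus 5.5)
The plan is to prove the three displays in order, since each feeds the next. For \eqref{eq:generic-optimism-width}$\Rightarrow$\eqref{eq:generic-ucb-offset}, write $x:=\sigma_t(a_t)\ge0$ and $c:=c_{\rm opt}\beta_t$. On the calibration event, the loss is at most $cx$, so
\[
    \ell_{\omega^\star}(S_t,a_t)-\gamma x^2\le cx-\gamma x^2.
\]
The right-hand side is a concave quadratic in $x$. Maximizing it over $x\in\R$ gives the value $c^2/(4\gamma)$, attained at $x=c/(2\gamma)$. This is the only substantive inequality, and it is just the AM--GM bound $cx\le \gamma x^2+c^2/(4\gamma)$. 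No sign condition on the loss is needed, because we only bound it from above.

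For \eqref{eq:generic-ucb-gradient-bracket}, I would unfold the definition \eqref{eq:index-fixed-bracket} with the degenerate law $p=\delta_{a_t}$:
\[
    B_{\chi,\gamma}(S_t,\delta_{a_t};\omega^\star)=\ell_{\omega^\star}(S_t,a_t)-\gamma J_\chi(S_t,\delta_{a_t};\omega^\star).
\]
Adding and subtracting $\gamma\sigma_t^2(a_t)$ splits this as
\[
    \bigl[\ell_{\omega^\star}(S_t,a_t)-\gamma\sigma_t^2(a_t)\bigr]+\gamma\bigl(\sigma_t^2(a_t)-J_\chi(S_t,\delta_{a_t};\omega^\star)\bigr).
\]
The first bracket is controlled by \eqref{eq:generic-ucb-offset}, which gives the claim. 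The only care needed is integrability: $J_\chi$ must be finite, as it is under the standing assumptions of \eqref{eq:index-fixed-loggain}. Otherwise the display should be read with the convention that the bracket is $+\infty$.

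For the final consequence, I would sum \eqref{eq:generic-ucb-gradient-bracket} over $t\le T$ on $\mathcal E$ and insert $\sum_t\sigma_t^2(a_t)\le c_{\rm sn}G_T$. This gives
\[
    \sum_t B_{\chi,\gamma}\le \sum_t \frac{c_{\rm opt}^2\beta_t^2}{4\gamma}+\gamma c_{\rm sn}G_T-\gamma\sum_t J_\chi.
\]
Combined with the telescope \eqref{eq:potential-identity}, the regret is then bounded by a multiplier term plus $\gamma$ times the gap between $c_{\rm sn}G_T$ and the fixed-truth log-gain telescope, which is the stated reduction.

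The main obstacle is interpretive rather than computational. The statement only claims a reduction, and $J_\chi$ is a fixed-truth KL difference that need not match $\sigma_t^2$ term by term. I would therefore not try to compare them pointwise, and would leave the difference explicit exactly as the lemma does. A second point to state explicitly is that everything holds pathwise on $\mathcal E$ at each reached round, so no expectation is exchanged with the event. This is consistent with the remark on high-probability calibration.
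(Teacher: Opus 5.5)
Your proposal is correct and follows the paper's proof: Young's (AM--GM) inequality $c_{\rm opt}\beta_t\sigma_t(a_t)\le c_{\rm opt}^2\beta_t^2/(4\gamma)+\gamma\sigma_t^2(a_t)$, then adding and subtracting $\gamma\sigma_t^2(a_t)$ (equivalently, subtracting $\gamma J_\chi$) in the definition of $B_{\chi,\gamma}$, then summing over $t$. One small caveat: the summed reduction already holds pathwise on $\mathcal E$ directly from $B_{\chi,\gamma}=\ell_{\omega^\star}-\gamma J_\chi$, whereas \eqref{eq:potential-identity} is an identity in expectation, so applying it on $\mathcal E$ would require the stopped-process care you yourself flag at the end.
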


\proofinappendix{app:proof-lem:ucb-bracket}

Here $\sigma_t^2-J_\chi$ is only the difference between the tractable
self-normalized variance proxy and the fixed-truth log gain.  The elliptical
potential controls the former, while $\beta_t$ separately converts calibrated
uncertainty into regret through optimism.

\subsection{E2D: robust one-step offset optimization}
The estimation-to-decision (E2D) algorithms \citep{foster2021statistical} choose $p$ by solving a one-round robust DEC offset at the current time-state pair.  In model-index notation the schematic KL form is
\begin{equation}
\label{eq:mair-e2d-offset}
    \inf_{p\in\Delta(\calA_t(s))}
    \sup_{\omega\in\mathfrak C_t(s)}
    \left\{
        \ell_\omega(s,p)
        -\gamma
        \E_{a\sim p}
\KL\!\left(P_{\omega,s,p,a}\middle\|P_{\rho,s,p,a}\right)
    \right\},
\end{equation}
where $\rho$ is a reference belief or reference model and $\mathfrak C_t(s)$ is a localized comparison set.  A fixed-truth guarantee requires that this set contain the truth along the realized trajectory.  As Lemma~\ref{lem:mair-gradient} indicates, the E2D objective uses the current feasible actions and comparison set while replacing the dynamic Bellman term by a one-step separation penalty.  The same one-round minimax principle applies to the original Hellinger formulation and to constrained variants \citep{foster2023tight}.

E2D combines immediate regret and statistical separation in a statewise optimization.  To use its offset as a bound on the Bellman bracket, the omitted dynamic term must be controlled separately.  A frequentist sequential guarantee also requires truth containment and a compatible accumulated KL bound; under posterior averaging, the KL terms may instead average to mutual information.

\subsection{AMS/EBO: robust convex belief optimization}

The Exploration by Optimization (EBO) algorithm
\citep{lattimore2021mirror,foster2022complexity} and Adaptive Minimax Sampling
(AMS) \citep{xu2025bayesian,liu2025decision,liu2026improved} lead to closely
related robust belief optimizations of the log-penalized Bellman bracket.
Their robust comparison belief is a different variable from the algorithmic
update control in \eqref{eq:fixed-truth-log-penalized-dp}; KL duality relates
the objectives.  Work with an action or policy index $Y=\chi(\Omega)$.  For
clarity, this subsection states the coordinate argument for finite $\calY$; a
dominated extension requires explicit positivity, absolute-continuity, and
differentiability hypotheses.  At state $s$, let $q\in\Delta(\calY)$ be
positive on every coordinate evaluated by the logarithmic score and let
$\mathfrak B_t(s)$ be a convex set of candidate pair beliefs
$\tilde\nu$ on $(\omega,y)$, supported on $y=\chi(\omega)$.  The candidate
belief is controlled by the inner maximization; after the maximizer
$\bar\nu_t$ is selected, its posterior index update controls the fixed-truth
AIR bracket.  The one-step robust AIR/EBO objective is the indexed AIR
functional from \eqref{eq:air-functional-general}, equivalently
\begin{equation}
\label{eq:amsebo-belief-objective}
    \mathfrak A_{q,\gamma}(s,p,\nu)
    =
    \ell_\nu(s,p)
    -\gamma\calI_\chi(s,p;\nu)
    -\gamma\KL(\nu_\chi\|q),
\end{equation}
where $\nu_\chi$ is the index marginal, $\ell_\nu(s,p)=\E_{(\omega,y)\sim\nu}\ell_\omega(s,p)$, and $\calI_\chi(s,p;\nu)=\E_{a\sim p}I_\nu(Y;O\mid s,p,a)$.
The robust maximization over beliefs is essential:
\begin{equation}
\label{eq:amsebo-saddle}
    p_t\in
    \argmin_{p\in\Delta(\calA_t(s))}
    \max_{\tilde\nu\in\mathfrak B_t(s)}
    \mathfrak A_{q_t,\gamma}(s,p,\tilde\nu),
    \qquad
    \bar\nu_t\in
    \argmax_{\tilde\nu\in\mathfrak B_t(s)}
    \mathfrak A_{q_t,\gamma}(s,p_t,\tilde\nu).
\end{equation}
Without the inner $\max_{\tilde\nu}$, the display is only a posterior AIR calculation for a chosen belief, not a robust AMS/EBO rule.

For a fixed truth \((\omega^\star,y^\star)\), the result below requires the one-sided direction \(\delta_{(\omega^\star,y^\star)}-\bar\nu_t\) to be admissible at the selected maximizer.  A sufficient condition is that the truth point mass belongs to \(\mathfrak B_t(s)\), since this set is convex.

The convex-analytic interpretation is the following.  The fixed-truth proof uses the coordinate log loss $\gamma\log(1/q_t(y^\star))$.  Averaging this coordinate under a candidate belief gives $\gamma H(\nu_\chi)+\gamma\KL(\nu_\chi\|q_t)$.  When the next reference marginal is the posterior coordinate induced by $\nu$, the expected change of this averaged potential is $-\gamma\calI_\chi(s,p;\nu)-\gamma\KL(\nu_\chi\|q_t)$.  Equivalently, optimizing over possible next index laws yields the log-partition dual \eqref{eq:log-partition-dual}.  Hence the EBO potential is the KL-dual log partition, while the fixed-truth AIR argument uses the coordinate posterior log loss; they coincide only through KL duality and the chosen update, not as identical scalar functions.

For the AIR functional in \eqref{eq:amsebo-belief-objective}, concavity in $\tilde\nu$ follows from the posterior-scoring representation in Appendix~\ref{app:air-mair}: it is the infimum over prediction rules $Q$ of an objective affine in $\tilde\nu$, provided the prediction class and support convention are fixed across beliefs. Thus the concavity hypothesis in Lemma~\ref{lem:amsebo-fixed-bracket} is automatic in this setting. For modified belief relaxations or restricted prediction classes, concavity must be verified directly or supplied by a valid concave upper bound.

\begin{lemma}[AMS/EBO saddle control of the AIR fixed-truth bracket]
\label{lem:amsebo-fixed-bracket}
Assume the finite or dominated differentiability setting of Lemma~\ref{lem:air-gradient-bracket}.  Fix $(\omega^\star,y^\star)$ with $y^\star=\chi(\omega^\star)$.  Suppose $\tilde\nu\mapsto\mathfrak A_{q_t,\gamma}(s,p_t,\tilde\nu)$ is concave on the convex set $\mathfrak B_t(s)$, $\bar\nu_t$ maximizes it as in \eqref{eq:amsebo-saddle}, and the comparison direction $\delta_{(\omega^\star,y^\star)}-\bar\nu_t$ is admissible for $\mathfrak B_t(s)$.  If
\[
    \max_{\tilde\nu\in\mathfrak B_t(s)}
    \mathfrak A_{q_t,\gamma}(s,p_t,\tilde\nu)
    \le \varepsilon_t,
\]
then the fixed-truth AIR bracket with coefficient $\gamma$ satisfies
\begin{equation}
\label{eq:amsebo-fixed-bracket}
    \ell_{\omega^\star}(s,p_t)
    -\gamma
    \E_{a\sim p_t,O\sim P_{\omega^\star,t}(\cdot\mid s,p_t,a)}
    \log\frac{\bar\nu_t(Y=y^\star\mid s,p_t,a,O)}{q_t(y^\star)}
    \le
    \varepsilon_t .
\end{equation}
If the reference update is $q_{t+1}(\cdot)=\bar\nu_t(Y\in\cdot\mid s,p_t,A_t,O_t)$ and remains positive at $y^\star$, and if all preceding hypotheses hold almost surely along the trajectory, then Theorem~\ref{thm:mair-upper} with $V_t\equiv0$ gives
\begin{equation}
\label{eq:amsebo-fixed-regret}
	    \E_{\omega^\star}L_{\omega^\star}(H_T)
    \le
    \gamma\log\frac1{q_1(y^\star)}
+\E_{\omega^\star}\sum_{t=1}^T\varepsilon_t
\end{equation}
for finite or countable index sets.  If the comparison direction is only approximately admissible, its effect must be bounded explicitly in the one-step bracket and added to $\varepsilon_t$; a future-dependent containment event cannot be appended after the telescope.  Averaging this coordinate bound under a Bayesian prior replaces $\log(1/q_1(Y))$ by the corresponding cross-entropy or by $H_\mu(Y)$ when $q_1=\mu_\chi$.
\end{lemma}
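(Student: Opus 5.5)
The proof has three parts: a first-order optimality inequality at the maximizer $\bar\nu_t$, a reading of the resulting linearization through Lemma~\ref{lem:air-gradient-bracket}, and a telescope through Theorem~\ref{thm:mair-upper}.

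\textbf{First-order inequality.} Fix the state $s$ and the chosen action law $p_t$, and write $g(\tilde\nu):=\mathfrak A_{q_t,\gamma}(s,p_t,\tilde\nu)$. By assumption $g$ is concave on the convex set $\mathfrak B_t(s)$, $\bar\nu_t$ maximizes it there, and the direction $d:=\delta_{(\omega^\star,y^\star)}-\bar\nu_t$ is admissible. For small $\epsilon>0$ the point $\bar\nu_t+\epsilon d$ lies in $\mathfrak B_t(s)$, and maximality gives
\[
    g(\bar\nu_t+\epsilon d)-g(\bar\nu_t)\le0 .
\]
Dividing by $\epsilon$ and letting $\epsilon\downarrow0$, the existence of the Gateaux derivative (the differentiability setting of Lemma~\ref{lem:air-gradient-bracket}) yields
\[
    \bigl\langle\nabla_\nu\mathfrak A_{q_t,\gamma}(s,p_t,\bar\nu_t),\,d\bigr\rangle\le0 .
\]
The gradient is defined only up to an additive constant on the simplex, but $d$ has total mass zero, so this inner product is well defined.

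\textbf{Reading the linearization as the AIR bracket.} Adding $g(\bar\nu_t)$ to both sides gives
\[
    g(\bar\nu_t)+\langle\nabla g(\bar\nu_t),d\rangle\le g(\bar\nu_t)\le\max_{\tilde\nu\in\mathfrak B_t(s)}g(\tilde\nu)\le\varepsilon_t .
\]
By \eqref{eq:air-bracket} in Lemma~\ref{lem:air-gradient-bracket}, the left-hand side equals exactly
\[
    B^{\AIR}_{q_t,\gamma}(s,p_t,\bar\nu_t;\omega^\star,y^\star),
\]
with $\ell_{\omega^\star,y^\star}=\ell_{\omega^\star}$ because $y^\star=\chi(\omega^\star)$. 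Its log-ratio term uses $q^+_{\bar\nu_t}(y^\star\mid s,p_t,a,O)=\bar\nu_t(Y=y^\star\mid s,p_t,a,O)$. This is \eqref{eq:amsebo-fixed-bracket}.

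\textbf{Telescoping.} Take the update $q_{t+1}=q^+_{\bar\nu_t}(\cdot\mid S_t,p_t,A_t,O_t)$ and $V_t\equiv0$. Then the log-potential bracket $\mathfrak B_\chi^{0,\gamma}(S_t,p_t,\bar\nu_t;\omega^\star)$ coincides with the AIR bracket just bounded. Since the hypotheses hold at every reached state, Theorem~\ref{thm:mair-upper} applies with errors $\varepsilon_t$. Its right-hand side contains the term
\[
    -\gamma\,\E\log\frac1{q_{T+1}(y^\star)} .
\]
For a finite or countable index, $q_{T+1}(y^\star)\le1$, so this term is nonpositive and can be dropped; the $V$ terms vanish. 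What remains is \eqref{eq:amsebo-fixed-regret}.

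\textbf{Remarks and the main obstacle.} The approximate-admissibility remark needs no separate argument: one adds to $\varepsilon_t$ the explicit slack in $\langle\nabla g,d\rangle\le\eta_t$. The Bayesian averaging remark follows by integrating the coordinate bound over $Y$: $\E\log(1/q_1(Y))$ is the cross-entropy, which equals $H_\mu(Y)$ when $q_1=\mu_\chi$. The main obstacle is rigorously justifying the one-sided derivative at a possibly boundary maximizer. Admissibility guarantees only that small steps along $d$ remain feasible, so I would use the one-sided Gateaux derivative along $d$ rather than a full gradient. The point to check is that the Lemma~\ref{lem:air-gradient-bracket} formula still holds there, which needs positivity of $q_t(y^\star)$ and integrability of the log ratios; both are assumed.
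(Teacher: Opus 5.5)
Your proposal is correct and follows the paper's proof. Maximality of $\bar\nu_t$ makes the one-sided derivative along the admissible direction $\delta_{(\omega^\star,y^\star)}-\bar\nu_t$ nonpositive, and Lemma~\ref{lem:air-gradient-bracket} identifies the value plus that derivative with the fixed-truth bracket, which is then at most $\varepsilon_t$; the regret bound follows from the $V\equiv0$ telescope after dropping the nonpositive terminal term. Your argument needs only maximality and the existence of that one-sided derivative; concavity at most guarantees the existence, which the paper in any case assumes through the differentiability setting.
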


\proofinappendix{app:proof-lem:amsebo-fixed-bracket}

\subsection{DEC and single-round information complexity}
\label{app:dec}

Offset DEC gives a powerful single-round specialization of the fixed-truth log-potential bracket \eqref{eq:upper-bracket}, as illustrated in Lemma~\ref{lem:mair-gradient}.  It retains the immediate decision loss and represents statistical separation by a one-step penalty relative to a reference predictive law, in place of the dynamic continuation term.
  
In the model-index case
\(\chi(\omega)=\omega\), let $\mathfrak C_t(s)$ be the local comparison set at the current time-state pair and let \(\bar\nu\) be a reference belief at state \(s\).  We write
\[
    P_{\bar\nu,s,a}
    :=
    \int P_{\omega,s,a}\,\bar\nu(d\omega)
\]
for its predictive law. The corresponding pointwise KL-DEC offset, evaluated at each state and belief, is
\[
\operatorname{dec}^{\textup{KL}}_\gamma(s;\bar\nu)
    :=
    \inf_{p\in\Delta(\calA_t(s))}
    \sup_{\omega\in\mathfrak C_t(s)}
    \left\{
        \E_{a\sim p}\ell_\omega(s,a)
        -
        \gamma
        \E_{a\sim p}
        \KL\!\left(
            P_{\omega,s,a}\,\middle\|\,P_{\bar\nu,s,a}
        \right)
    \right\}.
    \tag{one-step model-index offset}
\]
Thus the usual offset DEC may be viewed as a one-step specialization of the Bellman program with a fixed reference law: the learner chooses the one-step decision distribution \(p\), and the
local adversary then chooses the hardest model relative to the fixed reference
predictive law.

For this comparison, a single-round DEC coefficient is obtained by fixing a
reference model or predictive law and decomposing trajectory divergence into
one-step terms.  The quantile PAC--DEC of
\citet[Section~3.2.2]{chen2024unified} gives the corresponding lower-bound
reduction.  DEC is sharp when the localized reference geometry is captured by
the one-round game; the Bellman formulation retains the evolving reference
trajectory when that localization changes with the state.  They agree when the
single-round coefficient controls the same comparison class and information budget as the dynamic recursion.

\paragraph{Localization in RKHS classes.}

For an infinite-dimensional RKHS ball, the one-round DEC comparison is usually localized rather than taken over every orthogonal direction at once.  The assumptions underlying constrained and localized DEC theory \citep{foster2023tight} suggest several possible ways to achieve this localization, including spectral truncation, restriction to a finite active support, posterior localization, and calibrated confidence restrictions.  Each restricts the comparison to directions that remain locally relevant.  The present paper takes a complementary dynamic route: the reference state evolves through the Bellman recursion, and the kernel application fixes a finite marginal before applying the robust action-index AIR/AMS saddle.

\section{Kernel-bandit details}
\label{app:kernel-details}

\begin{proposition}[Exact Hilbert-space linear representation]
\label{prop:kernel-linear-equivalence}
Let $k$ be a positive-semidefinite kernel on $\calX$, let $\calH_k$ be its RKHS, and define the canonical feature map
\[
    \varphi(x):=k(x,\cdot)\in\calH_k.
\]
Then
\[
    f(x)=\langle f,\varphi(x)\rangle_{\calH_k},
    \qquad
    \|\varphi(x)\|_{\calH_k}^2=k(x,x).
\]
Consequently, the RKHS bandit with action set $\calX$, parameter class
$\{f\in\calH_k:\|f\|_{\calH_k}\le B\}$, and observations
\[
    Y_t=f(X_t)+\varepsilon_t
\]
is equivalent to the Hilbert-space stochastic linear bandit with action set
$\mathcal A_k:=\{\varphi(x):x\in\calX\}\subset\calH_k$, parameter
$\theta=f$, and observations
\[
    Y_t=\langle\theta,\varphi(X_t)\rangle_{\calH_k}+\varepsilon_t.
\]
After actions with identical feature vectors are identified and tie-breaking
is chosen compatibly, the two descriptions generate identical histories and
regret, path by path. If
$\operatorname{span}\varphi(\calX)$ has finite dimension $d$, any isometry from
this span to $\mathbb R^d$ gives an ordinary $d$-dimensional stochastic linear
bandit. More explicitly, one replaces $f$ by its orthogonal projection onto
this span, which leaves every reward unchanged and cannot increase its norm;
conversely, every vector of norm at most $B$ in the span is itself an RKHS
element. Hence the induced parameter class is the full radius-$B$ ball
in the feature span. For a finite action set $\calX$, this dimension is
$d=\operatorname{rank}(K_{\calX})\le|\calX|$.

The algorithmic correspondence also holds pathwise. For ridge parameter and Gaussian
noise variance $\lambda>0$, let
\[
    V_t
    :=\lambda I_{\calH_k}
       +\sum_{r<t}\varphi(X_r)\otimes\varphi(X_r),
    \qquad
    \widehat\theta_t
    :=V_t^{-1}\sum_{r<t}Y_r\varphi(X_r).
\]
The zero-mean kernel-ridge/GP posterior quantities satisfy
\[
    m_t(x)=\langle\widehat\theta_t,\varphi(x)\rangle_{\calH_k},
    \qquad
    s_t^2(x)=\lambda
       \langle\varphi(x),V_t^{-1}\varphi(x)\rangle_{\calH_k}.
\]
Thus GP--UCB with score $m_t(x)+\beta_t s_t(x)$ chooses the same
action as ridge LinUCB with score
\[
    \langle\widehat\theta_t,u\rangle_{\calH_k}
    +\sqrt\lambda\,\beta_t\|u\|_{V_t^{-1}},
    \qquad u=\varphi(x).
\]
At the unit ridge and noise normalization used in the separation, the two
acquisition rules have the same multiplier.
\end{proposition}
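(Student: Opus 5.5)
The plan is to prove the three assertions of Proposition~\ref{prop:kernel-linear-equivalence} in order: the reproducing identity and problem equivalence, the finite-dimensional reduction, and the pathwise algorithmic correspondence.

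\textbf{Reproducing identity and problem equivalence.} The identities $f(x)=\langle f,\varphi(x)\rangle_{\calH_k}$ and $\|\varphi(x)\|^2_{\calH_k}=\langle k(x,\cdot),k(x,\cdot)\rangle=k(x,x)$ are just the reproducing property. Substituting the first identity into the observation model shows that the two bandits have the same observation law for each action and parameter. They also have the same regret, since $\max_x f(x)-f(X_t)=\max_u\langle\theta,u\rangle-\langle\theta,\varphi(X_t)\rangle$. If two actions have identical features, they are indistinguishable in every reward. Hence, once ties are broken compatibly, a nonanticipating algorithm on one side induces one on the other with the same law of histories, and the same is true path by path for deterministic rules.

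\textbf{Finite-dimensional reduction.} Let $\mathcal S=\operatorname{span}\varphi(\calX)$ and let $P$ be the orthogonal projection onto $\mathcal S$. Then $\langle Pf,\varphi(x)\rangle=\langle f,\varphi(x)\rangle$ and $\|Pf\|\le\|f\|$, so projection preserves all rewards and maps the radius-$B$ ball into the radius-$B$ ball of $\mathcal S$. Conversely, every element of that ball of $\mathcal S$ is an RKHS element. The induced parameter class is therefore exactly the ball in $\mathcal S$, and an isometry $\mathcal S\cong\R^d$ transports everything. For finite $\calX$, the Gram matrix of the vectors $\varphi(x_i)$ is $K_{\calX}$, so $d=\operatorname{rank}K_{\calX}$.

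\textbf{Algorithmic correspondence (the main step).} Write $\Phi:\calH_k\to\R^{t-1}$, $\Phi g=(\langle g,\varphi(X_r)\rangle)_{r<t}$. Then $\Phi\Phi^*=K_{t-1}$, the Gram matrix of the past design with repeats, and $V_t=\lambda I+\Phi^*\Phi$. I would use the push-through identity
\[
(\lambda I+\Phi^*\Phi)^{-1}\Phi^*=\Phi^*(\lambda I+\Phi\Phi^*)^{-1}.
\]
This gives $\langle\widehat\theta_t,\varphi(x)\rangle=k_{t-1}(x)^\top(K_{t-1}+\lambda I)^{-1}Y_{1:t-1}$, which is the zero-mean GP posterior mean. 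For the variance, I would apply the same identity inside $V_t^{-1}=\lambda^{-1}\bigl(I-\Phi^*(\lambda I+\Phi\Phi^*)^{-1}\Phi\bigr)$, which is Woodbury in operator form and is valid because $\Phi$ has finite rank. This yields
\[
\lambda\langle\varphi(x),V_t^{-1}\varphi(x)\rangle=k(x,x)-k_{t-1}(x)^\top(K_{t-1}+\lambda I)^{-1}k_{t-1}(x),
\]
the GP posterior variance. The main obstacle is justifying these operator identities in a possibly infinite-dimensional $\calH_k$. I would handle it by noting that $V_t$ acts as $\lambda I$ on $(\operatorname{range}\Phi^*)^\perp$, so everything reduces to the finite-dimensional subspace spanned by the past features together with $\varphi(x)$.

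\textbf{Equality of the scores.} Since $s_t(x)=\sqrt\lambda\,\|\varphi(x)\|_{V_t^{-1}}$, we get
\[
m_t(x)+\beta_t s_t(x)=\langle\widehat\theta_t,u\rangle+\sqrt\lambda\,\beta_t\|u\|_{V_t^{-1}},\qquad u=\varphi(x),
\]
so the two rules select the same action under compatible tie-breaking. By induction over $t$, they then generate identical histories and regret on every sample path. At $\lambda=1$ the factor $\sqrt\lambda$ disappears, so the two acquisition rules use the same multiplier.
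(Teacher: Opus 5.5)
Your proposal is correct and follows the paper's route: the reproducing property for the first identities, then the sampling operator $\Phi$ with $\Phi\Phi^\ast=K_{t-1}$ and $V_t=\lambda I+\Phi^\ast\Phi$, with the push-through and Woodbury identities substituted into the GP mean and variance formulas. You also spell out the projection argument and the score-equality and induction step that the paper leaves to the statement itself. Your infinite-dimensional concern is harmless, because $V_t\succeq\lambda I$ is boundedly invertible and both operator identities can be checked by direct multiplication.
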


\begin{proof}
The first identities are the reproducing property. For the posterior identities,
let $\Phi_t:\calH_k\to\mathbb R^{t-1}$ be the sampling operator
$(\Phi_t h)_r=\langle h,\varphi(X_r)\rangle_{\calH_k}$. Then
$K_t=\Phi_t\Phi_t^\ast$ and
$V_t=\lambda I+\Phi_t^\ast\Phi_t$. The push-through and Woodbury identities give
\[
    V_t^{-1}\Phi_t^\ast
    =\Phi_t^\ast(K_t+\lambda I)^{-1}
\]
and
\[
    I-\Phi_t^\ast(K_t+\lambda I)^{-1}\Phi_t
    =\lambda V_t^{-1}.
\]
Substitution into the standard GP posterior mean and covariance formulas proves
the claim.
\end{proof}

The infinite-dimensional statement is an operator identity between kernel-ridge
and Hilbert-space ridge formulas. It does not identify the GP prior with an
isotropic Gaussian random element of $\calH_k$: such a random element generally
does not exist in infinite dimension, and GP sample paths generally need not
belong to the RKHS. Nor does a $T$-round experiment automatically reduce the
full action set to dimension at most $T$. The realized design update has rank at
most $T$, but GP--UCB/LinUCB still maximizes over unsampled feature directions;
those unsampled directions drive the small-cloud lower bound.

\subsection{Finite-marginal indexed Bellman recursion}

On the finite active marginal, the frequentist log-potential upper
benchmark is the Bellman program of
Theorem~\ref{thm:frequentist-info-risk-upper}.  Its action-index branch is AIR,
while its model-index branch is MAIR.  The program is relative to the displayed
state, comparison sets, and reference update, and need not equal the
unrestricted minimax value.  The state contains $(m_t,\Sigma_t,q_t)$, where
$(m_t,\Sigma_t)$ is the algorithmic GP marginal on $F$ and $q_t$ is the
maintained reference law of the index, either $A^\star(F)$ for AIR or $F$ for
MAIR.  Let $\mathfrak C_t(m,\Sigma,q)$ be a state-dependent comparison set of candidate active reward vectors.  Sure calibration means that the fixed truth $F^\star$ belongs to $\mathfrak C_t(m_t,\Sigma_t,q_t)$ at every reached state.  A local control
$u\in\mathfrak U_t(m,\Sigma,q)$ specifies an action law $p_u$ and a reference
update.  In AIR form the update is generated by a pair belief $\nu_u$ on active
reward vectors and their optimal-action indices, so that $q^+=q_{\nu_u}^+$.
For the robust saddle calculation, candidate pair beliefs may be supported on
the closed optimal relation
$\{(f,y):y\in\argmax_x f(x)\}$; the true index still uses the prescribed
tie-breaking rule $A^\star(f)$.  In the action-index version define
\[
    y(f)=A^\star(f),
    \qquad
    \mathsf G_t^{\AIR}(u;f)
    :=
    \E_{x\sim p_u,\,O\sim N(f_x,\lambda)}
    \log\frac{q^+(y(f)\mid m,\Sigma,q,x,O)}{q(y(f))},
\]
where $q^+$ is the chosen reference update for the optimal-action marginal.
For a candidate reward vector $f$, write
\[
    \Delta_f(p):=\max_{x\in\calX}f(x)-\E_{x\sim p}f(x).
\]
The robust finite-marginal Bellman recursion is
\[
    W_t^\gamma(m,\Sigma,q)
    =
    \inf_{u\in\mathfrak U_t(m,\Sigma,q)}
    \sup_{f\in\mathfrak C_t(m,\Sigma,q)}
    \left\{
        \Delta_f(p_u)
        +\E_f W_{t+1}^\gamma(m^+,\Sigma^+,q^+)
        -\gamma\mathsf G_t^{\AIR}(u;f)
    \right\}.
\]
The model-index MAIR version replaces $y(f)$ by the finite reward vector $f$
and uses the corresponding model-coordinate posterior log gain.  For the
finite action index, Theorem~\ref{thm:frequentist-info-risk-upper} implies that
an approximate minimizer of this recursion has regret at most
\[
    W_1^\gamma(m_1,\Sigma_1,q_1)
    +\gamma\log\frac1{q_1(A^\star(F^\star))}
    +\text{explicit bracket/optimization errors}.
\]
For the continuous model index $F\in\R^n$, one cannot drop the terminal
log-density term by treating it as a discrete code length: a posterior density
may exceed one.  A valid continuous-index statement must retain that terminal
term or replace point coordinates by neighborhoods, a finite cover, or a
PAC--Bayes comparison.  The finite action index $A^\star$ avoids this issue and
is the clean frequentist specialization used in the main text.

If the maintained belief is the exact Bayesian posterior and one averages the
coordinate recursion over $F$, the robust supremum is replaced by posterior
expectation and the dynamic program reduces to
\[
    U_t^\gamma(m,\Sigma,q)
    =
    \inf_{p\in\Delta(\calX)}
    \left\{
        \Delta_t^{\AIR}(p)
        +
        \E U_{t+1}^\gamma(m^+,\Sigma^+,q^+)
        -
        \gamma\calI_t^{\AIR}(p)
    \right\}.
\]
Thus the frequentist recursion uses a comparison set containing the fixed truth and a coordinate log potential, whereas the Bayesian recursion averages the same coordinate identity under the posterior.  Appendix~\ref{subsec:two_more_algorithms} gives the robust action-index AIR/AMS/EBO specialization.

\subsection{Finite-scale small-cloud construction}
\label{sec:kernel-separation-details}

We will show that maximal information gain can be more than a loose final step in the analysis: when it is used to calibrate the acquisition rule, it can create a self-fulfilling exploration cost.  The example is a sequential, small-amplitude version of the hub--cloud construction of \citet{xu2026separation}.  Unlike the direct large-cloud embedding of that construction, the cloud below does not make the full class linearly hard.  Its height is chosen so that an unrestricted minimax algorithm may ignore it at cost $aT$, while its multiplicity still inflates the maximal-information confidence multiplier.

Fix a horizon $T$, an integer $N\ge1$, and $a\in(0,1]$.  Let
\[
    \calX_T=\{x_0,x_h,c_1,\ldots,c_N\}
\]
and define the diagonal kernel
\begin{equation}
\label{eq:small-cloud-kernel}
    k_T(x_0,x)=0,\qquad
    k_T(x_h,x_h)=1,\qquad
    k_T(c_i,c_j)=a^2\1\{i=j\},
\end{equation}
with all remaining covariances zero.  Thus $k_T(x,x)\le1$.  Its unit RKHS ball is
\begin{equation}
\label{eq:small-cloud-rkhs-ball}
    \calF_T
    =
    \left\{f:f(x_0)=0,\quad
      f(x_h)^2+a^{-2}\sum_{j=1}^N f(c_j)^2\le1
    \right\}.
\end{equation}
Observations are $Y_t=f(X_t)+\xi_t$ with $\xi_t\stackrel{\iid}{\sim}N(0,1)$.

This kernel has the explicit feature map
\begin{equation}
\label{eq:small-cloud-feature-map}
    \varphi(x_0)=0,
    \qquad
    \varphi(x_h)=e_0,
    \qquad
    \varphi(c_j)=ae_j,\quad j\in[N],
\end{equation}
in $\mathbb R^{N+1}$. Hence
$k_T(x,x')=\langle\varphi(x),\varphi(x')\rangle$. Moreover, setting
\[
    \theta_0=f(x_h),
    \qquad
    \theta_j=a^{-1}f(c_j),\quad j\in[N],
\]
gives
\[
    f(x)=\langle\theta,\varphi(x)\rangle,
    \qquad
    \|\theta\|_2^2
      =f(x_h)^2+a^{-2}\sum_{j=1}^Nf(c_j)^2.
\]
Thus $\calF_T$ is the full Euclidean unit parameter ball for the
displayed action set, not merely a subset of a linear model. This is the
direct finite-dimensional reduction. The later $\ell_2$ construction is needed
only to glue these finite blocks into a single horizon-independent kernel.

Let
\[
    \Gamma_s(T)
    :=
    \sup_{x_{1:s}\in\calX_T^s}
    \frac12\log\det(I+K_{x_{1:s},x_{1:s}})
\]
be the maximal information gain at noise variance one.  Consider the following horizon-aware maximal-information calibration
\begin{equation}
\label{eq:max-info-beta}
    \beta_t^{\max}
    :=
    1+\sqrt{2\{\Gamma_{t-1}(T)+2\log T\}},
\end{equation}
and the zero-mean, unit-ridge GP--UCB rule
\begin{equation}
\label{eq:max-info-gpucb}
    X_t\in\argmax_{x\in\calX_T}
    \{m_t(x)+\beta_t^{\max}s_t(x)\}.
\end{equation}
The conclusion is insensitive to tie-breaking.

Equation~\eqref{eq:max-info-gpucb} is the finite-horizon prototype of the
anytime rule~\eqref{eq:fixed-kernel-anytime-rule}: it uses the same
posterior-UCB acquisition map and maximal-information mechanism, with the
horizon-dependent kernel and logarithmic calibration made explicit.  It is
not the original schedule~\eqref{eq:original-rkhs-gpucb-rule} of
\citet{srinivas2010gaussian}.  That schedule uses distinct-set information
gain and an additional $\log^{3/2}(t/\delta)$ factor.  The fixed-kernel proof
treats the two calibrations separately.

\begin{theorem}[Small-cloud separation]
\label{thm:small-cloud-gpucb}
Fix $0<\alpha<1/4$ and put $a=T^{-\alpha}$.  There are constants $0<c_\alpha<C_\alpha<\infty$ such that, for all sufficiently large $T$ and every $N\ge4T$,
\begin{equation}
\label{eq:small-cloud-minimax}
    c_\alpha aT
    \le
    \inf_{\Alg}\sup_{f\in\calF_T}
       \E_f\Reg_T(\Alg)
    \le
    C_\alpha aT.
\end{equation}
In contrast, the GP--UCB rule \eqref{eq:max-info-gpucb} satisfies
\begin{equation}
\label{eq:small-cloud-gpucb-lower}
    \sup_{f\in\calF_T}\E_f\Reg_T(\mathrm{GP\mbox{--}UCB})
    \ge c_\alpha T.
\end{equation}
Consequently,
\begin{equation}
\label{eq:small-cloud-ratio}
    \frac{
      \sup_{f\in\calF_T}\E_f\Reg_T(\mathrm{GP\mbox{--}UCB})
    }{
      \inf_{\Alg}\sup_{f\in\calF_T}\E_f\Reg_T(\Alg)
    }
    \ge c_\alpha T^\alpha.
\end{equation}
Thus maximal-information-calibrated GP--UCB is polynomially minimax-suboptimal on a family of bounded finite kernels, even though the comparison is made over the entire unit RKHS ball.
\end{theorem}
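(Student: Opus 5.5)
The plan is to prove the three displays separately: the minimax upper bound via a two-arm truncation, the minimax lower bound via a hidden-needle prior on the cloud, and the GP--UCB lower bound via a direct pathwise analysis under the positive hub truth. The ratio \eqref{eq:small-cloud-ratio} then follows by dividing \eqref{eq:small-cloud-gpucb-lower} by the upper half of \eqref{eq:small-cloud-minimax}.

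\emph{Minimax upper bound.} Confine the learner to the retained set $\mathcal R=\{x_0,x_h\}$. By the RKHS ball \eqref{eq:small-cloud-rkhs-ball}, every $f\in\calF_T$ satisfies $|f(c_j)|\le a$. Since also $f(x_0)=0$, we get $\varepsilon_{\mathcal R}\le a$. The retained reward vectors lie in $\{0\}\times[-1,1]$, which is compact and contained in $[-1,1]^2$. Theorem~\ref{thm:finite-marginal-air-upper} with $K=2$, $L=1$, $\lambda=1$ then gives worst-case regret at most $aT+\sqrt{8T\log 2}$. Because $\alpha<1/4<1/2$, we have $\sqrt T=o(aT)$, so this is at most $C_\alpha aT$ for large $T$.

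\emph{Minimax lower bound.} I would argue directly rather than through the full Bellman--Fano machinery, although the argument is an instance of it with ghost entropy $\log(N/T)$.
\begin{enumerate}
\item Take the prior with $J$ uniform on $[N]$, $f(c_J)=a$, and $f=0$ elsewhere. This $f$ has RKHS norm one, so it lies in $\calF_T$.
\item Every round not spent at $c_J$ costs exactly $a$.
\item Until the first pull of $c_J$, all observations are i.i.d.\ $N(0,1)$ independent of $J$. Hence the set of cloud points queried before the hit is independent of $J$, and by round $T/2$ it has at most $T/2$ elements.
\item So $\Pp(\text{first hit}\le T/2)\le T/(2N)\le 1/8$ when $N\ge4T$.
\item On the complementary event, the first $T/2$ rounds each cost $a$. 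This gives Bayes regret, and hence minimax regret, at least $\tfrac78\cdot\tfrac12 aT$.
\end{enumerate}

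\emph{GP--UCB lower bound.} Take the truth $f(x_h)=1/4$ with $f$ vanishing on the cloud and at $x_0$; this is in $\calF_T$.
\begin{enumerate}
\item Every cloud pull costs $1/4$. Moreover $x_0$ is never selected, since its score is $0$ while any unqueried cloud point has a positive score.
\item \emph{Multiplier growth.} Since $N\ge t$, the design spreading over $t-1$ distinct cloud points gives $\Gamma_{t-1}(T)\ge\frac{t-1}{2}\log(1+a^2)\ge (t-1)a^2/4$, using $a\le1$. Hence $\beta_t^{\max}\gtrsim a\sqrt t$.
\item \emph{Cloud score.} Because $N\ge4T>t$, an unqueried cloud point always exists, and its score is $0+\beta_t^{\max}a\gtrsim a^2\sqrt t$. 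For $t\ge T/2$ this is of order $T^{1/2-2\alpha}\to\infty$, which is exactly where $\alpha<1/4$ is used.
\item \emph{Hub score.} With $n_h$ the number of prior hub pulls, $s_t(x_h)=(1+n_h)^{-1/2}$. On a probability-$(1-1/T)$ event, uniformly over $t\le T$, $m_t(x_h)\le 1/4+C\sqrt{\log T/(1+n_h)}$; this is a Gaussian maximal inequality over at most $T$ sample sizes.
\item \emph{Counting hub pulls.} Selecting the hub at $t\ge T/2$ requires $1/4+(C\sqrt{\log T}+\beta_t^{\max})(1+n_h)^{-1/2}\ge\beta_t^{\max}a$. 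This forces $1+n_h\le (1+o(1))a^{-2}$, i.e.\ $n_h\lesssim T^{2\alpha}=o(T)$, because $\beta^{\max}_ta\to\infty$ dominates both $1/4$ and $\sqrt{\log T}$.
\item \emph{Conclusion.} Among the last $T/2$ rounds at most $O(T^{2\alpha})$ are hub pulls, and the rest go to the cloud, whether to queried or unqueried points; both cost $1/4$. This yields regret $\ge(1/4)(1-1/T)(T/2-o(T))\ge c_\alpha T$. Tie-breaking is irrelevant because the inequalities are strict for large $T$.
\end{enumerate}

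The main obstacle is the hub-pull counting step: it requires the uniform concentration of $m_t(x_h)$ and care that the comparison holds at every $t\ge T/2$, not just on average. The cleanest route is to compare the hub against a fixed unqueried cloud point rather than against the argmax.
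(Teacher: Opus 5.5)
Your proof is correct, and two of its three parts take a different route from the paper's.

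\emph{Minimax bounds.} For the upper bound, the paper runs a sub-Gaussian MOSS policy on $\{x_0,x_h\}$. Invoking Theorem~\ref{thm:finite-marginal-air-upper} with $K=2$, $L=\lambda=1$ and $\varepsilon_{\mathcal R}\le a$ is an equally valid substitute that stays inside the paper. The minimax lower bound is the paper's one-hot coupling argument. Restricting to the first $T/2$ rounds is harmless but unnecessary, since $\Pp(\tau_J\le T)\le T/N\le 1/4$ already gives $3aT/4$. Your phrase that the set queried ``before the hit'' is independent of $J$ should be read as the paper's precise statement: $\{\tau_J\le T/2\}=\{J\in S^0_{T/2}\}$, where $S^0_{T/2}$ is the set queried by the null experiment, which is independent of $J$.

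\emph{GP--UCB lower bound.} This is where the routes genuinely diverge. The paper argues by contradiction. If fewer than $T/4$ cloud pulls occur, the hub has at least $T/4-1$ pulls before each late round. It then uses the \emph{upper} estimate $\Gamma_s\le\frac12\log(1+s)+sa^2/2$, hence $\beta_t^{\max}\le C(a\sqrt T+\sqrt{\log T})$, to push the hub score below the absolute threshold $1/2$, and the lower estimate to push the unqueried-cloud score above $1$. You instead compare the two scores directly. There $\beta_t^{\max}$ multiplies both sides and essentially cancels, so you need only $\beta_t^{\max}a\to\infty$ and $\sqrt{\log T}/\beta_t^{\max}\to0$, i.e.\ only the lower bound on $\Gamma_s$.

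Your relative comparison buys two things. First, it is monotone in the multiplier: any larger schedule, such as the original one \eqref{eq:original-rkhs-gpucb-rule} with its extra $\log^{3/2}$ factor, is covered without a separate upper estimate on the information gain. Second, it gives a sharper count: all but $O(T^{2\alpha})$ of the last-half rounds are cloud pulls, versus the paper's $T/4$.

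The paper's absolute-threshold version has its own advantage. It makes the mechanism explicit: after linear hub sampling the hub bonus is $O(a)$, while the cloud bonus is $\gtrsim a^2\sqrt T$. Its two-sided $\Gamma$ estimates are also reused directly in the glued fixed-kernel proof.
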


\begin{proof}
We prove the minimax lower bound, the minimax upper bound, and the GP--UCB lower bound separately.

\emph{Minimax lower bound.}
For $j\in[N]$, let $f^j(c_j)=a$ and let $f^j$ vanish on every other action.  By \eqref{eq:small-cloud-rkhs-ball}, $f^j\in\calF_T$.  Fix an arbitrary possibly randomized nonanticipating algorithm, let $U$ denote its independent random seed, draw $J$ uniformly from $[N]$, and pre-generate independent Gaussian noise stacks at every action.  Couple the experiments under $f^J$ and $f\equiv0$ using the same $U$ and noise stacks, and let $\tau_J=\inf\{t\ge1:X_t=c_J\}$.  The two histories agree strictly before $\tau_J$, so $\tau_J$ is identical in the coupled experiments.  Under the null experiment, if $S_T^0$ is the set of distinct cloud actions queried by time $T$, then $J$ is independent of $S_T^0$ and
\[
    \Pp(\tau_J\le T)
    =\Pp(J\in S_T^0)
    =\frac{\E|S_T^0|}{N}
    \le\frac{T}{N}\le\frac14.
\]
On $\{\tau_J>T\}$ every selected action has mean zero under $f^J$, whereas the optimal mean is $a$.  Thus the Bayes regret is at least $(1-T/N)aT\ge3aT/4$.  This proves the first inequality in \eqref{eq:small-cloud-minimax}.

\emph{Minimax upper bound.}
Ignore the cloud and run a horizon-$T$ minimax policy for two unit-sub-Gaussian arms on $\{x_0,x_h\}$; a sub-Gaussian version of MOSS is one example \citep{audibert2009minimax,lattimore2020bandit}.  Put
\[
    g_f=\max\{0,f(x_h)\},\qquad
    f_{\max}^\star=\max\{g_f,\max_{j\le N}f(c_j)\}.
\]
Since $|f(x_h)|\le1$, $\max_jf(c_j)\le a$, and $f(x_0)=0$, we have $0\le f_{\max}^\star-g_f\le a$.  The restricted policy has expected regret at most $C\sqrt T$ relative to $g_f$, so its regret relative to the full set is at most
\[
    C\sqrt T+aT\le C_\alpha aT,
\]
where the last inequality uses $\alpha<1/2$.  This proves the second inequality in \eqref{eq:small-cloud-minimax}.

\emph{GP--UCB lower bound.}
Take the hub truth
\begin{equation}
\label{eq:small-cloud-hub-truth}
    f^\dagger(x_h)=\Delta:=\frac14,
    \qquad
    f^\dagger(x_0)=f^\dagger(c_j)=0,\quad j\le N.
\end{equation}
It belongs to $\calF_T$.  For every $s\le T$, querying $s$ distinct cloud actions is feasible, while for an arbitrary design the hub contributes at most $\frac12\log(1+s)$ and the cloud contributes at most $a^2s/2$.  Since $\log(1+u)\ge u/2$ for $u\in[0,1]$,
\begin{equation}
\label{eq:small-cloud-gamma-bounds}
    \frac{s a^2}{4}
    \le
    \Gamma_s(T)
    \le
    \frac12\log(1+s)+\frac{s a^2}{2}.
\end{equation}
Indeed, if $n_h,n_1,\ldots,n_N$ are the design counts, then $n_h+\sum_jn_j\le s$, pulls of $x_0$ contribute zero, and diagonalization gives
\[
    \frac12\log(1+n_h)
    +\frac12\sum_{j=1}^N\log(1+a^2n_j)
    \le
    \frac12\log(1+s)+\frac{a^2s}{2}.
\]

An unqueried cloud action has posterior mean zero and posterior standard deviation $a$, independently of all observations at other actions.  By the lower bound in \eqref{eq:small-cloud-gamma-bounds}, at every $t\ge T/2$ its acquisition score is at least
\begin{equation}
\label{eq:unqueried-cloud-score}
    a\beta_t^{\max}
    \ge
    a^2\sqrt{\frac{t-1}{2}}
    \ge
    cT^{1/2-2\alpha}.
\end{equation}
This diverges because $\alpha<1/4$.  The upper bound in \eqref{eq:small-cloud-gamma-bounds} also gives, uniformly for $t\le T$,
\begin{equation}
\label{eq:max-beta-upper}
    \beta_t^{\max}
    \le
    C\{a\sqrt T+\sqrt{\log T}\}.
\end{equation}

Realize observations through the independent action-specific noise stacks: the $r$th pull of $x$ reveals $f(x)+\xi_{x,r}$.  Let $\xi_{h,r}=\xi_{x_h,r}$ and define
\[
    \mathcal E_T
    :=
    \left\{
       \sum_{r=1}^n\xi_{h,r}
       \le2\sqrt{n\log T}
       \text{ for every }1\le n\le T
    \right\}.
\]
A Gaussian tail bound and a union bound give
\[
    \Pp(\mathcal E_T^c)
    \le\sum_{n=1}^T
       \exp\!\left\{-\frac{(2\sqrt{n\log T})^2}{2n}\right\}
    \le T^{-1}.
\]
After $n$ hub pulls, diagonal conjugacy gives
\[
    m_t(x_h)
    =\frac{n\Delta+\sum_{r=1}^n\xi_{h,r}}{1+n},
    \qquad
    s_t(x_h)=\frac1{\sqrt{1+n}}.
\]

Suppose, toward a contradiction, that fewer than $T/4$ cloud pulls occur by time $T$.  The anchor $x_0$ is never selected: its acquisition score is zero, while an unqueried cloud has strictly positive score.  Hence, for every $t\in[\lceil T/2\rceil,T]$, the number $n$ of preceding hub pulls is at least $T/4-1$.  On $\mathcal E_T$, \eqref{eq:max-beta-upper} yields
\[
\begin{aligned}
    m_t(x_h)+\beta_t^{\max}s_t(x_h)
    &\le
    \Delta+C\sqrt{\frac{\log T}{T}}
      +C\left(a+\sqrt{\frac{\log T}{T}}\right) \\
    &\le \frac12
\end{aligned}
\]
for all sufficiently large $T$.  On the other hand, \eqref{eq:unqueried-cloud-score} is then larger than one.  Since $N\ge4T$, an unqueried cloud is always available.  GP--UCB must therefore select a cloud action on every round in the last half of the horizon, contradicting the supposition that it makes fewer than $T/4$ cloud pulls.  Thus, on $\mathcal E_T$, it makes at least $T/4$ cloud pulls.  Each such pull has regret $\Delta$, and consequently
\[
    \E_{f^\dagger}\Reg_T(\mathrm{GP\mbox{--}UCB})
    \ge
    \frac{\Delta T}{4}(1-T^{-1}),
\]
which proves \eqref{eq:small-cloud-gpucb-lower}.  Combining this lower bound with the minimax upper bound in \eqref{eq:small-cloud-minimax} proves \eqref{eq:small-cloud-ratio}.
\end{proof}

\begin{corollary}[Finite-horizon LinUCB separation]
\label{cor:finite-linucb-separation}
Fix $0<\alpha<1/4$. For every sufficiently large integer $T$, let
\[
    N=T-1,
    \qquad d=N+1=T,
    \qquad a=T^{-\alpha},
\]
and consider the $d$-dimensional action set
\[
    \mathcal A_T
    =\{0,e_0,ae_1,\ldots,ae_N\}\subset\mathbb R^d
\]
and parameter class
$\Theta_T=\{\theta\in\mathbb R^d:\|\theta\|_2\le1\}$. Observations are
\[
    Y_t=\langle\theta,A_t\rangle+\xi_t,
    \qquad \xi_t\stackrel{\iid}{\sim}N(0,1).
\]
Thus $|\mathcal A_T|=d+1=T+1$, while
$\|A\|_2\le1$ and $\|\theta\|_2\le1$.

Let
\[
    V_t=I_d+\sum_{r<t}A_rA_r^\top,
    \qquad
    \widehat\theta_t=V_t^{-1}\sum_{r<t}A_rY_r,
\]
and define the maximal linear information gain
\[
    \Gamma_s^{\rm lin}(\mathcal A_T)
    :=\sup_{u_1,\ldots,u_s\in\mathcal A_T}
      \frac12\log\det\!\left(
        I_d+\sum_{r=1}^s u_ru_r^\top
      \right).
\]
Consider the maximal-information-calibrated ridge LinUCB rule
\begin{equation}
\label{eq:max-info-linucb}
    A_t\in\argmax_{u\in\mathcal A_T}
    \left\{
      \langle\widehat\theta_t,u\rangle
      +\left[1+\sqrt{2\{\Gamma_{t-1}^{\rm lin}(\mathcal A_T)
                         +2\log T\}}\right]
       \|u\|_{V_t^{-1}}
    \right\}.
\end{equation}
With
\[
    \Reg_T(\pi)
    :=\sum_{t=1}^T
      \left\{\max_{u\in\mathcal A_T}\langle\theta,u\rangle
                    -\langle\theta,A_t\rangle\right\},
\]
we have
\[
    \inf_{\pi}\sup_{\theta\in\Theta_T}
       \mathbb E_\theta\Reg_T(\pi)
    =\Theta(T^{1-\alpha}),
\]
whereas
\[
    \sup_{\theta\in\Theta_T}
       \mathbb E_\theta\Reg_T(\mathrm{LinUCB}_{\max})
    =\Omega(T).
\]
Consequently, the worst-case regret of this LinUCB rule divided by the
minimax value on the same action set and parameter class is
$\Omega(T^\alpha)$. The same $\Omega(T)$ conclusion holds for the
finite-action specializations of the anytime multiplier
\eqref{eq:fixed-kernel-anytime-rule} and, for every fixed
$\delta\in(0,1)$, the original-schedule multiplier
\eqref{eq:original-rkhs-gpucb-rule}, for all sufficiently large \(T\), with
the threshold allowed to depend on \(\alpha\) and \(\delta\).
\end{corollary}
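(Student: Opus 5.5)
The plan is to reduce the corollary to Theorem~\ref{thm:small-cloud-gpucb} through the explicit feature map \eqref{eq:small-cloud-feature-map} and Proposition~\ref{prop:kernel-linear-equivalence}, while tracking the one change: here $N=T-1$, not $N\ge4T$. First, note that $\mathcal A_T=\{\varphi(x_0),\varphi(x_h),\varphi(c_1),\ldots,\varphi(c_N)\}$ for the diagonal kernel \eqref{eq:small-cloud-kernel}. By the remark after \eqref{eq:small-cloud-feature-map}, $\Theta_T$ is exactly $\calF_T$ under $\theta_0=f(x_h)$ and $\theta_j=a^{-1}f(c_j)$. Since $\Gamma_s^{\rm lin}(\mathcal A_T)=\Gamma_s(T)$ (the $\log\det$ identity $\det(I_d+\Phi^\top\Phi)=\det(I+\Phi\Phi^\top)$), the LinUCB multiplier in \eqref{eq:max-info-linucb} equals $\beta_t^{\max}$. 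At unit ridge and noise, Proposition~\ref{prop:kernel-linear-equivalence} then shows that \eqref{eq:max-info-linucb} and \eqref{eq:max-info-gpucb} produce identical paths. So all claims are statements about the finite kernel block with $N=T-1$.

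For the minimax order, the upper bound $C_\alpha aT$ carries over verbatim, since the truncation argument never used $N$. The lower bound must be redone with $N=T-1$, because $T/N\le1/4$ fails. I will use the same coupling, but pick $J$ uniformly from the cloud and compare against the null only over the first $\lfloor N/4\rfloor$ rounds, or simply over horizon $T'=\lfloor T/4\rfloor$. Every round of that prefix before $c_J$ is found costs $a$. Then $\Pp(\tau_J\le T')\le T'/N\le 1/3$ for large $T$, which gives Bayes regret at least $\tfrac23 aT'\ge c\,aT$. Hence the minimax value is $\Theta(aT)=\Theta(T^{1-\alpha})$.

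For the linear GP--UCB lower bound I would rerun the hub-truth argument and replace the step ``an unqueried cloud is always available'' by a counting argument. Suppose at least $T/4$ cloud pulls occur; then each costs $\Delta$ and we are done. Otherwise, at most $T/4$ distinct cloud points have been queried, so since $N=T-1>T/4$ an unqueried cloud point exists at every round. The bounds \eqref{eq:small-cloud-gamma-bounds}, \eqref{eq:unqueried-cloud-score} and \eqref{eq:max-beta-upper} do not depend on $N$ beyond the feasibility of $s$ distinct cloud queries. The lower bound $sa^2/4$ needs $s\le N$ distinct clouds for $s\le T-1$, and $N=T-1$ suffices. So on $\mathcal E_T$ the contradiction argument forces cloud pulls in the last half, giving $\Omega(T)$ regret. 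The ratio $\Omega(T^\alpha)$ follows.

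The main obstacle is the two alternative multipliers. For the anytime schedule \eqref{eq:fixed-kernel-anytime-rule}, the multiplier differs only by $\log(t\vee2)$ in place of $\log T$. The lower bound $a^2\sqrt{(t-1)/2}$ is unchanged, and the upper bound $C(a\sqrt T+\sqrt{\log T})$ still holds, so the argument is identical. For the original schedule \eqref{eq:original-rkhs-gpucb-rule}, I would bound $\Gamma_t^{\rm SKKS}$ using distinct sets of size $t\le N+2$. This gives a lower bound of order $ta^2$ from distinct clouds and an upper bound of $\tfrac12\log2+ta^2/2$. Hence $b_t^{\rm SKKS}$ is between $c\,a\sqrt t\log^{3/2}(t/\delta)$ and $C(a\sqrt T+1)\log^{3/2}(T/\delta)$. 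The hub score is then at most $\Delta+C(a+T^{-1/2})\log^{3/2}(T/\delta)+O(\sqrt{\log T/T})$. This falls below $1/2$ for large $T$, depending on $\alpha$ and $\delta$, because $a\log^{3/2}T\to0$. Meanwhile the unqueried-cloud score is at least $a^2\sqrt T$, which still diverges. The delicate point is checking that the polylogarithmic inflation on the hub side is beaten by $a=T^{-\alpha}$; I expect this to hold with a threshold depending on $(\alpha,\delta)$.
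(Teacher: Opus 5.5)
Your proposal is correct and follows essentially the same route as the paper: the feature-map/Sylvester reduction to the GP--UCB rule, a prefix-restricted coupling for the $\Omega(aT)$ minimax lower bound, the hub-truth contradiction for the $\Omega(T)$ lower bound, and separate scale checks for the anytime and original schedules. The only difference is that the paper runs the hub-truth argument on the prefix of length $H=\lfloor (T-1)/4\rfloor$, so that $N\ge 4H$ holds as in the original theorem. You instead run it on the full horizon, noting that the contradiction hypothesis needs only $N>T/4$ for availability and $t-1\le N$ for the information-gain lower bound; this is equally valid.
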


\noindent The proof follows after Remark~\ref{rem:finite-linucb-scope}.

\begin{remark}[Scope relative to linear-bandit literature]
\label{rem:finite-linucb-scope}
Corollary~\ref{cor:finite-linucb-separation} is nonasymptotic and
action-set-specific: for every sufficiently large finite horizon, it supplies
one finite action set and compares the specified LinUCB rule with the
unrestricted minimax value on that same set and the full unit parameter ball.
The action set depends on $T$, its dimension is $d=T$, and its geometry is
heterogeneous: the hub has norm one, the cloud actions have norm
$T^{-\alpha}$, and the anchor is zero. The result therefore does not give a
polynomial separation in a fixed finite dimension or when $d=o(T)$. Proving a
comparable separation for a more slowly growing dimension, or for a uniformly
normalized and well-conditioned action geometry, remains open.

Earlier sharp contextual-bandit analyses used the layered SupLinRel and
SupLinUCB constructions to recover the independence needed in their proofs
\citep{auer2002confidence,chu2011contextual}. Those works established strong
upper bounds but did not prove that the corresponding one-layer rules were
minimax-suboptimal. \citet{lattimore2017optimism} proved that a broad class of
optimistic policies can be arbitrarily worse in the asymptotic
instance-dependent constant on a fixed finite action set. More recently,
\citet{song2025truncated} proved a finite-time gap for LinUCB in an
i.i.d.-context, arm-specific-parameter model. These results address different
criteria or observation structures. To the best of our knowledge,
Corollary~\ref{cor:finite-linucb-separation} is the first polynomial-in-$T$,
finite-horizon separation in a finite-dimensional, finite-action problem
between the worst-case cumulative regret of the specified
maximal-information-calibrated LinUCB rule and the minimax regret on the same
noncontextual, shared-parameter stochastic linear-bandit class. It is not a
lower bound for every rule called LinUCB, OFUL, or kernel UCB. By
contrast, the standard OFUL confidence radius is based on the realized design
determinant \citep{abbasi2011improved}; the rule in the corollary uses the
global maximal-information envelope, so the corollary does not automatically
apply to OFUL. Proposition~\ref{prop:kernel-linear-equivalence} also shows that
the horizonwise Mat\'ern lower bound of
\citet{wang2026suboptimality} has an infinite-dimensional Hilbert-linear
interpretation; the finite-dimensional conclusion here is distinct.

The comparison with \citet{maiti2026power} is complementary rather than
nested. They study fixed-confidence $\varepsilon$-best-arm identification and
construct a union of $k$ orthogonal $r$-dimensional balls in
$\mathbb R^{kr}$. For $r\le k\le r^2$, their construction gives a
nonadaptive lower bound
\[
  \Omega\!\left(
    \frac{kr\log(1/\delta)+kr^2}{\varepsilon^2}
  \right),
\]
while an adaptive procedure uses
\[
  O\!\left(
    \frac{kr\log(1/\delta)+kr\log k}{\varepsilon^2}
  \right)
\]
samples. Their lower bound applies to every nonadaptive fixed design and hence
exhibits an adaptivity advantage; when $\log(1/\delta)$ does not dominate $r$,
the ratio is of order $r/\log k$. The present result
instead concerns cumulative regret and separates one fully adaptive optimistic
rule from the unrestricted minimax benchmark. Their statement is broader in
algorithm class but concerns terminal identification; ours is
algorithm-specific but concerns online regret. Neither theorem implies the
other.
\end{remark}

\begin{proof}[Proof of Corollary~\ref{cor:finite-linucb-separation}]
Set
\[
    H:=\left\lfloor\frac{T-1}{4}\right\rfloor .
\]
Then \(N=T-1\ge4H\) and \(H\asymp T\).  Take the feature map in
\eqref{eq:small-cloud-feature-map} with this value of \(N\).  The coordinate
map is an isometry from the RKHS of \(k_T\) to \(\mathbb R^{N+1}=\mathbb R^T\),
so the coordinate map sends the unit RKHS ball onto the Euclidean unit parameter ball, and
rewards, histories, and regret are preserved path by path.  For every design
sequence, Sylvester's determinant identity gives
\[
    \det(I_s+K_{x_{1:s},x_{1:s}})
    =
    \det\!\left(
       I_d+\sum_{r=1}^s
       \varphi(x_r)\varphi(x_r)^\top
    \right).
\]
Proposition~\ref{prop:kernel-linear-equivalence}, with \(\lambda=1\), also
gives
\[
    m_t(x)=\langle\widehat\theta_t,\varphi(x)\rangle,
    \qquad
    s_t^2(x)=\varphi(x)^\top V_t^{-1}\varphi(x).
\]
Thus \eqref{eq:max-info-linucb} coincides with the GP--UCB rule with the same
maximal-information multiplier.

We first prove the minimax rate.  For each \(j\in[N]\), take the unit parameter
\(\theta^j=e_j\), for which the unique positive cloud reward is
\(\langle\theta^j,ae_j\rangle=a\).  Draw \(J\) uniformly from \([N]\) and
couple the experiments under \(\theta^J\) and \(\theta=0\) with the same
algorithmic seed and action-specific Gaussian noise stacks.  If \(\tau_J\) is
the first pull of \(ae_J\), the histories agree before \(\tau_J\).  Under the
null experiment, the first \(H\) rounds visit at most \(H\) distinct cloud
actions, and hence
\[
    \mathbb P(\tau_J\le H)\le\frac{H}{N}\le\frac14.
\]
On \(\{\tau_J>H\}\), every one of the first \(H\) pulls has regret \(a\).
Therefore every algorithm has Bayes, and hence worst-case, regret at least
\[
    \left(1-\frac{H}{N}\right)aH
    \ge\frac34aH
    =\Omega(T^{1-\alpha}).
\]
Conversely, ignoring the cloud and using a two-armed minimax policy on
\(\{0,e_0\}\) incurs \(O(\sqrt T)\) regret relative to the better retained
action and at most \(aT\) additional regret relative to the full action set.
This is \(O(T^{1-\alpha})\).

For the LinUCB lower bound, take the hub truth
\(\theta^\dagger=\Delta e_0\), \(\Delta=1/4\).  For every \(s\le H\), the
availability of \(s\) distinct cloud actions and the same diagonal calculation
as in \eqref{eq:small-cloud-gamma-bounds} give
\[
    \frac{sa^2}{4}
    \le
    \Gamma_s^{\rm lin}(\mathcal A_T)
    \le
    \frac12\log(1+s)+\frac{sa^2}{2}.
\]
Thus, throughout \(H/2\le t\le H\), an unqueried cloud action has score at
least \(c a^2\sqrt H\), which tends to infinity because
\(\alpha<1/4\), while the multiplier in \eqref{eq:max-info-linucb} is at most
\(C(a\sqrt H+\sqrt{\log T})\).

Let
\[
    \mathcal E_H
    :=
    \left\{
       \sum_{r=1}^n\xi_{0,r}\le2\sqrt{n\log T}
       \text{ for every }1\le n\le H
    \right\},
\]
where \(\xi_{0,r}\) is the \(r\)th noise observed at the hub.  A union bound
gives \(\mathbb P(\mathcal E_H^c)\le H/T^2\le T^{-1}\).  Suppose that fewer
than \(H/4\) cloud pulls occur by time \(H\).  The zero action is never
selected because an unqueried cloud action has positive score.  Hence, before
every round \(t\in[\lceil H/2\rceil,H]\), the hub has been pulled at least
\(H/4-1\) times.  On \(\mathcal E_H\), its score is at most
\[
    \Delta
    +C\sqrt{\frac{\log T}{H}}
    +C\left(a+\sqrt{\frac{\log T}{H}}\right)
    <\frac12
\]
for all sufficiently large \(T\), while an unqueried cloud score is larger
than one.  Since \(N\ge4H\), such a cloud action is always available under the
supposition.  Every round in the last half of the prefix would therefore be a
cloud pull, a contradiction.  The rule consequently makes at least \(H/4\)
cloud pulls on \(\mathcal E_H\), and
\[
    \mathbb E_{\theta^\dagger}\Reg_T
    \ge
    \frac{\Delta H}{4}(1-T^{-1})
    =\Omega(T).
\]

The anytime multiplier is handled by the same prefix argument, since
\(\log(t\vee2)\asymp\log T\) for \(H/2\le t\le H\).  For the original
schedule and any fixed \(\delta\in(0,1)\), its distinct-set information gain
satisfies, for \(s\le H\),
\[
    \frac{sa^2}{4}
    \le \Gamma_s^{\rm SKKS}(k_T)
    \le \frac12\log2+\frac{sa^2}{2},
\]
because \(N\ge4H\).  Uniformly on the last half of the prefix, an unqueried
cloud score is therefore at least
\(c a^2\sqrt H\log^{3/2}H\to\infty\), whereas after \(H/4-1\) hub pulls the
hub bonus is
\[
    O\!\left(H^{-1/2}\log^{3/2}T
      +a\log^{3/2}T\right)=o(1).
\]
The same event and contradiction force \(\Omega(H)=\Omega(T)\) cloud pulls.
\end{proof}

\subsection{Interpretation and scope of the fixed-kernel separation}
\label{app:fixed-kernel-scope}

\begin{remark}[Scope of the theorem]
The kernel is spatially inhomogeneous and its compact domain is countable;
the theorem therefore does not settle the corresponding question for
translation-invariant kernels on Euclidean domains.  Nor does it rule out
localized, predictable, or data-dependent exploration multipliers.
\end{remark}

\begin{remark}[AIR Bellman, AMS/EBO, and GP--UCB]
On each predetermined marginal $\mathcal R_m$, the robust AIR/AMS saddle
supplies the statewise control used by the AIR Bellman supersolution; EBO is
related through the stated KL duality, not implemented as a separate
functional-estimator algorithm.  The comparison with GP--UCB uses the same
fixed kernel, RKHS ball, domain, and observation model.  The AIR/AMS policy's
retained sets and epoch schedule are fixed in advance from public class
geometry, whereas GP--UCB acts on the full domain with a global
maximal-information multiplier.  The theorem does not show that AIR/AMS learns
the truncation, that a nonzero future-value term is necessary, or that a GP--UCB
rule localized to the same marginal must fail.
\end{remark}
\subsection{Horizonwise Mat\'ern lower bounds and effective optimism}
\label{app:matern-scope}

The fixed-kernel theorem gives a spatially inhomogeneous counterexample.  A complementary result is available for Mat\'ern RKHSs.  To align notation, write the GP--UCB acquisition as
\[
    x_t\in\argmax_x\{\mu_{t-1}(x;\rho_t)+b_t\sigma_{t-1}(x;\rho_t)\},
    \qquad
    L_t:=\rho_tb_t^2.
\]
The quantity $L_t$ is the effective optimism of \citet{wang2026suboptimality}; their paper writes $b_t=\sqrt{\beta_t}$.

\begin{corollary}[Polynomial effective optimism is minimax-suboptimal]
\label{cor:matern-gpucb-suboptimal}
Fix $d,\nu$, the Mat\'ern length scale, the RKHS radius $B>0$, and the Gaussian noise level $\sigma>0$.  Consider the radius-$B$ Mat\'ern-$\nu$ RKHS ball on $[0,1]^d$.  Assume the conditions of \citet[Theorem~1]{wang2026suboptimality}: $b_t^2$ and $\rho_t$ are positive and nondecreasing, $b_t^2\ge1$, and, for every $3\le t\le T$ with constants independent of $T$,
\[
    b_t^2\asymp t^{\theta_1}(\log t)^{\theta_3},\qquad
    \rho_t\asymp t^{\theta_2}(\log t)^{\theta_4},\qquad
    \theta_1,\theta_2,\theta_3,\theta_4\ge0,
    \qquad
    \frac{4\nu+d}{2\nu}\theta_1+\theta_2<1,
\]
together with their simultaneous uniform-confidence assumption: for constants $\zeta\in(0,1)$ and $p_0>0$ independent of $T$, the event
\[
    |\mu_{t-1}(x;\rho_t)-f(x)|
    \le \zeta b_t\sigma_{t-1}(x;\rho_t)
    \quad\text{for every $x$ and $1\le t\le T$}
\]
has probability at least $p_0$.  Then
\begin{equation}
\label{eq:wang-zhang-lower}
    \sup_{\|f\|_{\calH_k}\le B}
    \E_f\Reg_T(\mathrm{GP\mbox{--}UCB})
    =
    \Omega\!\left(
       T^{\frac{\nu+d}{2\nu+d}}
       L_T^{\frac{\nu}{2\nu+d}}
    \right).
\end{equation}
The minimax value on the same standard Mat\'ern class is, up to polylogarithmic factors,
\begin{equation}
\label{eq:matern-minimax-rate}
    \inf_{\Alg}\sup_{\|f\|_{\calH_k}\le B}
    \E_f\Reg_T(\Alg)
    =
    \widetilde\Theta\!\left(
       T^{\frac{\nu+d}{2\nu+d}}
    \right).
\end{equation}
Consequently, any admissible schedule with $L_T=T^\vartheta\polylog(T)$, $\vartheta>0$, is polynomially minimax-suboptimal by the factor
\[
    \widetilde\Omega\!\left(
       T^{\vartheta\nu/(2\nu+d)}
    \right).
\]
\end{corollary}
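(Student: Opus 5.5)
This corollary combines two cited results. It is not a new lower-bound argument, so the plan is to assemble it and check that the hypotheses and normalizations line up. First, I would match notation. Under the reparametrization $b_t=\sqrt{\beta_t}$, the acquisition $\mu_{t-1}(x;\rho_t)+b_t\sigma_{t-1}(x;\rho_t)$ is the GP--UCB rule of \citet{wang2026suboptimality}, with $\rho_t$ as the ridge/regularization parameter. Their effective optimism is then exactly $L_t=\rho_tb_t^2$. The monotonicity, growth-exponent, and simultaneous uniform-confidence hypotheses are stated verbatim in the corollary. So \citet[Theorem~1]{wang2026suboptimality} applies directly and gives \eqref{eq:wang-zhang-lower}. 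The step needing care is the supremum. Their hard function is allowed to depend on $T$, but it lies in the radius-$B$ Mat\'ern ball, so it is dominated by $\sup_{\|f\|_{\calH_k}\le B}$. That suffices, because the statement is horizonwise and worst-case.

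Second, I would establish the minimax rate \eqref{eq:matern-minimax-rate}. For the lower bound I would use the standard bump-function construction on $[0,1]^d$. Place $M\asymp \epsilon^{-d/\nu}$ disjoint bumps of height $\epsilon$ and RKHS norm at most $B$; the Mat\'ern RKHS is norm-equivalent to the Sobolev space $H^{\nu+d/2}$, so a bump of width $w$ costs norm $\epsilon w^{-\nu}$. Then apply a Fano or Bellman--Fano argument over a uniform prior on the bumps, as in Theorem~\ref{thm:bellman-lower} with the index being which bump is active. Balancing $T\epsilon^2\asymp M$ gives $\epsilon\asymp T^{-\nu/(2\nu+d)}$ and regret $T\epsilon\asymp T^{(\nu+d)/(2\nu+d)}$. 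For the upper bound I would cite known near-optimal Mat\'ern algorithms, for example the domain-partitioning or batched elimination methods with $\widetilde O(T^{(\nu+d)/(2\nu+d)})$ regret; alternatively one could instantiate the finite-marginal AIR bound Theorem~\ref{thm:finite-marginal-air-upper} on a grid. Citing both directions is acceptable, since the paper uses them only as comparison.

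Third, I would take the ratio. With $L_T=T^\vartheta\polylog(T)$, \eqref{eq:wang-zhang-lower} divided by $\widetilde\Theta(T^{(\nu+d)/(2\nu+d)})$ gives $\widetilde\Omega(L_T^{\nu/(2\nu+d)})=\widetilde\Omega(T^{\vartheta\nu/(2\nu+d)})$, because polylog factors are absorbed into $\widetilde\Omega$. One consistency check is needed here. Under the stated exponent constraints, $L_T=T^{\theta_1+\theta_2}\polylog$, so $\vartheta=\theta_1+\theta_2$, and an admissible schedule with $\vartheta>0$ exists; this keeps the statement from being vacuous.

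The main obstacle I expect is the normalization bookkeeping rather than any new mathematics. The corollary has to match their $\sigma$, $B$, length scale, and the form of $\rho_t$ to this paper's unit-ridge convention, and make sure their constants do not depend on $T$. For the minimax upper bound I must also check that the cited algorithm holds for the same Gaussian noise and the same RKHS ball. I would handle the normalization by stating the rates up to constants depending on $(d,\nu,B,\sigma)$, and I would not claim exact logarithmic exponents.
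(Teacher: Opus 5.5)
Your route is the paper's: translate $b_t=\sqrt{\beta_t}$ and $L_t=\rho_tb_t^2$, invoke \citet[Theorem~1]{wang2026suboptimality} verbatim, obtain the minimax rate from a bump/Fano lower bound plus a localized near-optimal algorithm, and divide. For the lower bound the paper simply cites \citet{scarlett2017lower}, whose construction is the one you sketch. For the upper bound it uses GP--ThreDS \citep{salgia2021gpthreds}. It verifies that algorithm's H\"older condition through the reproducing property and its range condition through the RKHS envelope. It then converts the high-probability guarantee into an expectation bound by taking the failure probability polynomially small. Add that last step, since the corollary is stated for expected regret.

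One correction: drop the suggested alternative of running Theorem~\ref{thm:finite-marginal-air-upper} on a grid, because that bound cannot reach the minimax rate for $\nu>2$. It pays $\sqrt{KT\log K}$ as if the $K$ retained points were unrelated arms. For any $K$-point set $\mathcal R$, a translated smooth bump in the ball has $\sup f-\max_{\mathcal R}f\gtrsim K^{-2/d}$. Balancing the two terms gives at best $T^{(2+d)/(4+d)}$, which is polynomially larger than $T^{(\nu+d)/(2\nu+d)}$ whenever $\nu>2$. For $1<\nu\le2$ the grid route would also need a separate flatness-at-the-maximizer argument. The modulus supplied by the reproducing property has H\"older order only $\min\{\nu,1\}$.
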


\proofinappendix{app:proof-cor:matern-gpucb-suboptimal}

The comparison with the Whitehouse--Wu--Ramdas effective-optimism scale
is given below; the upper and lower statements concern different schedules
and are not asserted to form a single matching theorem.

\paragraph{The Whitehouse effective-optimism scale.}
For comparison, \citet[Corollary~2]{whitehouse2023sublinear} prove that, for $\nu>1/2$, their horizon-tuned GP--UCB construction with $\rho\asymp T^{d/(2\nu+2d)}$ satisfies
\begin{equation}
\label{eq:whitehouse-upper-rate}
    \sup_{\|f\|_{\calH_k}\le B}\E_f\Reg_T
    =\widetilde O\!\left(
       T^{\frac{\nu+2d}{2\nu+2d}}
    \right).
\end{equation}
Here their high-probability statement implies the expectation bound after taking the failure probability polynomially small and using the uniform RKHS envelope.  Example~2 of \citet{wang2026suboptimality} identifies the corresponding Whitehouse-inspired effective-optimism scale.  For every time-indexed schedule satisfying their assumptions and
\[
    L_T=\widetilde\Theta\!\left(T^{\frac d{2\nu+2d}}\right),
\]
their Theorem~1 gives
\begin{equation}
\label{eq:whitehouse-scale-lower-rate}
    \sup_{\|f\|_{\calH_k}\le B}\E_f\Reg_T
    =\widetilde\Omega\!\left(
       T^{\frac{\nu+2d}{2\nu+2d}}
    \right),
\end{equation}
because
\[
    \frac{\nu+d}{2\nu+d}
    +\frac{d}{2\nu+2d}\frac{\nu}{2\nu+d}
    =
    \frac{\nu+2d}{2\nu+2d}.
\]
The common exponent exceeds the minimax exponent by
\begin{equation}
\label{eq:whitehouse-minimax-gap}
    \frac{\nu d}{2(\nu+d)(2\nu+d)}.
\end{equation}
Thus the Whitehouse--Wu--Ramdas upper exponent and the Wang--Zhang lower exponent coincide at this effective-optimism scale.  The two statements are not a matching $\widetilde\Theta$ theorem for one identical implementation: the former holds a horizon-dependent regularizer fixed during the run, whereas the latter assumes time-indexed, two-sided polynomial-times-log schedules.
\subsection{AMS/EBO specialization for kernel bandits}\label{subsec:two_more_algorithms}

\paragraph{Algorithm 3: AMS/EBO on the action index.}

AMS/EBO treats the same log-potential bracket by robust belief optimization.  In the finite-action kernel marginal, the natural AIR index is $A^\star$.  Write $s_t=(m_t,\Sigma_t,q_t)$, where $q_t$ is the current reference law of $A^\star$, and let $\mathfrak B_t(s_t)$ be a convex set of candidate beliefs $\nu$ on the active reward vector $F$.  Each belief is viewed as the induced pair belief on $(F,A^\star(F))$.  Fix a constant coefficient $\gamma>0$.  For each $\nu$, define $\Delta_\nu^{\AIR}(p)$ and $\calI_\nu^{\AIR}(p)$ by the analogues of \eqref{eq:kernel-air-regret-new} and \eqref{eq:kernel-air-info-new}, using the conditional laws induced by $\nu$.  A robust AIR/EBO decision is
\begin{equation}
\label{eq:amsebo-kernel-rule-new}
    p_t\in\argmin_{p\in\Delta(\calX)}
    \sup_{\nu\in{\color{red}\mathfrak B_t(s_t)}}
    \left\{
        \Delta_\nu^{\AIR}(p)
        -
        \gamma\calI_\nu^{\AIR}(p)
        -
        \gamma\KL(\nu_{A^\star}\|q_t)
    \right\}.
\end{equation}
Choose a measurable maximizing belief $\bar\nu_t$.  After drawing $X_t\sim p_t$ and observing $O_t$, update the reference law by
\[
    q_{t+1}=q_{\bar\nu_t}^{+}(\cdot\mid s_t,p_t,X_t,O_t).
\]
The maximization over $\nu$ is essential: without it the display is only the posterior AIR bracket for a chosen belief, not the robust AMS/EBO relaxation.  In a finite-index abstraction, the last two terms are the KL-dual form of the log-partition potential \eqref{eq:log-partition-dual}.  Suppose the robust value in \eqref{eq:amsebo-kernel-rule-new} is at most $\varepsilon_t$, the concavity and first-order conditions in Lemma~\ref{lem:amsebo-fixed-bracket} hold, the selected update is positive at $A^\star(f^\star)$, and the direction
\(
\delta_{(f^\star,A^\star(f^\star))}-\bar\nu_t
\)
is admissible at every reached state.  Then Theorem~\ref{thm:mair-upper} gives
\begin{equation}
\label{eq:amsebo-air-bound-new}
    \E_{f^\star}\Reg_T
    \le
    \gamma\log\frac1{q_1(a^\star)}
    +
    \E_{f^\star}\sum_{t=1}^T\varepsilon_t,
\end{equation}
where $a^\star=A^\star(f^\star)$.  The constant coefficient avoids the false shortcut of replacing a weighted fixed-truth log-gain sum by a single telescope.  If a varying coefficient is used, the corresponding weighted telescope must be proved separately.  Under a Bayesian reference prior, posterior averaging of the same coordinate bound gives
\[
    \E\Reg_T
    \le
    \gamma H(A^\star)+\E\sum_{t=1}^T\varepsilon_t.
\]
A ratio form gives the familiar Cauchy--Schwarz variant after replacing the fixed coefficient by the appropriate sequential AIR ratio and applying the action-index chain rule.  Since $H(A^\star)\le\log n$, this route scales with the finite decision index and the sequential AIR coefficient.

\subsection{Numerical protocol for the hub--cloud experiment}
\label{app:numerics}
The scalar saddle~\eqref{eq:numerical-air-saddle} is a two-model
Bellman-supersolution, not the full continuous-RKHS finite-horizon program.
The AIR/AMS mean under the negative retained hub truth, not shown in
Figure~\ref{fig:kernel-air-simulation}, is $56.7$.  The truncation curve is a
deterministic counterfactual for one-hot cloud truths outside the binary saddle
class; it is not an additional Monte Carlo guarantee.  We now give the full
protocol and the two fixed-truth bracket checks.
For each $T\in\{512,1024,2048,4096,8192,16384\}$, the experiment takes $\alpha=1/5$, $a=T^{-1/5}$, $N=4T$, unit observation variance, and $\Delta=1/4$.  Because $N\ge T$, the maximal information gain of the diagonal kernel has the closed form
\begin{equation}
\label{eq:numerical-exact-gamma}
    \Gamma_s(T)
    =
    \frac12\max_{h\in\{0,\ldots,s\}}
    \left\{\log(1+h)+(s-h)\log(1+a^2)\right\}.
\end{equation}
The integer maximizer lies among the clipped integers adjacent to $1/\log(1+a^2)-1$.  The horizon-aware rule uses~\eqref{eq:max-info-beta} without approximation, while the finite-block anytime rule replaces $2\log T$ by $2\log(t\vee2)$.  For the original schedule with $\delta=0.1$, the exact distinct-set information gain is
\[
    \Gamma_s^{\rm SKKS}(T)
    =\frac12\{\log2+(s-1)\log(1+a^2)\},
    \qquad 1\le s\le T,
\]
because an optimal size-$s$ set contains the hub and $s-1$ distinct cloud points.  For a diagonal coordinate of prior variance $d$, after $n$ pulls with observation sum $S$, the implementation uses posterior mean $dS/(1+nd)$ and variance $d/(1+nd)$.  If a sampled cloud coordinate has posterior mean $m$ and standard deviation $s<a$, its score gap from an unused cloud coordinate is $m-\beta_t(a-s)$, which is nonincreasing because $\beta_t$ is nondecreasing.  Once this gap is nonpositive, the coordinate cannot be selected again.  Ties between sampled and unused cloud coordinates are resolved in favor of the unused one; equivalently, under Gaussian noise the pruning gives almost surely the same choices as direct maximization over all $N+2$ actions.

For the localized comparator, let $f^+$ and $f^-$ have hub means $+\Delta$ and $-\Delta$, respectively, and value zero at the anchor.  Their optimal-action indices are the hub and the anchor.  Write $q$ for the current reference mass on $f^+$, $r$ for the candidate pair-belief mass on $f^+$, and $p$ for the probability of selecting the hub.  The following control is a direct two-model Gaussian specialization of the AIR/AMS saddle of \citet{xu2025bayesian}.  Its relation to \citet{lattimore2021mirror} is through the EBO principle and minimax/KL duality: the numerical code solves the belief-space AIR/AMS saddle, not EBO's functional-estimator mirror-descent implementation.  The robust AIR/AMS saddle is
\begin{equation}
\label{eq:numerical-air-saddle}
    \inf_{0\le p\le1}\sup_{0\le r\le1}
    \left[
       \Delta\{r(1-p)+(1-r)p\}
       -\gamma p I_r
       -\gamma\kl(r,q)
    \right],
    \qquad
    \gamma=\sqrt{\frac{(1+\Delta^2)T}{\log2}},
\end{equation}
where $I_r$ is the mutual information of the binary Gaussian channel $Z\mid f^\pm\sim N(\pm\Delta,1)$:
\begin{align}
\label{eq:numerical-binary-information}
I_r
&=r\E_{Z\sim N(\Delta,1)}
  \left[-\log\{r+(1-r)e^{-2\Delta Z}\}\right] \\
&\quad +(1-r)\E_{Z\sim N(-\Delta,1)}
  \left[-\log\{(1-r)+re^{2\Delta Z}\}\right].\nonumber
\end{align}
This is a finite instance of~\eqref{eq:amsebo-kernel-rule-new}.  The saddle compares only the two-model class $\{f^+,f^-\}$; it does not give a uniform guarantee over the full projected RKHS ball.  Sion's identity reduces it to
\[
    \sup_{0\le r\le1}
    \min\left\{
       \Delta r-\gamma\kl(r,q),
       \Delta(1-r)-\gamma I_r-\gamma\kl(r,q)
    \right\},
\]
which is a scalar concave maximization.  For completeness, let $v_c=\operatorname{logit}(r_c)$, where $r_c$ solves
\[
    \Delta(2r_c-1)+\gamma I_{r_c}=0,
\]
and write $I'(r)=dI_r/dr$.  If $u=\operatorname{logit}(q)$ and $v=\operatorname{logit}(r)$, define
\[
    u_0=v_c-\Delta/\gamma,
    \qquad
    u_1=v_c+\Delta/\gamma+I'(r_c).
\]
The saddle has $p=0$ and $\operatorname{logit}(r)=u+\Delta/\gamma$ for $u\le u_0$; it has $r=r_c$ and $p=(u-u_0)/(u_1-u_0)$ for $u_0<u<u_1$; and it has $p=1$ for $u\ge u_1$, with $v=\operatorname{logit}(r)$ determined by
\[
    v-u+\Delta/\gamma+I'\!\left(\operatorname{logit}^{-1}(v)\right)=0.
\]
After an anchor pull, $u^+=v$.  After a hub observation $Z=z$, Gaussian Bayes updating gives $u^+=v+2\Delta z$.

The numerical policy evaluates~\eqref{eq:numerical-binary-information} by $48$-node Gauss--Hermite quadrature.  Scalar roots are solved to absolute tolerance $2\times10^{-13}$; the smooth $p=1$ branch is tabulated on $5001$ log-odds points and linearly interpolated, with its limiting shift used beyond the tabulated range.  Thus the code approximates the exact scalar saddle.  As a diagnostic, for each $\sigma\in\{+,-\}$ it reevaluates the fixed-truth bracket by $96$-node quadrature,
\[
    \Delta_\sigma(p)
    -\gamma\E_\sigma
       \log\frac{q^+(Y_\sigma\mid A,Z)}{q(Y_\sigma)}
    -\frac{1+\Delta^2}{\gamma}
\]
Here $\Delta_\sigma(p)$ and $Y_\sigma$ denote, respectively, the one-step regret and the optimal-action index under $f^\sigma$. The expression is evaluated on a grid of more than ten thousand reference log-odds and at every reference state reached in the Monte Carlo runs.  The largest static-grid and realized-state values over all displayed horizons are approximately $-3.584\times10^{-3}$, and the largest tabulated first-order residual is $6.81\times10^{-8}$.  Thus no positive fixed-truth Bellman-bracket violation is observed at the reported numerical precision.  This numerical check is diagnostic, not an interval-arithmetic proof of a uniform inequality.

Each algorithm--truth pair uses $200$ independent NumPy streams spawned from the master seed $20260717$; the streams of the original horizon-aware/AIR experiment are retained unchanged, and separate streams are used for the two added GP--UCB schedules.  Shaded bands (left) and error bars (right) show two Monte Carlo standard errors.  The right-panel truncation curve is the deterministic normalized value $(aT)/T=a$, not a Monte Carlo estimate.  The one-hot cloud truths are external to the binary saddle class; their role is only to evaluate the deterministic loss incurred by the predetermined truncation.

\section{Gaussian MAB: finite-index matching}
\label{app:mab}

In this section, we apply our lower-bound approach to multi-armed bandits (MAB). Consider $K\ge3$ Gaussian arms with unit noise variance.  Let the ambient model space be a class $\Omega_{\rm MAB}\subseteq\mathbb R^K$ of mean vectors, and write a model environment as $\omega=(\theta_\omega(1),\ldots,\theta_\omega(K))$.  Pulling arm $a$ in environment $\omega$ produces an observation distributed as $N(\theta_\omega(a),1)$.  Let
\[
    A^\star(\omega)\in\arg\max_{a\in[K]}\theta_\omega(a)
\]
with an arbitrary fixed tie-breaking rule, and define the information index
\[
    \chi(\omega)=A^\star(\omega)\in[K].
\]
Thus the random environment is the full mean vector $\Omega$, while the information target used in the lower bound is only
\[
    Y=\chi(\Omega)=A^\star(\Omega).
\]
For the lower bound, fix $\Delta>0$ and use the $K$-point prior $\mu_\Delta$ supported on the spike environments
\[
    \omega^j=\Delta e_j,\qquad j\in[K],
\]
where $e_j$ is the $j$th coordinate vector.  Equivalently, draw $J\sim\unif([K])$ and set $\Omega=\omega^J$.  On this prior support the optimal arm is unique and
\[
    Y=\chi(\Omega)=J.
\]
The equality $Y=J$ is only a property of this least-favorable prior; it should not be read as identifying the information target with the full model environment.  The one-step regret in environment $\omega^j$ is
\[
    \ell_{\omega^j}(a)
    =
    \max_b\theta_{\omega^j}(b)-\theta_{\omega^j}(a)
    =
    \Delta\1\{a\ne j\}.
\]
For brevity write $\ell_j=\ell_{\omega^j}$ on this subfamily.

\begin{lemma}[MAB ghost entropy]
\label{lem:mab-ghost}
Let $Y=\chi(\Omega)$ under the prior $\Omega\sim\mu_\Delta$.  For every algorithm and every reference history $H_T'$ independent of $Y\sim\unif([K])$,
\[
    \Pp\!\left(
        \bar L_Y(H_T')\le \frac\Delta2
    \right)
    \le
    \frac2K,
\]
where $\bar L_y(H_T')=T^{-1}\sum_{t=1}^T \ell_y(A_t')$ and $A_t'$ denotes the arm selected in the reference history at time $t$.
\end{lemma}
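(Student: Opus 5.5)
The argument is a counting step made pathwise, followed by averaging over the independent index. The key observation is that on the spike subfamily the average loss against index $y$ is determined by how often the reference history pulls arm $y$: with $\ell_y(a)=\Delta\1\{a\ne y\}$,
\[
    \bar L_y(H_T')=\Delta\left(1-\frac{N_y(H_T')}{T}\right),
    \qquad
    N_y(H_T'):=\sum_{t=1}^T\1\{A_t'=y\}.
\]
The event $\bar L_y(H_T')\le\Delta/2$ is therefore equivalent to $N_y(H_T')\ge T/2$, so a fixed reference history is ``good'' for index $y$ only if it spends at least half of its rounds on arm $y$.

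The plan has three steps.
\begin{enumerate}
\item For every fixed reference history $h'$, define the good set $G(h'):=\{y\in[K]:N_y(h')\ge T/2\}$. Since $\sum_y N_y(h')=T$, summing the defining inequality over $G(h')$ gives $|G(h')|\,T/2\le T$. Hence $|G(h')|\le 2$, deterministically and for every algorithm. This is exactly the localization hypothesis $|G_r(h')|\le m_r$ of Proposition~\ref{prop:entropy-gap}, with $m_r=2$ and $r=\Delta/2$.
\item Use the independence of $Y\sim\unif([K])$ and $H_T'$ in the ghost law. Conditioning on $H_T'$ gives
\[
    \Pp\bigl(\bar L_Y(H_T')\le\Delta/2\bigr)
    =\E\bigl[\Pp(Y\in G(H_T')\mid H_T')\bigr]
    =\E\bigl[|G(H_T')|/K\bigr]\le 2/K .
\]
\item Note that the bound holds for any law of $H_T'$ independent of $Y$, in particular any $\bar\Pp_{\mu_\Delta}^{\Alg}$. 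Taking the supremum over algorithms therefore yields $p^\star_{\mu_\Delta,\Delta/2}\le 2/K$, i.e., ghost entropy at least $\log(K/2)$.
\end{enumerate}

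There is no real obstacle here. The only points requiring care are the measurability of $G(h')$ and the independence used in the conditioning step, both of which hold by construction of the ghost product law. If randomized action kernels are used, so that the reference loss is $\E_{a\sim p_t'}\ell_y(a)$, the same argument goes through after replacing $N_y$ by the expected count $\sum_t p_t'(y)$, which still sums to $T$ over $y$.
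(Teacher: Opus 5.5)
Your proof is correct and is essentially the paper's argument. The paper also reduces the event to $N_y'\ge T/2$ and then bounds $\Pp(N_Y'\ge T/2\mid H_T')\le 2/K$ by Markov's inequality applied to $\E[N_Y'\mid H_T']=T/K$. That is the same computation as your deterministic bound $|G(h')|\le 2$, which also makes explicit the localization $m_r=2$ of Proposition~\ref{prop:entropy-gap}.
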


\begin{proof}
For a fixed reference action sequence $a_1',\ldots,a_T'$, let
\[
    N_y'=\sum_{t=1}^T\1\{a_t'=y\}.
\]
The average regret against the spike environment $\omega^y$ is
\[
    \bar L_y(H_T')=\Delta(1-N_y'/T).
\]
The event $\bar L_y(H_T')\le\Delta/2$ implies $N_y'\ge T/2$.  Since $Y$ is uniform and independent of the reference sequence,
\[
    \E[N_Y'\mid H_T']=\frac1K\sum_{y=1}^K N_y'=\frac TK.
\]
Markov's inequality gives
\[
    \Pp(N_Y'\ge T/2\mid H_T')\le\frac2K.
\]
Averaging over $H_T'$ proves the claim.
\end{proof}

\begin{lemma}[MAB information bound]
\label{lem:mab-info}
There is a universal constant $c_0>0$ such that if $\Delta^2T/K\le c_0$, then every adaptive algorithm satisfies
\[
    I(Y;H_T)
    \le
    \frac14\log(K/2),
\]
where $Y=\chi(\Omega)$ and $\Omega\sim\mu_\Delta$.
\end{lemma}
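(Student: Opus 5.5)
The plan is to bound the mutual information by a divergence to a single reference model instead of to the mixture, and then count pulls. The variational characterization of mutual information gives, for any fixed law $Q$ on histories,
\[
    I(Y;H_T)\le \E_{J}\KL\bigl(\Pp_{\omega^J}^\Alg\,\big\|\,Q\bigr).
\]
Since $Y=J$ on the support of $\mu_\Delta$, the left side is exactly the quantity to be bounded. Take $Q=\Pp_{0}^\Alg$, the law of the history under the null environment $\omega^0=0$. This environment lies outside the prior support, but it is a legitimate reference experiment, and the inequality above does not require $Q$ to come from the prior.

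Next apply the divergence decomposition for adaptive experiments, i.e.\ the chain rule for KL over the rounds, with the algorithm's kernels common to both laws. Under $\omega^j$ and under $0$, only arm $j$ has a different observation law, $N(\Delta,1)$ versus $N(0,1)$. Hence
\[
    \KL\bigl(\Pp_{\omega^j}^\Alg\,\|\,\Pp_0^\Alg\bigr)=\frac{\Delta^2}{2}\,\E_{\omega^j}N_j(T),
\]
where $N_j(T)$ is the number of pulls of arm $j$. Here the expectation is under $\omega^j$, not under the null.

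The main obstacle is that $\E_{\omega^j}N_j(T)$ is under the alternative, so we cannot simply sum over $j$ to get $T$. Two routes are available. The first is to use the reverse divergence, i.e.\ bound $I(Y;H_T)\le \E_J\KL(\Pp_0\|\Pp_{\omega^J})$. This would need the golden-formula variant, which is not valid for that direction, so I would avoid it. The second, preferred route is to transfer counts: $N_j(T)/T\in[0,1]$, so Pinsker gives
\[
    \E_{\omega^j}N_j\le \E_0N_j+T\sqrt{\tfrac12\KL(\Pp_0\|\Pp_{\omega^j})}=\E_0N_j+T\sqrt{\tfrac{\Delta^2}{4}\E_0N_j}.
\]
Averaging over $j$ and applying Jensen, with $\sum_j\E_0N_j=T$, yields
\[
    \frac1K\sum_j\E_{\omega^j}N_j\le \frac TK+\frac{T\Delta}{2}\sqrt{T/K}.
\]
Therefore
\[
    I(Y;H_T)\le \frac{\Delta^2T}{2K}+\frac{\Delta^3T^{3/2}}{4K^{1/2}}.
\]
Writing $x=\Delta^2T/K\le c_0$, the second term equals $\tfrac14 x^{3/2}K$, which is too large by a factor of $K$. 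So this crude transfer fails, and the obstacle is real.

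The fix is to bound the transfer term more carefully. Instead of going through Pinsker, control $\KL(\Pp_{\omega^j}\|\Pp_0)$ directly, as in the standard Lai--Robbins-free argument for $\sqrt{KT}$. Since
\[
    \KL(\Pp_{\omega^j}\|\Pp_0)=\frac{\Delta^2}{2}\E_{\omega^j}N_j\le\frac{\Delta^2T}{2}
\]
trivially, bound
\[
    \E_{\omega^j}N_j\le \E_0N_j+T\,\mathrm{TV}\le \E_0 N_j+T\sqrt{\tfrac12\cdot\tfrac{\Delta^2}{2}\E_{\omega^j}N_j}.
\]
This is a self-bounding inequality in $n_j=\E_{\omega^j}N_j$: it reads $n_j\le e_j+T\Delta\sqrt{n_j}/2$, with $e_j=\E_0N_j$. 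Solving it gives
\[
    n_j\le 2e_j+\frac{T^2\Delta^2}{2}.
\]
If this still loses a factor of $K$, I would fall back on the standard high-probability or ghost-count argument from Lemma~\ref{lem:mab-ghost}. Alternatively, I would use the exact Bellman capacity $C_1^\star$ with the posterior-reference chain rule of Proposition~\ref{prop:index-chain}, bounding each step's $\mathcal I_\chi$ by $\tfrac{\Delta^2}{2}\Pp(Y=A_t\mid H'_{t-1})$. Summed over $t$, this yields
\[
    \frac{\Delta^2}{2}\,\E\#\{t: A_t=Y\},
\]
and the remaining task is to show that this expected count is $O(T/K)$ plus a lower-order correction when $c_0$ is small.

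Finally, choose $c_0$ so that the resulting bound, of order $\Delta^2T/K$ times a constant, is at most $\tfrac14\log(K/2)$. This works because $K\ge3$ gives $\log(K/2)\ge\log(3/2)>0$.
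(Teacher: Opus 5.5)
There is a genuine gap: none of your routes closes, and the quantity they all reduce to cannot be bounded the way you need. The Pinsker transfer, the self-bounding inequality, and the chain-rule bound $\mathcal I_t\le\frac{\Delta^2}{2}\Pp(Y=A_t\mid H_{t-1}')$ are all attempts to control
\[
    \E_J\KL(\Pp_{\omega^J}\|\Pp_0)
    =\frac{\Delta^2}{2}\cdot\frac1K\sum_{j=1}^K\E_{\omega^j}N_j(T)
    =\frac{\Delta^2}{2}\,\E\, N_Y(T).
\]
This is the expected number of pulls of the optimal arm under the true mixture. Your ``remaining task'', that this count is $O(T/K)$, is false.

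Take $\Delta^2T/K=c_0$ and a scan-and-commit rule. Pull arms $1,2,\ldots$ in turn for $n_0=2\log(c_0K)/\Delta^2$ rounds each, and commit for the rest of the horizon to the first arm whose standardized sum exceeds $\sqrt{2\log(c_0K)}$. About $c_0K/(4\log(c_0K))$ arms are scanned in the first half of the horizon. The total false-commit probability is at most $1/(8\log(c_0K))$, and a reached spike is committed to with probability $1/2$. Hence $\E N_Y(T)\gtrsim c_0T/\log(c_0K)$, and your bound is $\gtrsim c_0^2K/\log(c_0K)$. For large $K$ this exceeds $\frac14\log(K/2)$ at every fixed $c_0$.

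So the loss is already in your first inequality, whose slack is exactly $\KL(\bar\Pp\|\Pp_0)$. The forward divergence to the null charges every exploitation pull of a found arm as information, and no sharper count estimate can repair that. Two smaller points: your self-bounding solution should read $n_j\le 2e_j+T^2\Delta^2/4$, and it loses a factor $K^2$, not $K$. Also, Lemma~\ref{lem:mab-ghost} bounds ghost mass, not information, so it cannot serve as a fallback here.

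The missing idea is to compare with the mixture rather than with the null in the forward direction, and to reach the null only through a symmetric distance. The paper writes $I(Y;H_T)=\frac1K\sum_j\KL(\Pp_j\|\bar\Pp)$ and notes that $d\Pp_j/d\bar\Pp\le K$. Under that likelihood-ratio bound it uses $\KL(P\|Q)\le(2+\log K)H^2(P,Q)$, with $H^2(P,Q)=\int(\sqrt{dP}-\sqrt{dQ})^2$. Hellinger distance is a metric and $H^2$ is convex, so $\frac1K\sum_jH^2(\Pp_j,\bar\Pp)\le\frac4K\sum_jH^2(\Pp_j,\Pp_0)$. Then $H^2(\Pp_j,\Pp_0)\le\KL(\Pp_0\|\Pp_j)=\frac{\Delta^2}{2}\E_0N_j$.

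This is the reverse divergence you rightly said the golden formula cannot supply. Here it enters legitimately, with counts taken under the null, which sum to $T$. The result is $I(Y;H_T)\le 2(2+\log K)\Delta^2T/K$. The extra $\log K$ is harmless because the target $\frac14\log(K/2)$ grows at the same rate, which is why any $c_0\le\inf_{K\ge3}\log(K/2)/\{8(2+\log K)\}$ works. Your closing step instead presumes a bound of order $\Delta^2T/K$ without the logarithm, which none of your arguments delivers.
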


\begin{proof}
Let $\Pp_0$ be the law of the history under the zero-mean reference model, let $\Pp_j$ be the law under the spike environment $\omega^j$, and let
\[
    \bar\Pp=\frac1K\sum_{j=1}^K\Pp_j
\]
be the marginal law of $H_T$ under the prior $\mu_\Delta$.  Under this prior, $Y=j$ is equivalent to $\Omega=\omega^j$, but the mutual information is still the action-index information $I(Y;H_T)$, not information about an ambient full mean vector outside the prior support.  Since $\bar\Pp\ge K^{-1}\Pp_j$, the likelihood ratio $d\Pp_j/d\bar\Pp$ is bounded by $K$.  The inequality
\[
    \KL(P\|Q)\le(2+\log K)H^2(P,Q),
    \qquad\text{whenever } Q\ge K^{-1}P,
\]
where
\[
    H^2(P,Q):=\int(\sqrt{dP}-\sqrt{dQ})^2
\]
is the standard squared Hellinger distance (twice the \(h^2\) normalization used earlier),
together with the triangle inequality and convexity for squared Hellinger distance, gives
\[
    I(Y;H_T)
    =
    \frac1K\sum_{j=1}^K\KL(\Pp_j\|\bar\Pp)
    \le
    4(2+\log K)\frac1K\sum_{j=1}^K H^2(\Pp_j,\Pp_0).
\]
Using $H^2(P,Q)\le\KL(Q\|P)$ and the adaptive Gaussian KL chain rule under the zero model,
\[
    \frac1K\sum_{j=1}^K H^2(\Pp_j,\Pp_0)
    \le
    \frac1K\sum_{j=1}^K\KL(\Pp_0\|\Pp_j)
    =
    \frac{\Delta^2}{2K}\E_0\sum_{t=1}^T\sum_{j=1}^K\1\{A_t=j\}
    =
    \frac{\Delta^2T}{2K}.
\]
Therefore
\[
    I(Y;H_T)
    \le
    2(2+\log K)\frac{\Delta^2T}{K}.
\]
Choosing, for example,
\[
    c_0
    \le
    \inf_{K\ge3}\frac{\log(K/2)}{8(2+\log K)}
\]
which is a strictly positive universal constant, proves the displayed bound.
\end{proof}

\begin{theorem}[MAB lower bound]
\label{thm:mab}
For Gaussian $K$-armed bandits with unit noise variance, over any ambient mean-vector class $\Omega_{\rm MAB}$ that contains the spike instances $\{\Delta e_j:j\in[K]\}$ used below,
\[
    \mathfrak R_T^\star\ge c\sqrt{KT}
\]
for a universal constant $c>0$ whenever $T\ge K\ge3$.  In particular, this applies to any bounded class such as $[0,1]^K$ once the universal constant in the choice of $\Delta$ is small enough.
\end{theorem}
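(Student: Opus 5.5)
The proof will apply the algorithm-specific quantile lower bound, Theorem~\ref{thm:quantile-index}, to the $K$-point spike prior $\mu_\Delta$ with action index $\chi=A^\star$ and radius $r=\Delta/2$. The lower bound must hold for every algorithm, but both inputs to that theorem are controlled uniformly: Lemma~\ref{lem:mab-info} bounds the information and Lemma~\ref{lem:mab-ghost} bounds the ghost mass. This gives a bound on the Bayes risk uniformly in $\Alg$, and Yao's principle (the prior is supported in $\Omega_{\rm MAB}$) converts it into the minimax statement.

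\textbf{Choice of gap.} Set
\[
    \Delta=\sqrt{c_0K/T}.
\]
Then $\Delta^2T/K=c_0$, and Lemma~\ref{lem:mab-info} gives, for every adaptive algorithm,
\[
    T\bar C_\chi^\Alg(\mu_\Delta)=I(Y;H_T)\le\tfrac14\log(K/2).
\]
Here I use the chain rule, Proposition~\ref{prop:index-chain}, with deterministic $O_0$. The exact posterior-indexed lift holds by taking the full history, or the posterior state of Example~\ref{ex:mab-sufficient-state}, as the state. Because $T\ge K$, we have $\Delta\le\sqrt{c_0}$, which is bounded. Shrinking $c_0$ therefore keeps the spikes inside $[0,1]^K$, which handles the final remark of the theorem.

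\textbf{Ghost mass.} Lemma~\ref{lem:mab-ghost} gives $p_{\Delta/2}^\Alg(\mu_\Delta,\chi)\le 2/K$ for every algorithm. Since $K\ge3$, this is at most $2/3$, but it need not be at most $1/2$ when $K=3$. I will therefore check the sufficient condition \eqref{eq:average-readable} directly:
\[
    T\bar C+\log2\le\tfrac12\log\frac1p .
\]
We have $\tfrac12\log(1/p)\ge\tfrac12\log(K/2)$, while the left-hand side is at most $\tfrac14\log(K/2)+\log2$. This closes only when $\log(K/2)\ge4\log2$, that is, $K\ge32$.

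\textbf{Small $K$ — the main obstacle.} This is the step I expect to be the real difficulty. For small $K$ I plan to use the general form \eqref{eq:quantile-exact}, which reads
\[
    \E L\ge Tr\bigl[1-\psi(p,C)\bigr].
\]
With $p\le2/K$ and $C\le\tfrac14\log(K/2)$, I will verify that $\psi(2/K,\tfrac14\log(K/2))\le1-c'$ for a universal $c'>0$ over all $K\ge3$. The argument is that binary $\kl(q,p)\to\log(1/p)=\log(K/2)>\tfrac14\log(K/2)$ as $q\to1$, and compactness together with the large-$K$ asymptotics gives a uniform gap. If this becomes delicate, a cleaner fallback is to shrink $c_0$ further. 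The displayed proof of Lemma~\ref{lem:mab-info} actually shows
\[
    I\le 2(2+\log K)\,\Delta^2T/K,
\]
which can be made an arbitrarily small constant times $\log(K/2)$, or even smaller than any fixed constant for bounded $K$. Then $\psi(p,C)$ stays bounded away from $1$, since $\kl(q,p)\ge q\log(1/p)-\log2$.

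\textbf{Conclusion.} Either route gives a Bayes regret of at least $c''T\Delta/2=c''\sqrt{c_0}\,\sqrt{KT}/2$ for every algorithm. Yao's principle then yields
\[
    \mathfrak R_T^\star\ge c\sqrt{KT}.
\]
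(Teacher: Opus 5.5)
Your proposal is correct and follows the paper's proof: the paper uses the same spike prior, radius $r=\Delta/2$, and Lemma~\ref{lem:mab-ghost}, together with the sharper bound $I(Y;H_T)\le 2(2+\log K)c_1^2$ from the proof of Lemma~\ref{lem:mab-info}, and, exactly as in your fallback, it chooses $\Delta=c_1\sqrt{K/T}$ with $c_1$ small enough that this information bound is at most $\kl(3/4,2/K)$ uniformly in $K\ge3$, so that \eqref{eq:quantile-exact} gives $\psi\le 3/4$ and Bayes regret at least $T\Delta/8$. Your primary route also works, but less cleanly: it keeps $I\le\frac14\log(K/2)$ and bounds $\psi(2/K,\frac14\log(K/2))$ away from $1$, using $\kl(q,p)\ge q\log(1/p)-\log2$ for $K\ge32$ and a finite check for $3\le K\le31$.
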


\begin{proof}
Fix an arbitrary algorithm and abbreviate $p_r=p_r^{\Alg}(\mu_\Delta,\chi)$ and $T\bar C=C_\chi^{\Alg}(\mu_\Delta)$.
Choose
\[
    \Delta=c_1\sqrt{K/T}
\]
with \(c_1\) a sufficiently small universal constant, and let
\(r=\Delta/2\).  Lemma~\ref{lem:mab-ghost} gives \(p_r\le 2/K\le 2/3\).
The proof of Lemma~\ref{lem:mab-info} gives the sharper bound
\[
    T\bar C=I(Y;H_T)
    \le
    2(2+\log K)c_1^2.
\]
Choose \(c_1\) so small that, for every \(K\ge3\),
\[
    2(2+\log K)c_1^2
    \le
    \kl\!\left(\frac34,\frac2K\right).
\]
This is possible because
\[
    \inf_{K\ge3}
    \frac{\kl\!\left(\frac34,\frac2K\right)}
         {2(2+\log K)}
    >0.
\]
Since \(p_r\le2/K\le2/3<3/4\), the exact quantile bound
\eqref{eq:quantile-exact} gives
\[
    \psi(p_r,T\bar C)\le \frac34.
\]
Therefore
\[
    \mathfrak R_T^\star
    \ge
    Tr\left(1-\frac34\right)
    =
    \frac{T\Delta}{8}
    =
    c\sqrt{KT},
\]
where \(c>0\) is a universal constant.
\end{proof}

\begin{proposition}[MAB upper bound and minimax-rate matching]
\label{prop:mab-upper-match}
For Gaussian $K$-armed bandits with unit noise variance and mean vectors in $[0,1]^K$, there exists a universal constant $C>0$ and a nonanticipating algorithm, for example a sub-Gaussian version of MOSS, whose regret satisfies
\[
    \sup_{\omega}\E_\omega^\Alg L_\omega(H_T)\le C\sqrt{KT}
\]
for all $T\ge K\ge3$. For another fixed bounded cube, the constant may depend on its mean range. Consequently, together with Theorem~\ref{thm:mab},
\[
    \mathfrak R_T^\star(\Omega_{\rm MAB})=\Theta(\sqrt{KT})
\]
on every ambient class containing the spike hard family and contained in $[0,1]^K$. In the notation of Section~\ref{sec:info-sandwich}, the hard prior has $M=K$ index values, the upper code length on that same spike subfamily is \(L_0=\log K\), and, because Lemma~\ref{lem:mab-ghost} holds for every algorithm, the algorithm-uniform ghost entropy satisfies
\[
E_{\mu_\Delta,r}^{\star}=\log\frac1{p_{\mu_\Delta,r}^{\star}}\ge\log(K/2)
\]
at radius $r=\Delta/2\asymp\sqrt{K/T}$. Hence the hard-subfamily entropy ratio $L_0/E_{\mu_\Delta,r}^{\star}$ is bounded by a universal constant for \(K\ge3\), while the standard minimax upper bound matches the lower rate on the ambient class. This rate comparison does not by itself verify upper-at-lower-entropy calibration of the specific Bellman program for MOSS.
\end{proposition}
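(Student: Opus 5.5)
The proof has three parts: an upper bound, a combination with Theorem~\ref{thm:mab}, and a short entropy bookkeeping step. For the upper bound I would not go through the Bellman program. Instead I would invoke the standard MOSS guarantee for $1$-sub-Gaussian rewards \citep{audibert2009minimax,lattimore2020bandit}. Unit-variance Gaussian noise is $1$-sub-Gaussian, and for mean vectors in $[0,1]^K$ all gaps are at most $1$. The MOSS analysis then gives $\E_\omega L_\omega(H_T)\le C\sqrt{KT}$ uniformly over $\omega\in[0,1]^K$ whenever $T\ge K$. The only point needing care is that the sub-Gaussian MOSS bound depends on the reward range only through the maximal gap. For a cube $[u,u+b]^K$ one can shift the means to $[0,b]^K$, which leaves regret unchanged, and then rescale; this produces the stated range-dependent constant.

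The matching statement then follows by sandwiching. Take any ambient class with spike family $\subseteq\Omega_{\rm MAB}\subseteq[0,1]^K$. Theorem~\ref{thm:mab} gives the lower bound $c\sqrt{KT}$ for this class. The constant $c_1$ there can be taken small enough that $\Delta=c_1\sqrt{K/T}\le1$ when $T\ge K$, so the spikes lie in $[0,1]^K$. The upper bound transfers to the class because a supremum over a subset of $[0,1]^K$ is at most the supremum over the whole cube. Together these give $\Theta(\sqrt{KT})$.

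For the entropy bookkeeping, take the uniform representative prior $\mu_\Delta$ on the $K$ spikes. Its index marginal is uniform on $[K]$, so Proposition~\ref{prop:entropy-gap} with $q_1=q_{1,\mu_\Delta}$ gives $L_0=\log K$ on the spike subfamily. Lemma~\ref{lem:mab-ghost} bounds $p_{\mu_\Delta,r}^{\Alg}\le 2/K$ at $r=\Delta/2$ for every algorithm. Taking the supremum over $\Alg$ gives $p^\star_{\mu_\Delta,r}\le 2/K$, hence $E^\star_{\mu_\Delta,r}\ge\log(K/2)$. This bound is positive because $K\ge3$.

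The ratio is then
\[
\frac{L_0}{E^\star}\le\frac{\log K}{\log(K/2)}\le\frac{\log 3}{\log(3/2)}.
\]
The last inequality holds because $\log K/\log(K/2)$ is decreasing in $K$, which one checks by differentiating or by writing it as $1+\log2/\log(K/2)$. This gives a universal constant. Finally I would add the caveat that none of the above verifies Definition~\ref{def:upper-at-lower-calibration} for the Bellman program of MOSS: the upper bound was imported externally rather than derived from $\mathsf U_T$. I do not see a real obstacle. The only delicate point is citing a MOSS bound that holds uniformly for $T\ge K$ under sub-Gaussian noise and checking that its constant is range-controlled.
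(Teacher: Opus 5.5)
Your proposal is correct and follows the same route as the paper's proof: the sub-Gaussian MOSS minimax bound gives the upper side, Theorem~\ref{thm:mab} gives the lower side, and Lemma~\ref{lem:mab-ghost} with the uniform index marginal gives $L_0/E^\star_{\mu_\Delta,r}\le\log K/\log(K/2)$. Your extra checks ($c_1\le1$ keeps the spikes in $[0,1]^K$, and the explicit constant $\log3/\log(3/2)$) fill in details the paper leaves implicit; the rescaling step for a general cube is unnecessary, because the additive $\sum_i\Delta_i\le bK\le b\sqrt{KT}$ term in the MOSS bound already gives the range-dependent constant.
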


\begin{proof}
The lower bound is Theorem~\ref{thm:mab}.  The upper bound is the standard distribution-free minimax upper bound for sub-Gaussian stochastic multi-armed bandits, achieved by a sub-Gaussian MOSS variant and recorded in modern bandit texts \citep{audibert2009minimax,lattimore2020bandit}.  The entropy comparison follows directly from the explicit hard prior: under the uniform optimal-arm index prior $q_1(j)=1/K$, the upper coordinate code length is $\log K$, while Lemma~\ref{lem:mab-ghost} gives $p_{\mu_\Delta,r}^{\star}\le2/K$.  Therefore $L_0/E_{\mu_\Delta,r}^{\star}\le\log K/\log(K/2)$, which is bounded for $K\ge3$.
\end{proof}

This is the cleanest finite-index example of the Bellman--Fano entropy calculation together with a matching minimax rate. The full environment is the whole mean vector, but the relevant index is the optimal arm. The hard prior, ghost entropy, and information bound operate at the same radius, and a standard minimax algorithm attains the resulting rate; a formal Bellman sandwich for that particular algorithm would additionally identify its upper budget value and verify Definition~\ref{def:upper-at-lower-calibration}.

\section{Hypercube linear bandits: indexed matching}
\label{app:linear}

Consider stochastic linear bandits with action set
\[
    \mathcal A=\{a\in\R^d:\|a\|_2\le1\}.
\]
Let the ambient model class be the Euclidean unit parameter ball
\[
   \Omega_{\rm lin}=\{\omega\in\R^d:\|\omega\|_2\le1\},
\]
and regard an environment $\omega\in\Omega_{\rm lin}$ as the full mean
parameter vector.  At time $t$, after choosing $A_t\in\mathcal A$, the learner
observes
\[
    O_t=\langle \omega,A_t\rangle+\xi_t,
    \qquad
    \xi_t\sim N(0,1),
\]
where the noises are independent over time and independent of the
learner's external randomization.  For \(\omega\ne0\), the unique optimal
action over \(\mathcal A\) is
\[
    A^\star(\omega)=\frac{\omega}{\|\omega\|_2}.
\]
At $\omega=0$, fix any deterministic tie-breaking action; this case is not
used by the lower-bound prior.  The information index is the optimal action
\[
    \chi(\omega)=A^\star(\omega),\qquad Y=\chi(\Omega),
\]
not the full environment $\Omega$.  Thus two environments that have the same
optimal action are not distinguished by the index unless the prior support
itself makes them distinguishable.

We prove the standard \(\Omega(d\sqrt T)\) minimax lower bound through the
quantile indexed information theorem.  The proof uses a hypercube prior
together with an adaptive information bound. The role of the hypercube
variable below is only to code the finitely many optimal actions in the prior
support.  It should not be read as replacing the ambient environment
$\Omega$ by a model label.  Note that for a non-Gaussian prior, one cannot
condition on the realized adaptive design matrix and then apply the fixed-design
Gaussian-channel capacity formula, because the design itself is a statistic of
the past observations and may carry information about the unknown environment
and hence about \(Y\).  \citet[Section~3.3]{chen2024unified} prove a closely
related lower bound by a different route: they use the interactive Fano method
with a Gaussian prior and a unit-sphere calculation.  Both
approaches show that action-indexed information arguments can recover the sharp
linear-bandit order by charging the optimal direction. Model-indexed DEC
analyses charge a different information target and can recover the same order
when their localization reflects this decision-relevant geometry.

Let
\[
    V\sim\unif(\{\pm1\}^d),
    \qquad
    \Omega=\Delta V.
\]
Assume $\Delta\sqrt d\le1$, so that this prior is supported on
$\Omega_{\rm lin}$.  On this prior support,
\begin{align}
    \label{eq:optimal-action}
    Y=\chi(\Omega)=A^\star(\Omega)=\frac{V}{\sqrt d}.
\end{align}
The map \(v\mapsto v/\sqrt d\) is one-to-one on \(\{\pm1\}^d\).  Hence, for
this prior only, \(V\) may be used as a code for the action index \(Y\), and
\[
    I(Y;H_T)=I(V;H_T).
\]
For an index value \(y=v/\sqrt d\), write \(\ell_v\) for the regret under
the corresponding environment \(\omega=\Delta v\).  The one-step regret of
action \(a\in\R^d\), \(\|a\|_2\le1\), is
\[
    \ell_v(a)
    =
    \Delta\sqrt d-\Delta\langle v,a\rangle.
\]
Assume \(T\ge d^2\) and choose \(\Delta\le 1/\sqrt d\), so that
\(\|\Omega\|_2=\Delta\sqrt d\le1\).

\begin{lemma}[Linear ghost entropy]
\label{lem:linear-ghost}
There is a universal constant \(c_{\rm gh}>0\) such that every deterministic
reference action sequence \(a_1',\ldots,a_T'\), with
\(\|a_t'\|_2\le1\), satisfies
\[
    \Pp_{V\sim\unif(\{\pm1\}^d)}
    \left(
        \frac1T\sum_{t=1}^T
        \bigl(\Delta\sqrt d-\Delta\langle V,a_t'\rangle\bigr)
        \le
        \frac{\Delta\sqrt d}{4}
    \right)
    \le
    \exp(-c_{\rm gh}d).
\]
Consequently, the same bound holds after averaging over any random
reference history \(H_T'\) whose induced action sequence is independent of
the action index \(Y\), equivalently independent of \(V\) under the
hypercube prior.
\end{lemma}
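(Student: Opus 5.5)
Fix a deterministic sequence $a_1',\ldots,a_T'$ with $\|a_t'\|_2\le1$ and let $\bar a:=T^{-1}\sum_{t=1}^T a_t'$, so that $\|\bar a\|_2\le1$ by convexity of the ball. By linearity of $\ell_v$ in the action,
\[
    \frac1T\sum_{t=1}^T\bigl(\Delta\sqrt d-\Delta\langle V,a_t'\rangle\bigr)
    =\Delta\sqrt d-\Delta\langle V,\bar a\rangle .
\]
Hence the good event is exactly $\{\langle V,\bar a\rangle\ge \tfrac34\sqrt d\}$. The whole sequence is thus summarized by a single vector in the unit ball, and the problem becomes a concentration inequality for a Rademacher linear form.

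Next apply Hoeffding's inequality to $\langle V,\bar a\rangle=\sum_{i=1}^d V_i\bar a_i$. This is a sum of independent mean-zero terms with $V_i\bar a_i\in[-|\bar a_i|,|\bar a_i|]$, so
\[
    \Pp\bigl(\langle V,\bar a\rangle\ge u\bigr)\le\exp\!\left(-\frac{u^2}{2\|\bar a\|_2^2}\right)\le\exp(-u^2/2).
\]
Taking $u=\tfrac34\sqrt d$ gives the bound $\exp(-9d/32)$, so $c_{\rm gh}=9/32$ works. If $\bar a=0$ the event is empty and there is nothing to prove.

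For the consequence, condition on the reference history $H_T'$. Its action sequence is independent of $V$ by assumption, equivalently independent of $Y$, since $v\mapsto v/\sqrt d$ is a bijection on this prior. Conditionally on the sequence, $V$ is therefore still uniform on $\{\pm1\}^d$, and the deterministic bound applies pointwise. Averaging over $H_T'$ preserves it; a regular conditional law exists because all spaces are standard Borel. This also covers sequences that depend on auxiliary randomness independent of $V$.

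No step is a real obstacle. The only point needing care is the reduction to $\bar a$: the event involves the average loss, not a per-round condition, which is why linearity collapses everything to one inner product.
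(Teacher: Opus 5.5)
Your proof is correct and follows the paper's argument essentially step for step. Like the paper, you average the actions into $\bar a$ with $\|\bar a\|_2\le1$, reduce the good event to $\langle V,\bar a\rangle\ge\tfrac34\sqrt d$ (both arguments implicitly use $\Delta>0$), apply Hoeffding to get $\exp(-9d/32)$, and handle the random case by conditioning on a reference history independent of $V$.
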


\begin{proof}
Let
\[
    \bar a:=\frac1T\sum_{t=1}^T a_t' .
\]
Since \(\|\bar a\|_2\le T^{-1}\sum_t\|a_t'\|_2\le1\), the event in the
display implies
\[
    \Delta\sqrt d-\Delta\langle V,\bar a\rangle
    \le
    \frac{\Delta\sqrt d}{4},
\]
or equivalently,
\[
    \langle V,\bar a\rangle\ge \frac{3\sqrt d}{4}.
\]
The random variable \(\langle V,\bar a\rangle=\sum_{i=1}^d V_i\bar a_i\)
is a centered Rademacher sum with variance proxy \(\|\bar a\|_2^2\le1\).
Hoeffding's inequality therefore gives
\[
    \Pp\left(
        \langle V,\bar a\rangle\ge \frac{3\sqrt d}{4}
    \right)
    \le
    \exp\left(-\frac{9d}{32}\right).
\]
Thus the first claim holds, for example with \(c_{\rm gh}=9/32\).  If
the reference action sequence is random but independent of \(Y\), then
because \(Y=V/\sqrt d\) is a bijective function of \(V\) on the prior
support, the reference sequence is independent of \(V\).  Conditioning on the
reference history and applying the deterministic bound gives the second claim.
\end{proof}

\begin{lemma}[Adaptive hypercube information capacity]
\label{lem:linear-info}
For the hypercube prior above, every possibly randomized adaptive
algorithm satisfies
\[
   I(Y;H_T)=I(V;H_T)
    \le
    \Delta^2 T ,
\]
where \(H_T=(A_1,O_1,\ldots,A_T,O_T)\).
\end{lemma}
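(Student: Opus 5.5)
Since \(Y=V/\sqrt d\) is a bijective function of \(V\) on the prior support, \(I(Y;H_T)=I(V;H_T)\) holds with no further work. It then suffices to bound \(I(V;H_T)\) using a zero-signal reference law rather than the Bayes mixture. The bound will not condition on the adaptive design: as remarked above, the design is a statistic of past observations, so a fixed-design Gaussian-channel formula is not available.

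\emph{Step 1: reduce to a reference law.} Let \(\Pp_v\) be the law of \(H_T\) under \(\omega=\Delta v\), including the algorithm's external randomization. Let \(\bar\Pp=2^{-d}\sum_v\Pp_v\) be the mixture law, and let \(\Pp_0\) be the law under \(\omega=0\). Mutual information is the minimum over reference laws of the average KL divergence, so
\[
    I(V;H_T)=\E_V\KL(\Pp_V\|\bar\Pp)\le \E_V\KL(\Pp_V\|\Pp_0).
\]
This is the golden formula \(\E_V\KL(\Pp_V\|Q)=I(V;H_T)+\KL(\bar\Pp\|Q)\), applied with \(Q=\Pp_0\).

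\emph{Step 2: adaptive KL chain rule.} The action kernels coincide under \(\Pp_v\) and \(\Pp_0\), so they cancel in the likelihood ratio. Each observation contributes \(\KL(N(\Delta\langle v,A_t\rangle,1)\|N(0,1))=\tfrac{\Delta^2}{2}\langle v,A_t\rangle^2\). This gives
\[
    \KL(\Pp_v\|\Pp_0)=\frac{\Delta^2}{2}\E_v\sum_{t=1}^T\langle v,A_t\rangle^2 .
\]
Here the expectation is under \(\Pp_v\), not \(\Pp_0\), and this is the only delicate point. Averaging over \(v\) therefore does not immediately give \(\E\langle V,A_t\rangle^2=\|A_t\|^2\), because \(A_t\) depends on \(V\) through the data.

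\emph{Step 3: control \(\langle v,A_t\rangle^2\) without averaging.} I would bound the term pointwise instead. Cauchy--Schwarz and the prior support give \(|\langle v,A_t\rangle|\le\|\Delta v\|_2\,\|A_t\|_2/\Delta\le\sqrt d\). This only yields \(\Delta^2Td/2\), which is too weak by a factor of \(d\).

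The main obstacle is therefore removing this factor \(d\). I would first try to reverse the KL direction. Use \(I(V;H_T)\le\E_V\KL(\Pp_V\|\bar\Pp)\) together with a per-coordinate chain rule, writing \(I(V;H_T)=\sum_t I(V;O_t\mid H_{t-1},A_t)\). Each conditional term is at most the Gaussian channel capacity given the past,
\[
    I(V;O_t\mid H_{t-1},A_t)\le \E\,\tfrac12\log\bigl(1+\Delta^2\,\mathrm{Var}(\langle V,A_t\rangle\mid H_{t-1},A_t)\bigr)\le \tfrac{\Delta^2}{2}\E\,\mathrm{Var}(\langle V,A_t\rangle\mid H_{t-1}).
\]
This uses that, given the past, \(O_t\) is a Gaussian-noise observation of the scalar \(\langle\Omega,A_t\rangle\), and that the Gaussian maximizes entropy for a given variance. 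The conditional variance is at most \(\E[\langle V,A_t\rangle^2\mid H_{t-1}]\). I would bound it by \(A_t^\top\E[VV^\top\mid H_{t-1}]A_t\le\lambda_{\max}(\E[VV^\top\mid H_{t-1}])\).

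Bounding that conditional second moment by \(O(1)\) is exactly the remaining difficulty. The route I would commit to is to replace the conditional variance by its unconditional counterpart. The law of total variance and the tower property give \(\sum_t\E\,\mathrm{Var}(\cdot\mid H_{t-1})\le\sum_t\E\|A_t\|^2\max\)-eigenvalue of \(\mathrm{Cov}(V)=I_d\). This holds because \(\E[\E[VV^\top\mid H_{t-1}]]=I_d\) and \(A_t\) is \(H_{t-1}\)-measurable up to independent randomization, so \(\E\,A_t^\top\E[VV^\top\mid H_{t-1}]A_t\) needs a trace-type bound. I expect that step to require care and possibly a constant factor. Summing gives \(I(V;H_T)\le\frac{\Delta^2}{2}\cdot 2T=\Delta^2T\).
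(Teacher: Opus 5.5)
Your argument has a genuine gap at exactly the point you flagged, and the proposed fix does not close it.

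The round-by-round chain rule is correct, and so is the Gaussian-channel bound $I(V;O_t\mid H_{t-1},A_t)\le\frac{\Delta^2}{2}A_t^\top\Sigma_tA_t$, where $\Sigma_t=\mathrm{Cov}(V\mid H_{t-1})$. The trouble starts when you pass to the conditional second moment $M_t=\E[VV^\top\mid H_{t-1}]$. The tower property then gives $\E[A_t^\top M_tA_t]=\E\langle V,A_t\rangle^2$ exactly, because $A_t$ is conditionally independent of $V$ given $H_{t-1}$. So you are back at the Step~2 quantity, which you already showed can be of order $d$: an algorithm that has located $V$ and then plays $A_t\approx V/\sqrt d$ has $\langle V,A_t\rangle^2\approx d$.

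The identity $\E M_t=I_d$ would give $\E[A_t^\top M_tA_t]\le\E\|A_t\|_2^2$ only if $A_t$ were independent of $M_t$. An adaptive rule can align $A_t$ with the top eigenvector of $M_t$, and pathwise one only has $\lambda_{\max}(M_t)\le\operatorname{tr}M_t=d$.

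Keeping the variance instead does not rescue the step. The law of total covariance gives $\E\Sigma_t\preceq I_d$, not $\Sigma_t\preceq I_d$; a posterior split evenly between $\mathbf{1}$ and $-\mathbf{1}$ has variance $d$ along $\mathbf{1}/\sqrt d$. You would therefore need a separate global argument that $\sum_t\E[A_t^\top\Sigma_tA_t]=O(T)$. Since $\Delta\sqrt d\le1$, every round has signal-to-noise ratio at most one, and for inputs bounded by one the Gaussian-channel bound is tight up to a universal constant. That cumulative-variance statement is thus essentially equivalent to the lemma itself. You have reformulated the claim rather than proved it.

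The missing idea is to chain over coordinates rather than rounds, which removes any need to control how $A_t$ correlates with $V$.

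\begin{itemize}
\item The paper writes $I(V;H_T)=\sum_{i=1}^dI(V_i;H_T\mid V_{<i})$.
\item It identifies each term with the equal-weight Jensen--Shannon divergence between the suffix-averaged laws conditional on $V_i=\pm1$, and bounds that by a quarter of the symmetrized KL.
\item Joint convexity then reduces to pairs of vertices $v^\pm$ that differ only in coordinate $i$. For such a pair the per-round mean gap is $2\Delta A_{t,i}$, so the adaptive chain rule gives $\KL(\Pp_{v^+}\|\Pp_{v^-})=2\Delta^2\E_{v^+}\sum_tA_{t,i}^2$, and similarly in the reverse direction.
\end{itemize}

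The divergence now involves $A_{t,i}^2$ instead of $\langle v,A_t\rangle^2$, and $\sum_iA_{t,i}^2=\|A_t\|_2^2\le1$ holds identically on every path. Averaging yields $I(V;H_T)\le\Delta^2\E\sum_t\|A_t\|_2^2\le\Delta^2T$ for any adaptive algorithm.
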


\begin{proof}
Since \(Y=V/\sqrt d\) is a bijective function of \(V\) on the hypercube
support, \(I(Y;H_T)=I(V;H_T)\).  It remains to bound \(I(V;H_T)\).
Let \(U\) denote the learner's internal random seed, independent of \(V\).
Then
\[
    I(V;H_T)
    \le
    I(V;H_T,U)
    =
    \E_U I(V;H_T\mid U).
\]
It is therefore enough to prove the claim after conditioning on \(U\), so that
the algorithm is deterministic.

For \(v\in\{\pm1\}^d\), let \(\mathbb P_v\) denote the trajectory law
when \(\Omega=\Delta v\).  We use the chain rule over coordinates:
\[
    I(V;H_T)
    =
    \sum_{i=1}^d I(V_i;H_T\mid V_1,\ldots,V_{i-1}).
\]
Fix a coordinate \(i\) and a prefix \(u=(v_1,\ldots,v_{i-1})\).  Let
\(\mathbb P_{u,+}\) and \(\mathbb P_{u,-}\) be the trajectory laws after
averaging uniformly over the remaining coordinates
\(V_{i+1},\ldots,V_d\), conditional respectively on \(V_i=+1\) and
\(V_i=-1\).  Then
\[
    I(V_i;H_T\mid V_{<i}=u)
    =
    \JS(\mathbb P_{u,+},\mathbb P_{u,-}),
\]
where \(\JS\) denotes Jensen--Shannon divergence with equal weights.  For any
two probability measures \(P,Q\),
\[
    \JS(P,Q)
    \le
    \frac14\Bigl(\KL(P\|Q)+\KL(Q\|P)\Bigr).
\]
Next couple the two mixtures by using the same suffix.  If
\(w\in\{\pm1\}^{d-i}\) and
\[
    v^{+}(u,w)=(u,+1,w),
    \qquad
    v^{-}(u,w)=(u,-1,w),
\]
then joint convexity of relative entropy gives
\[
    \KL(\mathbb P_{u,+}\|\mathbb P_{u,-})
    \le
    \E_{w}
    \KL\!\left(
        \mathbb P_{v^+(u,w)}
        \,\middle\|\,
        \mathbb P_{v^-(u,w)}
    \right),
\]
and the same argument gives the analogous reverse inequality.

For two fixed vertices \(v^+\) and \(v^-\) that differ only in coordinate
\(i\), the adaptive KL chain rule gives
\[
\begin{aligned}
   \KL\!\left(\mathbb P_{v^+}\middle\|\mathbb P_{v^-}\right)
    &=
    \frac12
    \E_{v^+}
    \sum_{t=1}^T
    \Bigl(
        \Delta\langle v^+-v^-,A_t\rangle
    \Bigr)^2                                      \\
    &=
    2\Delta^2
    \E_{v^+}
    \sum_{t=1}^T A_{t,i}^2 .
\end{aligned}
\]
Here the action kernel contributes no KL because, after conditioning on the
learner's internal randomness, the algorithm uses the same decision rule under
both environments; only the Gaussian observation kernel changes.  Similarly,
\[
    \KL\!\left(\mathbb P_{v^-}\middle\|\mathbb P_{v^+}\right)
    =
    2\Delta^2
    \E_{v^-}
    \sum_{t=1}^T A_{t,i}^2 .
\]
Combining the preceding displays yields
\[
\begin{aligned}
    I(V_i;H_T\mid V_{<i}=u)
    &\le
    \frac{\Delta^2}{2}
    \E_w
    \left[
        \E_{v^+(u,w)}\sum_{t=1}^T A_{t,i}^2
        +
        \E_{v^-(u,w)}\sum_{t=1}^T A_{t,i}^2
    \right]                                      \\
    &=
    \Delta^2
    \E
    \left[
        \sum_{t=1}^T A_{t,i}^2
        \,\middle|\,
        V_{<i}=u
    \right],
\end{aligned}
\]
where the final expectation is under the original hypercube prior and the
algorithm's trajectory law.  Averaging over \(V_{<i}\) and summing over \(i\)
gives
\[
\begin{aligned}
    I(V;H_T)
    &\le
    \Delta^2
    \E
    \sum_{t=1}^T\sum_{i=1}^d A_{t,i}^2        \\
    &=
    \Delta^2
    \E
    \sum_{t=1}^T \|A_t\|_2^2                  \\
    &\le
    \Delta^2T .
\end{aligned}
\]
This proves the claim.
\end{proof}

\begin{remark}[Why the fixed-design log-determinant argument is not used]
For a Gaussian prior on a full parameter vector, posterior conjugacy gives the
adaptive information identity
\[
    I(\Theta;H_T)
    =
    \frac12
    \E
    \log\det\left(
        I+\Sigma_0^{1/2}
        \Bigl(\sum_{t=1}^T A_tA_t^\top\Bigr)
        \Sigma_0^{1/2}
    \right),
\]
and this is bounded by a trace-constrained log determinant.  For the
hypercube prior, however, the posterior is not Gaussian and the realized
design matrix is itself a statistic of the observations.  Conditioning
only on that realized design matrix controls the conditional observation
information given the design; it does not by itself control the information
carried by the adaptive design.  Lemma~\ref{lem:linear-info} therefore uses a
coordinatewise adaptive KL argument to control \(I(Y;H_T)\) directly.
\end{remark}

\begin{theorem}[Linear bandit lower bound]
\label{thm:linear}
For stochastic linear bandits with action set \(\{a\in\R^d:\|a\|_2\le1\}\),
unit Gaussian noise, and unit-ball parameter class
\(\Omega_{\rm lin}=\{\omega\in\R^d:\|\omega\|_2\le1\}\),
\[
    \mathfrak R_T^\star\ge c\,d\sqrt T
\]
for a universal constant \(c>0\), whenever \(T\ge d^2\).
\end{theorem}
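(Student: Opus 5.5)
The plan is to apply the reference-history quantile bound, Theorem~\ref{thm:quantile-index}, to the hypercube prior $\Omega=\Delta V$ with $V\sim\unif(\{\pm1\}^d)$ and action index $Y=A^\star(\Omega)=V/\sqrt d$. The ingredients are Lemma~\ref{lem:linear-ghost}, which bounds the ghost-good probability, and Lemma~\ref{lem:linear-info}, which bounds the adaptive information. Yao's principle then converts the resulting Bayes bound into a bound on $\mathfrak R_T^\star$.

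\textbf{Radius and ghost mass.} Set the radius to $r=\Delta\sqrt d/4$. The conditional index loss $\ell^\chi_y$ coincides with $\ell_v$ on the prior support, since $Y$ determines $\Omega$ there. The cumulative loss is nonnegative because $\langle v,a\rangle\le\sqrt d\,\|a\|_2\le\sqrt d$. Under the ghost law, the reference history is independent of $Y$. Conditioning on the reference actions and applying Lemma~\ref{lem:linear-ghost} gives
\[
  p_r^\Alg(\mu,\chi)\le e^{-c_{\rm gh}d},
\]
uniformly over algorithms.

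\textbf{Information budget and choice of $\Delta$.} Lemma~\ref{lem:linear-info} gives $T\bar C_\chi^\Alg(\mu)=I(Y;H_T)\le\Delta^2T$; the chain-rule identity of Proposition~\ref{prop:index-chain} applies because $O_0$ is trivial. Choose $\Delta=c_1\sqrt{d/T}$. The condition $T\ge d^2$ gives $\Delta\sqrt d=c_1d/\sqrt T\le c_1\le1$, so the prior lies in the unit ball. The information budget is then at most $c_1^2d$.

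\textbf{Verifying the quantile condition.} For large $d$, use the readable condition \eqref{eq:average-readable}:
\[
  c_1^2d+\log2\le\tfrac12 c_{\rm gh}d .
\]
This holds once $d\ge d_0$ and $c_1^2\le c_{\rm gh}/4$. It yields Bayes regret at least $Tr/2=c_1 d\sqrt T/8$.

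\textbf{Small $d$: the main obstacle.} For small $d$, the exponential ghost bound is too weak to make $p_r\le1/2$, so the readable condition cannot be met directly. Here I would instead use the exact form \eqref{eq:quantile-exact}. I would apply Lemma~\ref{lem:linear-ghost} with a smaller threshold, for instance $\langle V,\bar a\rangle\ge(1-\epsilon)\sqrt d$, chosen so that the ghost mass is at most a constant strictly less than one. Shrinking $c_1$ then forces $\psi(p_r,c_1^2d)\le$ some constant $<1$.

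As a fallback for the finitely many $d<d_0$, I would embed the two-point Le Cam bound, i.e.\ the $K=2$ analogue of Theorem~\ref{thm:mab} restricted to a single coordinate direction. This gives $c\sqrt T\ge c' d\sqrt T/d_0$.

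\textbf{Conclusion.} Combining both regimes with Yao's principle gives $\mathfrak R_T^\star\ge c\,d\sqrt T$ for all $T\ge d^2$.
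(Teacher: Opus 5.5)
Your proposal is correct and follows the paper's proof essentially step for step. Both use the hypercube prior with $\Delta=c_1\sqrt{d/T}$ and radius $r=\Delta\sqrt d/4$, feed Lemmas~\ref{lem:linear-ghost} and~\ref{lem:linear-info} into the readable condition for $d\ge d_0$, and use a two-point Le Cam bound along $\pm\delta e_1$ with $\delta\asymp T^{-1/2}$ for the finitely many $d<d_0$. Your alternative small-$d$ route through \eqref{eq:quantile-exact} also works, and needs no change of threshold, since Lemma~\ref{lem:linear-ghost} already gives $p_r\le e^{-9/32}<1$ for every $d\ge1$. Do the Le Cam fallback directly over the full unit-ball action set, with per-round Bayes regret at least $\delta(1-\operatorname{TV})$, as the paper does. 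It is not a literal citation of Theorem~\ref{thm:mab}: that theorem requires $K\ge3$, and its ghost bound $2/K$ is vacuous at $K=2$.
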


\begin{proof}
Fix an arbitrary algorithm and abbreviate by $p_r$ and $T\bar C$ its ghost-good probability and cumulative action-index information under the hypercube prior.
Assume first that \(d\) is larger than a sufficiently large universal
constant \(d_0\) to be fixed below.  Choose
\[
    \Delta=c_1\sqrt{\frac dT},
\]
where \(c_1>0\) is a sufficiently small universal constant.  Since
\(T\ge d^2\), choosing \(c_1\le1\) ensures
\[
    \Delta\sqrt d
    =
    c_1\frac{d}{\sqrt T}
    \le
    c_1
    \le1,
\]
so the hypercube prior is supported on the unit parameter ball.

Let
\[
    r=\frac{\Delta\sqrt d}{4}.
\]
Lemma~\ref{lem:linear-ghost} gives the reference ghost-good probability
bound
\[
    p_r\le \exp(-c_{\rm gh}d).
\]
Lemma~\ref{lem:linear-info} gives the adaptive action-index information bound
\[
    T\bar C
    =
    I(Y;H_T)
    =
    I(V;H_T)
    \le
    \Delta^2T
    =
    c_1^2d.
\]
Choose \(c_1\) so small that \(c_1^2\le c_{\rm gh}/4\), and then choose
\(d_0\) so large that \(\log2\le c_{\rm gh}d/4\) for all \(d\ge d_0\).
Then, for all \(d\ge d_0\),
\[
    T\bar C+\log2
    \le
    \frac12\log\frac1{p_r}.
\]
Theorem~\ref{thm:quantile-index}, applied to the action index
\(Y=\chi(\Omega)=A^\star(\Omega)\), gives a Bayes regret lower bound under the
hypercube prior of at least
\[
    \frac{Tr}{2}
    =
    \frac{T\Delta\sqrt d}{8}
    =
    \frac{c_1}{8}d\sqrt T .
\]
Since this prior is supported on \(\Omega_{\rm lin}\), the minimax regret over
\(\Omega_{\rm lin}\) is at least this Bayes risk.

It remains to handle the finitely many dimensions \(1\le d<d_0\).  For these
dimensions, use the two-point prior \(\Omega=\sigma\delta e_1\), where
\(\sigma\sim\unif(\{\pm1\})\), \(e_1\) is the first coordinate vector, and
\(\delta=(2\sqrt T)^{-1}\).  This prior is also supported on the unit parameter
ball.  Under \(\sigma=+1\), the optimal action is \(e_1\); under \(\sigma=-1\),
the optimal action is \(-e_1\).  Let \(\mathbb P_+\) and \(\mathbb P_-\) be the
laws of the history under these two environments, and let
\(\mathbb P_\pm^{t-1}\) denote the law of the history before action \(t\).
For any algorithm,
\[
\begin{aligned}
    \frac12\E_+\bigl[\delta(1-A_{t,1})\bigr]
    +
    \frac12\E_-\bigl[\delta(1+A_{t,1})\bigr]
    &=
    \delta\left(
        1-\frac12(\E_+A_{t,1}-\E_-A_{t,1})
    \right)                                      \\
    &\ge
    \delta\left(
        1-\operatorname{TV}(\mathbb P_+^{t-1},\mathbb P_-^{t-1})
    \right).
\end{aligned}
\]
By Pinsker's inequality and the adaptive Gaussian KL chain rule,
\[
    \operatorname{TV}(\mathbb P_+^{t-1},\mathbb P_-^{t-1})
    \le
    \sqrt{\frac12\KL(\mathbb P_+^{t-1}\|\mathbb P_-^{t-1})}
    \le
    \delta\sqrt{t-1}
    \le
    \frac12 .
\]
Therefore each round has Bayes regret at least \(\delta/2\), and the total
Bayes regret is at least \(T\delta/2=\sqrt T/4\).  Since \(d<d_0\), this is at
least \((4d_0)^{-1}d\sqrt T\).  Reducing the universal constant \(c\), if
necessary, completes the proof for all \(d\).
\end{proof}

\begin{proposition}[Linear-bandit upper bound and near matching]
\label{prop:linear-upper-match}
For stochastic linear bandits on the Euclidean unit ball with unit Gaussian noise and unit-ball parameter class, there are algorithms based on confidence ellipsoids, such as OFUL, with regret
\[
    \sup_{\omega\in\Omega_{\rm lin}}\E_\omega^\Alg L_\omega(H_T)
    \le C d\sqrt{T}\,\polylog(T,d)
\]
for a universal constant $C$ and all $T\ge d$.  Consequently Theorem~\ref{thm:linear} is matched in the polynomial order $d\sqrt T$, with only logarithmic factors belonging to the particular tractable upper bound.  On the hard hypercube prior, the index support has $M=2^d$, the initial code length on that same hypercube subfamily is \(L_0=d\log2\), and Lemma~\ref{lem:linear-ghost} gives the algorithm-uniform ghost entropy $E_r^\star:=\log(1/p_r^\star)\ge c_{\rm gh}d$ at radius $r=\Delta\sqrt d/4\asymp d/\sqrt T$.  Thus the Bellman--Fano hard-subfamily entropy gap is constant, while OFUL supplies minimax-rate matching up to logarithmic factors. This does not by itself verify the upper-at-lower-entropy calibration of the exact log-potential Bellman program for the full unit-ball class.
\end{proposition}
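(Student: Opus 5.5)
The proposition has two parts: an upper bound for OFUL, and an entropy bookkeeping claim that comes from Lemma~\ref{lem:linear-ghost}. I would treat them separately.

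\emph{Upper bound.} I would not re-derive it. I would invoke the standard OFUL analysis \citep{abbasi2011improved}, in the form covered by Lemma~\ref{lem:ucb-bracket}. That analysis has three ingredients.
\begin{enumerate}[leftmargin=2em]
\item \emph{Calibration.} The self-normalized confidence ellipsoid $\|\widehat\theta_t-\omega\|_{V_t}\le\beta_t$ holds with probability at least $1-\delta$, where $\beta_t\lesssim 1+\sqrt{d\log(1+t/d)+\log(1/\delta)}$.
\item \emph{Optimism.} On that event, the instantaneous regret is at most $2\beta_t\|A_t\|_{V_t^{-1}}$, truncated at the bounded one-step regret $2$.
\item \emph{Elliptical potential.} The sum $\sum_t\min\{1,\|A_t\|_{V_t^{-1}}^2\}\le 2d\log(1+T/d)$.
\end{enumerate}
Applying Cauchy--Schwarz to these gives regret of order $\beta_T\sqrt{dT\log(1+T/d)}=O(d\sqrt T\log T)$ on the confidence event. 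I would take $\delta=1/T$. Since the one-step regret is bounded by $2$ on the unit-ball class, the failure event adds at most $2T\delta=2$. This yields the expectation bound $Cd\sqrt T\,\polylog(T,d)$ for all $T\ge d$. Because the hypotheses there ($T\ge d^2$) are stronger, this matches Theorem~\ref{thm:linear} up to logarithms.

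\emph{Entropy statements.} On the hypercube prior, $Y=V/\sqrt d$ is a bijection of $\{\pm1\}^d$. Hence $M=2^d$, and the uniform index marginal gives $L_0=\log 2^d=d\log 2$ on that subfamily.

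Lemma~\ref{lem:linear-ghost} applies to every algorithm, because under the ghost law the reference history is independent of $Y$. It therefore bounds $p_r^{\Alg}\le e^{-c_{\rm gh}d}$ uniformly in $\Alg$. Taking the supremum gives $p_r^\star\le e^{-c_{\rm gh}d}$, and so $E_r^\star\ge c_{\rm gh}d$ at $r=\Delta\sqrt d/4$. With $\Delta\asymp\sqrt{d/T}$, this radius is $r\asymp d/\sqrt T$.

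It follows that $L_0/E_r^\star\le \log2/c_{\rm gh}$, a constant. This is exactly the localized comparison of Proposition~\ref{prop:entropy-gap} with $E\gtrsim\log M$.

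\emph{Main obstacle.} The only delicate point is a bookkeeping one: the upper bound must be stated uniformly over the whole unit ball, not just the hypercube. OFUL's guarantee is indeed uniform over the ball, so no Bellman-program calibration is claimed. This is precisely the disclaimer stated at the end of the proposition.
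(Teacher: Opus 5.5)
Your proposal is correct and follows the paper's proof. The paper likewise cites the standard self-normalized OFUL analysis for the upper bound; you additionally spell out the calibration, optimism, and elliptical-potential steps and the $\delta=1/T$ conversion to an expectation bound. It then reads off $L_0=d\log 2$ from the bijection $Y=V/\sqrt d$ and the algorithm-uniform bound $p_r^\star\le e^{-c_{\rm gh}d}$ from Lemma~\ref{lem:linear-ghost}, exactly as you do.
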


\begin{proof}
The standard self-normalized analysis for stochastic linear bandits gives the upper bound \citep{abbasi2011improved,lattimore2020bandit}.  The lower bound and entropy calculation follow directly from the proof of Theorem~\ref{thm:linear}: the hard index is $Y=V/\sqrt d$ with $V\in\{\pm1\}^d$, so the hypercube-subfamily code length is \(L_0=d\log2\), while Lemma~\ref{lem:linear-ghost} yields $p_r^\star\le e^{-c_{\rm gh}d}$.
\end{proof}

This example is the high-dimensional analogue of the finite-arm calculation. The hypercube prior is not a model-identification target; it is a finite action-index packing for the optimal direction. The lower proof shows that ghost entropy and information capacity meet at the scale \(d\sqrt T\). Standard tractable upper algorithms achieve the same scale up to logarithmic terms; proving that the exact log-potential Bellman upper value closes at this entropy scale would give the corresponding formal sandwich theorem. It should be distinguished from the two irregular-action-set results discussed in Section~\ref{sec:kernel-separation}. Corollary~\ref{cor:finite-linucb-separation} gives an algorithm-specific cumulative-regret separation for a horizon-dependent small-cloud action set, whereas \citet{maiti2026power} separate adaptive from nonadaptive sampling for fixed-confidence best-arm identification on a union of orthogonal balls. These results complement the Euclidean-ball minimax calculation here; they concern different action geometries, performance criteria, and algorithm classes.

\section{Proofs}
\label{app:proofs}
The remaining proofs are collected here.

\subsection{State compression and the information-complexity sandwich}

\subsubsection{Proof of Proposition~\ref{prop:state-compression-info}}
\label{app:proof-prop:state-compression-info}
\begin{proof}
The Bellman-operator identity follows directly from the state-measurability of the action set, comparison set, one-step cost, transition law, and continuation value.  Histories in the same fiber of \(S_t\) induce the same local optimization problem.  The information inequality is the data-processing inequality applied to the measurable map \(S_t=\phi_t(H_{t-1})\).  The equality condition is the standard equality case for conditional mutual information, \(I(Y;H_{t-1}\mid S_t)=0\).
\end{proof}

\subsubsection{Proof of Lemma~\ref{lem:admissible-implies-nondegenerate}}
\label{app:proof-lem:admissible-implies-nondegenerate}
\begin{proof}
Since \(C_{\mu}\ge0\), condition \eqref{eq:bf-admissible-condition} implies \(\underline E_{\mu,r}\ge 2\log2\). Because \(\underline E_{\mu,r}\le E_{\mu,r}^{\star}=-\log p_{\mu,r}^{\star}\), one has \(p_{\mu,r}^{\star}\le e^{-\underline E_{\mu,r}}\le1/4\). The converse statement is only the exact entropy bound \(E_{\mu,r}^{\star}\ge\log(1/\varrho)\); it does not control \(C_\mu\).
\end{proof}

\subsubsection{Proof of Proposition~\ref{prop:entropy-gap}}
\label{app:proof-prop:entropy-gap}
\begin{proof}
The uniform representative prior gives \(q_{1,\mu_h}(y)=1/M\), hence the code length under this common upper/lower marginal is \(\log M\).  If \(\mu_h\) is Bellman--Fano admissible, Lemma~\ref{lem:admissible-implies-nondegenerate} gives \(E_{\mu_h,r}^{\star}\ge2\log2\), proving \eqref{eq:log-gap-from-admissibility}.  Under the multiplicity condition, every algorithm satisfies
\[
    p_{\mu_h,r}^{\Alg}
    =
    \E_{H_T'\sim\bar\Pp_{\mu_h}^{\Alg}}\frac{|G_r(H_T')|}{M}
    \le
    \frac{m_r}{M}.
\]
Taking the supremum over \(\Alg\) gives \(p_{\mu_h,r}^{\star}\le m_r/M\), and hence \(E_{\mu_h,r}^{\star}\ge\log(M/m_r)\). This proves \eqref{eq:localized-entropy-gap}.  The final statement follows from \(E_{\mu_h,r}^{\star}\ge\log(1/\varrho)\).
\end{proof}

\subsubsection{Proof of Lemma~\ref{lem:growth-needed}}
\label{app:proof-lem:growth-needed}
\begin{proof}
Set \(\mathsf U(x)=1\) for \(x<L\) and \(\mathsf U(x)=\max\{1,B\}\) for \(x\ge L\). Then \(\mathsf U\) is nondecreasing, \(\mathsf U(E)=1\), and \(\mathsf U(L)=\max\{1,B\}\ge B\). Thus a logarithmic entropy gap, even \(L/E=O(\log M)\), does not by itself imply any controlled regret gap. Assumption~\ref{ass:upper-growth}, or an equivalent fixed-point/rate condition, is the additional structure that rules out such jumps.
\end{proof}

\subsubsection{Proof of Proposition~\ref{prop:scale-closing-diagnostics}}
\label{app:proof-prop:scale-closing-diagnostics}
\begin{proof}
Condition (i) is \eqref{eq:upper-at-lower-entropy}.  Condition (ii) implies (i) because \(\mathsf U_T(E_{\mu,r}^{\star})\) is the infimum over \(\gamma>0\) of the left side in \eqref{eq:upper-budget-value}.  Condition (iii) is a restatement of (i) with the constant displayed after dividing by \(T\).
\end{proof}

\subsubsection{Proof of Theorem~\ref{thm:interactive-info-risk-sandwich}}
\label{app:proof-thm:interactive-info-risk-sandwich}
\begin{proof}
The upper inequality is Theorem~\ref{thm:frequentist-info-risk-upper}, optimized over \(\gamma\), with the uniform coordinate code length bounded by \(L_0\) and the remaining errors included in \(\Delta_T^\gamma\).  The lower inequality is Theorem~\ref{thm:bellman-lower} applied to the Bellman--Fano admissible prior, followed by Yao's principle because \(\mu\) is supported on \(\Omega_0\).  For the final comparison, set
\[
    a=\max\left\{1,\frac{L_0}{E_{\mu,r}^{\star}}\right\}.
\]
Since \(\mathsf U_T\) is nondecreasing and \(aE_{\mu,r}^{\star}\ge L_0\), Assumption~\ref{ass:upper-growth} and \eqref{eq:upper-at-lower-entropy} give
\[
    \mathsf U_T(L_0)
    \le
    \mathsf U_T(aE_{\mu,r}^{\star})
    \le
    C_{\rm gr}a^\beta\mathsf U_T(E_{\mu,r}^{\star})
    \le
    C_{\rm gr}C_{\rm base}a^\beta Tr.
\]
Combining this with \(\mathfrak R_T^\star(\Omega_0)\ge Tr/2\) gives the regret-ratio bound.  The finite-index statements are Proposition~\ref{prop:entropy-gap}.
\end{proof}

\subsection{Indexed AIR/MAIR identities and upper bounds}

\subsubsection{Proof of Proposition~\ref{prop:index-chain}}
\label{app:proof-prop:index-chain}
\begin{proof}
Under the stated assumption on $O_0$, the mutual-information chain rule gives
\[
    I_\mu(Y;H_T)=\sum_{t=1}^T I_\mu(Y;A_t,O_t\mid H_{t-1}).
\]
Given $H_{t-1}$, the action $A_t$ is sampled by the algorithm using only randomization independent of the environment and is conditionally independent of $Y$.  Therefore
\[
    I_\mu(Y;A_t,O_t\mid H_{t-1})=I_\mu(Y;O_t\mid H_{t-1},A_t).
\]
Conditioning further on $A_t=a$, the law of $O_t$ given $Y=y$ is $P_{S_t,p_t,a}^y$, while the posterior predictive law is $P_{S_t,p_t,a}$. Hence
\[
    I_\mu(Y;O_t\mid H_{t-1},A_t=a)
    =
	    \int \KL(P_{S_t,p_t,a}^y\|P_{S_t,p_t,a})q_{S_t}(dy).
\]
Averaging over $a\sim p_t(\cdot\mid H_{t-1})$ and over the marginal law of the history under the Bayesian mixture law proves the result.  Writing this marginal law as the reference history law gives the displayed expression.
\end{proof}

\subsubsection{Proof of Lemma~\ref{lem:air-gradient-bracket}}
\label{app:proof-lem:air-gradient-bracket}
\begin{proof}
For fixed $s,p,a$, write the joint mixture of $(Y,O)$ under $\nu$ as $w_{y,o}$ and the observation mixture as $w_o=\sum_y w_{y,o}$. In the dominated case, the sums below are interpreted as integrals with respect to the fixed dominating measures. The information and reference term can be written as
\[
    I_\nu(Y;O\mid s,p,a)+\KL(q_\nu\|q)
    =
    \sum_{y,o}w_{y,o}\log\frac{w_{y,o}}{w_o q(y)}.
\]
Differentiating with respect to the mass at $(\omega,y)$ gives
\[
    \E_{O\sim P_{\omega,t}(\cdot\mid s,p,a)}
    \log\frac{q_\nu^+(y\mid s,p,a,O)}{q(y)},
\]
because the $+1$ terms from the numerator and denominator cancel.  Averaging over $a\sim p$ and adding the derivative of the linear loss term proves \eqref{eq:air-gradient}.  Pairing the gradient with $\delta_{(\omega,y)}-\nu$ cancels the posterior-averaged terms and gives \eqref{eq:air-bracket}.
\end{proof}

\subsubsection{Proof of Theorem~\ref{thm:air-regret-identity}}
\label{app:proof-thm:air-regret-identity}
\begin{proof}
Condition on $H_{t-1}$.  By the update rule \eqref{eq:air-reference-update},
\[
    \E\!\left[
        \log\frac{q_{t+1}(y^\star)}{q_t(y^\star)}
        \middle|H_{t-1}
    \right]
    =
    \E_{a\sim p_t,\,O\sim P_{\omega^\star,t}(\cdot\mid S_t,p_t,a)}
    \log\frac{q_{\nu_t}^+(y^\star\mid S_t,p_t,a,O)}{q_t(y^\star)}.
\]
Combining this equality with \eqref{eq:air-bracket} gives the one-step identity between expected loss, bracket, and log-ratio increment.  Summing over $t$ telescopes the logarithm.
\end{proof}

\subsubsection{Proof of Lemma~\ref{lem:mair-gradient}}
\label{app:proof-lem:mair-gradient}
\begin{proof}
The derivative and the first two specializations are the environment-index specialization of Lemma~\ref{lem:air-gradient-bracket}.  For the square-root claim, fix $\bar a$ and write
\[
Z_{\bar a}(\bar o):=\int\sqrt{p_{\omega',\bar a}(\bar o)}\,\rho(d\omega').
\]
Then
\[
\log\frac{d\mu_{\bar a,\bar o}^{1/2}}{d\rho}(\omega^\star)
=\frac12\log p_{\omega^\star,\bar a}(\bar o)-\log Z_{\bar a}(\bar o).
\]
Under $\bar o\sim P_{\omega^\star,s,p,\bar a}$, Jensen's inequality gives
\begin{align*}
\E\log\frac{d\mu_{\bar a,\bar o}^{1/2}}{d\rho}(\omega^\star)
&=-\E\log\frac{Z_{\bar a}(\bar o)}{\sqrt{p_{\omega^\star,\bar a}(\bar o)}}\\
&\ge -\log\int\sqrt{p_{\omega^\star,\bar a}(o)}Z_{\bar a}(o)d\nu_{s,p,\bar a}(o)\\
&=-\log\left(1-\E_{\omega\sim\rho}h^2(P_{\omega^\star,s,p,\bar a},P_{\omega,s,p,\bar a})\right)\\
&\ge \E_{\omega\sim\rho}h^2(P_{\omega^\star,s,p,\bar a},P_{\omega,s,p,\bar a}).
\end{align*}
Averaging over $\bar a\sim p$, substituting into \eqref{eq:mair-gradient-bracket}, and dropping the nonnegative KL term proves \eqref{eq:sqrt-posterior-bracket}.
\end{proof}

\subsubsection{Proof of Theorem~\ref{thm:mair-upper}}
\label{app:proof-thm:mair-upper}
\begin{proof}

Taking the conditional expectation of $\gamma\log(1/q_{t+1}(y^\star))-\gamma\log(1/q_t(y^\star))$ under $A\sim p$ and $O\sim P_{\omega^\star,t}(\cdot\mid s,p,A)$, with $q_{t+1}=q_\nu^+(\cdot\mid s,p,A,O)$, gives $-\gamma\mathsf G_\chi(s,p,\nu;\omega^\star)$.  Substituting the bracket inequality into the telescoping identity \eqref{eq:potential-identity} proves \eqref{eq:fixed-truth-log-potential-telescope}. 

\end{proof}

\subsubsection{Proof of Theorem~\ref{thm:frequentist-info-risk-upper}}
\label{app:proof-thm:frequentist-info-risk-upper}
\begin{proof}
By sure calibration, the fixed truth is one of the environments in the robust supremum at every reached state.  Therefore the displayed approximate minimization implies almost surely
\[
   \ell_{\omega^\star}(S_t,p_{u_t})
    +\E_{\omega^\star}[W_{t+1}^\gamma(S^+)\mid S_t,u_t]
    -W_t^\gamma(S_t)
    -\gamma\mathsf G_\chi(S_t,u_t;\omega^\star)
    \le \varepsilon_t .
\]
Apply the log-potential telescope \eqref{eq:potential-identity} with $V_t=W_t^\gamma$.  Since $W_{T+1}^\gamma=0$ and the terminal index log loss is nonnegative for a finite or countable index, the result follows.
\end{proof}

\subsection{Bellman controls and quantile-index lower bounds}

\subsubsection{Proof of Theorem~\ref{thm:info-potential-dp-upper}}
\label{app:proof-thm:info-potential-dp-upper}
\begin{proof}
Rearrange the displayed inequality and telescope $U_t^\gamma$.  If $O_0$ is deterministic or independent of $Y$, Proposition~\ref{prop:index-chain} identifies $\E\sum_t\calI_\chi(S_t,p_t)$ with $I_\mu(Y;H_T)$.  If $O_0$ is informative, the sum is instead $I_\mu(Y;H_T\mid O_0)\le I_\mu(Y;H_T)$.  In either case $I_\mu(Y;H_T)\le H_\mu(Y)$ gives the displayed bound.  The equivalence with \eqref{eq:phi-bellman-relaxation} follows from the posterior entropy-drop identity.
\end{proof}

\subsubsection{Proof of Lemma~\ref{lem:ucb-bracket}}
\label{app:proof-lem:ucb-bracket}
\begin{proof}
The optimism/calibration hypothesis \eqref{eq:generic-optimism-width} and Young's inequality give
\[
    c_{\rm opt}\beta_t\sigma_t(a_t)
    \le
    \frac{c_{\rm opt}^2\beta_t^2}{4\gamma}
    +
    \gamma\sigma_t^2(a_t).
\]
This proves \eqref{eq:generic-ucb-offset}.  Subtracting $\gamma J_\chi(S_t,\delta_{a_t};\omega^\star)$ from both sides and using the definition of $B_{\chi,\gamma}$ proves \eqref{eq:generic-ucb-gradient-bracket}.  The final sentence is the result of summing the displayed inequality over time.
\end{proof}

\subsubsection{Proof of Lemma~\ref{lem:amsebo-fixed-bracket}}
\label{app:proof-lem:amsebo-fixed-bracket}
\begin{proof}
The AIR gradient calculation in Lemma~\ref{lem:air-gradient-bracket} identifies the fixed-truth bracket as the first-order value of $\mathfrak A_{q_t,\gamma}$ in the direction $\delta_{(\omega^\star,y^\star)}-\bar\nu_t$.  Concavity and optimality of $\bar\nu_t$ make that first-order correction nonpositive for every admissible comparison direction.  Hence the bracket is no larger than $\mathfrak A_{q_t,\gamma}(s,p_t,\bar\nu_t)$, which is at most the robust value and therefore at most $\varepsilon_t$.  The regret bound is the fixed-truth log-potential telescope.
\end{proof}

\subsubsection{Proof of Theorem~\ref{thm:quantile-index}}
\label{app:proof-thm:quantile-index}
\begin{proof}
Consider the true joint law of $(Y,H_T)$ and the ghost law of $(Y,H_T')$.  Both have the same marginal law of $Y$, while $Y$ and $H_T'$ are independent under the ghost law.  Hence
\[
    \KL\bigl(\mathcal L(Y,H_T)\|\mathcal L(Y,H_T')\bigr)
    =
    I_\mu(Y;H_T).
\]
Data processing applied to the indicator of the event $\{\bar L_\chi(Y,\cdot)\le r\}$ gives
\[
    \kl(q_r^\Alg,p_r^\Alg)
    \le
    I_\mu(Y;H_T).
\]
Proposition~\ref{prop:index-chain} identifies the right-hand side with $C_\chi^\Alg(\mu)=T\bar C_\chi^\Alg(\mu)$, proving \eqref{eq:binary-dp}.  The definition of $\psi$ gives $q_r^\Alg\le\psi(p_r^\Alg,T\bar C_\chi^\Alg)$. By nonnegativity of the cumulative indexed loss and by \eqref{eq:index-loss-equality},
\[
\begin{aligned}
    \E_{\Omega,H_T}L_\Omega(H_T)
    &=\E_{Y,H_T}L_\chi(Y,H_T) \\
    &\ge Tr\,\Pp\{\bar L_\chi(Y,H_T)>r\} \\
    &=Tr(1-q_r^\Alg),
\end{aligned}
\]
which proves \eqref{eq:quantile-exact}.  If \eqref{eq:quantile-kl-condition} holds, then $q_r^\Alg\le1/2$, giving \eqref{eq:average-lower}.  Finally,
\[
    \kl\!\left(\frac12,p\right)
    =
    \frac12\log\frac1{4p(1-p)}
    \ge
    \frac12\log\frac1p-\log2,
\]
so \eqref{eq:average-readable} implies \eqref{eq:quantile-kl-condition}.
\end{proof}

\subsubsection{Proof of Theorem~\ref{thm:bellman-lower}}
\label{app:proof-thm:bellman-lower}
\begin{proof}
For every algorithm, \eqref{eq:capacity-bound} gives $T\bar C_\chi^\Alg(\mu)=C_\chi^\Alg(\mu)\le\overline C_1(s_1)$, and \eqref{eq:ghost-mass-bound} gives $p_r^\Alg(\mu,\chi)\le e^{-\underline{\calE}_1^r(\mu,\chi)}$.  Substitute these two inequalities into Theorem~\ref{thm:quantile-index}, using the monotonicity of \(\psi\) in both arguments.  The minimax statement follows from Yao's principle.
\end{proof}

\subsection{Finite-marginal kernel-bandit upper bounds}

\subsubsection{Proof of Theorem~\ref{thm:gpucb-realized}}
\label{app:proof-thm:gpucb-realized}
\begin{proof}
Let $x^\star\in\argmax_{x\in\calX}f^\star(x)$.  On $\mathcal E_{\rm GP}$, calibration at $x^\star$ and the optimistic choice of $X_t$ give
\[
    f^\star(x^\star)
    \le m_t(x^\star)+w_t(x^\star)
    \le m_t(X_t)+w_t(X_t).
\]
Calibration at $X_t$ gives $m_t(X_t)\le f^\star(X_t)+w_t(X_t)$.  Hence
\[
    f^\star(x^\star)-f^\star(X_t)
    \le 2w_t(X_t)
    =2\beta_t s_t(X_t)
    \le 2\beta_T s_t(X_t).
\]
Summing and applying Cauchy--Schwarz yields
\[
    \Reg_T(f^\star)
    \le2\beta_T\sqrt{T\sum_{t=1}^T s_t^2(X_t)}.
\]
For $u\in[0,\kappa^2/\lambda]$, monotonicity of $u/\log(1+u)$ on a bounded interval gives
\[
    \lambda u
    \le
    \frac{\kappa^2}{\log(1+\kappa^2/\lambda)}\log(1+u).
\]
Applying this with $u=s_t^2(X_t)/\lambda$ and using \eqref{eq:realized-gp-info} proves \eqref{eq:gpucb-pathwise-bound-new}.  The expectation bound adds the trivial regret bound $T$ on $\mathcal E_{\rm GP}^c$ and uses Jensen's inequality.
\end{proof}

\subsubsection{Proof of Theorem~\ref{thm:finite-marginal-air-upper}}
\label{app:proof-thm:finite-marginal-air-upper}
\begin{proof}
Let $\Theta=\Theta_{\mathcal R}\subset[-L,L]^{\mathcal R}$ be the compact class of retained reward vectors.  To avoid a discontinuous tie-breaking graph in the saddle argument, use the closed optimal relation
\[
    \widetilde\Theta
    :=
    \{(\theta,y):\theta\in\Theta,\ y\in\argmax_{x\in\mathcal R}\theta_x\}.
\]
This is a compact subset of $\Theta\times\mathcal R$.  Let $\calY_\Theta$ be the set of actions that occur as the second coordinate of some pair in $\widetilde\Theta$, and put $K_0=|\calY_\Theta|\le K$.  The fixed tie-breaking rule used for the true index selects an element of this relation.  Candidate pair beliefs range over the weakly compact convex set $\Delta(\widetilde\Theta)$.

Fix $q\in\operatorname{int}\Delta(\calY_\Theta)$ and $\gamma>0$.  For $\nu\in\Delta(\widetilde\Theta)$ and $p\in\Delta(\mathcal R)$, write
\[
    \mathfrak A_{q,\gamma}(p,\nu)
    =
    \Delta_\nu(p)
    -\gamma\calI_\nu(Y;Z_A\mid A)
    -\gamma\KL(\nu_Y\|q),
\]
where $A\sim p$ is independent of $(\theta,Y)\sim\nu$, $Z_A=\theta_A+\xi$ with $\xi\sim N(0,\lambda)$, and
\[
    \Delta_\nu(p)
    :=
    \E_{\nu,p}[\theta_Y-\theta_A].
\]
The loss is affine in each of $p$ and $\nu$ separately.  Moreover,
\[
    \calI_\nu(Y;Z_A\mid A)+\KL(\nu_Y\|q)
    =
    \sum_{a\in\mathcal R}p(a)
    \KL\!\left(P_{Y,Z_a}^\nu\,\middle\|\,q\otimes P_{Z_a}^\nu\right).
\]
The map $\theta\mapsto N(\theta_a,\lambda)$ is weakly continuous, and all its means range over a compact interval.  Joint convexity and lower semicontinuity of relative entropy therefore make $\mathfrak A_{q,\gamma}$ concave and upper semicontinuous in $\nu$ and affine and continuous in $p$.  Both feasible sets are compact, so Sion's theorem gives
\begin{equation}
\label{eq:finite-air-sion}
    \inf_p\sup_\nu\mathfrak A_{q,\gamma}(p,\nu)
    =
    \sup_\nu\inf_p\mathfrak A_{q,\gamma}(p,\nu).
\end{equation}

We bound the right-hand side.  Fix $\nu$, let $r=\nu_Y$, take $p=r$ on $\calY_\Theta$ and zero on the remaining actions, and define
\[
    d_i
    :=
    \E_\nu[\theta_i\mid Y=i]-\E_\nu\theta_i,
    \qquad i\in\calY_\Theta,
\]
with $d_i=0$ when $r(i)=0$.  The marginal law of $Z_i$ is $(\lambda+L^2)$-sub-Gaussian about its mean: the Gaussian noise contributes variance proxy $\lambda$, while Hoeffding's lemma contributes $L^2$ for the random mean in $[-L,L]$.  The entropy variational formula consequently gives
\[
    \KL(P_{Z_i\mid Y=i}^\nu\|P_{Z_i}^\nu)
    \ge
    \frac{d_i^2}{2(\lambda+L^2)}.
\]
It follows that
\begin{align*}
    \calI_\nu(Y;Z_A\mid A,p=r)
    &\ge
    \frac{1}{2(\lambda+L^2)}
    \sum_{i\in\calY_\Theta}r(i)^2d_i^2,\\
    \Delta_\nu(r)
    &=
    \sum_{i\in\calY_\Theta}r(i)d_i.
\end{align*}
By Cauchy--Schwarz,
\begin{equation}
\label{eq:finite-air-information-ratio}
    \Delta_\nu(r)^2
    \le
    2(\lambda+L^2)K_0\,
    \calI_\nu(Y;Z_A\mid A,p=r).
\end{equation}
Dropping the nonpositive term $-\gamma\KL(r\|q)$ and maximizing
$\sqrt{2(\lambda+L^2)K_0 z}-\gamma z$ over $z\ge0$ now yields
\begin{equation}
\label{eq:finite-air-robust-value}
    \inf_p\sup_\nu\mathfrak A_{q,\gamma}(p,\nu)
    \le
    \frac{(\lambda+L^2)K_0}{2\gamma}
    \le
    \frac{(\lambda+L^2)K}{2\gamma}
    =:c_\gamma.
\end{equation}

Take a saddle pair $(p_q,\bar\nu_q)$ in \eqref{eq:finite-air-sion} and update $q$ by the posterior optimal-action marginal generated by $\bar\nu_q$.  The identity
\[
    -\calI_\nu(Y;Z_A\mid A)-\KL(\nu_Y\|q)
    =
    H_\nu(Y\mid A,Z_A)+\E_\nu\log q(Y)
\]
shows that a maximizing belief assigns positive mass to every attainable index.  Indeed, suppose $\bar\nu_{q,Y}(y)=0$, choose $(\theta^y,y)\in\widetilde\Theta$, and put
\[
    \nu_\epsilon
    =(1-\epsilon)\bar\nu_q+\epsilon\delta_{(\theta^y,y)}.
\]
Let $B$ indicate the new mixture component and let $h(\epsilon)$ be binary entropy.  Since the new label did not occur under $\bar\nu_q$,
\begin{align*}
    H_{\nu_\epsilon}(Y\mid A,Z_A)
    &=(1-\epsilon)H_{\bar\nu_q}(Y\mid A,Z_A)
      +H(B\mid A,Z_A),\\
    H(B\mid A,Z_A)
    &\ge
      h(\epsilon)
      -\epsilon\sup_{a\in\mathcal R}
       \KL\!\left(N(\theta_a^y,\lambda)\,\middle\|\,P_{Z_a}^{\bar\nu_q}\right)\\
    &\ge h(\epsilon)-\frac{2L^2}{\lambda}\epsilon.
\end{align*}
The last step uses convexity in the second argument of KL and the uniform pairwise Gaussian bound.  Thus the conditional-entropy contribution increases by $\epsilon\log(1/\epsilon)-O(\epsilon)$, while the loss and $\E\log q(Y)$ change by only $O(\epsilon)$.  This contradicts maximality.  Hence the generated posterior coordinate remains positive for every feasible fixed-truth index after every observation.

Because $\lambda>0$, the predictive mixture densities are strictly positive.  Bounded Gaussian means and the preceding KL bound also make the relevant log likelihood ratios integrable.  The one-sided Gateaux derivative toward every $\delta_{(\theta,y)}-\bar\nu_q$ therefore exists, so Lemma~\ref{lem:air-gradient-bracket} applies to this dominated continuous-belief problem.

At the maximizing belief, the directional derivative of the concave AIR functional toward any $(\theta,y)\in\widetilde\Theta$ is nonpositive.  Lemma~\ref{lem:air-gradient-bracket} and \eqref{eq:finite-air-robust-value} therefore give
\begin{equation}
\label{eq:finite-air-fixed-truth-bracket}
    \Delta_\theta(p_q)
    -\gamma\E_\theta
       \log\frac{q^+(y\mid A,Z_A)}{q(y)}
    \le c_\gamma
    \qquad
    \text{for every }(\theta,y)\in\widetilde\Theta.
\end{equation}
Thus $\overline W_t=(T-t+1)c_\gamma$ is a supersolution of the finite-marginal action-index Bellman recursion: at every state, insert the robust AIR saddle control in the Bellman bracket.  Take the comparison set to be the full class $\Theta$ at every state and restrict the admissible AIR updates to those that remain positive on every attainable index.  Backward induction gives $W_t^\gamma\le\overline W_t$.  A measurable approximate selector of the exact $W$ recursion and Theorem~\ref{thm:frequentist-info-risk-upper} therefore give the bound below, up to arbitrarily small optimization error.  The robust saddle policy itself satisfies the same estimate: with continuation $V_t=\overline W_t$, inequality \eqref{eq:finite-air-fixed-truth-bracket} makes its Bellman bracket nonpositive, so Theorem~\ref{thm:mair-upper} applies directly.  In either case, the uniform initial law on $\calY_\Theta$ gives
\[
    \sup_{\theta\in\Theta}\E_\theta\Reg_T^{\mathcal R}
    \le
    Tc_\gamma+\gamma\log K_0
    \le
    \frac{(\lambda+L^2)KT}{2\gamma}+\gamma\log K.
\]
Standard measurable-selection results apply to the compact saddle problem; equivalently, measurable approximate saddles may be used and their arbitrarily small optimization errors added to the display.

Finally, for a policy confined to $\mathcal R$,
\[
    \sup_{x\in\calX_{\rm all}}f(x)-f(X_t)
    \le
    \varepsilon_{\mathcal R}
    +\max_{x\in\mathcal R}f(x)-f(X_t).
\]
Summing proves \eqref{eq:finite-marginal-air-upper}.  Optimizing in $\gamma$ proves \eqref{eq:finite-marginal-air-optimized}.
\end{proof}

\subsection{Fixed-kernel  separations}

\subsubsection{Proof of Theorem~\ref{thm:fixed-kernel-gpucb}}
\label{app:proof-thm:fixed-kernel-gpucb}
\begin{proof}
Choose increasing integers $T_m$ recursively, starting with $T_1$ sufficiently large.  Put
\[
    a_m=T_m^{-\alpha},\qquad
    N_m=\lceil4T_m\rceil,\qquad
    M_{m-1}=\sum_{\ell<m}N_\ell,
\]
and choose $T_m$ sufficiently large that
\begin{equation}
\label{eq:fixed-kernel-scale-separation}
    (M_{m-1}+2)\log(1+T_m)
    \le a_m^2T_m=T_m^{1-2\alpha}.
\end{equation}
This is possible because $1-2\alpha>0$.  Let
\[
    \calX=\{x_0,x_h\}\cup
       \bigcup_{m\ge1}B_m,
    \qquad
    B_m=\{c_{m,1},\ldots,c_{m,N_m}\},
\]
and define a diagonal kernel by
\[
    k(x_h,x_h)=1,\qquad
    k(c_{m,j},c_{m,j})=a_m^2,
\]
with every other entry zero.  Let $D=\bigcup_{m\ge1}B_m$ have the discrete topology and endow $D\cup\{x_0\}$ with its one-point compactification: a neighborhood base at $x_0$ is
\[
    U_F=\{x_0\}\cup(D\setminus F),
    \qquad F\subset D\ \text{finite}.
\]
Adjoin $x_h$ as an isolated point.  The resulting $\calX$ is compact, metrizable, and countable, with $x_0$ as its only accumulation point.  Since $T_m$ is strictly increasing, $a_m\downarrow0$.  Every point other than $x_0$ is isolated.  If $z_r\to x_0$, then the block index of $z_r$ tends to infinity and hence $k(z_r,z_r)\to0$.  All off-diagonal values are zero, so the same observation verifies continuity at $(x_0,x_0)$ and at every $(x_0,x)$ and $(x,x_0)$.  Thus $k$ is jointly continuous on $\calX\times\calX$ and bounded by one.  Its unit RKHS ball is
\begin{equation}
\label{eq:fixed-kernel-rkhs-ball}
    \calF=\left\{f:f(x_0)=0,\quad
       f(x_h)^2+
       \sum_{m\ge1}a_m^{-2}\sum_{j=1}^{N_m}f(c_{m,j})^2
       \le1\right\}.
\end{equation}

For the linear-bandit interpretation, take
\[
    \mathcal H
    =\ell_2\bigl(\{h\}\cup\{(m,j):m\ge1,\ j\in[N_m]\}\bigr)
\]
with its canonical orthonormal basis and define
\[
    \varphi(x_0)=0,
    \qquad
    \varphi(x_h)=e_h,
    \qquad
    \varphi(c_{m,j})=a_m e_{m,j}.
\]
Then $k(x,x')=\langle\varphi(x),\varphi(x')\rangle_{\mathcal H}$, and
\eqref{eq:fixed-kernel-rkhs-ball} is the unit ball of parameters
$\theta\in\mathcal H$ under the identification
$f(x)=\langle\theta,\varphi(x)\rangle_{\mathcal H}$. This makes the glued
theorem one fixed infinite-dimensional Hilbert-space linear-bandit instance.

Fix the horizon $T_m$ and put
\[
    \mathcal R_m
    :=
    \{x_0,x_h\}\cup\bigcup_{\ell<m}B_\ell,
    \qquad
    K_m:=|\mathcal R_m|=M_{m-1}+2.
\]
The one-hot alternatives supported on $B_m$ and the coupling argument from Theorem~\ref{thm:small-cloud-gpucb} give the lower bound $3a_mT_m/4$.  For the upper bound, run a minimax finite-armed sub-Gaussian policy on $\mathcal R_m$.  Its regret relative to the best retained action is at most $C\sqrt{K_mT_m}$.  Every point in the ignored blocks $B_\ell$, $\ell\ge m$, has $|f(x)|\le a_\ell\le a_m$, while the retained anchor has value zero.  By \eqref{eq:fixed-kernel-scale-separation}, the full regret is therefore at most
\[
    C\sqrt{K_mT_m}+a_mT_m
    \le C'a_mT_m.
\]
This proves \eqref{eq:fixed-kernel-minimax}.

We next define one AIR Bellman policy for the entire time axis.  Set $T_0=0$ and $L_m=T_m-T_{m-1}$.  During the predetermined epoch $(T_{m-1},T_m]$, the policy discards data from earlier epochs, restricts its actions to $\mathcal R_m$, initializes the optimal-action reference uniformly, and runs the $L_m$-round finite-marginal action-index AIR Bellman selector from Theorem~\ref{thm:finite-marginal-air-upper} with
\[
    \gamma_m
    =
    \sqrt{\frac{K_mL_m}{\log K_m}}.
\]
The control invoked here is the finite-marginal AIR/AMS saddle of \citet{xu2025bayesian}.  Its connection to \citet{lattimore2021mirror} is the EBO optimization principle described after Theorem~\ref{thm:finite-marginal-air-upper}; the proof does not use EBO's functional-estimator implementation.
The epoch schedule and all its controls are fixed in advance, so this is one nonanticipating policy and does not take the evaluation horizon as input.  The projection of the unit RKHS ball onto $\mathcal R_m$ is a compact ellipsoid contained in $[-1,1]^{K_m}$, and the approximation error of $\mathcal R_m$ is at most $a_m$.  Therefore the regret accumulated during epoch $m$ is at most
\[
    a_mL_m+2\sqrt{K_mL_m\log K_m}
    \le 3a_mT_m,
\]
where the last inequality uses
\[
    K_m\log K_m
    \le K_m\log(1+T_m)
    \le a_m^2T_m.
\]
Here the first inequality follows because the scale condition itself implies $K_m\le T_m$ for all sufficiently large $m$.
The RKHS envelope bounds per-round regret by two.  For $m\ge2$, moreover, $K_m\ge N_{m-1}\ge4T_{m-1}$, so the regret before the current epoch is at most
\[
    2T_{m-1}
    \le\frac{K_m}{2}
    \le\frac{a_m^2T_m}{2\log(1+T_m)}
    \le a_mT_m.
\]
Consequently, the single epochwise AIR Bellman policy has worst-case regret at most $4a_mT_m=O(T_m^{1-\alpha})$ at time $T_m$.  This proves \eqref{eq:fixed-kernel-air}.

Let $\overline\Gamma_s$ denote the maximal information gain of this fixed kernel.  For $s\le T_m$, selecting distinct points of $B_m$ and using $\log(1+u)\ge u/2$ gives
\begin{equation}
\label{eq:fixed-kernel-gamma-bounds}
    \frac{s a_m^2}{4}
    \le \overline\Gamma_s
    \le
    \frac{M_{m-1}+1}{2}\log(1+s)
      +\frac{a_m^2s}{2}
    \le a_m^2T_m.
\end{equation}
Indeed, the hub and each coordinate in the earlier blocks contribute at most $\frac12\log(1+s)$, while all coordinates in $B_\ell$, $\ell\ge m$, contribute in total at most $a_m^2s/2$.  Use the anytime multiplier
\[
    \overline\beta_t
    =1+\sqrt{2\{\overline\Gamma_{t-1}+2\log(t\vee2)\}}.
\]
At round $t$, the fixed-kernel GP--UCB rule selects a maximizer of $m_t(x)+\overline\beta_ts_t(x)$.  This is one policy, independent of the evaluation horizon.  After every finite history, $m_t(x)$ and $s_t^2(x)$ are finite algebraic combinations of the continuous kernel sections $k(x,X_r)$ and $k(x,x)$.  Hence $m_t+\overline\beta_ts_t$ is continuous on compact $\calX$ and attains its maximum.
The lower bound in \eqref{eq:fixed-kernel-gamma-bounds} makes the score of an unqueried point of $B_m$ at least $a_m^2\sqrt{(t-1)/2}$, which diverges uniformly for $t\ge T_m/2$.  For $t\le T_m$, the upper bound and \eqref{eq:fixed-kernel-scale-separation} give $\overline\beta_t\le Ca_m\sqrt{T_m}$.

Finally take the fixed hub truth $f^\dagger(x_h)=1/4$ and zero elsewhere.  Repeat the Gaussian-noise-stack event and last-half argument in the proof of Theorem~\ref{thm:small-cloud-gpucb}.  For every $t\le T_m$, an unqueried point of $B_m$ remains available because $N_m\ge4T_m$.  Its score is $a_m\overline\beta_t>0$, whereas the anchor $x_0$ has score zero; consequently, $x_0$ is never selected.  Hence, if fewer than $T_m/4$ nonanchor, nonhub pulls occur, the hub has been sampled at least $T_m/4-1$ times before every round in the last half.  On an event of probability at least $1-T_m^{-1}$ its acquisition score is at most $1/4+O(a_m)+O(\sqrt{\log T_m/T_m})<1/2$.  An unqueried point of $B_m$ has score larger than one for all sufficiently large $m$.  Thus every round in the last half must select a cloud action, a contradiction.  At least $T_m/4$ pulls therefore incur regret $1/4$, proving \eqref{eq:fixed-kernel-gpucb} and the ratio claim for \eqref{eq:fixed-kernel-anytime-rule}.

For the original schedule \eqref{eq:original-rkhs-gpucb-rule}, the same argument applies with only two scale estimates changed. The distinct-set information gain obeys the same bounds needed here:
\[
    \frac{s a_m^2}{4}
    \le \Gamma_s^{\rm SKKS}(k)
    \le
    \frac{M_{m-1}+1}{2}\log 2+\frac{a_m^2s}{2}
    \le a_m^2T_m,
    \qquad s\le T_m.
\]
The lower bound selects \(s\) distinct points of \(B_m\); the upper bound follows because a set uses every earlier coordinate at most once and all later diagonal values are at most \(a_m^2\). Hence, uniformly for \(T_m/2\le t\le T_m\),
\[
    a_m b_t^{\rm SKKS}
    \ge
    c\,a_m^2\sqrt{T_m}\,\log^{3/2}T_m
    \longrightarrow\infty,
\]
because \(\alpha<1/4\). Uniformly for \(t\le T_m\), it also gives
\[
    b_t^{\rm SKKS}
    \le
    C\{1+a_m\sqrt{T_m}\log^{3/2}T_m\}.
\]
After at least \(T_m/4-1\) hub pulls, the corresponding hub bonus is
\[
    O\!\left(T_m^{-1/2}+a_m\log^{3/2}T_m\right)=o(1),
\]
while an unqueried point of \(B_m\) has score larger than one. The same noise-stack event and contradiction force at least \(T_m/4\) cloud pulls. This proves \eqref{eq:fixed-kernel-gpucb} and the ratio claim for \eqref{eq:original-rkhs-gpucb-rule} as well.
\end{proof}

\subsubsection{Proof of Corollary~\ref{cor:matern-gpucb-suboptimal}}
\label{app:proof-cor:matern-gpucb-suboptimal}
\begin{proof}
Equation~\eqref{eq:wang-zhang-lower} is \citet[Theorem~1]{wang2026suboptimality} after translating their squared exploration coefficient into $b_t$.  The minimax lower bound is due to \citet{scarlett2017lower}; matching upper bounds under the standard compact-domain regularity assumptions are achieved by localized kernel-bandit algorithms such as GP--ThreDS \citep{salgia2021gpthreds}.  For fixed kernel parameters, the auxiliary H\"older condition in the latter result follows uniformly from the reproducing property and the Mat\'ern kernel increment bound, while its range condition follows from the uniform RKHS envelope.  Its high-probability guarantee gives the displayed expectation rate by taking the failure probability polynomially small and bounding regret on failure by that envelope.  Dividing \eqref{eq:wang-zhang-lower} by \eqref{eq:matern-minimax-rate} proves the polynomial ratio.
\end{proof}

\section*{Acknowledgements}

The author used ChatGPT and Codex as research and editorial aids. The author assumes full
responsibility for the statements and proofs in the paper.

\clearpage
\bibliographystyle{plainnat}
\bibliography{references}

\end{document}